\newcommand{\printfnsymbol}[1]{%
  \textsuperscript{\@fnsymbol{#1}}%
}
\colorlet{shadecolor}{pink}
\newcommand{\cmark}{\ding{51}}%
\newcommand{\xmark}{\ding{55}}%
\theoremstyle{plain}
\newtheorem{theorem}{Theorem}[]
\newtheorem{lemma}[]{Lemma}
\newtheorem{assumption}{Assumption}[]
\theoremstyle{definition}
\newtheorem{remark}{Remark}[]
\newcommand{\ra}[1]{\renewcommand{\arraystretch}{#1}}
\def\x{{\mathbf x}}
\def\O{{\mathcal O}}
\begin{document}
\title{\bf Faster Non-Convex Federated Learning via Global and Local Momentum}
\date{}
\author[$\bm{\dag}$]{Rudrajit Das}
\author[$\bm{\dag}$]{Anish Acharya \thanks{Equal Contribution}}
\author[$\bm{\ddag}$]{Abolfazl Hashemi \printfnsymbol{1}}
\author[$\bm{\dag}$]{Sujay Sanghavi} \author[$\bm{\dag}$,$\bm{\mathsection}$]{Inderjit S. Dhillon}
\author[$\bm{\dag}$]{Ufuk Topcu}
\affil[$\bm{\dag}$]{University of Texas at Austin}
\affil[$\bm{\ddag}$]{Purdue University}
\affil[$\bm{\mathsection}$]{Amazon}
\maketitle

\begin{abstract}
  We propose \texttt{FedGLOMO}, a novel federated learning (FL) algorithm with an iteration complexity of $\mathcal{O}(\epsilon^{-1.5})$ to converge to an $\epsilon$-stationary point (i.e., $\mathbb{E}[\|\nabla f(\bm{x})\|^2] \leq \epsilon$) for smooth non-convex functions --
  under arbitrary client heterogeneity and compressed communication -- compared to the $\mathcal{O}(\epsilon^{-2})$ complexity of most prior works. Our key algorithmic idea that enables achieving this improved complexity is based on the observation that the convergence in FL is hampered by two sources of high variance: (i) the global server aggregation step with multiple local updates, exacerbated by client heterogeneity, and (ii) the noise of the local client-level stochastic gradients. By modeling the server aggregation step as a generalized gradient-type update, we propose a variance-reducing momentum-based global update at the server, which when applied in conjunction with variance-reduced local updates at the clients, enables \texttt{FedGLOMO} to enjoy an improved convergence rate. Moreover, we derive our results under a novel and more realistic client-heterogeneity assumption which we verify empirically -- unlike prior assumptions that are hard to verify. Our experiments illustrate the intrinsic variance reduction effect of \texttt{FedGLOMO}, which implicitly suppresses client-drift in heterogeneous data distribution settings and promotes communication efficiency.
\end{abstract}

\section{Introduction}
\label{sec:intro}
Federated learning (FL) is a new edge-computing approach that advocates training statistical models directly on remote devices by leveraging enhanced local resources on each device \cite{mcmahan2017communication}. In a standard FL setting, there are $n$ clients, each having its own training data, and a central server that is trying to train a model, parameterized by $\bm{w} \in \mathbb{R}^d$, using the clients' data.
Suppose the data distribution of the $i^{\text{th}}$ client is $\mathcal{D}_i$.
Then the $i^{\text{th}}$ client has an objective function $f_i(\bm{w})$ which is the expected loss, with respect to some loss function $\ell$, over data drawn from $\mathcal{D}_i$, and the goal of the central server is to optimize the average \footnote{In general this may be a weighted average, but here we only consider uniform weights, i.e., each weight is ${1}/{n}$.} 
loss $f(\bm{w})$, over the $n$ clients, i.e.,
\begin{equation}
    \label{eq:fl-intro-1}
    f(\bm{w}) := \frac{1}{n} \sum_{i=1}^{n} {f_i}(\bm{w}) \text{ \& } f_i(\bm{w}) = 
    \mathbb{E}_{\bm{x} \sim \mathcal{D}_i}[\ell(\bm{x},\bm{w})].
\end{equation}
{The setting where the data distributions of all the clients are identical, i.e. $\mathcal{D}_1 = \ldots = \mathcal{D}_n$, is typically known as the \enquote{homogeneous} setting. Otherwise, the settings where the data distributions are \textit{not} identical are referred to as the  \enquote{heterogeneous} settings.}

The core algorithmic idea of FL -- in the form of \texttt{FedAvg} -- was introduced in \cite{mcmahan2017communication}. In \texttt{FedAvg} (summarized in \Cref{alg:fed-avg}), a subset of the clients perform \textit{multiple} steps of gradient descent based updates on their local data and then communicate back their respective updates to the server, which then averages them to update the global model (hence the name \texttt{FedAvg}). This idea of performing multiple local updates before averaging once mitigates the communication cost required for training. Another strategy to cut down the communication cost is to have the clients send compressed/quantized messages to the server in every round -- this is of particular significance for training deep learning models where the number of model parameters is in millions or more.

In practice however, performing multiple local updates on clients with \textit{heterogeneous} data distributions leads to the so-called phenomenon of \enquote{{client-drift}}, wherein the individual client updates do not align well (due to over-fitting on the local client data) inhibiting the convergence of \texttt{FedAvg} to the optimum of the average loss over all the clients. At the heart of this issue is the high variance associated with the simple averaging step of \texttt{FedAvg} for the global update.

Ever since the development of FL, significant attention has been devoted to analyzing \texttt{FedAvg} under different settings, modifying \texttt{FedAvg} using ideas from centralized optimization to accelerate the training or to reduce the communication cost; we discuss these works in \Cref{rel-work}. Compared to centralized optimization, a formidable challenge in the theoretical analysis of FL algorithms is the use of multiple local updates in the clients which is compounded by the \textit{heterogeneous} nature of data distribution among the clients. To limit the extent of client heterogeneity, a standard
assumption in FL theory is the \textit{bounded client dissimilarity (BCD) assumption}, i.e.,
\begin{equation}
    \label{eq:bcd}
    \mathbb{E}_{i}[\|\nabla f_i(\bm{w}) - \nabla f(\bm{w})\|^2] \leq G^2 \text{ } \forall \text{ } \bm{w}, \text{ or } 
    \|\nabla f_i(\bm{w}) - \nabla f(\bm{w})\|^2 \leq G^2 \text{ } \forall \text{ } \bm{w} \text{ and } i \in [n],
\end{equation}
for some large enough constant $G$ (e.g., see A1 in \cite{karimireddy2020mime}). However, it is hard to verify in practice and does not allow for \textit{arbitrary client heterogeneity}.

Recently, \cite{arjevani2019lower} showed that the {stochastic first-order complexity} of any
algorithm in the \textit{centralized setting} to reach an $\epsilon$-stationary point (i.e., $\mathbb{E}[\|\nabla f(\bm{x})\|^2] \leq \epsilon$) for \textit{smooth non-convex functions} is 
$\Omega(\epsilon^{-1.5})$. It is well known that vanilla SGD has a suboptimal complexity of $\mathcal{O}(\epsilon^{-2})$ as it cannot mitigate the high variance of the stochastic gradient noise. Recognizing this issue, \textit{variance-reducing} techniques for SGD \cite{fang2018spider,zhou2018stochastic,cutkosky2019momentum,liu2020optimal} have been proposed that attain the optimal complexity of $\mathcal{O}(\epsilon^{-1.5})$. 
Coming to the federated setting, in addition to the noise in the \textit{local} client-level stochastic gradients, one has to also contend with the high variance associated with the \textit{global} server aggregation step which depends on the 
client heterogeneity and the number of local update steps. In this case, applying only local client-level variance-reduction is not enough for improving the iteration complexity of vanilla \texttt{FedAvg}.

To that end, we propose a novel FL algorithm with \textit{compressed communication} called \texttt{FedGLOMO} (\Cref{alg:2} and \ref{alg:2-local}) which applies \texttt{G}\textit{lobal} as well as \texttt{LO}\textit{cal} \textit{variance-reducing} \texttt{MO}\textit{mentum} to the server update and client updates, respectively.
We prove that the iteration complexity of \texttt{FedGLOMO} is $\mathcal{O}(\epsilon^{-1.5})$ in the smooth non-convex case, which is better than the $\mathcal{O}(\epsilon^{-2})$ complexity of related works in the FL setting; see \Cref{tb:comp} and \Cref{nov4-thm1}.
Further, our theory does not use the BCD assumption, i.e. \cref{eq:bcd}, which is a standard assumption in related works. Instead, we propose and use \Cref{as-het}, which is more realistic and \textit{empirically verified}, allowing for arbitrary client heterogeneity. 
It is worth mentioning here that for FL, \cite{karimireddy2020mime} also propose an algorithm (\texttt{MimeMVR}) which is shown to attain this improved complexity of $\mathcal{O}(\epsilon^{-1.5})$ but \textit{with} the BCD assumption and \textit{no} compressed communication; we talk about this at the end of \Cref{rel-work}.
\\
\\
\noindent We summarize our \textbf{contributions} next:
\\
\\
\noindent \textbf{(a)} We propose \texttt{FedGLOMO} (Alg. \ref{alg:2} and \ref{alg:2-local}), in which we apply a \textit{novel global momentum term at the server} in addition to SVRG-style \textit{local momentum at the clients}. The design of \texttt{FedGLOMO} is motivated by two critical issues that need to be alleviated to accelerate convergence in FL; these are the high variances associated with: (i) the \textit{global} server aggregation step due to heterogeneity of clients when there are multiple local updates, and (ii) the noise of \textit{local} client-level stochastic gradients. Global and local momentum result in \textit{variance reduction} for the global server update and the local client updates, allowing us to tackle (i) and (ii), respectively. This enables \texttt{FedGLOMO} to converge to an $\epsilon$-stationary point (i.e., $\mathbb{E}[\|\nabla f(\bm{x})\|^2] \leq \epsilon$) for smooth non-convex functions in $\mathcal{O}(\epsilon^{-1.5})$ gradient-based updates, which is better than the $\mathcal{O}(\epsilon^{-2})$ complexity of most related works in the FL setting; see \Cref{tb:comp} and \Cref{nov4-thm1}.
\\
\\
\noindent \noindent \textbf{(b)} Unlike prior work, our theory does not use the hard to verify {bounded client dissimilarity assumption} (i.e., \cref{eq:bcd}). Instead, to tighten our convergence result, we propose and use \Cref{as-het} -- which is a novel and {empirically verified} assumption, even allowing for \textit{arbitrary client heterogeneity}. Moreover, this assumption is not specific to \texttt{FedGLOMO}; we empirically verify that it also holds for \texttt{FedAvg} and derive a novel convergence result for \texttt{FedAvg} using this assumption and without \cref{eq:bcd} (see \Cref{sec:fed_avg_conv}).Refer to the discussion after \Cref{as-het} and \Cref{rem-sep21-2} for details. 
\\
\\
\noindent \textbf{(c)} Further, \texttt{FedGLOMO} is the \textit{first FL algorithm} achieving $\mathcal{O}(\epsilon^{-1.5})$ complexity while allowing \textit{compressed client-to-server communication}. For theory, applying compression in \texttt{FedGLOMO} is not trivial and the most obvious approach to do so does not work; see \Cref{rem-sep21-3}. 
\\
\\
\textbf{(d)} In \Cref{sec:exp}, experiments with neural networks on CIFAR-10 and Fashion-MNIST \cite{xiao2017fashion} show that in a highly heterogeneous setting of at most two classes per client, \texttt{FedGLOMO} requires only about one-third the number of bits used by \texttt{FedAvg} with compressed communication, while this ratio further improves to one-fifth in the homogeneous setting; see \Cref{fig:1}.Our experiments also illustrate the variance reduction provided by our scheme which implicitly mitigates client-drift under heterogeneous data distribution and promotes communication-efficiency.

\section{Related Work}
\label{rel-work}
\textbf{\texttt{FedAvg} and related methods:}
\cite{reisizadeh2020fedpaq} propose \texttt{FedPAQ} which is basically \texttt{FedAvg} \cite{mcmahan2017communication} with quantized client-to-server communication, and establish its convergence for the homogeneous case. \cite{li2019convergence} establish the convergence of \texttt{FedAvg} for strongly convex functions with heterogeneity (assuming bounded client dissimilarity) but without any compressed communication. \cite{haddadpour2020federated} propose \texttt{FedCOMGATE} which incorporates gradient tracking \cite{pu2020distributed} and derive results with data heterogeneity and quantized communication. \cite{karimireddy2019scaffold} propose \texttt{SCAFFOLD} which uses control-variates to mitigate the client-drift owing to the heterogeneity of clients. \cite{li2018federated} present \texttt{FedProx} which adds a proximal term to control the deviation of the client parameters from the global server parameter in the previous round. \cite{reddi2020adaptive} propose federated versions of commonly used adaptive optimization methods and prove their convergence under heterogeneity. Local SGD \cite{zinkevich2010parallelized,stich2018local,yu2018parallel,wang2018cooperative,basu2019qsparse,stich2019error,patel2019communication,woodworth2020local,bayoumi2020tighter,liang2019variance,koloskova2020unified} is very similar to FL and is essentially based on the same principle as \texttt{FedAvg}. However, in local SGD, there is usually no data heterogeneity and all the clients participate in each round (known as \enquote{full device participation}), both of which do not hold in FL and simplify the derivation of convergence results.
\\
\\
\noindent \textbf{Momentum-based methods in FL:} 
\cite{wang2019slowmo, huo2020faster} present momentum-based updates at the server but without any improvement in the 
convergence rate as compared to momentum-free updates. \cite{qu2020federated} present Nesterov accelerated \texttt{FedAvg} for convex objectives. \cite{karimireddy2020mime} propose \texttt{Mime}(\texttt{MVR}) which applies momentum at the client-level based on globally computed statistics to control client-drift.
\\
\\
\noindent \noindent \textbf{Distributed optimization with compression:} There are several papers \cite{alistarh2017qsgd,suresh2017distributed,reisizadeh2020fedpaq,haddadpour2020federated,tang2018communication,wu2018error,bernstein2018signsgd,alistarh2018convergence,lin2017deep,stich2018sparsified,basu2019qsparse,hashemi2020delicoco,chen2020communication,horvath2019stochastic,gorbunov2021marina} aiming to minimize the communication bottleneck in distributed optimization by transmitting compressed messages to the central server and establishing their convergence. \cite{horvath2019stochastic,gorbunov2021marina} provide distributed algorithms with improved convergence rates by also applying variance reduction and periodically using full gradients; however, there are no multiple local updates in these works. In \Cref{sec:marina}, we compare our work's complexity against that of \cite{gorbunov2021marina}. In this work, we employ the quantization operator proposed in \cite{alistarh2017qsgd}.
\\
\\
\noindent \textbf{Optimal complexity/rate for smooth non-convex stochastic optimization:}  
\cite{arjevani2019lower} show that the optimal stochastic first-order complexity to reach an $\epsilon$-stationary point (i.e., $\mathbb{E}[\|\nabla f(\bm{x})\|^2] \leq \epsilon$) is $\mathcal{O}(\frac{\sigma}{\epsilon^{1.5}})$ where $\sigma^2$ is the variance of the stochastic gradients. SVRG-style algorithms such as \texttt{SPIDER} \cite{fang2018spider} and \texttt{SNVRG} \cite{zhou2018stochastic} attain this optimal complexity by periodically using giant batch sizes. \cite{cutkosky2019momentum} propose \texttt{STORM} which also attains this optimal complexity with adaptive learning rates, but without using any large batches. The key idea of \texttt{STORM} is momentum-based variance reduction, obtained by using the stochastic gradient at the previous point \textit{computed over the same batch} on which the stochastic gradient at the current point is computed. \cite{liu2020optimal} present a much simpler proof for essentially the same algorithm by employing a constant learning rate and requiring a large batch size only at the first iteration. Our key idea of global and local momentum is \texttt{STORM}-like \textit{variance-reducing} momentum applied to the aggregation step at the server, that we interpret as a generalized gradient-type update, and the local updates at the clients, respectively; see \Cref{sec:main}.
\\
\\
\Cref{tb:comp} compares the complexities of the most relevant related works in FL/local SGD with ours (on smooth non-convex functions). Note that only \texttt{FedGLOMO} 
and \texttt{MimeMVR} \cite{karimireddy2020mime} attain the improved iteration complexity of $\mathcal{O}(\epsilon^{-1.5})$ with respect to $\epsilon$. However, unlike \cite{karimireddy2020mime}, our work does not rely on the bounded client dissimilarity assumption (\cref{eq:bcd}) and allows for compressed client-to-server communication, in which case  maintaining the improved complexity is not trivial; for details, see \Cref{rem-sep21-2} and \Cref{rem-sep21-3}, respectively. There are meaningful algorithmic differences between our work and \cite{karimireddy2020mime} too. The most noteworthy one is that while we explicitly apply momentum in the server aggregation step (global momentum) as well as in the client updates (local momentum), 
\cite{karimireddy2020mime} only apply globally computed momentum in the local client updates and \textit{no} momentum at the server. For a detailed discussion of the differences of our work from \cite{karimireddy2020mime}, see \Cref{sec:disc}.
Since \texttt{Mime} is designed to deal with client drift, we also empirically compare it against \texttt{FedGLOMO} without compression in a highly heterogeneous setting in \Cref{sec:exp}.

\begin{table*}[t]\small
\caption{Number of gradient updates, i.e., $T$, required to achieve $\mathbb{E}[\|\nabla f(\bm{w})\|^2] \leq \epsilon$ on smooth non-convex functions. \enquote{BCD?} asks if the bounded client dissimilarity assumption (i.e., \cref{eq:bcd}) is used or not.
Here, $n$ is the total number of clients and $r$ is the number of clients participating in each round.
\\
$*1$: Results are under full device participation, i.e., $r=n$. 
\\
$*2$: Here, $\alpha \leq n$ is a problem-dependent quantity; in practice, \textbf{we expect $\alpha \ll n$} as confirmed in our experiments.
}
\label{tb:comp}
\ra{1}
\begin{adjustbox}{width=\textwidth}
\begin{tabular*}{\linewidth}{@{}cccc@{}}\toprule
Ref. & $T$ & Compressed Communication? & BCD? \\ \midrule
\texttt{FedCOMGATE} 
\cite{haddadpour2020federated} &$\mathcal{O}(\frac{1}{n\epsilon^2})^{*1}$&{\color{black}\cmark}&Yes
\\\midrule
Local SGD 
\cite{koloskova2020unified,wang2019slowmo} &$\mathcal{O}(\frac{1}{n\epsilon^2})^{*1}$&{\color{red}\xmark}&Yes
\\\midrule
\texttt{SCAFFOLD} 
\cite{karimireddy2019scaffold}&$\O(\frac{1}{r \epsilon^2})$&{\color{red}\xmark}&Yes
\\\midrule
\texttt{MimeMVR} 
\cite{karimireddy2020mime}&$\mathcal{O}\big(\frac{1}{\sqrt{r}\epsilon^{1.5}}\big)$&{\color{red}\xmark}&Yes
\\\midrule
\textbf{This work} (\texttt{FedGLOMO})
&$\mathcal{O}\Big(\max\Big(\sqrt{\frac{\alpha}{n}}, \sqrt{\frac{(n-r)}{r(n-1)}}\Big)\frac{1}{\epsilon^{1.5}}\Big)^{*2}$&{\color{black}\cmark}&\textbf{No}
\\
\bottomrule
\end{tabular*}
\end{adjustbox}
\end{table*}

\section{Preliminaries}
\label{sec:prelim}
Recall the setting and the optimization problem that the server is trying to solve as defined in \cref{eq:fl-intro-1}. We assume that the clients have access to unbiased stochastic gradients of their individual losses. We denote the stochastic gradient of $f_i$ at $\bm{w}$ computed over a batch of samples $\mathcal{B}$, by $\widetilde{\nabla} f_i(\bm{w};\mathcal{B})$. Also in this paper, $K$ is the number of communication rounds, $E$ is the number of local updates per round or the period, and $T = KE$ is the total number of local updates or the (order-wise) number of gradient-based updates. Further, $r$ is the number of clients that the server accesses in each communication round, i.e., the global batch size.

\section{\texttt{FedGLOMO}: \texttt{G}lobal and \texttt{LO}cal \texttt{MO}mentum-Based Variance Reduction}
\label{sec:main}
\begin{algorithm}[!htb]
	\caption{\texttt{FedGLOMO} - Server Update}
	\label{alg:2}
	\begin{algorithmic}[1]
		\STATE {\bfseries Input:} Initial point $\bm{w}_0$, \# of rounds of communication $K$, period $E$, learning rates  $\{\eta_{k}\}_{k=0}^{K-1}$
		and global batch size $r$. $Q_D$ is the quantization operator. Set $\bm{w}_{-1} = \bm{w}_0$.
		\FOR{$k =0,\dots, K-1$}
		\STATE 
		Server sends $\bm{w}_k$, $\bm{w}_{k-1}$ to a set $\mathcal{S}_k$ of $r$ clients chosen uniformly at random w/o replacement.
		\FOR{client $i \in \mathcal{S}_k$}
		\STATE Set $\bm{w}_{k,0}^{(i)} = \bm{w}_k$ and $\widehat{\bm{w}}_{k-1,0}^{(i)} = \bm{w}_{k-1}$. Run \Cref{alg:2-local} for client $i$.
		\ENDFOR
		\IF{$k = 0$}
		\label{step-0}
		\STATE Set
		$\bm{u}_{k} = \frac{1}{r}\sum_{i \in \mathcal{S}_k}Q_D({\bm{w}_{k} - \bm{w}_{k,E}^{(i)}})$.
		\label{glob-mom-0}
		\ELSE
		\vspace{0.1 cm}
		\STATE Set
		$\bm{u}_{k} = \frac{\beta_k}{r} \sum_{i \in \mathcal{S}_k}Q_{D}(\bm{w}_{k} - {\bm{w}_{k,E}^{(i)}}) + 
		(1-\beta_k)\bm{u}_{k-1} + \frac{(1-\beta_k)}{r} \sum_{i \in \mathcal{S}_k} Q_{D}((\bm{w}_{k} - {\bm{w}_{k,E}^{(i)}}) - ({\bm{w}_{k-1} - \widehat{\bm{w}}_{k-1,E}^{(i)}}))$.
		{\color{blue} // \texttt{(Global Momentum)}}\label{glob-mom}
		\vspace{0.1 cm}
		\ENDIF
		\STATE Update $\bm{w}_{k+1} = \bm{w}_{k} - \bm{u}_k$.
		\ENDFOR
	\end{algorithmic}
\end{algorithm}

\begin{algorithm}[!htb]
	\caption{\texttt{FedGLOMO} - Client Update}
	\label{alg:2-local}
	\begin{algorithmic}[1]
		\FOR{$\tau = 0,\ldots,E-1$}
		\vspace{0.1 cm}
		\IF{$\tau = 0$}
		\vspace{0.1 cm}
		\STATE Set $\bm{v}_{k,\tau}^{(i)} = {\nabla} f_i(\bm{w}_{k,\tau}^{(i)})$,  $\widehat{\bm{v}}_{k-1,\tau}^{(i)} = {\nabla} f_i(\widehat{\bm{w}}_{k-1,\tau}^{(i)})$. 
		\label{l1}
		\ELSE
		\vspace{0.1 cm}
		\STATE Pick a random batch of samples 
		in client $i$, say $\mathcal{B}_{k,\tau}^{(i)}$.
		Compute the stochastic gradients 
		of $f_i$ at $\bm{w}_{k,\tau}^{(i)}$, $\widehat{\bm{w}}_{k-1,\tau}^{(i)}$, $\bm{w}_{k,\tau-1}^{(i)}$ and $\widehat{\bm{w}}_{k-1,\tau-1}^{(i)}$ over $\mathcal{B}_{k,\tau}^{(i)}$ viz.
		$\widetilde{\nabla} f_i(\bm{w}_{k,\tau}^{(i)};\mathcal{B}_{k,\tau}^{(i)})$, $\widetilde{\nabla} f_i(\widehat{\bm{w}}_{k-1,\tau}^{(i)};\mathcal{B}_{k,\tau}^{(i)})$, $\widetilde{\nabla} f_i(\bm{w}_{k,\tau-1}^{(i)};\mathcal{B}_{k,\tau}^{(i)})$ and $\widetilde{\nabla} f_i(\widehat{\bm{w}}_{k-1,\tau-1}^{(i)};\mathcal{B}_{k,\tau}^{(i)})$. 
		\vspace{0.1 cm}
		\STATE 
		Update: $\bm{v}_{k,\tau}^{(i)} = \widetilde{\nabla} f_i(\bm{w}_{k,\tau}^{(i)};\mathcal{B}_{k,\tau}^{(i)}) + \big(\bm{v}_{k,\tau-1}^{(i)} - \widetilde{\nabla} f_i(\bm{w}_{k,\tau-1}^{(i)};\mathcal{B}_{k,\tau}^{(i)})\big)$ and
		\\
		$\widehat{\bm{v}}_{k-1,\tau}^{(i)} = \widetilde{\nabla} f_i(\widehat{\bm{w}}_{k-1,\tau}^{(i)};\mathcal{B}_{k,\tau}^{(i)}) + \big(\widehat{\bm{v}}_{k-1,\tau-1}^{(i)} - \widetilde{\nabla} f_i(\widehat{\bm{w}}_{k-1,\tau-1}^{(i)};\mathcal{B}_{k,\tau}^{(i)})\big)$. 
		{\color{blue} // \texttt{(Local Momentum)}} \label{l2}
		\ENDIF
		\vspace{0.1 cm}
		\STATE Update $\bm{w}_{k,\tau+1}^{(i)} = \bm{w}_{k,\tau}^{(i)} - \eta_{k}\bm{v}_{k,\tau}^{(i)}$ and  $\widehat{\bm{w}}_{k-1,\tau+1}^{(i)} = \widehat{\bm{w}}_{k-1,\tau}^{(i)} - \eta_{k}\widehat{\bm{v}}_{k-1,\tau}^{(i)}$.
		\label{l3}
		\vspace{0.1 cm}
		\ENDFOR
		\vspace{0.1 cm}
		\STATE Send $Q_{D}(\bm{w}_{k} - {\bm{w}_{k,E}^{(i)}})$ and $Q_{D}((\bm{w}_{k} - {\bm{w}_{k,E}^{(i)}}) - ({\bm{w}_{k-1} - \widehat{\bm{w}}_{k-1,E}^{(i)}}))$ 
		to the server.
		\label{comp}
	\end{algorithmic}
\end{algorithm}

\begin{algorithm}[!htb]
	\caption{\texttt{FedAvg} 
	}
	\label{alg:fed-avg}
	\begin{algorithmic}[1]
		\STATE {\bfseries Input:} 
		Initial point $\bm{w}_0$, \# of communication rounds $K$, period $E$, learning rates  $\{\eta_{k}\}_{k=0}^{K-1}$ and global batch size $r$.
		\FOR{$k =0,\dots, K-1$}
		\STATE Server sends $\bm{w}_k$ to a set $\mathcal{S}_k$ of $r$ clients chosen uniformly at random w/o replacement.
		\FOR{client $i \in \mathcal{S}_k$}
		\STATE Set $\bm{w}_{k,0}^{(i)} = \bm{w}_k$.
		\FOR{$\tau = 0,\ldots,E-1$}
		\STATE Pick a random batch of samples 
		in client $i$, $\mathcal{B}_{k,\tau}^{(i)}$.
		Compute the stochastic gradient of $f_i$ at $\bm{w}_{k,\tau}^{(i)}$ over 
		$\mathcal{B}_{k,\tau}^{(i)}$, viz. $\widetilde{\nabla} f_i(\bm{w}_{k,\tau}^{(i)};\mathcal{B}_{k,\tau}^{(i)})$.
		\STATE Update $\bm{w}_{k,\tau+1}^{(i)} = \bm{w}_{k,\tau}^{(i)} - \eta_{k} \widetilde{\nabla} f_i(\bm{w}_{k,\tau}^{(i)};\mathcal{B}_{k,\tau}^{(i)})$.
		\ENDFOR
		\STATE Send $(\bm{w}_k - \bm{w}_{k,E}^{(i)})$ to the server.
		\label{line:fedavg-1}
		\ENDFOR
		\STATE Update $\bm{w}_{k+1} = \bm{w}_k -  \frac{1}{r}\sum_{i \in \mathcal{S}_k}(\bm{w}_k - \bm{w}_{k,E}^{(i)})$.
		\label{line:fedavg-2}
		\ENDFOR
	\end{algorithmic}
\end{algorithm}

{There are two issues that need to be alleviated for improving the convergence rate in FL: (i) the high variance of simple averaging used in the \textit{global} server aggregation step (of \texttt{FedAvg}), when there are multiple local updates, which is exacerbated by heterogeneity of the clients, and (ii) the high variance associated with the noise of \textit{local} client-level stochastic gradients.
The key idea of \texttt{FedGLOMO} (\Cref{alg:2} and \ref{alg:2-local}) is to apply \textit{variance-reducing} \textbf{global} and \textbf{local} momentum to combat (i) and (ii), respectively. We now describe {global} and {local} momentum in detail.

\textbf{Global} momentum is applied to the sever aggregation step which is line \ref{glob-mom} in \Cref{alg:2}. To understand it better, let us revisit \texttt{FedAvg} (summarized in \Cref{alg:fed-avg}, although in a slightly different way than usual) and its server aggregation step (line \ref{line:fedavg-2}) which is just simple averaging. First, note that the high variance of this naive averaging step slows down the convergence rate of \texttt{FedAvg} (and other related methods). We now re-envision the server aggregation as a generalized gradient-based update by thinking of $(\bm{w}_k - \bm{w}_{k,E}^{(i)})$ as the generalized gradient. Then, we wish to incorporate the style of variance-reducing momentum applied in \texttt{STORM} \cite{cutkosky2019momentum,liu2020optimal} (note that their method is for stochastic gradients in the case of centralized optimization) to our generalized gradient-based update; for that, let us briefly recap \texttt{STORM}'s update rule. For a function $h(\bm{z})$, \texttt{STORM}'s update rule is as follows for the $j^{\text{th}}$ iteration:
\begin{multline}
    \label{apr20-1}
    \bm{z}_{j+1} = \bm{z}_j - \eta_j \bm{v}_j, \text{ where }
    \bm{v}_j = 
    \begin{cases}
      \widetilde{\nabla} h(\bm{z}_j;\xi_j) & \text{for}\ j=0 \\
      \widetilde{\nabla} h(\bm{z}_j;\xi_j) + (1-\beta_j) (\bm{v}_{j-1} - \widetilde{\nabla} h(\bm{z}_{j-1};\xi_j)) & \text{for}\ j>0. 
    \end{cases}
\end{multline}
In \cref{apr20-1}, $\xi_j$ denotes the source of randomness in the $j^{\text{th}}$ iteration and $\beta_j \in [0,1)$ is the momentum parameter. Note the use of the stochastic gradient at $\bm{z}_{j-1}$ computed on $\xi_j$.
Coming back to \Cref{alg:2}, the quantity $\bm{u}_k$ plays the role of $\bm{v}_j$ in  \cref{apr20-1}. To see this clearly, let us analyze $E_{Q_D}[\bm{u}_k]$ (see lines \ref{glob-mom-0} and \ref{glob-mom} in \Cref{alg:2}).
Under \Cref{as5}, $Q_D$ produces an unbiased estimate of the input. Then defining ${g}(\bm{w}_k;\mathcal{S}_k) \triangleq \frac{1}{r}\sum_{i \in \mathcal{S}_k}({\bm{w}_{k} - \bm{w}_{k,E}^{(i)}})$ and $\widehat{g}(\bm{w}_{k-1};\mathcal{S}_k) \triangleq \frac{1}{r}\sum_{i \in \mathcal{S}_k}({\bm{w}_{k-1} - \widehat{\bm{w}}_{k-1,E}^{(i)}})$, we have:
\begin{equation}
    \label{apr20-3}
    E_{Q_D}[\bm{u}_k] = 
    \begin{cases}
      {g}(\bm{w}_k;\mathcal{S}_k) & \text{for}\ k=0 \\
      {g}(\bm{w}_k;\mathcal{S}_k) + (1-\beta_k)\big(\bm{u}_{k-1} - \widehat{g}(\bm{w}_{k-1};\mathcal{S}_k)\big) & \text{for}\ k>0. 
    \end{cases}
\end{equation}
In \cref{apr20-3}, ${g}(\bm{w}_k;\mathcal{S}_k)$ and $\widehat{g}(\bm{w}_{k-1};\mathcal{S}_k)$ play the roles of $\widetilde{\nabla} h(\bm{z}_j;\xi_j)$ and $\widetilde{\nabla} h(\bm{z}_{j-1};\xi_j)$, respectively. 
With this, one can clearly see that \cref{apr20-3} is the analogue of \cref{apr20-1} for the global server aggregation in FL. However, this equivalence is not so apparent without looking at the expected value of $\bm{u}_k$ with respect to ${Q}_D$; in fact, the choice of quantities that are compressed in {line \ref{comp} of Alg. \ref{alg:2-local}} and used in line \ref{glob-mom} of Alg. \ref{alg:2} is crucial for making our theory work (also see \Cref{rem-sep21-3}). 

Now that we understand global momentum, let us move on to \textbf{local} momentum. For this see lines \ref{l1}, \ref{l2} and \ref{l3} in \Cref{alg:2-local}; these give us $(\bm{w}_{k} - {\bm{w}_{k,E}^{(i)}})$ and $({\bm{w}_{k-1} - \widehat{\bm{w}}_{k-1,E}^{(i)}})$ after running for $E$ steps. But notice that these lines are the same as \cref{apr20-1} with $\beta_j = 0$ and the stochastic gradient at the first iteration replaced by the full gradient. It is worth mentioning here that these local updates are also similar to \texttt{SPIDER} which is an SVRG-style update proposed in \cite{fang2018spider}. However, recognizing that this is also a special case of the \texttt{STORM} update with $\beta_j = 0$, we prefer calling it momentum in order to have a unifying terminology for both the global and local updates. 

One might wonder what is the role of global momentum as \texttt{SPIDER} can be extended to improve the complexity in distributed optimization \textit{without multiple local updates}. For this, {in \Cref{sec:lomo}}, we consider \texttt{FedLOMO} (Alg. \ref{alg:1} and \ref{alg:1-local}) which is a simpler version of \texttt{FedGLOMO} with only {lo}cal {mo}mentum and \textit{no} global momentum (i.e, plain averaging at the server which is equivalent to setting $\beta_k = 1$ in Alg. \ref{alg:2}),
and show that it does not achieve $\mathcal{O}(\epsilon^{-1.5})$ complexity ({see \Cref{fl-thm3}}). The root cause of this is client-heterogeneity which amplifies its effect under \textit{multiple local updates}; without incorporating some form of variance reduction in the server aggregation step, the complexity cannot be improved.

Let us try to provide some intuition as to how incorporating global momentum helps. Suppose we keep $\eta_k = \eta$ and $\beta_k = \beta < 1$ for all $k$. Theoretically, we get a lower bound for $\beta$ which is approximately $\mathcal{O}(\eta^2)$. Then with this momentum-based aggregation strategy, the variance reduces by a factor of $\mathcal{O}(\beta/\eta) = \mathcal{O}(\eta)$ as compared to aggregation by plain averaging. (There are some other terms too but these are sufficiently small.) This reduction in the variance by a factor of $\mathcal{O}(\eta)$ is what enables \texttt{FedGLOMO} to enjoy a faster convergence rate.

It is true that \texttt{FedGLOMO} has to communicate twice the amount of information per round as compared to a \texttt{FedAvg} (or \texttt{FedPAQ} \cite{reisizadeh2020fedpaq} which is \texttt{FedAvg} with compressed communication) per round. One can set the precision of the quantizer sufficiently low to account for the extra per-round communication cost of \texttt{FedGLOMO} -- we do this in our experiments.}

Also, we only assume access to the full client gradient in line \ref{l1} of \Cref{alg:2-local} for simplicity of analysis, but our main result (i.e., \Cref{nov4-thm1}) can be extended to the case of large enough batch sizes.

\section{Main Result for \texttt{FedGLOMO}}\label{sec:result:glomo}
First, we state our assumptions.
\begin{assumption}[\textbf{Smoothness}]
\label{as1} 
$\ell(\bm{x},\bm{w})$ is $L$-smooth with respect to $\bm{w}$, for all $\bm{x}$. Thus, each $f_i(\bm{w})$ ($i \in [n]$) is $L$-smooth, and so is $f(\bm{w})$.
\end{assumption}

\begin{assumption}[\textbf{Non-negativity}]
\label{as-may15}
Each $f_i(\bm{w})$ is non-negative and therefore, $f_i^{*} \triangleq \min f_i(\bm{w}) \geq 0$.
\end{assumption}
Most of the loss functions used in practice satisfy this anyways and if not, we can just add a constant offset to achieve non-negativity. 

\begin{assumption}[\textbf{Quantization operator}]\label{as5}
The randomized quantization operator $Q_D$ in \Cref{alg:2} and \ref{alg:2-local} is unbiased, i.e., $\mathbb{E}[Q_D(\bm{x}) | \bm{x}] = \bm{x}$, and its variance satisfies $\mathbb{E}[\|Q_D(\bm{x})-\bm{x}\|^2 | \bm{x}] \leq q\|\bm{x}\|^2$ for some $q > 0$. The \enquote{qsgd} operator proposed in Section 3.1 of \cite{alistarh2017qsgd} satisfies these properties.
\end{assumption}

{
\begin{assumption}[\textbf{Heterogeneity}]\label{as-het}
Suppose all clients participate, i.e. $r=n$, in the $(k+1)^{\text{st}}$ round of \texttt{FedGLOMO} (Alg. \ref{alg:2} and \ref{alg:2-local}). 
Let $\bm{w}_{k,\tau}^{(i)}$ be the $i^{\text{th}}$ client's local parameter at the $(\tau+1)^{\text{st}}$ local step of the $(k+1)^{\text{st}}$ round of \texttt{FedGLOMO}, for $i \in [n]$. Define $\widetilde{\bm{e}}_{k,\tau}^{(i)} \triangleq \nabla f_i(\bm{w}_{k,\tau}^{(i)}) - \nabla f_i(\overline{\bm{w}}_{k,\tau})$, where $\overline{\bm{w}}_{k,\tau} \triangleq \frac{1}{n}\sum_{i \in [n]} \bm{w}_{k,\tau}^{(i)}$.
Then for some $\alpha \ll n$: 
\begin{equation*}
    \mathbb{E}\Big[\Big\|\sum_{i \in [n]}\widetilde{\bm{e}}_{k, \tau}^{(i)}\Big\|^2\Big] \leq \alpha \sum_{i \in [n]} \mathbb{E}\Big[\Big\|\widetilde{\bm{e}}_{k,\tau}^{(i)}\Big\|^2\Big], \text{ } \forall \text{ } \tau \in [E].
\end{equation*}
\end{assumption}
Obviously, the above assumption always holds with $\alpha = n$; this follows from the Cauchy-Schwarz inequality. However, we empirically observe $\alpha \ll n$ in practice; {see \Cref{sec:het-asm-expt}}. 
The value of $\alpha$ depends on the degree of heterogeneity -- as the heterogeneity increases (decreases), we observe $\alpha$ to also increase (decrease). Thus, \Cref{as-het} can characterize the degree of heterogeneity in the system.}

{From the plots in \Cref{sec:het-asm-expt}, we see that for the highly heterogeneous setting that we consider for experiments in \Cref{sec:exp}, $\alpha < 0.06 n$ for most of the trajectory of \texttt{FedGLOMO} on both CIFAR-10 and Fashion-MNIST (abbreviated as FMNIST). In the homogeneous case, $\alpha < 0.03 n$ and $\alpha < 0.02 n$ for most of the trajectory on CIFAR-10 and FMNIST, respectively.}

Further, this assumption is not specific to \texttt{FedGLOMO} and is potentially applicable for other FL algorithms. In \Cref{sec:fed_avg_conv}, we empirically observe that this assumption also holds for \texttt{FedAvg} and we use it to derive a novel convergence result for \texttt{FedAvg} without the bounded client dissimilarity assumption (i.e., \cref{eq:bcd}).

We now present our main result, followed by some important remarks and a proof sketch. The detailed proof can be found in \Cref{sec-pf-2}.

\begin{theorem} [\textbf{Smooth non-convex case}]
\label{nov4-thm1}
Suppose Assumptions \ref{as1}, \ref{as-may15}, \ref{as5} and \ref{as-het} hold. 
In \texttt{FedGLOMO}, set: 
\[\text{$\eta_{k} = \eta = \frac{1}{6 L E K^{1/3} (\frac{1}{n}(\alpha + \frac{4}{E}) + 800 e^2 (1+q) (E+1)^2 (\frac{q}{n} + \frac{(1+q)(n-r)}{r (n-1)}))^{1/3}}$ and}\] 
\[\beta_k = \beta = 160 e^2 (1+q) \eta^2 L^2 E^2 (E+1)^2.\]
Suppose we use full-device participation (i.e., the global batch size is $n$) \textbf{only at} $k = 0$. Then if $\frac{K^{-1/3}}{1200e^2(1+q) \big(\frac{q}{n} + \frac{(1+q)(n-r)}{r (n-1)}\big)} \leq E+1 \leq \frac{\sqrt{1+q} (n-r)}{3 r(n-1)} K$, we have:
\begin{flalign*}
   \frac{1}{K} \sum_{k=0}^{K-1}\mathbb{E}[\|\nabla f(\bm{w}_{k})\|^2] \leq \frac{39 L f(\bm{w}_{0})}{K^{2/3}} \Big({\frac{1}{n}\Big(\alpha + \frac{4}{E}\Big)} + 800 e^2 (1+q) (E+1)^2 \Big(\frac{q}{n} + \frac{(1+q)(n-r)}{r(n-1)}\Big)\Big)^{1/3}.
\end{flalign*}
Thus, \texttt{FedGLOMO} can achieve $\mathbb{E}[\|\nabla f(\bm{w}_{k^{*}})\|^2] \leq \epsilon$, where $k^{*} \sim \text{Unif}[0,K-1]$, in $E = \mathcal{O}(1)$ local steps and $K = \mathcal{O}\Big( \max\Big\{\sqrt{\frac{\alpha}{n}}, (1+q)\sqrt{\frac{(n-r)}{r(n-1)}}\Big\}{\epsilon^{-1.5}}\Big)$ rounds of communication.
\end{theorem}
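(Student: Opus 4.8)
The plan is to run a \texttt{STORM}-style variance-reduction analysis, but at two nested levels (the server aggregation and the local client updates), and to close the loop between the descent of $f$ and the accumulated estimation error. I would begin from the $L$-smoothness descent lemma (\Cref{as1}) applied to the server step $\bm{w}_{k+1} = \bm{w}_k - \bm{u}_k$, giving
\[
\mathbb{E}[f(\bm{w}_{k+1})] \leq \mathbb{E}[f(\bm{w}_k)] - \mathbb{E}[\langle \nabla f(\bm{w}_k), \bm{u}_k\rangle] + \tfrac{L}{2}\mathbb{E}[\|\bm{u}_k\|^2].
\]
Since after $E$ local steps with step size $\eta$ we have $\bm{w}_k - \bm{w}_{k,E}^{(i)} = \eta\sum_{\tau=0}^{E-1}\bm{v}_{k,\tau}^{(i)}$, the generalized gradient $g(\bm{w}_k;\mathcal{S}_k)$ is designed so that $\bm{u}_k$ targets $\eta E\,\nabla f(\bm{w}_k)$ up to controllable error. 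I would therefore set the effective step size $\gamma = \eta E$, write $\bm{u}_k = \gamma\nabla f(\bm{w}_k) + (\bm{u}_k - \gamma\nabla f(\bm{w}_k))$, and split the cross term via $\langle a,b\rangle = \tfrac12(\|a\|^2+\|b\|^2-\|a-b\|^2)$ so that a $-\tfrac{\gamma}{2}\|\nabla f(\bm{w}_k)\|^2$ descent term survives while the remainder is governed by the estimation-error quantity $\Phi_k \triangleq \mathbb{E}[\|\bm{u}_k - \gamma\nabla f(\bm{w}_k)\|^2]$.

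The core of the proof is a recursive contraction bound on $\Phi_k$. Here I would exploit that, by \Cref{as5} (unbiased quantizer with relative variance $q$), taking expectation of $\bm{u}_k$ over $Q_D$ collapses the update to the clean \texttt{STORM} recursion \cref{apr20-3}. Mirroring the standard argument on \cref{apr20-1}, I expect a bound of the shape
\[
\Phi_k \leq (1-\beta)^2 \Phi_{k-1} + c_1\beta^2 V_k + c_2(1-\beta)^2 L^2\,\mathbb{E}[\|\bm{w}_k - \bm{w}_{k-1}\|^2] + (\text{local terms}),
\]
where $V_k$ collects the single-round variance from quantization and client sampling, and the movement term is controlled because $\bm{w}_k - \bm{w}_{k-1} = -\bm{u}_{k-1}$ is itself $\mathcal{O}(\gamma)$. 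The quantization contributes the $q/n$ factor, while sampling $r$ of $n$ clients without replacement contributes $\frac{(1+q)(n-r)}{r(n-1)}$ through the standard without-replacement variance identity; these are exactly the two quantities inside the $\max\{\cdot\}$ of the stated rate.

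Separately I would bound the local/heterogeneity contribution to $V_k$, i.e.\ how well $\frac1n\sum_i(\bm{w}_k-\bm{w}_{k,E}^{(i)})$ approximates $\gamma\nabla f(\bm{w}_k)$. This decomposes into (i) the local \texttt{SPIDER}-style estimator error $\bm{v}_{k,\tau}^{(i)}-\nabla f_i(\bm{w}_{k,\tau}^{(i)})$, which telescopes into the accumulated local movement via the same recursion with $\beta=0$; (ii) the client-drift term $\nabla f_i(\bm{w}_{k,\tau}^{(i)}) - \nabla f_i(\bm{w}_k)$, bounded through \Cref{as1} by $L\|\bm{w}_{k,\tau}^{(i)}-\bm{w}_k\| = \mathcal{O}(L\eta\tau)$; and (iii) the cross-client aggregation, where instead of the trivial Cauchy--Schwarz factor $n$ I invoke \Cref{as-het} to replace $\|\sum_i\widetilde{\bm{e}}_{k,\tau}^{(i)}\|^2$ by $\alpha\sum_i\|\widetilde{\bm{e}}_{k,\tau}^{(i)}\|^2$. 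Step (iii) is what injects the $\alpha/n$ term, lets us avoid the BCD assumption \cref{eq:bcd}, and must be carried through the shadow iterates $\widehat{\bm{w}}_{k-1,\tau}^{(i)}$ as well, so that the momentum-correction term in line~\ref{glob-mom} is genuinely variance-reducing rather than merely unbiased.

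Finally I would assemble a Lyapunov potential $\mathcal{L}_k = \mathbb{E}[f(\bm{w}_k)] + c\,\Phi_k$, sum over $k=0,\dots,K-1$, and use \Cref{as-may15} (non-negativity) to bound $f(\bm{w}_0)-\mathbb{E}[f(\bm{w}_K)] \leq f(\bm{w}_0)$. With $\beta = \Theta(\eta^2 L^2 E^2 (E+1)^2)$ chosen so that $1-(1-\beta)^2 \approx 2\beta$ absorbs the movement term against the contraction, and $\eta$ as stated, the error terms telescope and $\tfrac1K\sum_k\mathbb{E}[\|\nabla f(\bm{w}_k)\|^2]$ scales as $K^{-2/3}$ times the stated cube-root factor; reading off $K^{2/3}=\Theta(\epsilon^{-1})$ yields $K=\mathcal{O}(\epsilon^{-1.5})$ with $E=\mathcal{O}(1)$. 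The main obstacle I anticipate is closing this loop self-consistently: the movement term $\mathbb{E}[\|\bm{w}_k-\bm{w}_{k-1}\|^2]=\mathbb{E}[\|\bm{u}_{k-1}\|^2]$ in the variance recursion must be re-expressed through $\Phi_{k-1}$ and $\|\nabla f\|^2$ and fed back into the descent, and the two-sided constraint on $E+1$ versus $K$ in the hypothesis is precisely what keeps $\beta<1$, keeps $\eta$ admissible, and guarantees the movement and cross terms are dominated so that $\mathcal{L}_k$ actually decreases.
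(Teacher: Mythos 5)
Your overall architecture does follow the paper's route: a smoothness descent step at the server, a \texttt{STORM}-style contraction for the global-momentum error carrying the quantization and without-replacement sampling factor $\frac{q}{n}+\frac{(1+q)(n-r)}{r(n-1)}$, a separate local drift/\texttt{SPIDER} analysis that invokes \Cref{as-het} to inject $\alpha/n$, and absorption of the movement term $\mathbb{E}[\|\bm{w}_k-\bm{w}_{k-1}\|^2]=\mathbb{E}[\|\bm{u}_{k-1}\|^2]$ against the contraction via the lower bound on $\beta$. But there is a genuine gap at exactly the step that allows the theorem to dispense with the bounded client dissimilarity assumption, and your summation plan cannot close without it.

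The problem is that the single-round variance $V_k$ and the drift terms are unavoidably bounded by \emph{per-client} quantities: sampling $r$ of $n$ clients leaves a variance proportional to $\sum_{i\in[n]}\sum_{\tau}\mathbb{E}[\|\bm{v}_{k,\tau}^{(i)}\|^2]\lesssim E\sum_{i\in[n]}\mathbb{E}[\|\nabla f_i(\bm{w}_k)\|^2]$, and the same is true of the heterogeneity term produced by \Cref{as-het} (which only controls cross-client correlations of the drift errors, not their magnitudes). Under arbitrary heterogeneity, $\sum_{i}\|\nabla f_i(\bm{w}_k)\|^2$ can be arbitrarily large relative to $\|\nabla f(\bm{w}_k)\|^2$; folding it into the gradient of $f$ or treating it as a summable constant is precisely the BCD assumption \cref{eq:bcd} that the theorem claims to avoid. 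Your proposal uses \Cref{as-may15} only for the trivial bound $f(\bm{w}_0)-\mathbb{E}[f(\bm{w}_K)]\leq f(\bm{w}_0)$, but its essential role in the paper is different: by smoothness and non-negativity of each $f_i$ (\Cref{lem1-oct20}), $\sum_{i\in[n]}\mathbb{E}[\|\nabla f_i(\bm{w}_k)\|^2]\leq 2nL\,\mathbb{E}[f(\bm{w}_k)]$. This converts every problematic error term into a multiple $\gamma\,\mathbb{E}[f(\bm{w}_k)]$ of the \emph{function value}, so the final inequality is not a telescoping Lyapunov decrease but a recursion of the form $\mathbb{E}[f(\bm{w}_{k'})]\leq f(\bm{w}_0)-\frac{\eta E}{4}\sum_{k}\mathbb{E}[\|\nabla f(\bm{w}_k)\|^2]+\gamma\sum_{k}\mathbb{E}[f(\bm{w}_k)]$, which must be unfolded Gr\"onwall-style under the condition $\gamma K\leq\frac{1}{2}$; this is where the stated $\eta$ and part of the two-sided window on $E+1$ actually come from. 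Two secondary points: the contraction recursion is clean only when measured against $\overline{\bm{\delta}}_k$ (the full average of expected local displacements), since $\bm{u}_k$ is conditionally unbiased for that quantity but not for $\eta E\nabla f(\bm{w}_k)$, so your $\Phi_k$ should be redefined accordingly with the bias handled in the drift analysis; and the movement coefficient $\eta^2L^2E^2(E+1)^2$ you need is not a routine \texttt{STORM} computation — it requires the coupled-iterate bound $\|(\bm{w}_k-\bm{w}_{k,E}^{(i)})-(\bm{w}_{k-1}-\widehat{\bm{w}}_{k-1,E}^{(i)})\|\leq 2e\,\eta LE(E+1)\|\bm{w}_k-\bm{w}_{k-1}\|$ (\Cref{nov-1-lem3}), proved by induction over the $E$ local steps run on shared batches, which the paper identifies as its key technical lemma and which dictates the particular choice of quantized quantities.
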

Note that the above result is independent of the variance of local stochastic gradients (of the clients). In short, this happens because we use local full gradients at $\tau=0$ and because the local stochastic gradients are Lipschitz.

We now make some remarks to discuss the implications of \Cref{nov4-thm1}.
\begin{remark}[\textbf{Better iteration complexity}]
\label{rem-sep21-1}
{According to \Cref{nov4-thm1}, for converging to an $\epsilon$-stationary point, \texttt{FedGLOMO} needs $T = KE$ to be $\mathcal{O}\big(\max\big(\sqrt{\frac{\alpha}{n}}, \sqrt{\frac{(n-r)}{r(n-1)}}\big){\epsilon}^{-1.5}\big)$. {This iteration complexity is the same as that of \texttt{MimeMVR} \cite{karimireddy2020mime} \textit{but without using the bounded client dissimilarity assumption}, i.e. \cref{eq:bcd}, (also see the next remark for more details on this) and better than other related works in the federated setting; see \Cref{tb:comp}.}
We underscore the significance of global momentum here by comparing this complexity of \texttt{FedGLOMO} to that of \texttt{FedLOMO} (recall this is a simpler version of \texttt{FedGLOMO} with only local momentum and \textit{no} global momentum, described in \Cref{sec:lomo}) which is $\mathcal{O}(\frac{1}{r} \epsilon^{-2})$ ({see \Cref{fl-thm3}}).}
\end{remark}

\begin{remark}[\textbf{No requirement of bounded client dissimilarity (BCD) assumption}]
\label{rem-sep21-2}
Divergent from related works, \Cref{nov4-thm1} \textit{does not use} the commonly used BCD assumption, i.e., \cref{eq:bcd}.
This is achieved by utilizing the smoothness and non-negativity of the $f_i$'s, specifically $\frac{1}{n}\sum_{i \in [n]}\|\nabla f_i(\bm{w})\|^2 \leq \frac{1}{n}\sum_{i \in [n]} 2L(f_i(\bm{w}) - f_i^{*}) \leq 2L f(\bm{w})$; see the proof outline of \Cref{nov4-thm1} after \Cref{rem-sep29-5}. In lieu of the hard to verify BCD assumption, we use the empirically verified \Cref{as-het} to tighten our convergence result. Note that \Cref{as-het} will always hold for some $\alpha \leq n$, regardless of the degree of client heterogeneity. Thus, our theory also allows for \textit{arbitrary client heterogeneity}.
\end{remark}

\begin{remark}[\textbf{Compressed communication}]
\label{rem-sep21-3}
To our knowledge, \texttt{FedGLOMO} is the \textit{first algorithm} that attains the aforementioned improved iteration complexity for FL on smooth non-convex functions \textit{with compressed communication}. We emphasize that  the choice of quantities that are compressed in line \ref{comp} of \Cref{alg:2-local}
is important. This particular choice enables deriving the improved rate by first deriving a result analogous to smoothness, i.e.,
$\|({\bm{w}_{k} - \bm{w}_{k,E}^{(i)}}) - ({\bm{w}_{k-1} - \widehat{\bm{w}}_{k-1,E}^{(i)}})\| \leq \widehat{L}\|\bm{w}_{k} - \bm{w}_{k-1}\|$ ({this derivation is done in \Cref{nov-1-lem3} in \Cref{sec-pf-2}}). The straightforward choice of sending $Q_{D}(\bm{w}_{k} - {\bm{w}_{k,E}^{(i)}})$ and $Q_{D}({\bm{w}_{k-1} - \widehat{\bm{w}}_{k-1,E}^{(i)}})$ prohibits us from deriving the improved rate, unless 
we also assume $Q_D(.)$ to be a Lipschitz operator.

{In \Cref{sec:red_bits}}, for $r \ll n$, we show that using the quantization scheme of \cite{alistarh2017qsgd} with $s = \sqrt{d}$, \texttt{FedGLOMO} achieves more than a five-fold saving in the \textit{total} communication cost as compared to when there is full-precision communication in \texttt{FedGLOMO}.
\end{remark}

\begin{remark}[\textbf{A limitation}]
\label{rem-sep29-5}
{Even though our iteration complexity of $T = \mathcal{O}(\epsilon^{-1.5})$ is better than that of \texttt{FedCOMGATE}  \cite{haddadpour2020federated} (which is $\mathcal{O}(\epsilon^{-2})$), our communication complexity of $K = \mathcal{O}(\epsilon^{-1.5})$ is higher than that of \texttt{FedCOMGATE} which is $K = \mathcal{O}(\epsilon^{-1})$ (albeit under an extra assumption on the quantizer, namely Assumption 5 in their paper). Ideally, we would like to have $E = \mathcal{O}(\epsilon^{-p})$ and $K = \mathcal{O}(\epsilon^{-(1.5-p)})$ for some $p > 0$, in order to reduce \texttt{FedGLOMO}'s communication complexity. 
Exploring whether such a result is obtainable with our proposed style of momentum is an interesting future direction.}
\end{remark}

\subsection*{Proof Sketch of Theorem~\ref{nov4-thm1}:}
Before getting to the proof outline, we would like to mention that the key technical challenge in deriving the improved convergence result with global momentum-based variance reduction is obtaining an analogue of the Lipschitzness of stochastic gradients to the change in local parameters over $E$ local steps. More specifically, for pure stochastic optimization, a key step in proving convergence of momentum-based variance reduction methods is using the Lipschitzness of the stochastic gradients (or the update quantities) \cite{cutkosky2019momentum,liu2020optimal}, i.e., \[\|\nabla\widetilde{f}(\bm{x}_t,\xi_t) - \nabla\widetilde{f}(\bm{x}_{t-1},\xi_t)\| \leq L \|\bm{x}_t - \bm{x}_{t-1}\|.\] In the FL setting where aggregation is performed at the server, we need an analogue of this at the server, i.e., something like
\[\|({\bm{w}_{k} - \bm{w}_{k,E}^{(i)}}) - ({\bm{w}_{k-1} - \widehat{\bm{w}}_{k-1,E}^{(i)}})\| \leq \widetilde{L}\|\bm{w}_{k} - \bm{w}_{k-1}\|.\] Deriving this result is a part of our contribution and is done in  \Cref{nov-1-lem3} (in \Cref{sec-pf-2}).

\begin{proof}
We set $\eta_k = \eta$ and $\beta_k = \beta$ $\forall$ $k \in \{0,\ldots,K-1\}$. Then, using \Cref{nov-1-lem0} with full global as well as local batch sizes at $k=0$ (by which $\bm{u}_{0} = \overline{\bm{\delta}}_{0}$ in the statement of \Cref{nov-1-lem0}), we have at any $k' > 0$:
\begin{multline}
    \label{eq-feb9-11}
    \mathbb{E}[f(\bm{w}_{k'})] \leq 
    f(\bm{w}_{0}) -\frac{\eta E}{4}\sum_{k=0}^{k'-1}\mathbb{E}[\|\nabla f(\bm{w}_{k})\|^2] 
    + \frac{16 \eta^3 L^2 E^2 (\alpha E + 4) }{n^2} \sum_{k=0}^{k'-1}\sum_{i \in [n]}\mathbb{E}[\|\nabla f_i(\bm{w}_k)\|^2]
    \\
    + 
    160\eta E \beta \Big(\frac{q}{n^2} + \frac{(1+q)}{r(n-1)}\Big(1 - \frac{r}{n}\Big)\Big)
    \sum_{k=0}^{k'-1}\sum_{i \in [n]} \mathbb{E}[\|\nabla f_i(\bm{w}_k)\|^2],
\end{multline}
for $4\eta L E^2 \leq 1$ and $\beta \geq \frac{80 e^2 (1+q) \eta^2 L^2 E^2 (E+1)^2}{(1 - 4\eta L E)}$. 

Also, since the $f_i$'s are $L$-smooth and non-negative, using \Cref{lem1-oct20}, we have that: 
\[\sum_{i \in [n]} \mathbb{E}[\|\nabla f_i(\bm{w}_k)\|^2 \leq \sum_{i \in [n]} 2L(\mathbb{E}[f_i(\bm{w}_k)] - f_i^{*}) \leq 2n L \mathbb{E}[f(\bm{w}_k)] - 2L \sum_{i \in [n]} f_i^{*} \leq 2n L \mathbb{E}[f(\bm{w}_k)].\]
This step allows us to circumvent the need for the bounded client dissimilarity assumption. Using this in (\ref{eq-feb9-11}), we get:
\begin{multline}
    \label{eq:may23-1}
    \mathbb{E}[f(\bm{w}_{k'})] \leq 
    f(\bm{w}_{0}) -\frac{\eta E}{4}\sum_{k=0}^{k'-1}\mathbb{E}[\|\nabla f(\bm{w}_{k})\|^2] 
    \\
    + \underbrace{64 \eta L E \Big(\frac{\eta^2 L^2 E (\alpha E + 4)}{n} + 5 \beta \Big(\frac{q}{n} + \frac{(1+q)(n-r)}{r(n-1)}\Big)\Big)}_{=\gamma}\sum_{k=0}^{k'-1} \mathbb{E}[f(\bm{w}_{k})].
\end{multline}
Unfolding the above recursion and simplifying a bit, we get:
\begin{flalign}
    \label{eq:may23-2}
    \sum_{k=0}^{k'-1}\mathbb{E}[f(\bm{w}_{k})] \leq k' f(\bm{w}_{0}) - \frac{\eta E}{4}\sum_{k=0}^{k'-1} \mathbb{E}[\|\nabla f(\bm{w}_{k})\|^2] + \gamma k' \sum_{k=0}^{k'-1} \mathbb{E}[f(\bm{w}_{k})].
\end{flalign}
Let us now ensure that $\gamma k' \leq \frac{1}{2}$ for all $k' \in \{1,\ldots,K\}$, so that we can simplify (\ref{eq:may23-2}) to:
\begin{equation}
    \label{eq:may23-3}
    \sum_{k=0}^{k'-1}\mathbb{E}[f(\bm{w}_{k})] \leq 2k' f(\bm{w}_{0}) - \frac{\eta E}{2}\sum_{k=0}^{k'-1} \mathbb{E}[\|\nabla f(\bm{w}_{k})\|^2] \leq 2k' f(\bm{w}_{0}).
\end{equation}
Now for $8 \eta L E^2 \leq 1$, it can be verified that $\beta = 160e^2 (1+q) \eta^2 L^2 E^2 (E+1)^2$ is a valid choice. Using this, we get that:
\begin{equation}
    \label{eq:may23-4}
    \gamma k' \leq \gamma K = \underbrace{64 \eta^3 L^3 E^3 K \Big(\frac{1}{n}\Big(\alpha + \frac{4}{E}\Big) + 800 e^2 (1+q) (E+1)^2 \Big(\frac{q}{n} + \frac{(1+q)(n-r)}{r(n-1)}\Big)\Big)}_{\text{(A)}}.
\end{equation}
Setting
\begin{equation*}
    \eta = \frac{1}{6 L E K^{1/3} (\frac{1}{n}(\alpha + \frac{4}{E}) + 800 e^2 (1+q) (E+1)^2 (\frac{q}{n} + \frac{(1+q)(n-r)}{r (n-1)}))^{1/3}},
\end{equation*}
we have (A) $ < \frac{1}{2}$. We also need to ensure that $8 \eta L E^2 \leq 1$ and $\beta = 160e^2 (1+q) \eta^2 L^2 E^2 (E+1)^2 < 1$. The range of $E$ in the theorem statement is obtained by combining the constraints (on $E$) that we get from these two requirements.

Finally, using (\ref{eq:may23-3}) in (\ref{eq:may23-1}) with $k' = K$, substituting our choice of $\eta$ and $\beta$, and then simplifying a bit more, we get the final convergence result.
\end{proof}

\section{Experiments}
\label{sec:exp}
To show the efficacy of \textit{global} momentum in \texttt{FedGLOMO}, we compare it against \texttt{FedLOMO} (recall {this has only local momentum and no global momentum}; see \Cref{sec:lomo}) and the default algorithm of choice for FL, i.e., \texttt{FedAvg} \cite{mcmahan2017communication} with the standard momentum available in PyTorch applied to its local updates -- both with and without compression.
Note that \texttt{FedAvg} with compression is referred to as \texttt{FedPAQ} \cite{reisizadeh2020fedpaq}.
We call the standard momentum versions of \texttt{FedAvg} and \texttt{FedPAQ} as \texttt{FedAvg}-m and \texttt{FedPAQ}-m henceforth. For quantization, we use the \enquote{qsgd} operator proposed in Section 3.1 of \cite{alistarh2017qsgd}. In the no-compression heterogeneous case, we also compare against \texttt{Mime} (specifically, \enquote{\texttt{MimeSGDm}}) \cite{karimireddy2020mime} which is also shown to attain the improved complexity of $\mathcal{O}(\epsilon^{-1.5})$ but without compressed communication, and is tailored to handle client heterogeneity.

We consider the task of 
classification on CIFAR-10 and Fashion-MNIST \cite{xiao2017fashion} abbreviated as FMNIST henceforth. The model used is a two-layer neural network with ReLU activation in the hidden layers. The size of both the hidden layers is 300/600 for FMNIST/CIFAR-10. We train the models using the categorical cross-entropy loss with $\ell_2$-regularization. 
The weight decay value in PyTorch (to apply $\ell_2$-regularization) is set to 1e-4. We consider both homogeneous 
and heterogeneous data distribution among the clients. Similar to \cite{mcmahan2017communication}, for the heterogeneous case, we distribute the data among the clients 
such that each client can have data from either one or (at most) two classes -- note that this is a high degree of heterogeneity.
{The exact procedure is described in \Cref{sec:extra-exp}.} The number of clients ($n$) in all the experiments is set to 50, with each client having the same number of samples. In this set of experiments, the global batch-size $r$ is 25, and the number of local updates per round (i.e., $E$) is 10. For \texttt{FedGLOMO}, we use a constant value of $\beta_k = 0.2$. 
For \texttt{FedAvg}-m and \texttt{FedPAQ}-m, the momentum parameter in Pytorch is set to its standard value, i.e., 0.9. As suggested in \cite{karimireddy2020mime}, we search $\beta$ (momentum hyper-parameter in \texttt{MimeSGDm}) over $\{0,0.9,0.99\}$. All full gradients in \texttt{FedGLOMO}, \texttt{FedLOMO} and \texttt{Mime} are replaced by stochastic gradients computed on a (per-client) batch size of 256.
{The learning rates and some other experimental details are in \Cref{sec:extra-exp}.}

\begin{figure*}[!htb]
\centering 
\subfloat[Heterogeneous FMNIST train loss]{
    \label{fig:1_a}
	\includegraphics[width=0.45\textwidth]{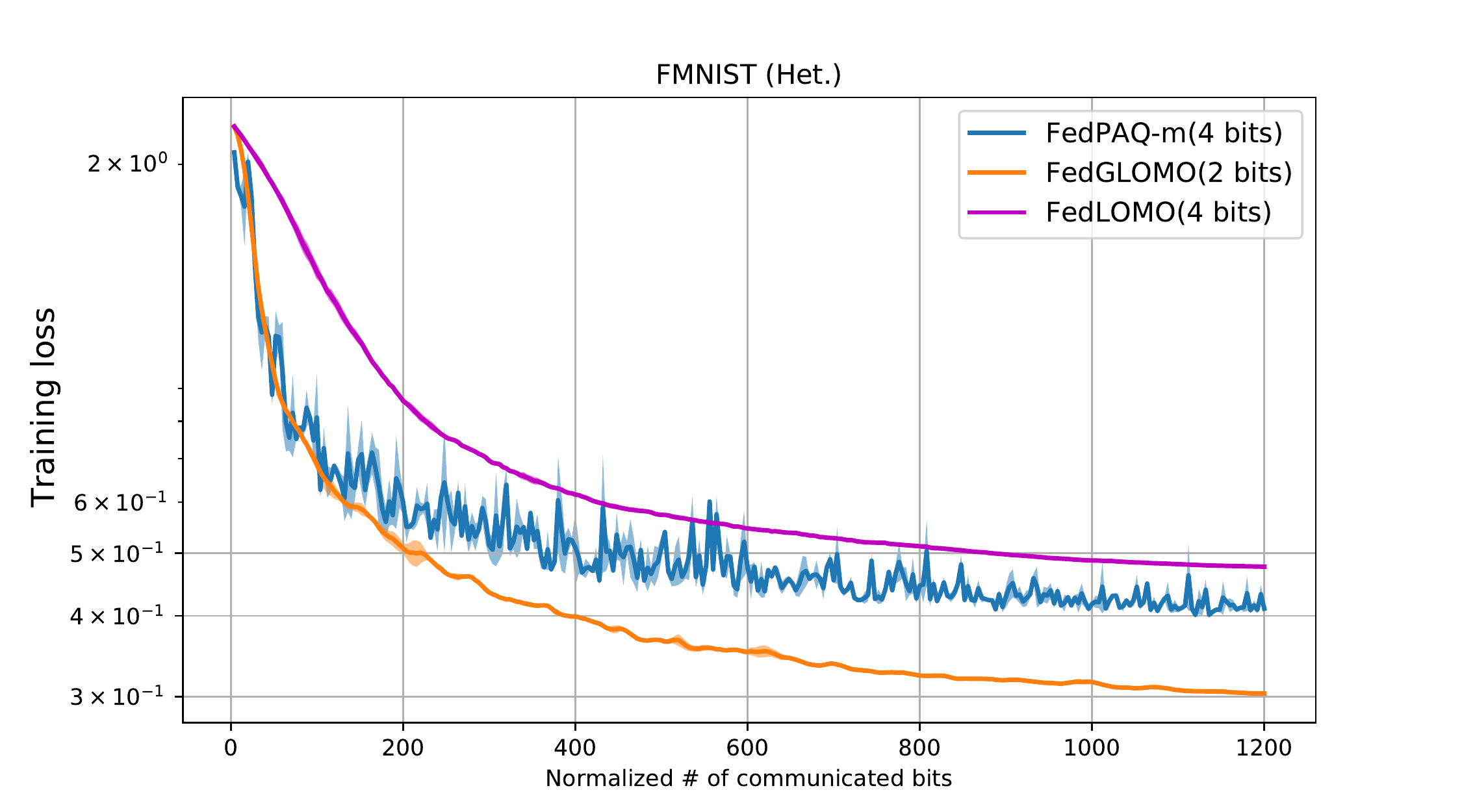}
	} 
\subfloat[Heterogeneous FMNIST test error]{
    \label{fig:1_b}
	\includegraphics[width=0.45\textwidth]{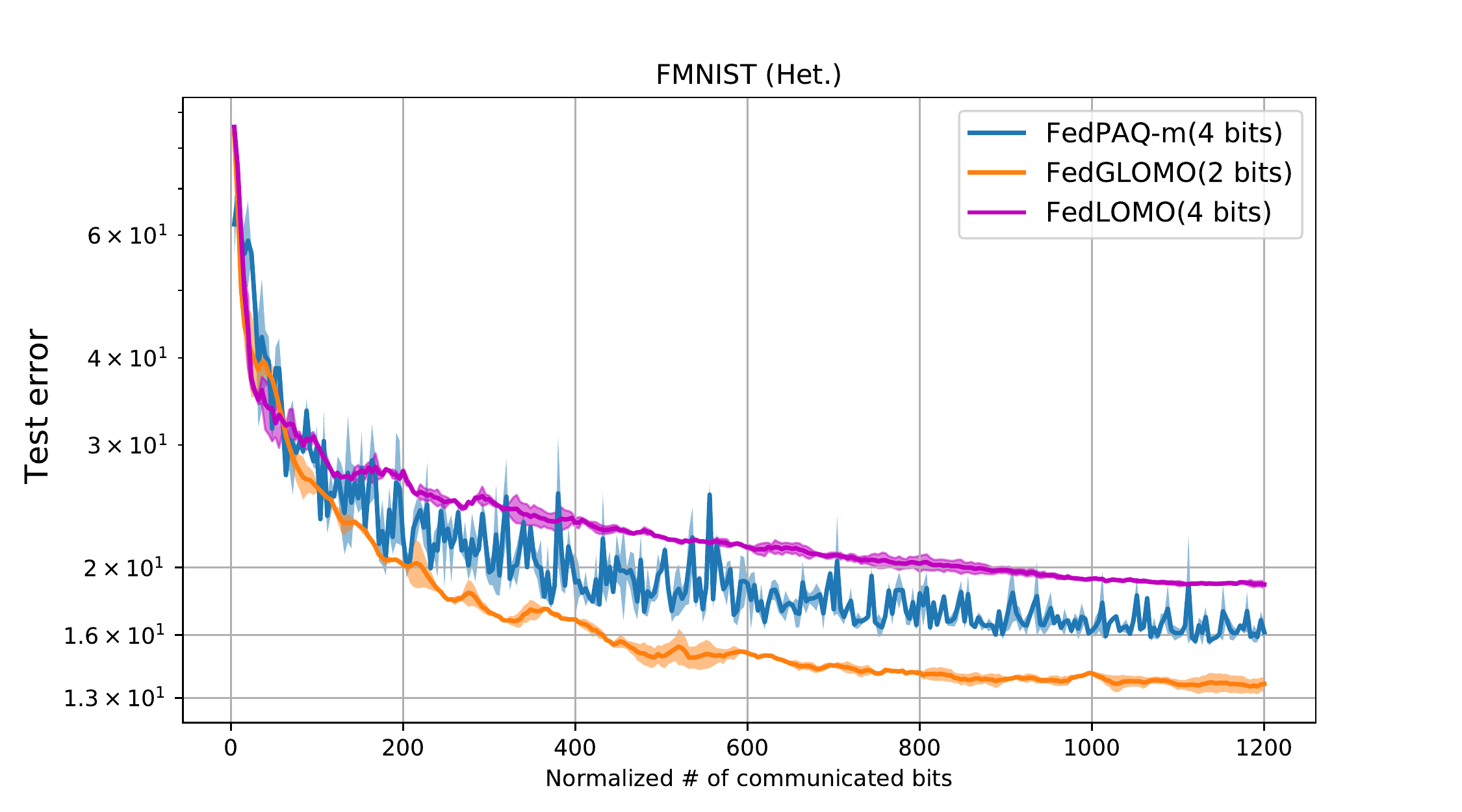}
	} 
\\
\subfloat[Heterogeneous CIFAR-10 train loss]{
    \label{fig:1_c}
	\includegraphics[width=0.45\textwidth]{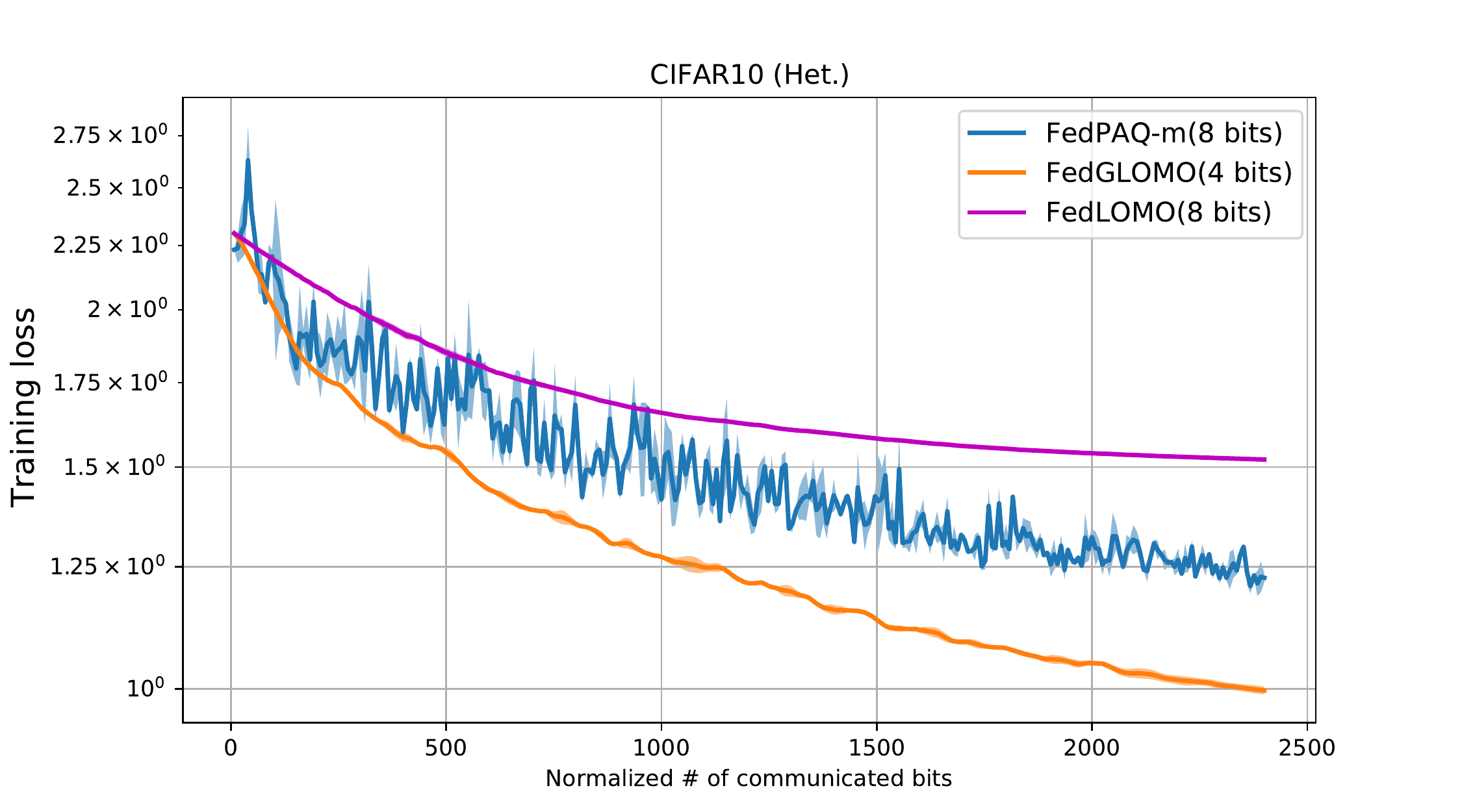}
	} 
\subfloat[Heterogeneous CIFAR-10 test error]{
    \label{fig:1_d}
	\includegraphics[width=0.45\textwidth]{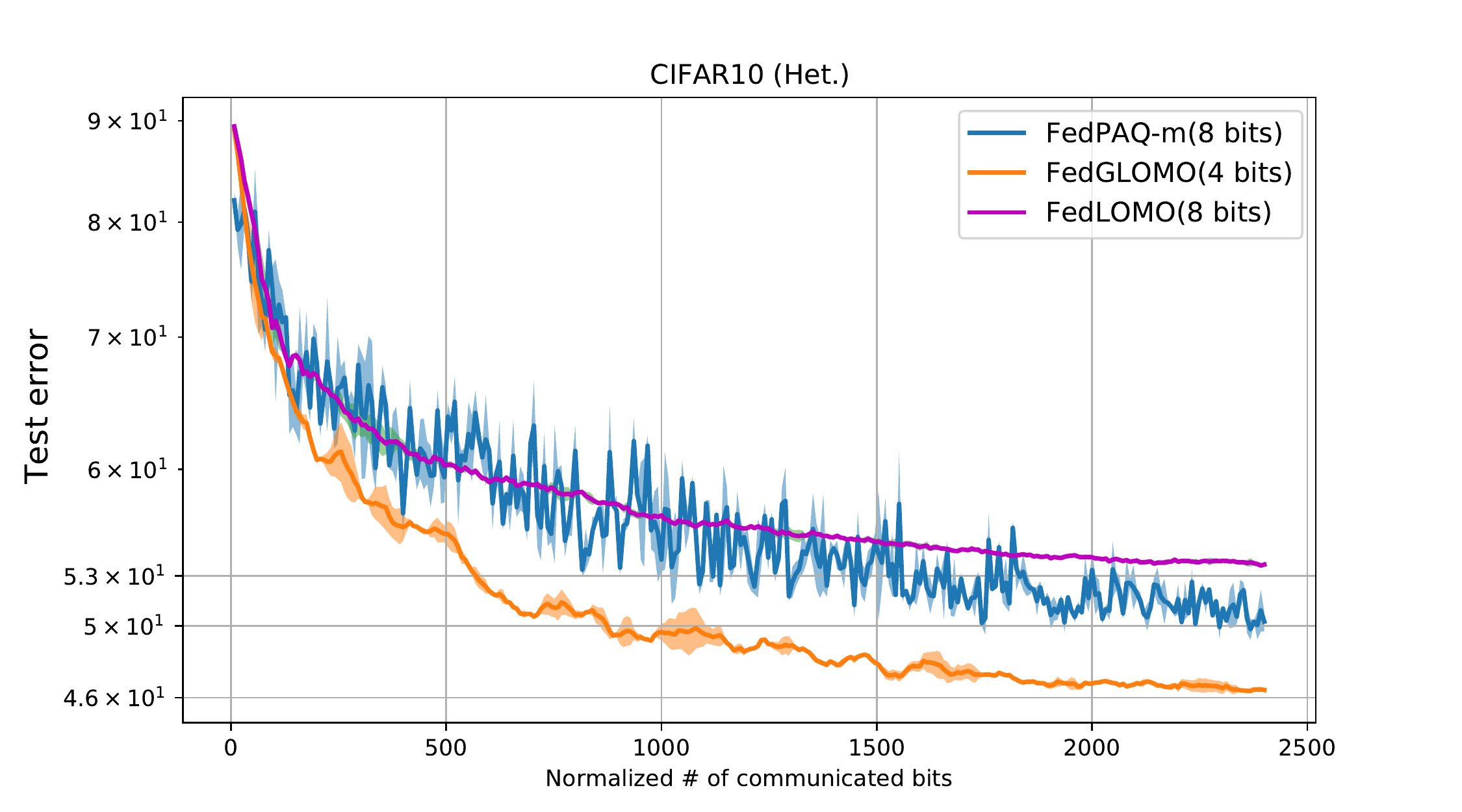}
	} 
\\
\subfloat[Homogeneous FMNIST train loss]{
    \label{fig:11_a}
	\includegraphics[width=0.45\textwidth]{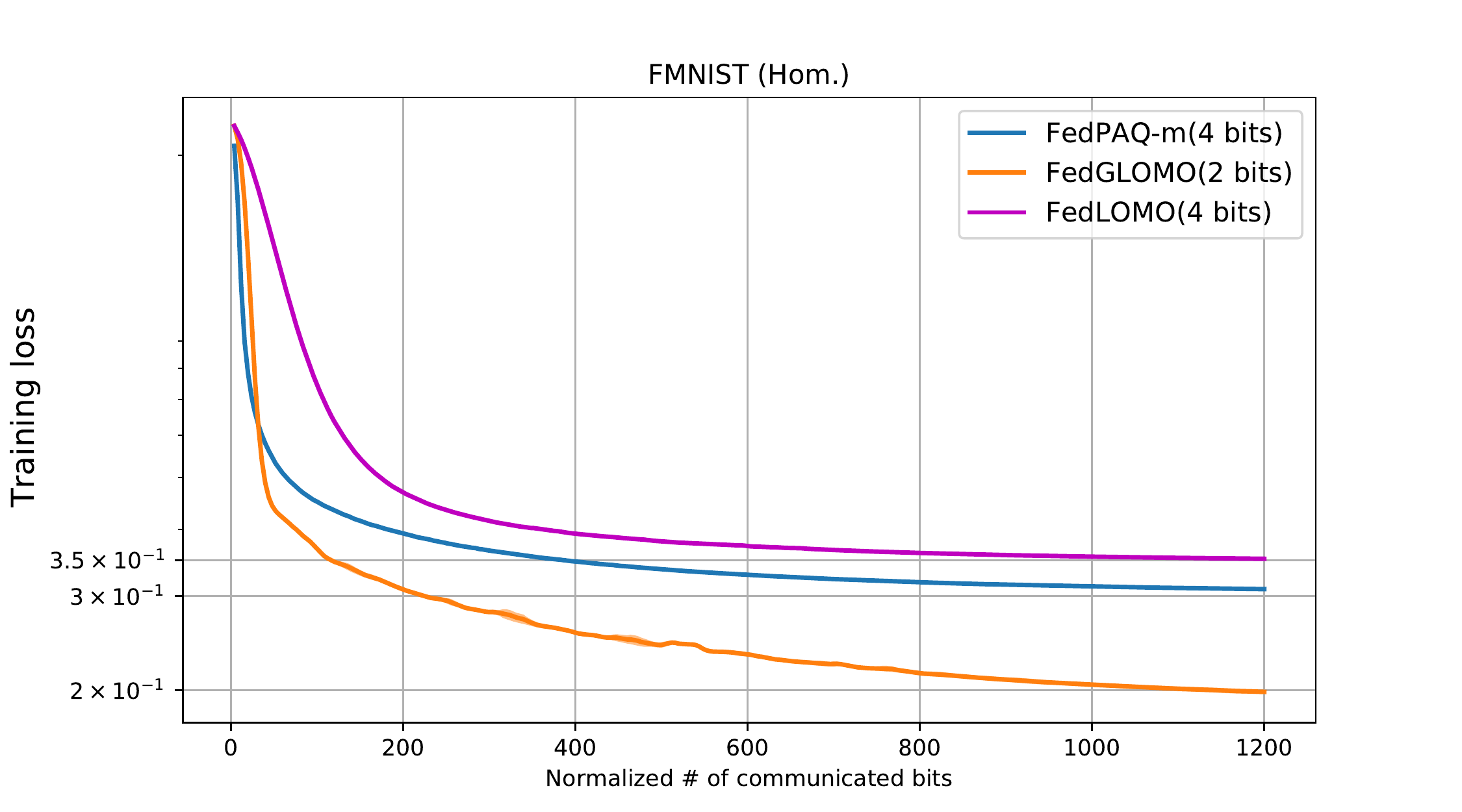}
	} 
\subfloat[Homogeneous FMNIST test error]{
    \label{fig:11_b}
	\includegraphics[width=0.45\textwidth]{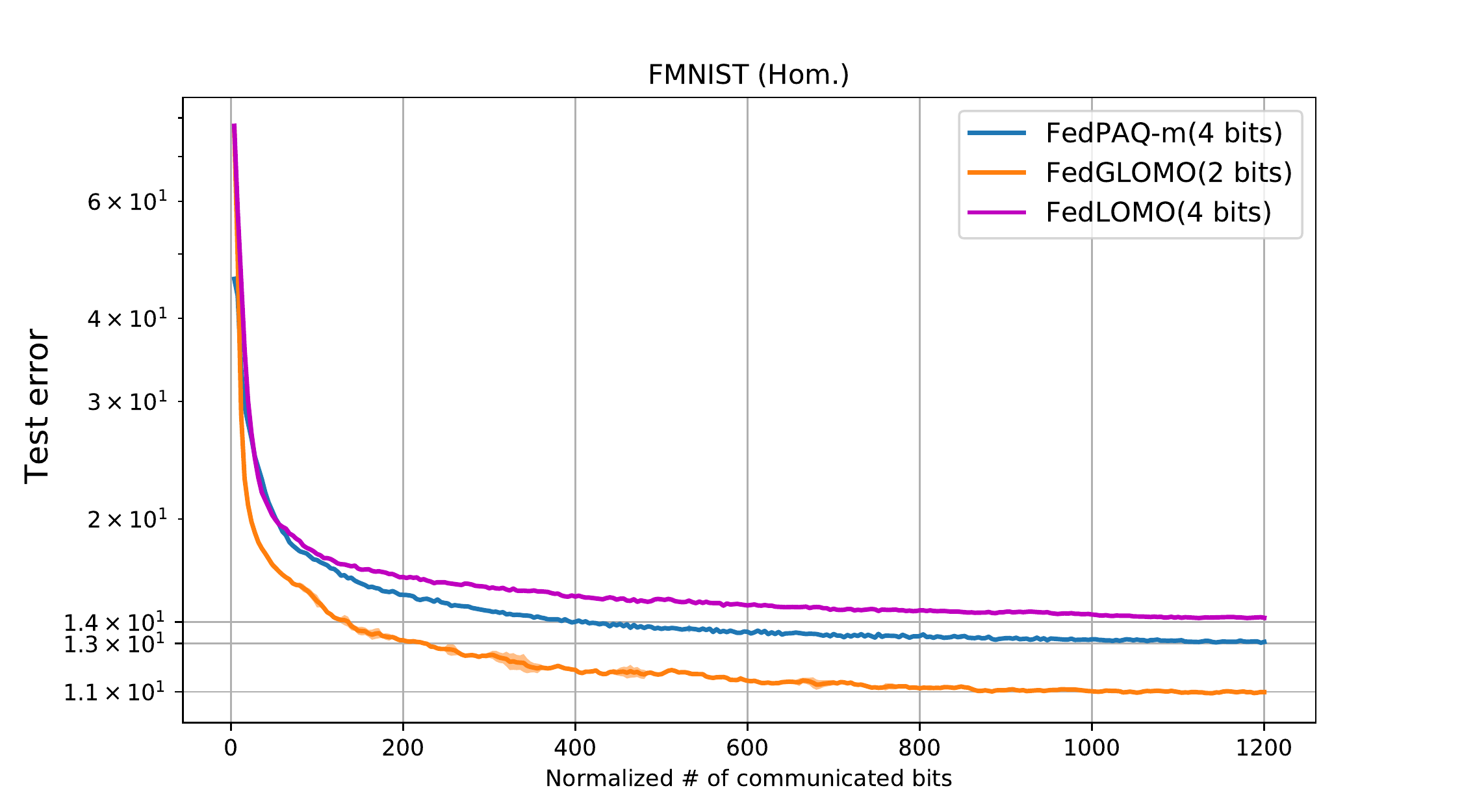}
	} 
\\
\subfloat[Homogeneous CIFAR-10 train loss]{
    \label{fig:11_c}
	\includegraphics[width=0.45\textwidth]{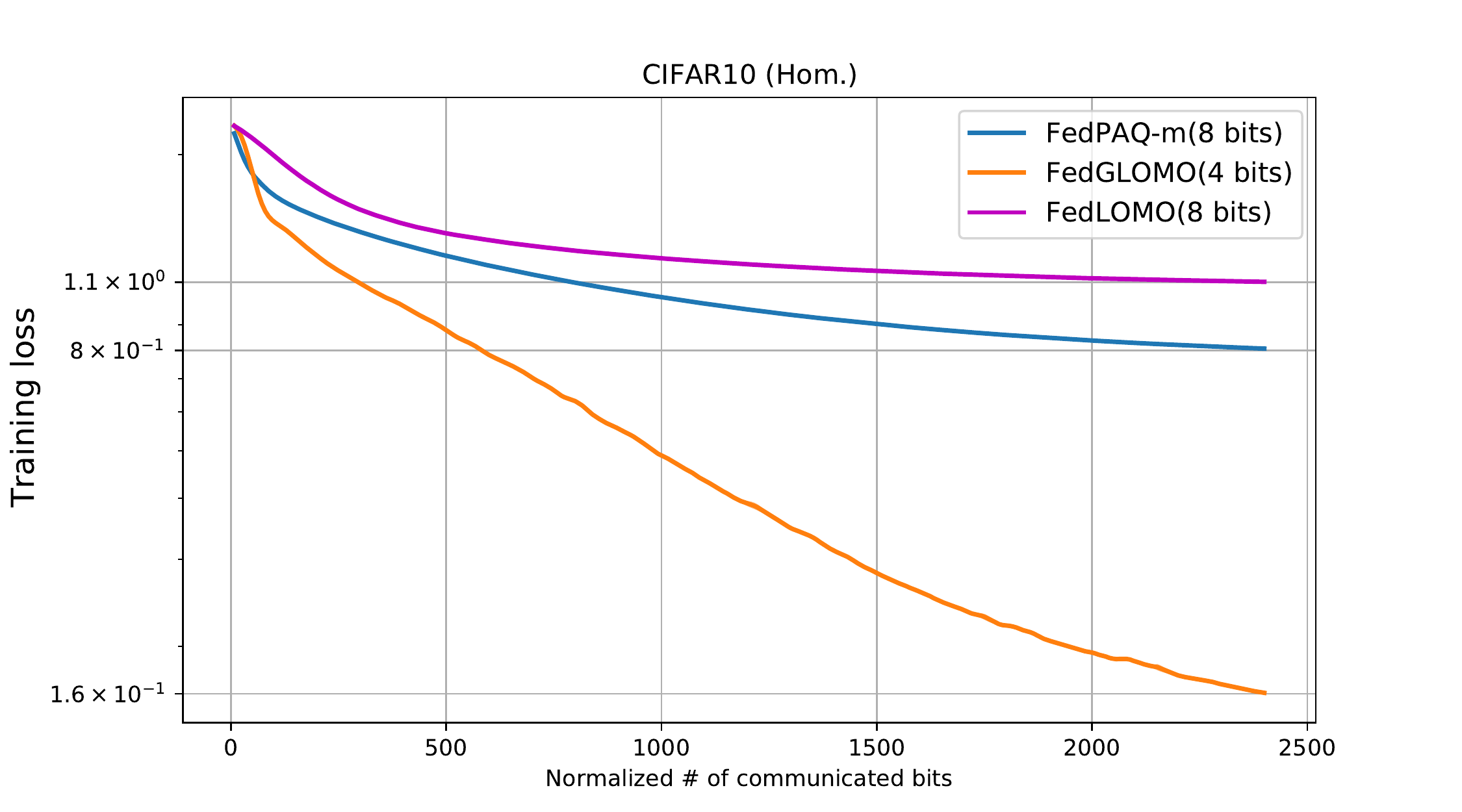}
	} 
\subfloat[Homogeneous CIFAR-10 test error]{
    \label{fig:11_d}
	\includegraphics[width=0.45\textwidth]{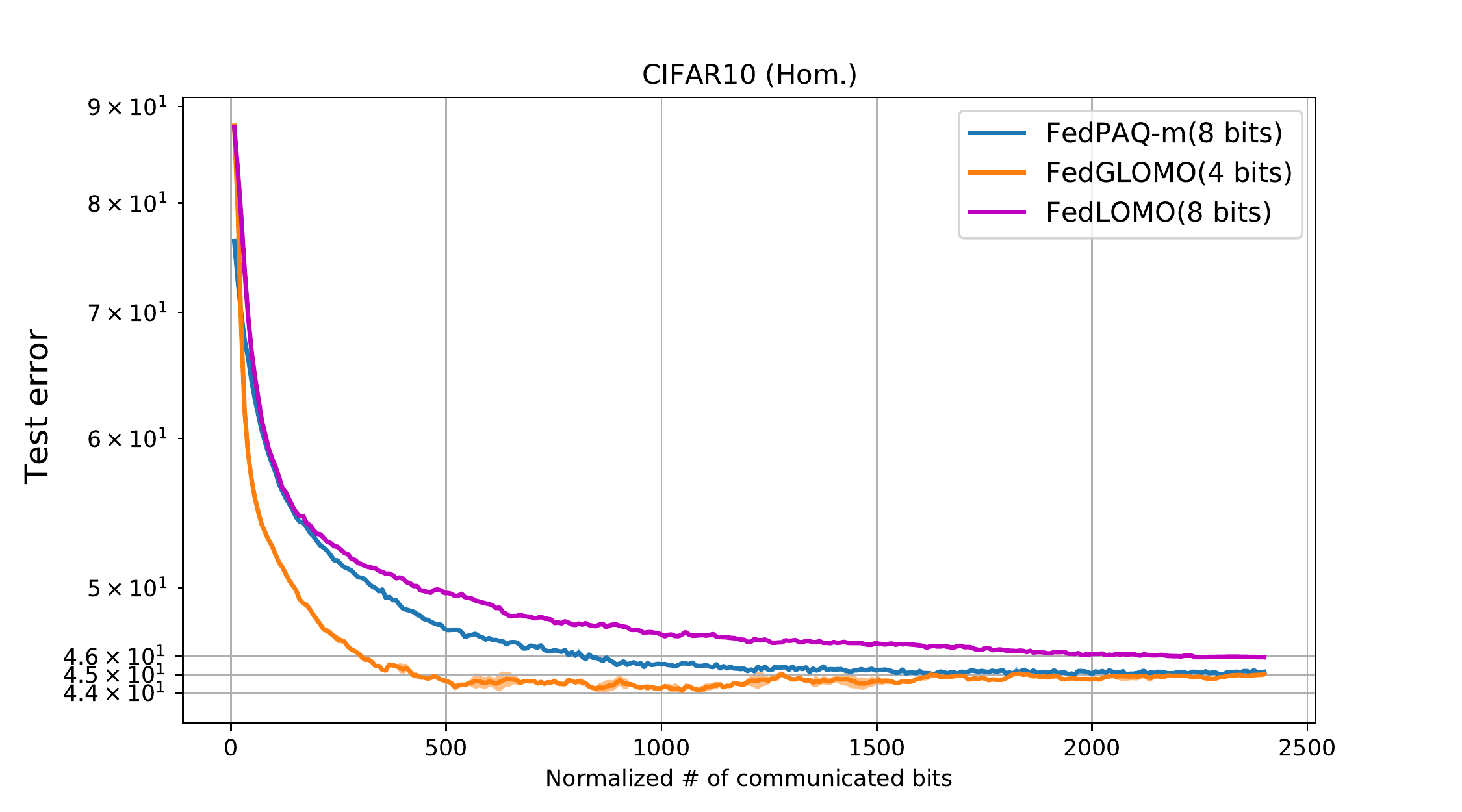}
	} 
\caption{Comparison of \texttt{FedPAQ}-m, \texttt{FedGLOMO} and \texttt{FedLOMO} (recall this has only local momentum and no global momentum) with the same per-round communication budget on FMNIST and CIFAR-10 in the heterogeneous (top four figures) and homogeneous (bottom four figures) settings, respectively. The x-axis is the total number of communicated bits divided by the dimension $d$ and the global batch-size $r$. \texttt{FedLOMO} is the slowest while \texttt{FedGLOMO} is the fastest, showing the ineffectiveness of {only} local momentum and \textit{the power of combining both local and global momentum}.
For both datasets, in the heterogeneous (resp., homogeneous) case, \texttt{FedGLOMO} attains the final test error of \texttt{FedPAQ}-m with only about a \textbf{third} (resp., less than a \textbf{fifth}) of the number of bits used by \texttt{FedPAQ}-m. In the heterogeneous case, \texttt{FedGLOMO} as well as \texttt{FedLOMO} have a smoother trajectory than \texttt{FedPAQ}-m due to the application of variance-reducing momentum. 
}
\label{fig:1}
\end{figure*}

In Fig. \ref{fig:1}, we compare \texttt{FedPAQ}-m and \texttt{FedLOMO} with 4 (resp., 8) bits per-round against \texttt{FedGLOMO} with 2 (resp., 4) bits per-round on FMNIST (resp., CIFAR-10) in the heterogeneous and homogeneous cases.
We set the number of per-round bits used by \texttt{FedPAQ}-m and \texttt{FedLOMO} to be twice that of \texttt{FedGLOMO} so that all algorithms have the same \textit{per-round} communication budget. All plots depict results over 3 independent runs; the shaded regions represent $\pm 1$ standard deviation whereas the solid lines are the respective means. Please see the discussion in the figure caption. Having shown the suboptimality of \texttt{FedLOMO} in Fig. \ref{fig:1}, we compare \texttt{FedAvg}-m, \texttt{FedGLOMO} without compression and \texttt{MimeSGDm} in the heterogeneous case in Fig. \ref{fig:2}. These experiments illustrate the \textit{power of global momentum}. Also see \Cref{extra_res} for additional experiments.

\begin{figure*}[!htb]
\centering 
\subfloat[FMNIST train loss]{
    \label{fig:2_a}
	\includegraphics[width=0.45\textwidth]{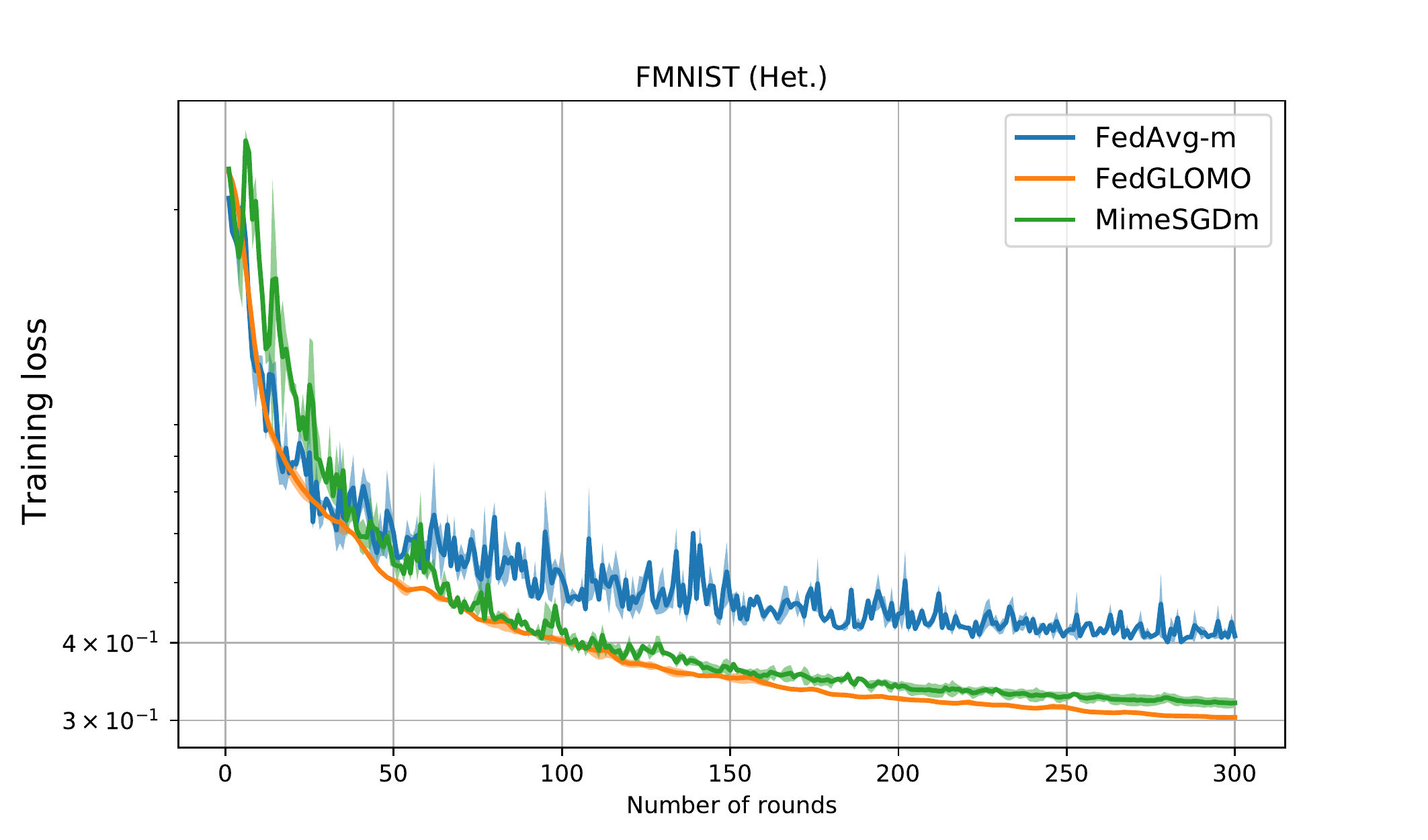}
	} 
\subfloat[FMNIST test err]{
    \label{fig:2_b}
	\includegraphics[width=0.45\textwidth]{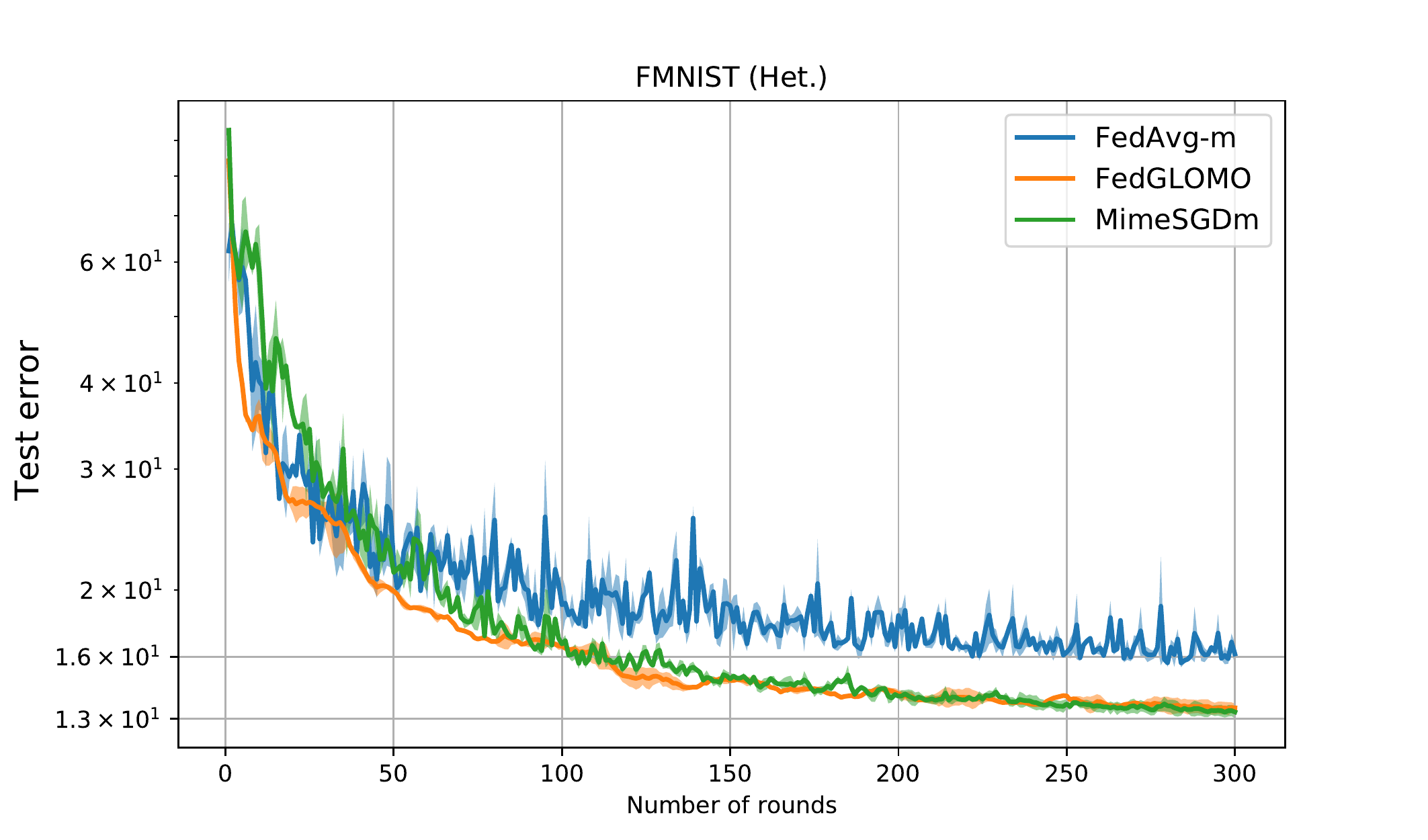}
	} 
\\
\subfloat[CIFAR-10 train loss]{
    \label{fig:2_c}
	\includegraphics[width=0.45\textwidth]{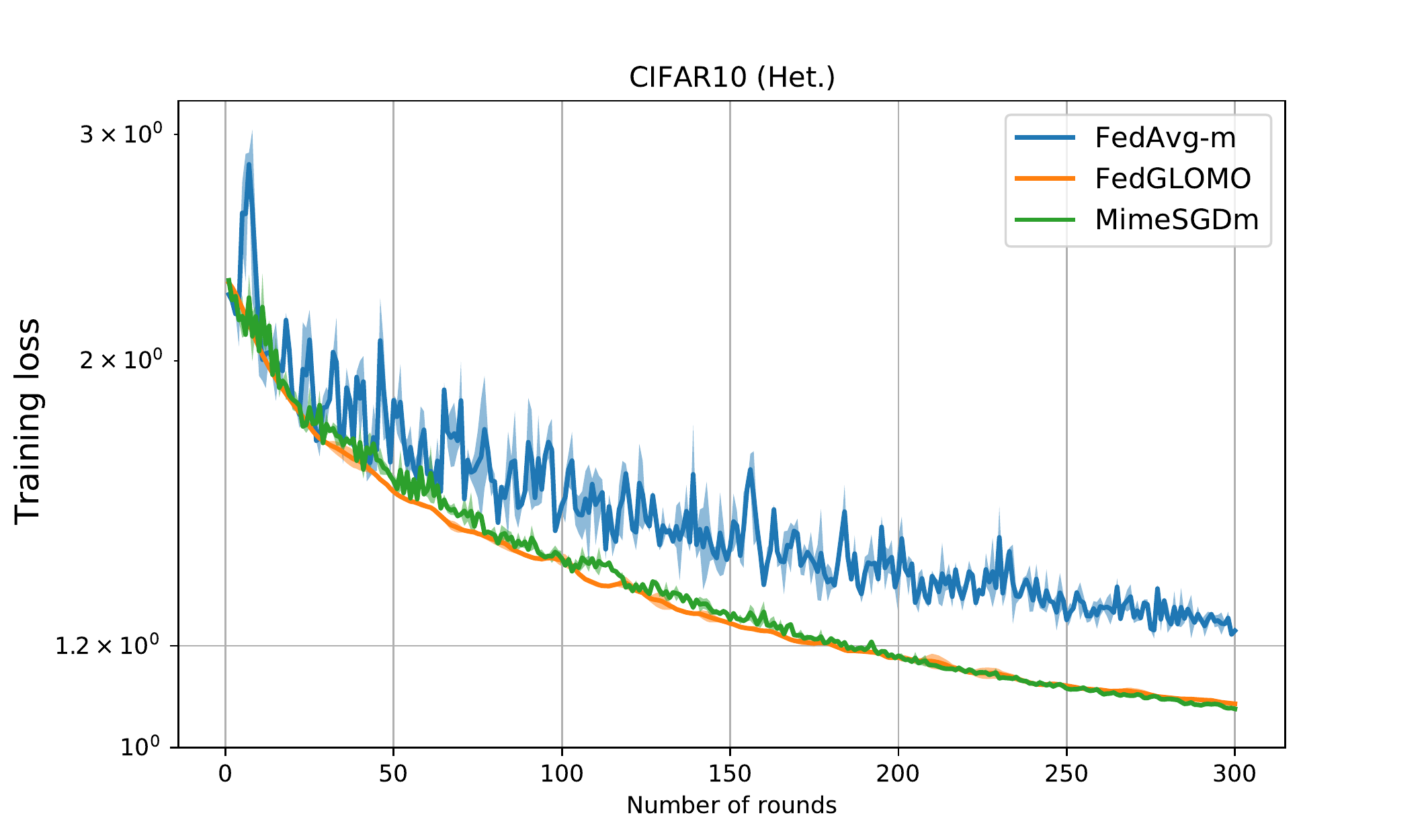}
	} 
\subfloat[CIFAR-10 test err]{
    \label{fig:2_d}
	\includegraphics[width=0.45\textwidth]{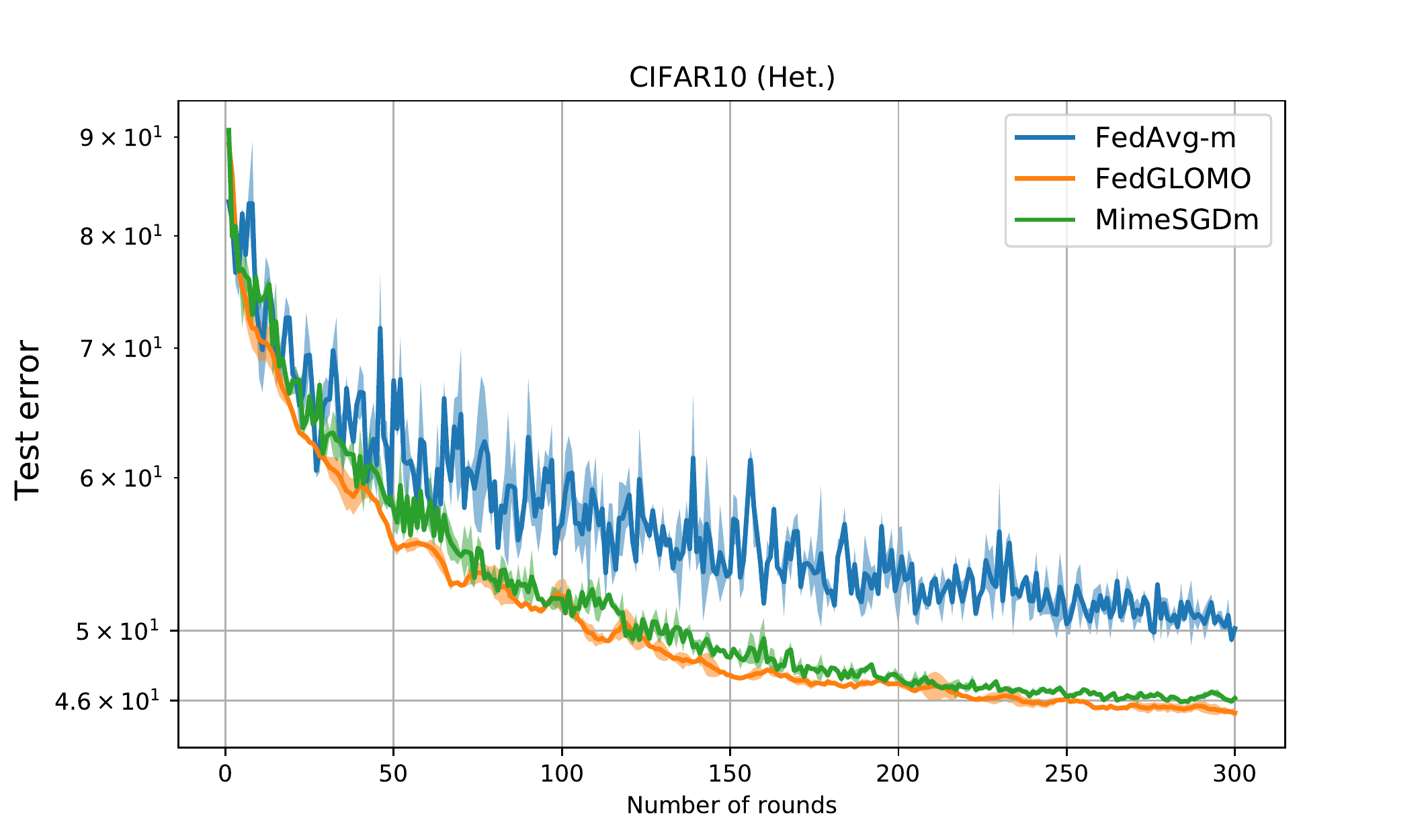}
	} 
\caption{\textbf{Heterogeneous case:} Comparison of \texttt{FedAvg}-m, \texttt{FedGLOMO} (without compression) and \texttt{MimeSGDm} on FMNIST (top) and CIFAR-10 (bottom). 
On both datasets, \texttt{FedAvg}-m is the slowest while \texttt{FedGLOMO} is somewhat faster than \texttt{MimeSGDm}. While \texttt{Mime} has an explicit client-drift control mechanism, we do not have that in \texttt{FedGLOMO}, but still \textit{our proposed global momentum implicitly mitigates client-drift}.}
\label{fig:2}
\end{figure*}

\subsection{Verifying 
\texorpdfstring{\Cref{as-het}}{Lg} for \texttt{FedGLOMO}}
\label{sec:het-asm-expt}
We compute $\alpha = \max_{\tau \in [E]} 
\frac{\|\sum_{i \in [n]}\widetilde{\bm{e}}_{\tau}^{(i)}\|^2}{\sum_{i \in [n]} \|\widetilde{\bm{e}}_{\tau}^{(i)}\|^2}$, where $\widetilde{\bm{e}}_{\tau}^{(i)}$ is as defined in \Cref{as-het}, for 4 and 2 bit \texttt{FedGLOMO} on CIFAR-10 and FMNIST, respectively. 
Note that we remove the expectation (w.r.t. the stochastic gradients) while computing $\alpha$ for empirical verification. In Fig. \ref{fig:het0}, we plot $(\alpha/n)$ over different rounds for the heterogeneous as well as homogeneous case on both datasets; see the discussion in the figure caption.
\begin{figure}[!htb]
\centering 
\subfloat[CIFAR-10]{
    \label{fig:het_a}
	\includegraphics[width=0.45\textwidth]{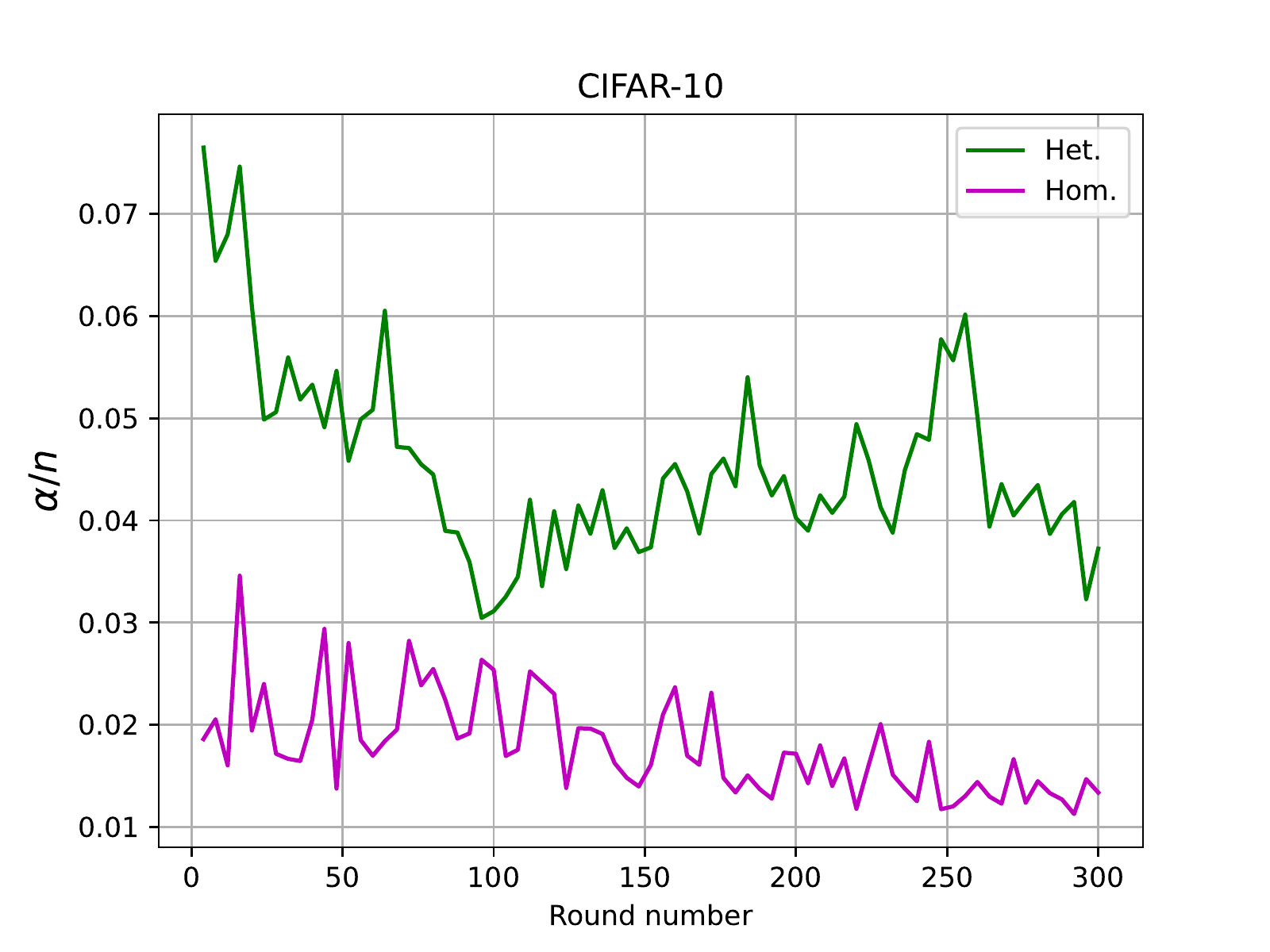}
	} 
\subfloat[FMNIST]{
    \label{fig:het_b}
	\includegraphics[width=0.45\textwidth]{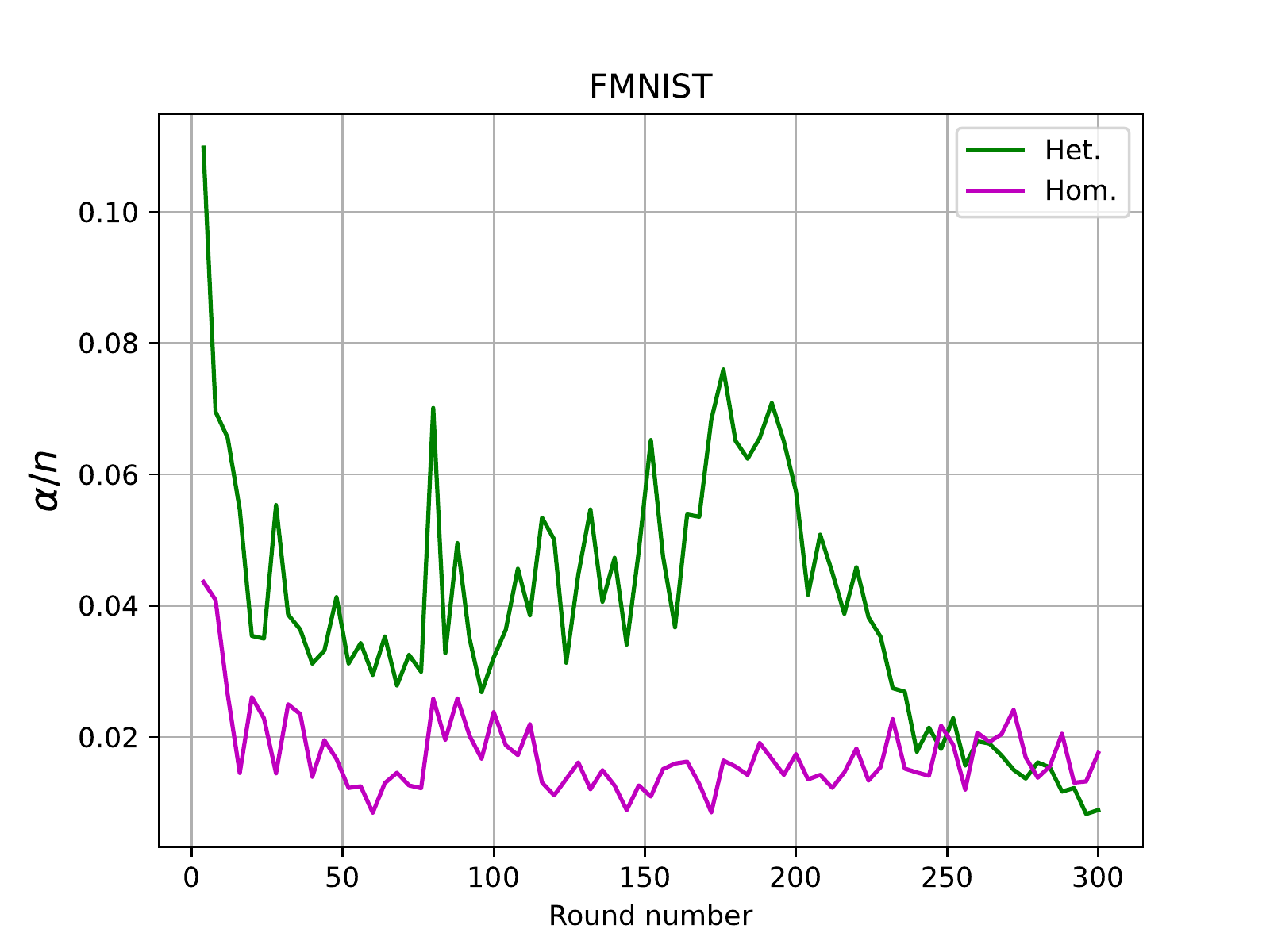}
	} 
\caption{Variation of $(\frac{\alpha}{n})$ over different rounds of $4$ and $2$ bit \texttt{FedGLOMO} for CIFAR-10 (Fig. \ref{fig:het_a}) and FMNIST (Fig. \ref{fig:het_b}) in the heterogeneous and homogeneous cases. In both cases, notice that $\alpha \ll n$ throughout training. {Also, as discussed after the statement of 
\Cref{as-het}, observe that $(\frac{\alpha}{n})$ is higher for the heterogeneous case (except towards the end of training for FMNIST).}
}
\label{fig:het0}
\end{figure}
\clearpage
\section{Conclusion}
{We presented \texttt{FedGLOMO}, a communication-efficient algorithm for faster federated learning (FL) via the application of variance-reducing momentum, both in the aggregation step at the server as well as local client updates. We showed that \texttt{FedGLOMO} has better iteration complexity than prior work on smooth non-convex functions with compressed communication. Further, unlike prior work, our result is under \Cref{as-het}, which is a novel and verifiable client-heterogeneity assumption, even allowing for arbitrary client heterogeneity.
We also demonstrate the efficacy of \texttt{FedGLOMO} via extensive experiments.

Apart from addressing the limitation discussed in \Cref{rem-sep29-5}, there are several avenues of future work possible such as verifying \Cref{as-het} for other FL algorithms and deriving convergence results based on it, obtaining lower bounds on the iteration  complexity of non-convex FL, coming up with an error-compensated version of \texttt{FedGLOMO} for biased compressors, etc.}

\section{Acknowledgement}
This work is supported in part by NSF grants CCF-1564000, IIS-1546452 and HDR-1934932.

\bibliography{bib}
\bibliographystyle{acm}

\clearpage

\appendix

\clearpage
\onecolumn
\appendix

\begin{center}
    \textbf{\Large Appendix}\vspace{5mm}
\end{center}

\section{Reduction in total communication
cost when \texorpdfstring{$r \ll n$}{Lg}} 
\label{sec:red_bits}
Here, we derive the claim made in the second paragraph of \Cref{rem-sep21-3}. We consider the practical regime of $r \ll n$ (as well as $\alpha \ll n$).

First, consider the case where the clients communicate at full precision using 32 bits, i.e., $q=0$. The number of rounds of communication $K_1$ needed to reach an $\epsilon$ stationary point is:
\begin{flalign*}
    K_1 \approx \Big(\frac{39L f(\bm{w}_0)}{\epsilon}\Big)^{1.5} \Big(800 e^2 (E+1)^2 \frac{(n-r)}{r(n-1)}\Big)^{0.5}
\end{flalign*}
Since the communication cost per-round is proportional to $r \times (32d)$ bits (recall $d$ is the model dimension), the total communication cost $C_1$ in this case is:
\begin{flalign*}
    C_1 & = 32dr \times K_1 
    \\
    & = 32dr \Big(\frac{39L f(\bm{w}_0)}{\epsilon}\Big)^{1.5} \Big(800 e^2 (E+1)^2 \frac{(n-r)}{r(n-1)}\Big)^{0.5}.
\end{flalign*}
Now, let us consider the QSGD quantizer of \cite{alistarh2017qsgd} with $s = \sqrt{d}$ \footnote{\cite{alistarh2017qsgd} uses $n$ to denote the dimension}. With this choice, $q=1$. Here, the number of rounds of communication $K_2$ needed to reach an $\epsilon$ stationary point is:
\begin{flalign*}
    K_2 \approx \Big(\frac{39L f(\bm{w}_0)}{\epsilon}\Big)^{1.5} \Big(3200 e^2 (E+1)^2 \frac{(n-r)}{r(n-1)} \Big)^{0.5}.
\end{flalign*}
Now using Theorem 3.4 of \cite{alistarh2017qsgd}, under the special case of $s = \sqrt{d}$, the communication cost per-round can be reduced to $r \times (2.8d + 32)$ bits. Hence, the total communication cost $C_2$ in this case is:
\begin{flalign*}
    C_2 & \approx (2.8d + 32) r \times K_2 
    \\
    & \approx (2.8d + 32) r \Big(\frac{39L f(\bm{w}_0)}{\epsilon}\Big)^{1.5} \Big(3200 e^2 (E+1)^2 \frac{(n-r)}{r(n-1)} \Big)^{0.5}.
\end{flalign*}
Therefore,
\[\frac{C_1}{C_2} \approx \frac{32}{2.8 \times 4^{0.5}} \approx 5.7.\]

\section{Algorithmic and Theoretical Comparison with MIME \texorpdfstring{\cite{karimireddy2020mime}}{Lg}}
\label{sec:disc}
We now discuss the major algorithmic and theoretical differences of our work from  \cite{karimireddy2020mime}. 

\begin{itemize}
    \item Algorithmically, \cite{karimireddy2020mime} do not explicitly \textit{apply} any momentum at the server. Instead, they apply globally computed momentum in the local updates of the clients.
    On the other hand, \texttt{FedGLOMO} has an explicit momentum-based update at the server to enable global variance reduction, apart from local momentum applied in the client updates.
    \item Unlike \texttt{FedGLOMO}, the algorithms of \cite{karimireddy2020mime} do not have any quantized/compressed communication. As we discussed in \Cref{rem-sep21-3}, 
    maintaining the improved complexity of $\mathcal{O}(\epsilon^{-1.5})$ with compressed communication is not trivial.
    \item Even in the absence of any compressed communication, \texttt{FedGLOMO} is more communication-efficient than Mime requiring three-fourth / half the number of bits that Mime requires per-round for server to clients as well as clients to server communication / only clients to server communication (which is typically the bottleneck in FL). This is because in Mime, the server needs to send $\bm{x}$ (sending some other statistics $\bm{s}$ would require even more bits) and $\bm{c}$ to the clients, and the clients need to send back $(\bm{y}_i,\nabla f_i(\bm{x})$) to the server (please see their notation). In \texttt{FedGLOMO}, the server needs to send $\bm{w}_k$ and $\bm{w}_{k-1}$ to the clients, but the clients can just send back $\{{(\bm{w}_{k} - {\bm{w}_{k,E}^{(i)}})}  - (1-\beta_k){({\bm{w}_{k-1} - \widehat{\bm{w}}_{k-1,E}^{(i)}})}\}$ to the server in the absence of any quantization -- this can be verified by just removing the quantization operator $Q_D$ and expanding the update rule of $\bm{u}_k$ (line 10 of \Cref{alg:2}) for $k>0$.
    \item Our theory for \texttt{FedGLOMO} does not use the bounded client dissimilarity (BCD) assumption, i.e., \cref{eq:bcd}.
    Instead, we propose and use \Cref{as-het}, which allows for arbitrary client heterogeneity; in the worst case, \Cref{as-het} will hold with $\alpha = n$. In contrast, the results of MimeMVR use the BCD assumption.
    \item See the full version of Theorem V (on page 39 of the latest arXiv draft) of \cite{karimireddy2020mime} for MimeMVR. Their result is in terms of the gradient of $f$ at the local client parameters and not the actual server parameters, which is not ideal. Our result for \texttt{FedGLOMO} is completely in terms of the gradient of $f$ at the server parameters.
\end{itemize}

\section{Comparison with \texorpdfstring{\cite{gorbunov2021marina}}{Lg}}
\label{sec:marina}
As mentioned in \Cref{rel-work}, \cite{gorbunov2021marina} also propose algorithms with improved complexity in the \textit{distributed setting without multiple local update steps}. 
Since our work is under partial-device participation, we compare against their algorithm for the same case, i.e. PP-MARINA. Note that PP-MARINA has a probability $p$ of using \textit{full gradients from all clients} (i.e., full device participation) in each iteration. For a fair comparison against \texttt{FedGLOMO}, which uses gradients from all the clients \textit{only in the first round} (see \Cref{nov4-thm1}), $p$ should be set to $\frac{1}{K}$ ($K$ being the number of rounds) -- in which case, their complexity is $O(\epsilon^{-2})$ which is worse than ours. See Theorem 4.1 in their paper for this.

\section{Experimental Details and More Experiments}
\label{sec:extra-exp}
We first describe the procedure we have used to generate heterogeneous data distribution (among the clients). First, the training data (of both CIFAR-10 and FMNIST) was sorted based on labels and then divided into 100 equal data-shards. Splitting the data into 100 equal shards (after sorting) ensures that each shard contains data from only one class for both CIFAR-10 and FMNIST. Since the number of clients in our experiments is fixed to 50, each client is assigned 2 shards chosen uniformly at random without replacement -- this ensures that each client can have data belonging to either just one class or two classes at the most. For the homogeneous case, we distribute the training data uniformly at random among the clients.

In all our experiments (including the ones in \Cref{extra_res}), we use the learning rate schedule suggested in \cite{bottou2012stochastic} where we decimate the client learning rate by 1\% after every round, i.e., $\eta_k = (0.99)^k \eta_0$, $\eta_0$ being the initial learning rate. Note that this learning rate schedule has been used earlier for FL experiments in \cite{haddadpour2020federated}.
We search the initial learning rates over $\{10^{-3}, 5 \times 10^{-3}, 10^{-2}, 5 \times 10^{-2}, 10^{-1}\}$; 
the best performance is obtained with an initial learning rate of $10^{-2}$ in almost all the cases. 

We make a small modification to \texttt{FedGLOMO} in our experiments for the heterogeneous case. Specifically, we modify line \ref{l2} (which is the local momentum application step) of \Cref{alg:2-local} as follows:
\[\text{Update: } \bm{v}_{k,\tau}^{(i)} = \widetilde{\nabla} f_i(\bm{w}_{k,\tau}^{(i)};\mathcal{B}_{k,\tau}^{(i)}) + {\color{blue} 0.8}\big(\bm{v}_{k,\tau-1}^{(i)} - \widetilde{\nabla} f_i(\bm{w}_{k,\tau-1}^{(i)};\mathcal{B}_{k,\tau}^{(i)})\big) \text{ and}\]
\[\widehat{\bm{v}}_{k-1,\tau}^{(i)} = \widetilde{\nabla} f_i(\widehat{\bm{w}}_{k-1,\tau}^{(i)};\mathcal{B}_{k,\tau}^{(i)}) + {\color{blue} 0.8}\big(\widehat{\bm{v}}_{k-1,\tau-1}^{(i)} - \widetilde{\nabla} f_i(\widehat{\bm{w}}_{k-1,\tau-1}^{(i)};\mathcal{B}_{k,\tau}^{(i)})\big).\]
Without applying the above damping factor of 0.8, \texttt{FedGLOMO} seems to diverge -- this is probably because we have chosen the number of local updates to be too large.

All experiments are run on a single NVIDIA TITAN Xp GPU.

\subsection{More experimental results:}
\label{extra_res}
Having shown the suboptimality of \texttt{FedLOMO} in the experiments of the main paper (i.e., in \Cref{sec:exp}), we do not compare against \texttt{FedLOMO} anymore. For all the experiments, \textit{the figure captions discuss the results in detail}.
\\
\\
\noindent \textbf{More comparisons against \texttt{FedPAQ}-m:}
In \Cref{fig:1s}, similar to \Cref{fig:1}, we compare \texttt{FedPAQ}-m and \texttt{FedGLOMO} in the heterogeneous case with a smaller per-round communication budget. 

In all the experiments so far, we have chosen $E = 10$ and $K = 300$, and so $T = K E = 3000$. In \Cref{fig:4}, we compare 2 bit \texttt{FedPAQ}-m against 1 bit \texttt{FedGLOMO} in the heterogeneous case on FMNIST with two other combinations of $E$ and $K$ such that $T = 3000$ -- these are $E=5,K=600$ and $E=20,K=150$. Compared to $E=10,K=300$, we halve and double the initial learning rate $\eta_0$ for $E=20,K=150$ and $E=5,K=600$, respectively; we do this because from our theory, we are required to have $\eta_0 E < c$ for some constant $c$. 
Everything else is the same as described in \Cref{sec:exp}.

Now, we move onto changing the number of participating clients or global batch size, i.e., $r$. Recall that so far, we have set $r = 0.5n$. In \Cref{fig:3}, we compare 8-bit \texttt{FedPAQ}-m against 4-bit \texttt{FedGLOMO} with smaller values of $r$ in the heterogeneous case on CIFAR-10 with everything else remaining the same (including $K = 300$ and $E = 10$).

\begin{figure*}[!htb]
\centering 
\subfloat[FMNIST Train Loss]{
    \label{fig:1s_a}
	\includegraphics[width=0.45\textwidth]{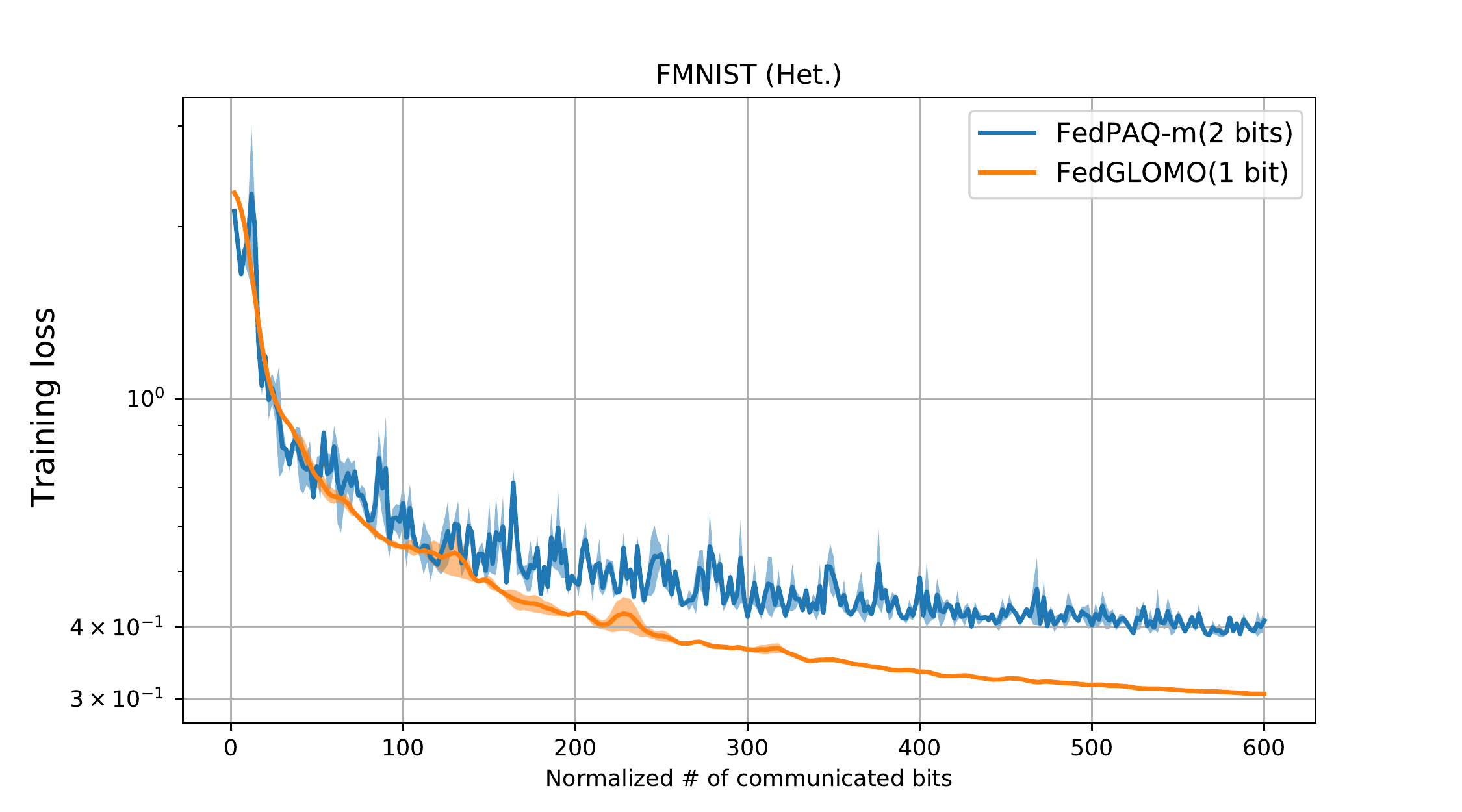}
	} 
\subfloat[FMNIST Test Error]{
    \label{fig:1s_b}
	\includegraphics[width=0.45\textwidth]{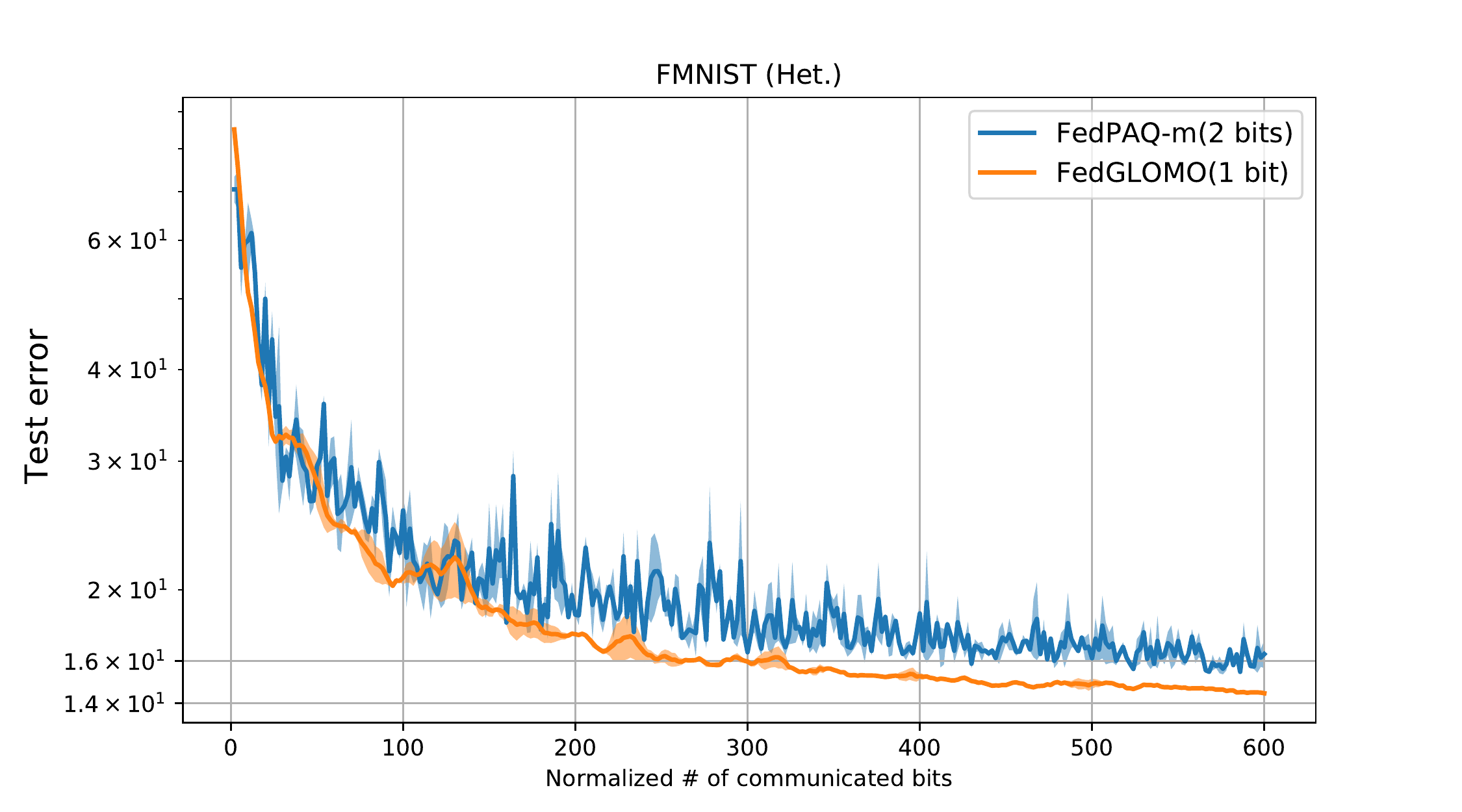}
	} 
\\
\subfloat[CIFAR-10 Train Loss]{
    \label{fig:1s_c}
	\includegraphics[width=0.45\textwidth]{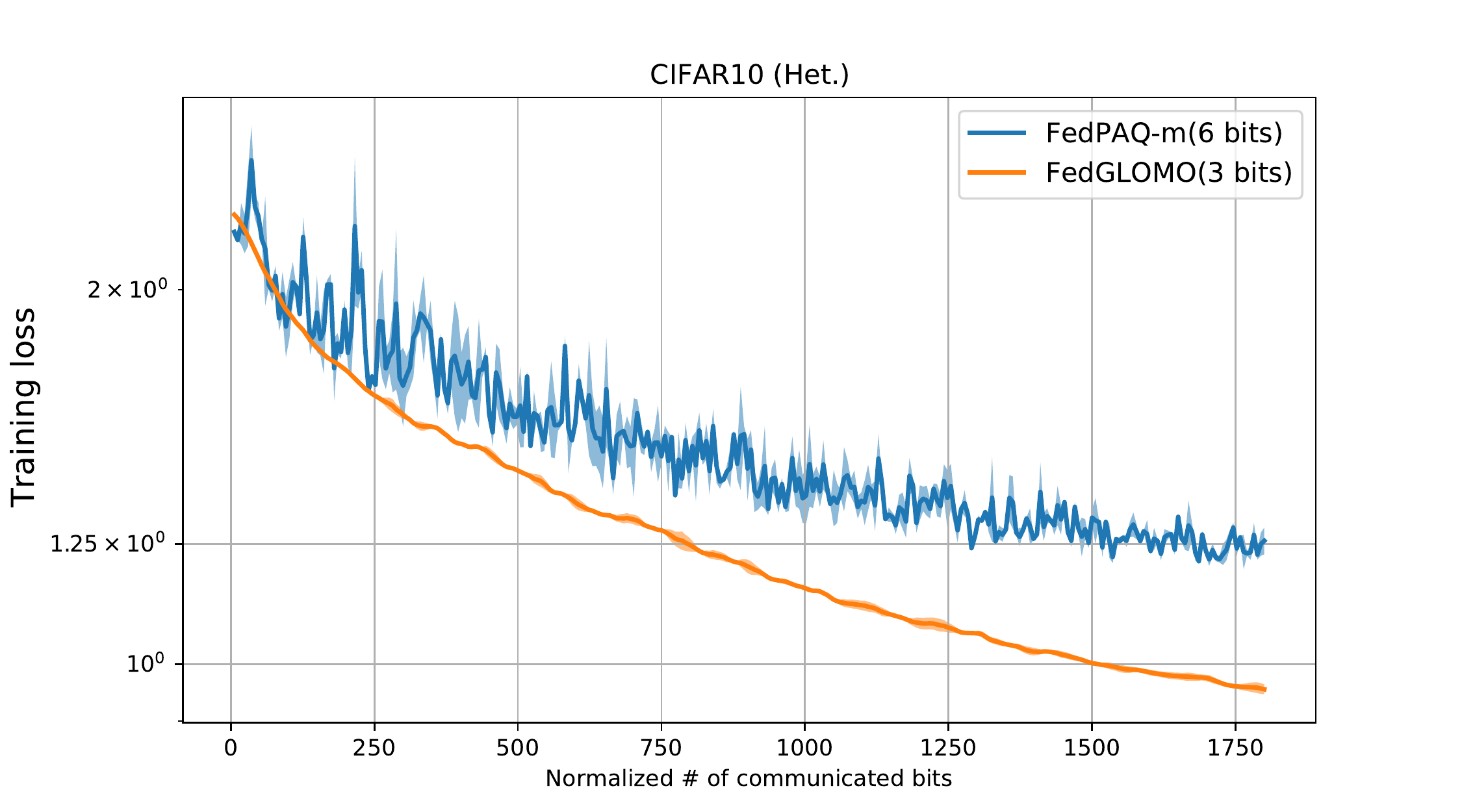}
	} 
\subfloat[CIFAR-10 Test Error]{
    \label{fig:1s_d}
	\includegraphics[width=0.45\textwidth]{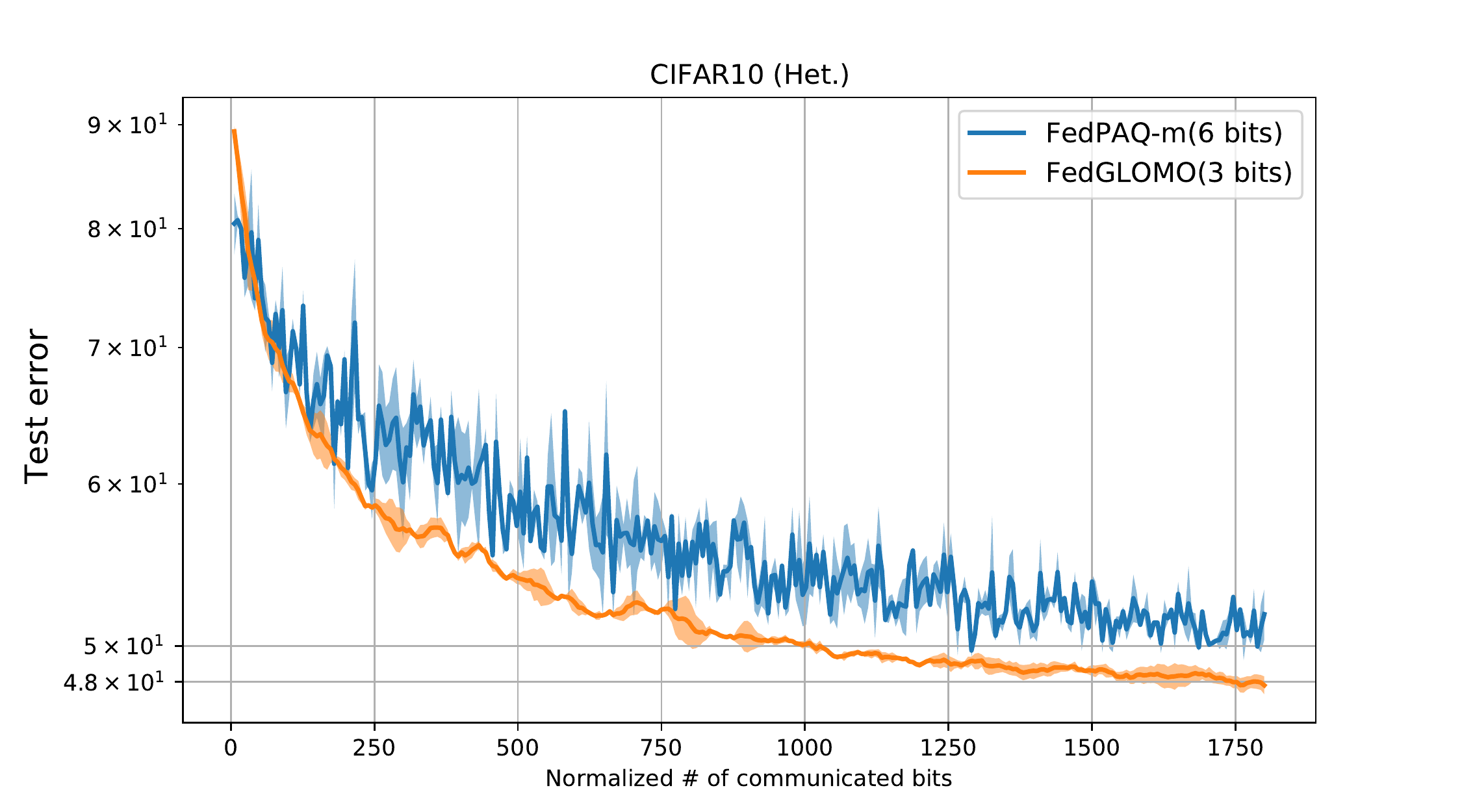}
	} 
\caption{\textbf{Heterogeneous case:} {
2 bit \texttt{FedPAQ}-m vs. 1 bit \texttt{FedGLOMO} on FMNIST at the top, and 6 bit \texttt{FedPAQ}-m vs. 3 bit \texttt{FedGLOMO} on CIFAR-10 at the bottom. The x-axis is the total number of communicated bits divided by the dimension $d$ and the global batch-size $r$. The trend is similar to \Cref{fig:1}.}}
\label{fig:1s}
\end{figure*}

\begin{figure*}[!htb]
\centering 
\subfloat[$E=5,K=600$]{
    \label{fig:4_a}
	\includegraphics[width=0.45\textwidth]{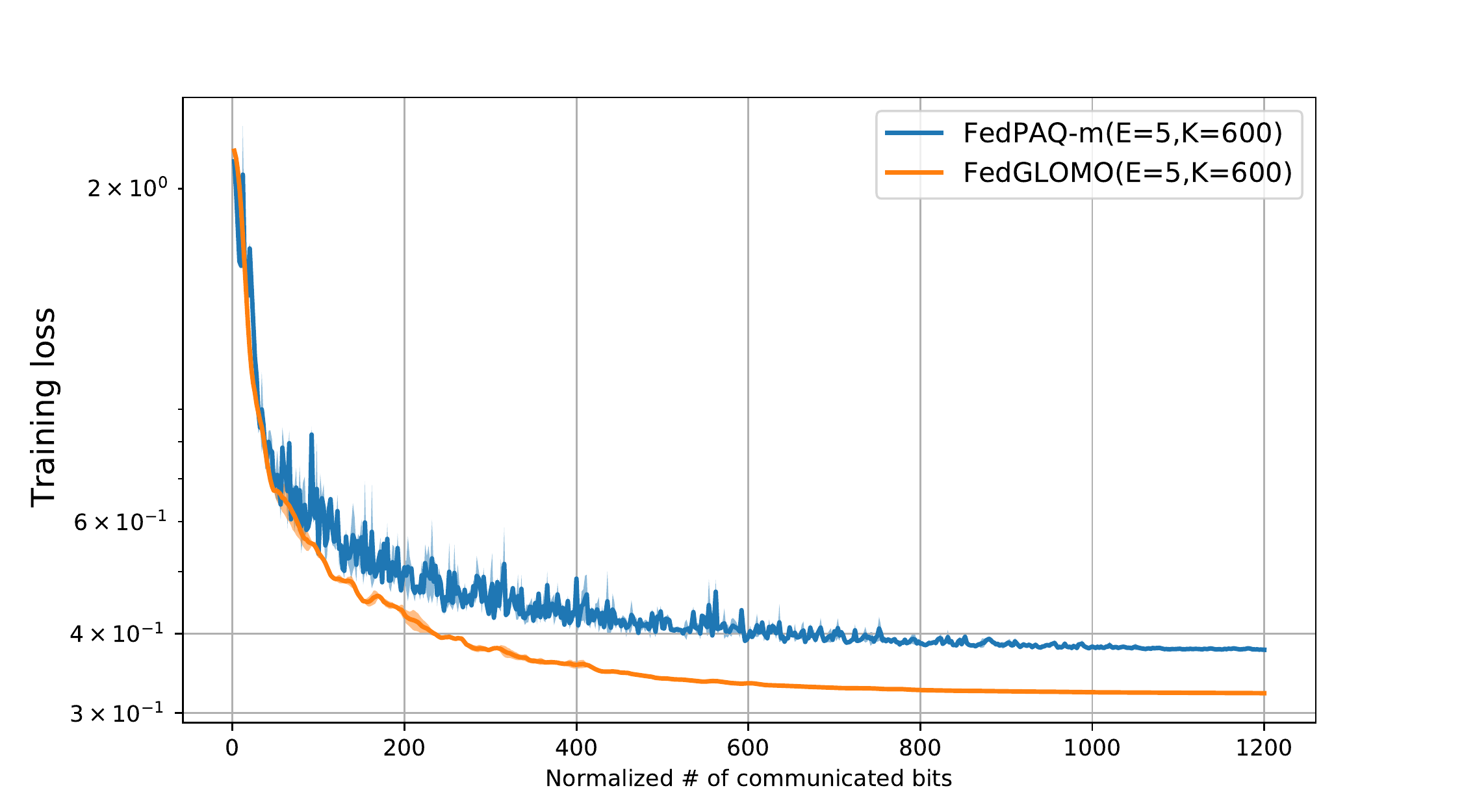}
	} 
\subfloat[$E=5,K=600$]{
    \label{fig:4_b}
	\includegraphics[width=0.45\textwidth]{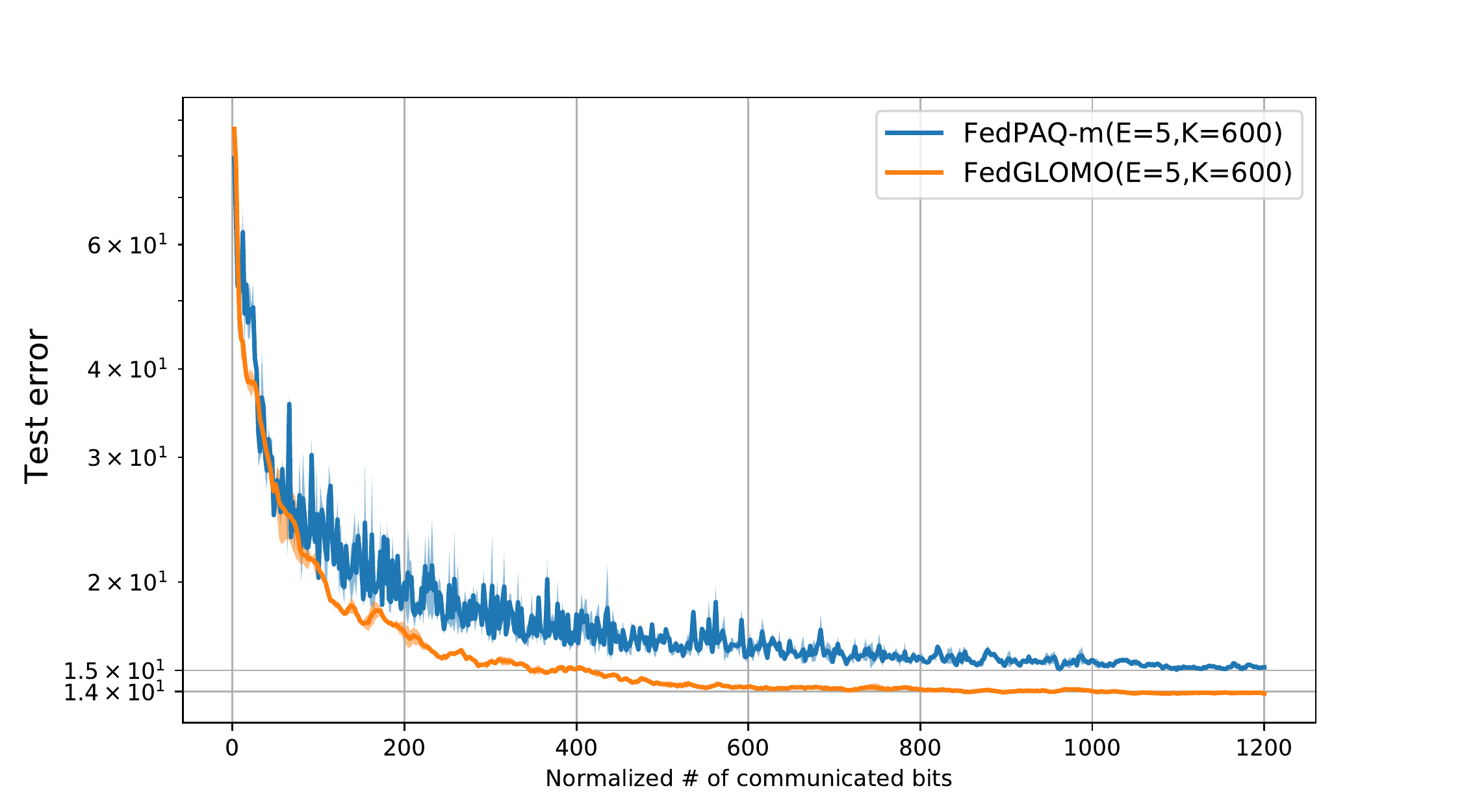}
	} 
\\
\subfloat[$E=20,K=150$]{
    \label{fig:4_c}
	\includegraphics[width=0.45\textwidth]{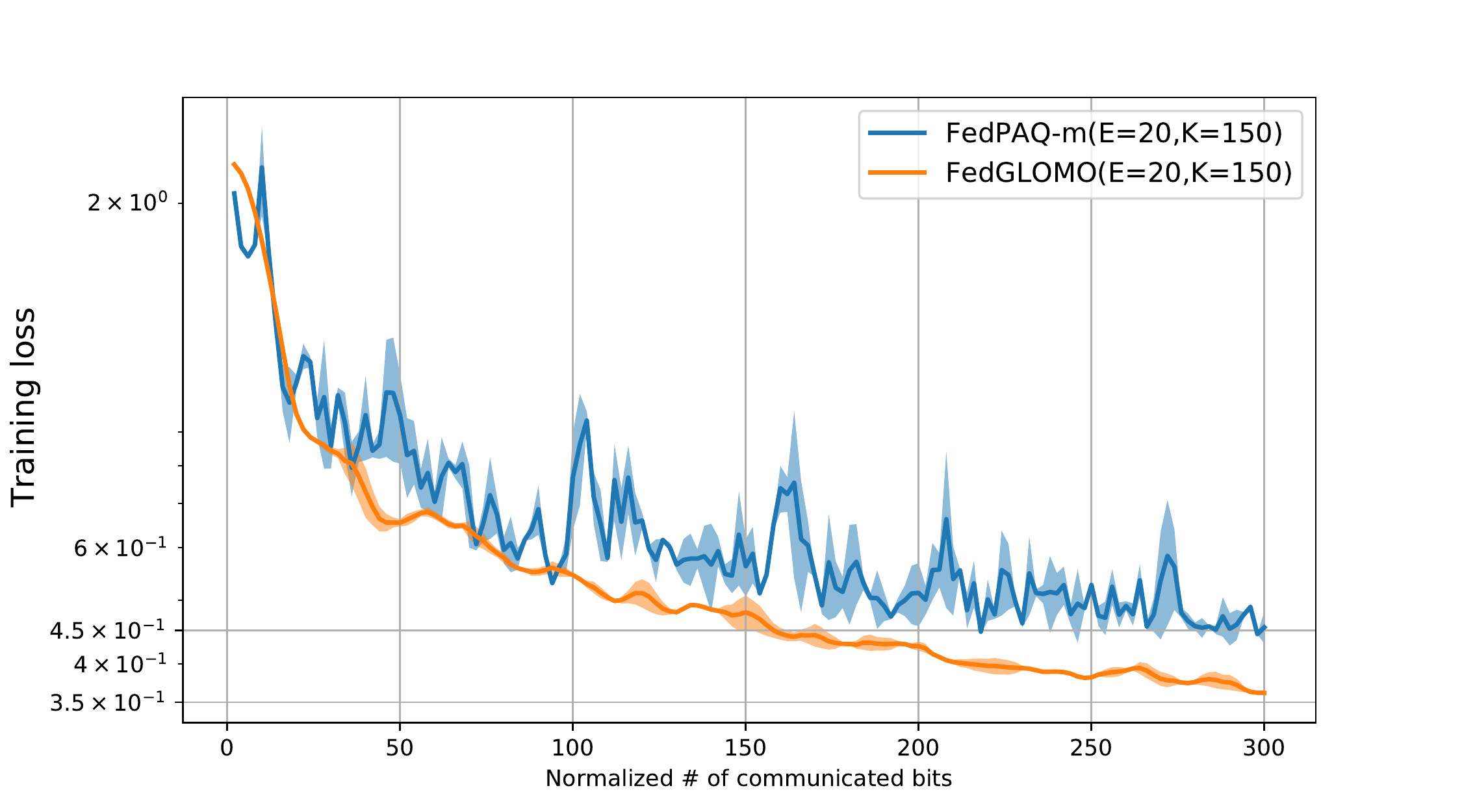}
	} 
\subfloat[$E=20,K=150$]{
    \label{fig:4_d}
	\includegraphics[width=0.45\textwidth]{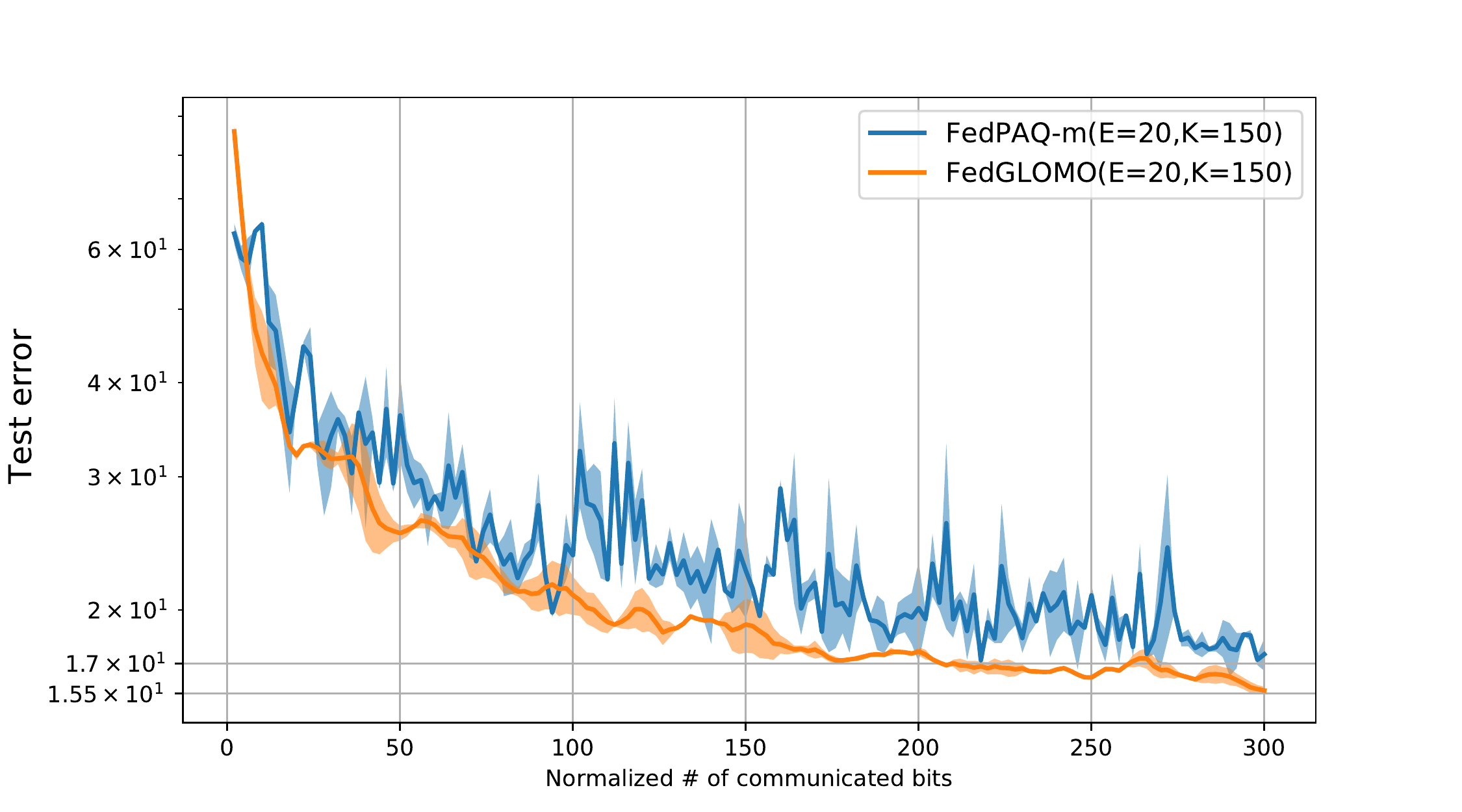}
	} 
\caption{\textbf{FMNIST Heterogeneous case:} 1-bit \texttt{FedGLOMO} vs. 2-bit \texttt{FedPAQ}-m for other values of $E$ and $K$, such that $T = K E = 3000$. Just as \Cref{fig:1s}, the x-axis is the total number of communicated bits divided by the dimension $d$ and the global batch-size $r$.
For $E = 5, K = 600$, \texttt{FedGLOMO} reaches the final test error of \texttt{FedPAQ}-m with about a \textbf{third} of the number of bits used by \texttt{FedPAQ}-m. For $E = 20, K = 150$, the corresponding number increases to just above two-thirds.}
\label{fig:4}
\end{figure*}

\begin{figure*}[!htb]
\centering 
\subfloat[$r=0.3n$]{
    \label{fig:3_a}
	\includegraphics[width=0.45\textwidth]{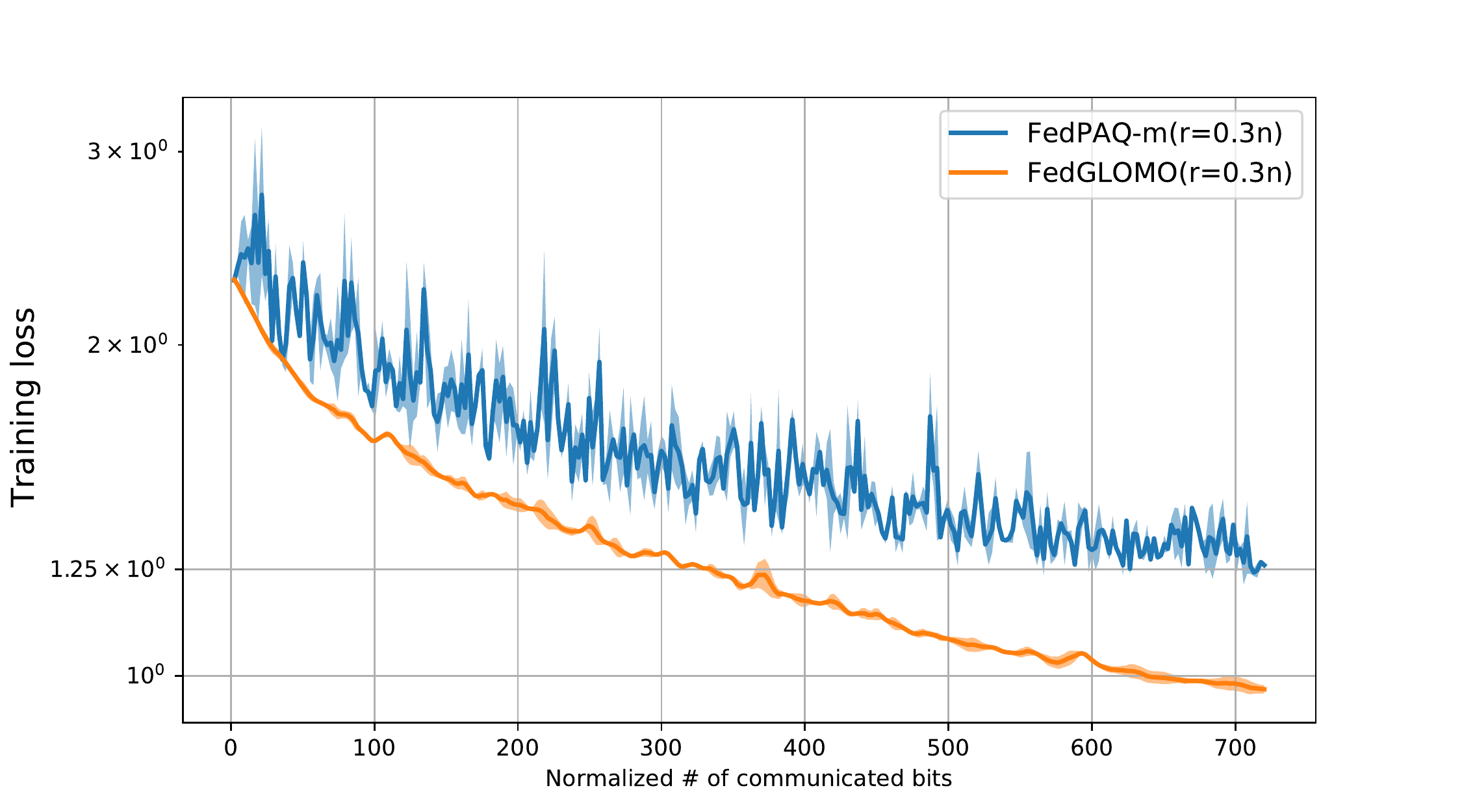}
	} 
\subfloat[$r=0.3n$]{
    \label{fig:3_b}
	\includegraphics[width=0.45\textwidth]{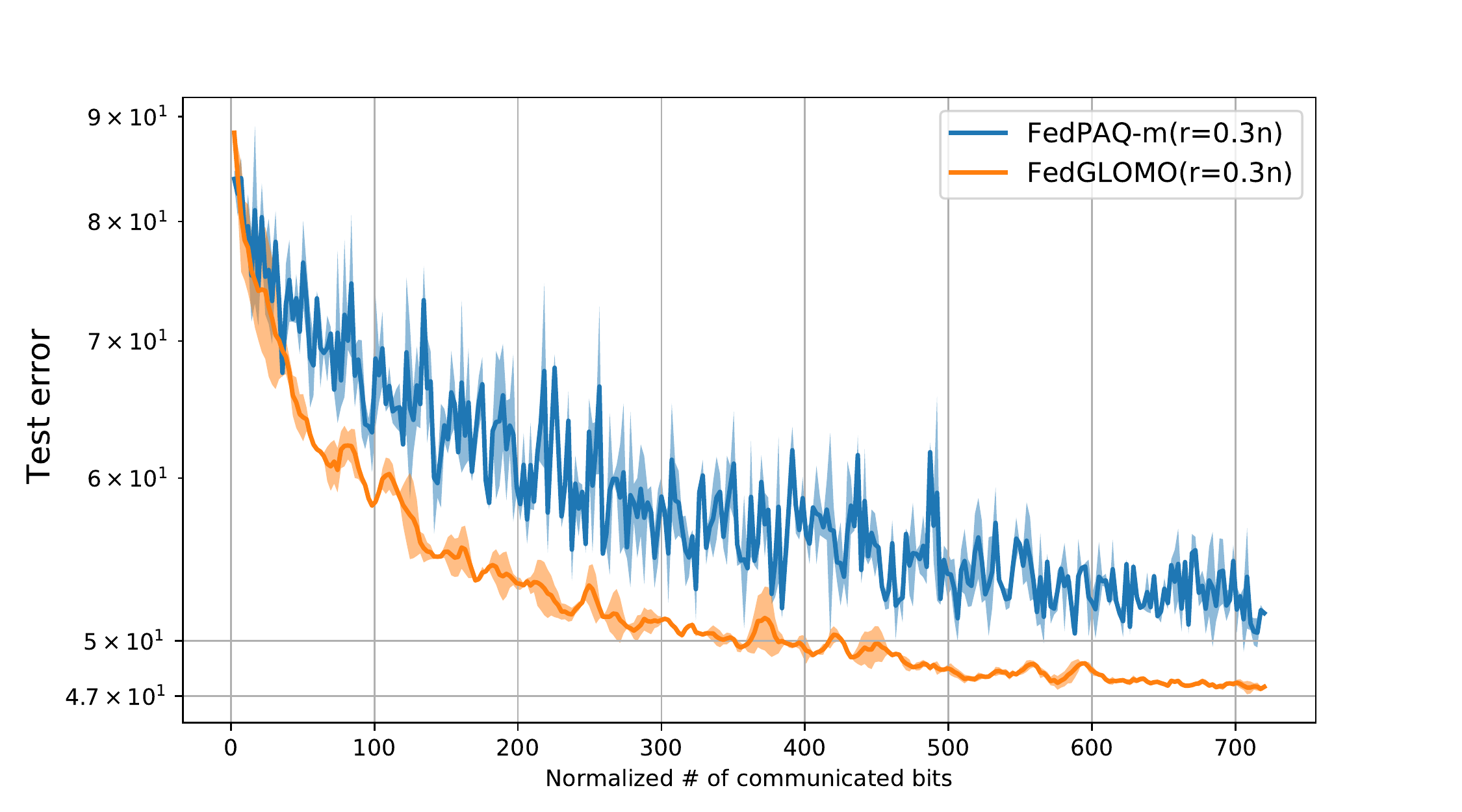}
	} 
\\
\subfloat[$r=0.1n$]{
    \label{fig:3_c}
	\includegraphics[width=0.45\textwidth]{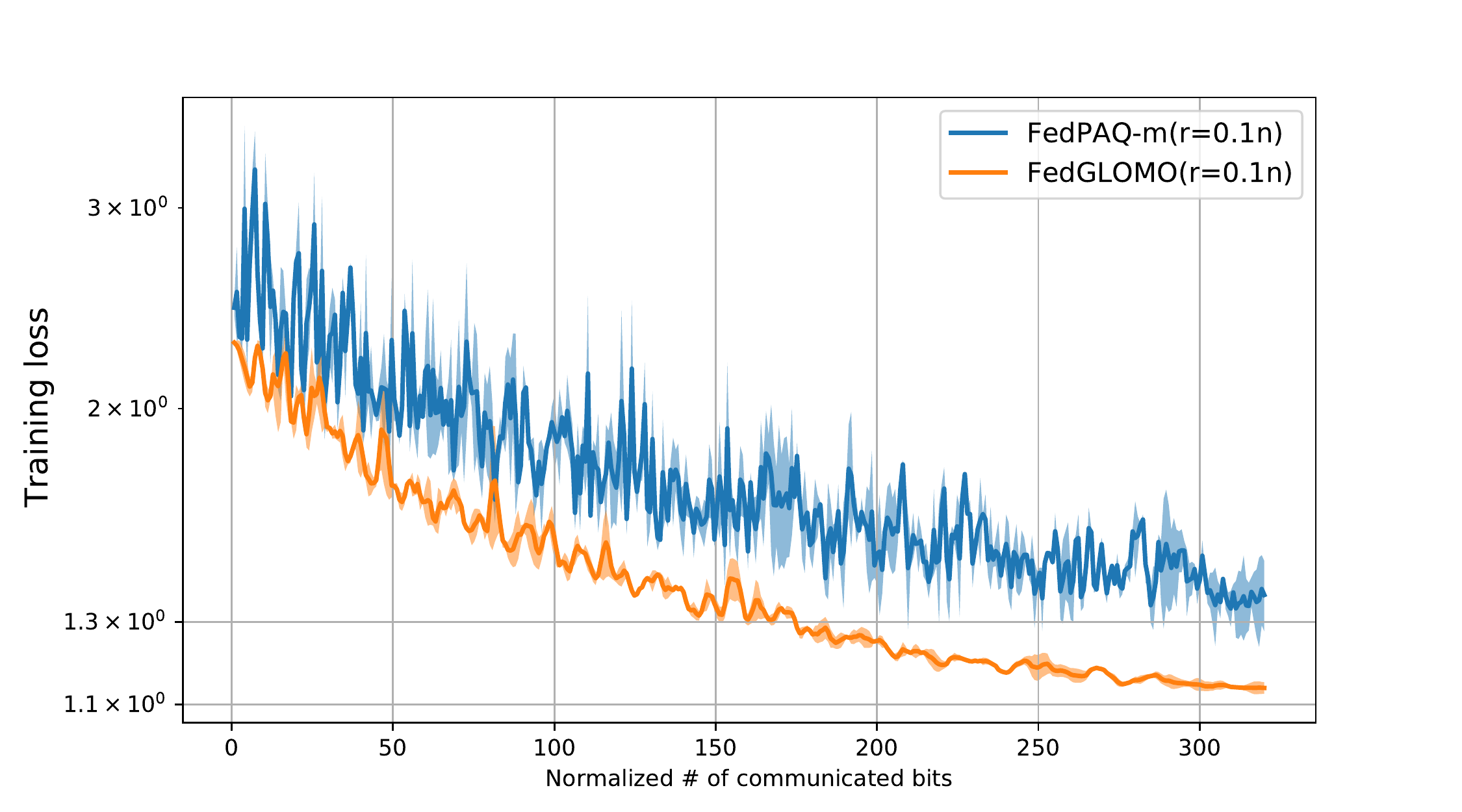}
	} 
\subfloat[$r=0.1n$]{
    \label{fig:3_d}
	\includegraphics[width=0.45\textwidth]{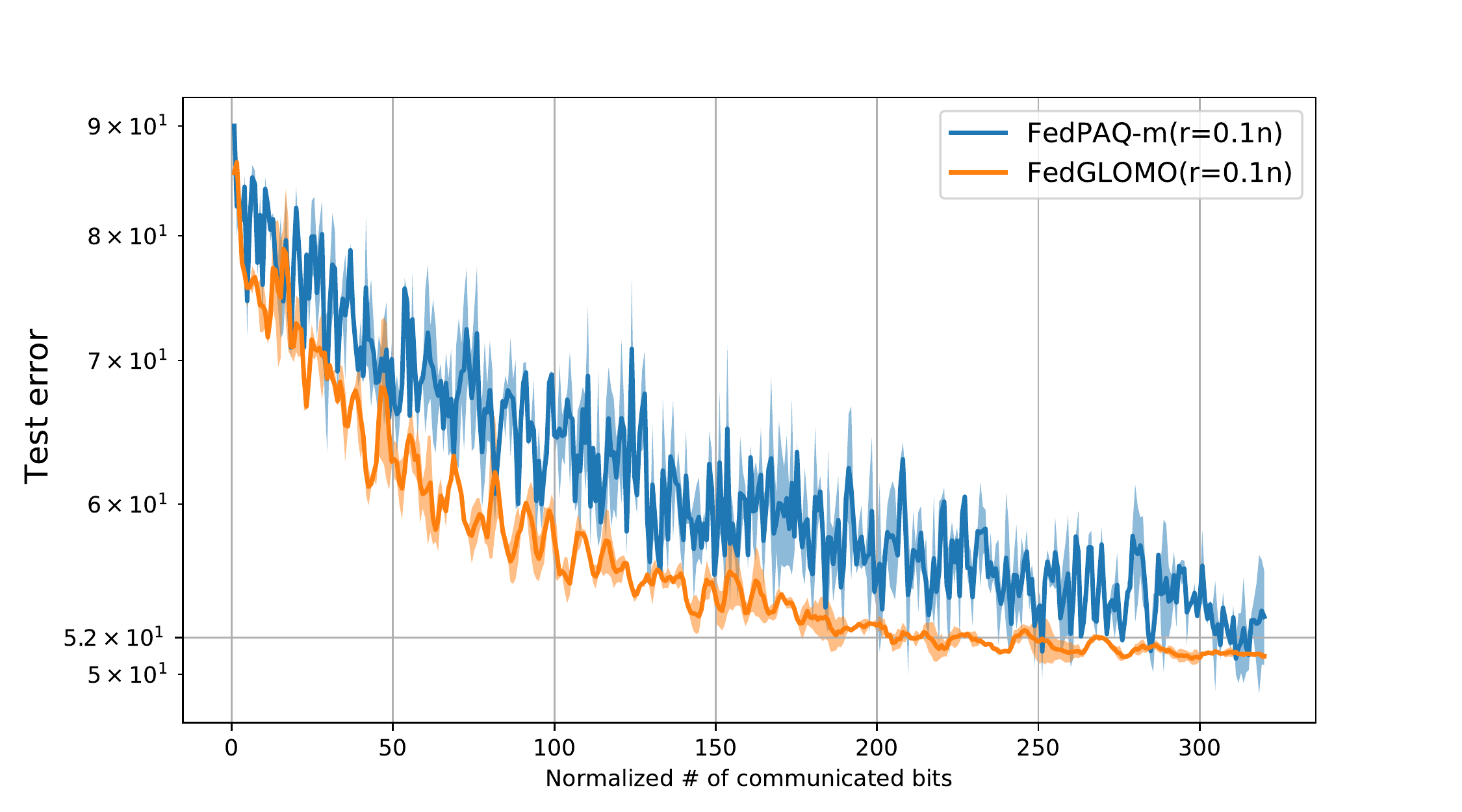}
	} 
\caption{\textbf{CIFAR-10 Heterogeneous case:} 4-bit \texttt{FedGLOMO} vs. 8-bit \texttt{FedPAQ}-m for smaller values of global batch size, i.e., $r$.
In this case, the x-axis is the total number of communicated bits divided by the dimension $d$ and the number of clients $n$.
For $r = 0.3n$, \texttt{FedGLOMO} attains the final test error of
\texttt{FedPAQ}-m with only about \textbf{half} the number of bits used by \texttt{FedPAQ}-m. For $r = 0.1n$, the corresponding ratio increases to roughly 80\% the number of bits used by \texttt{FedPAQ}-m. Overall, \texttt{FedGLOMO} consistently outperforms \texttt{FedPAQ}-m and has a smoother performance due to the application of variance-reducing momentum.}
\label{fig:3}
\end{figure*}

\noindent \textbf{Comparison against \texttt{FedCOMGATE} \cite{haddadpour2020federated}:}
As discussed in \Cref{rel-work}, \texttt{FedCOMGATE} \cite{haddadpour2020federated} is another communication-efficient FL algorithm incorporating gradient tracking to improve the convergence rate.
In \Cref{fig:0s}, we compare 4 bit \texttt{FedGLOMO} against 8 bit \texttt{FedCOMGATE} and 8 bit \texttt{FedPAQ}-m; again, the per-round communication budget of all algorithms is the same.
\\
\begin{figure*}[!htb]
\centering 
\subfloat[FMNIST Train Loss]{
    \label{fig:0s_a}
	\includegraphics[width=0.45\textwidth]{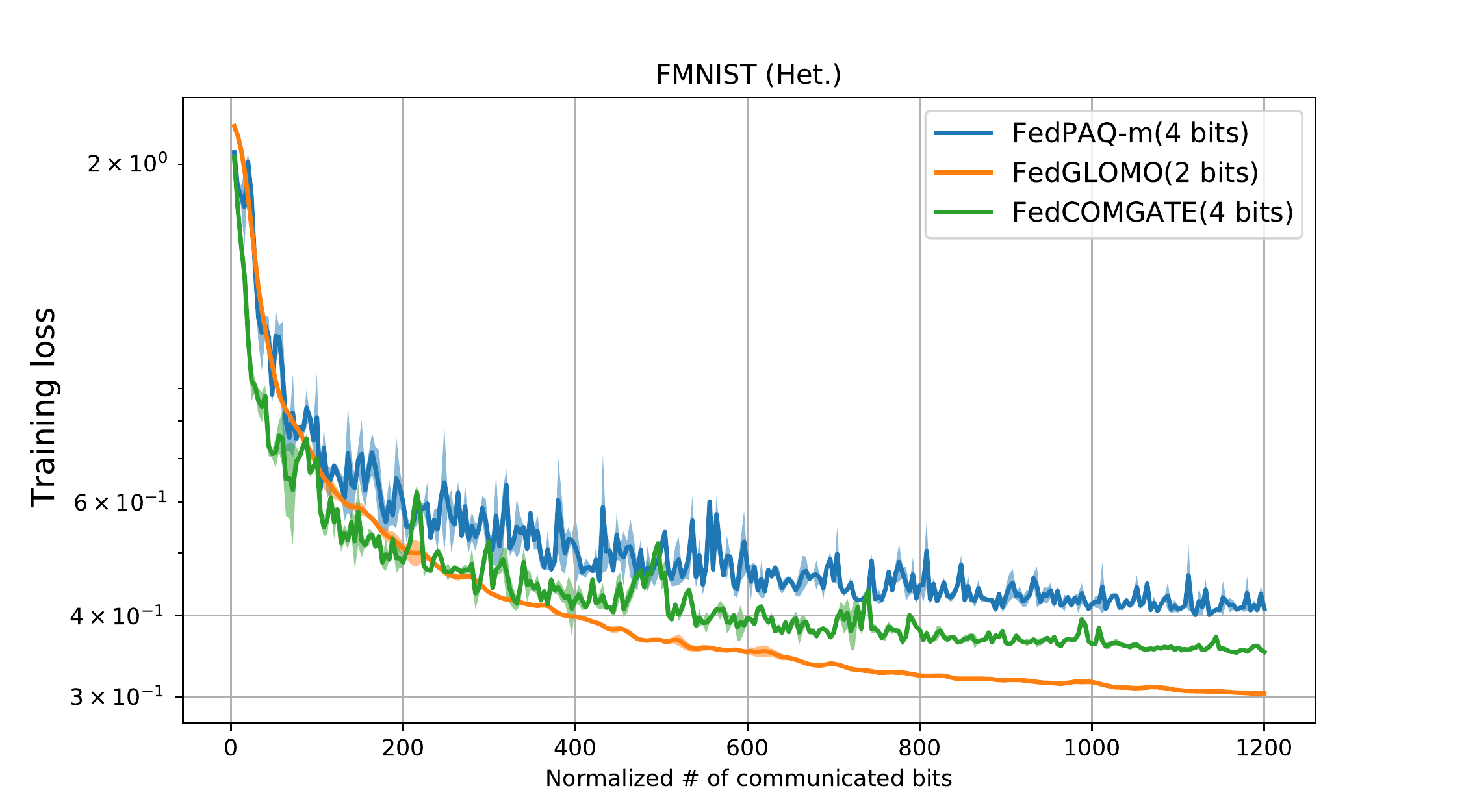}
	} 
\subfloat[FMNIST Test Error]{
    \label{fig:0s_b}
	\includegraphics[width=0.45\textwidth]{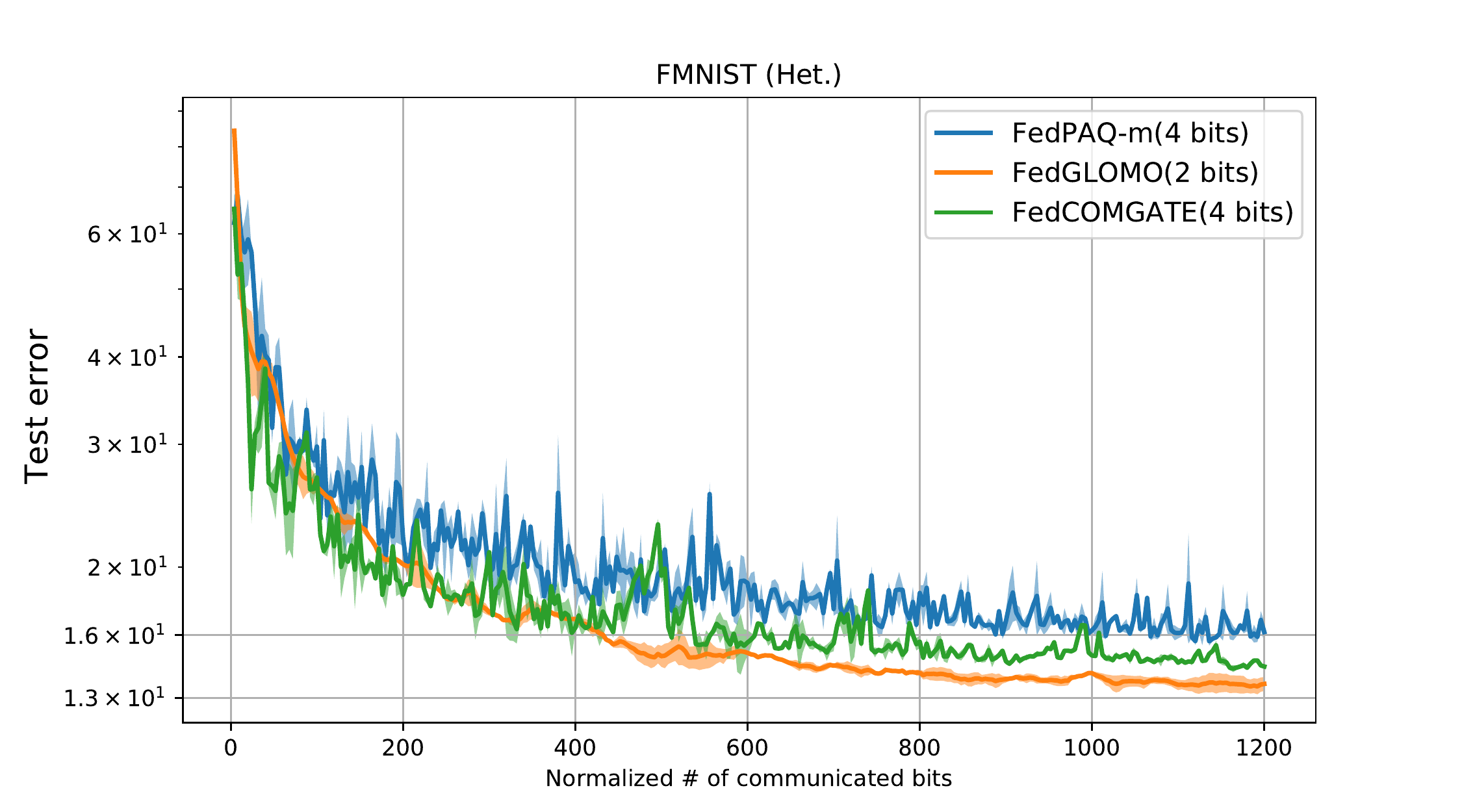}
	} 
\\
\subfloat[CIFAR-10 Train Loss]{
    \label{fig:0s_c}
	\includegraphics[width=0.45\textwidth]{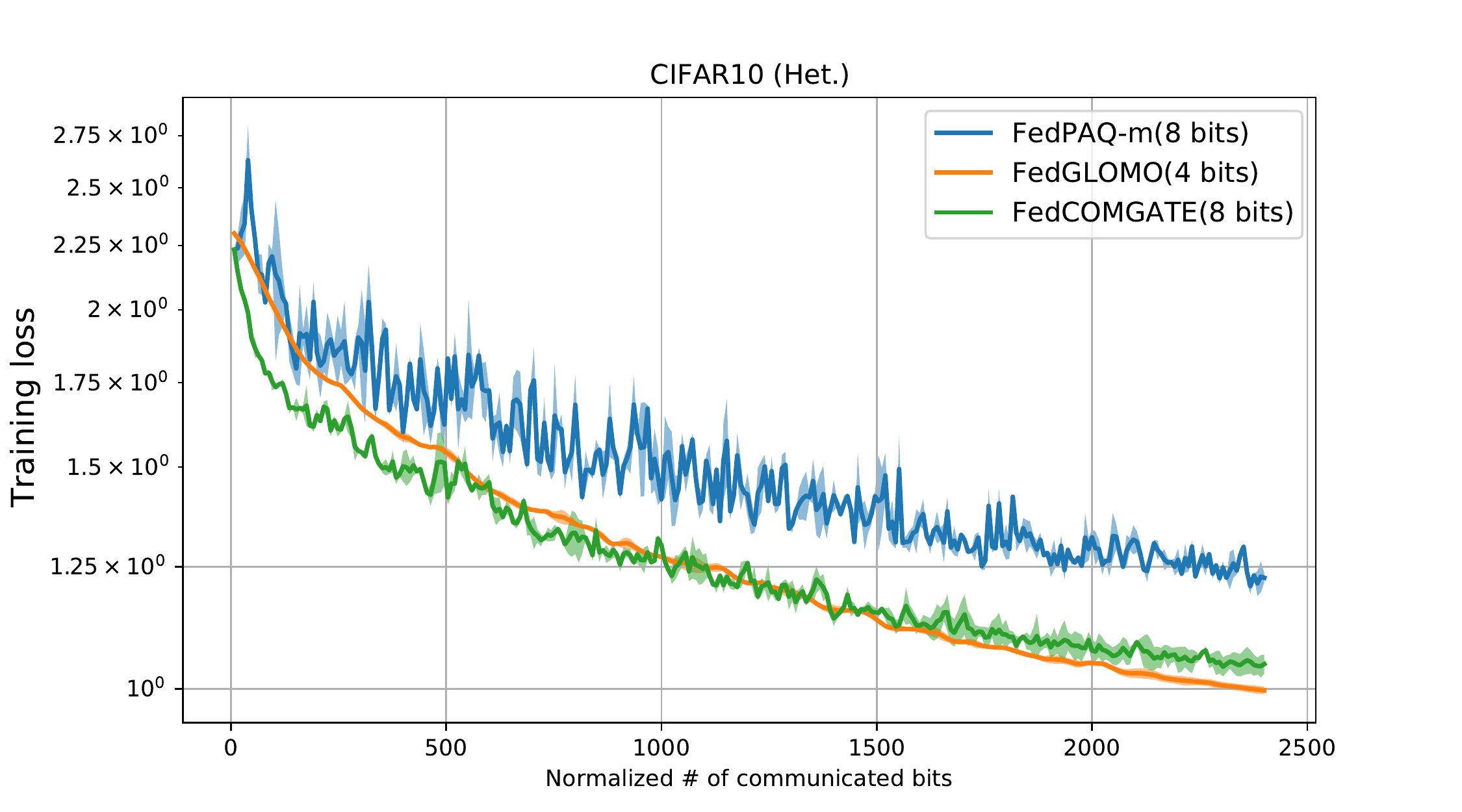}
	} 
\subfloat[CIFAR-10 Test Error]{
    \label{fig:0s_d}
	\includegraphics[width=0.45\textwidth]{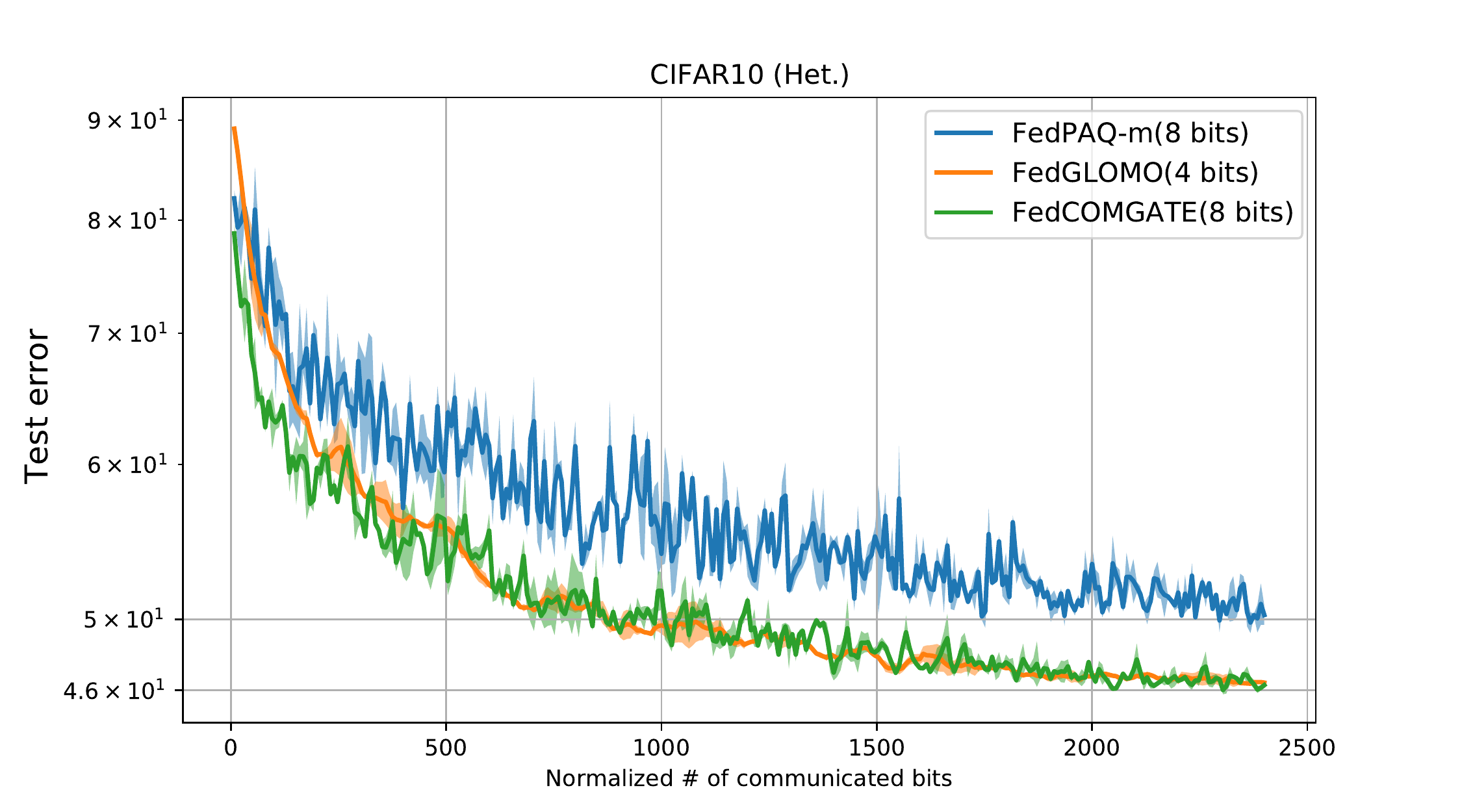}
	} 
\caption{\textbf{Heterogeneous case:} {Comparison of \texttt{FedPAQ}-m, \texttt{FedGLOMO} and \texttt{FedCOMGATE} \cite{haddadpour2020federated} with the same per-round communication budget
on FMNIST (top) and CIFAR-10 (bottom). The x-axis is the total number of communicated bits divided by the dimension $d$ and the global batch-size $r$. {On FMNIST, \texttt{FedGLOMO} significantly outperforms \texttt{FedCOMGATE} both in terms of training loss as well as test error}.
However, on CIFAR-10, both \texttt{FedGLOMO} and \texttt{FedCOMGATE} have similar test set performance; with respect to the training loss, \texttt{FedCOMGATE} is initially faster but after about 200 rounds, \texttt{FedGLOMO} takes over. From these experiments, we see that our proposed idea of variance-reducing global and local momentum does have some advantage over gradient tracking (which is the main ingredient of \texttt{FedCOMGATE}) 
when applied for a sufficiently large number of rounds.}}
\label{fig:0s}
\end{figure*}

\noindent\textbf{Comparison against server-level PyTorch-like momentum:} One can even apply server-level momentum by using the kind of momentum provided by PyTorch. This can be done by implementing the server update as a PyTorch optimizer update with momentum.
Here, we show the superiority of our proposed scheme as compared to this kind of server-level momentum. Specifically, we compare \texttt{FedGLOMO} against \texttt{FedPAQ}-m augmented with server-level PyTorch momentum; we call this \texttt{FedPAQ}-scm (\enquote{scm} stands for server and client momentum). We tried three different values of server-level momentum which are $\{0.9,0.7,0.5\}$ and show the results with the best value in \Cref{fig:sm-1} and \Cref{fig:sm-2} for the homogeneous and heterogeneous cases, respectively. 

\begin{figure*}[!htb]
\centering 
\subfloat[FMNIST Train Loss]{
    \label{fig:sm1_a}
	\includegraphics[width=0.45\textwidth]{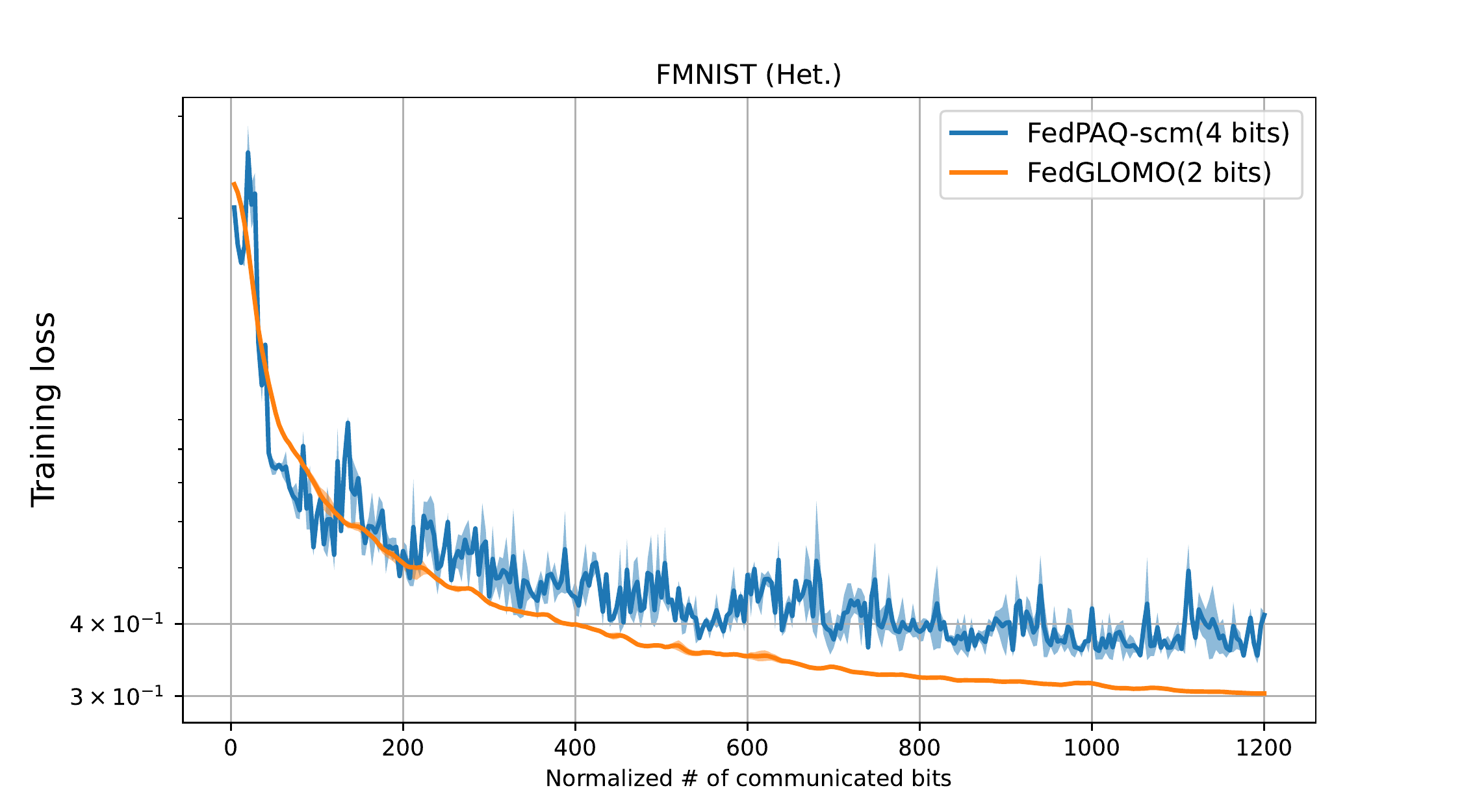}
	} 
\subfloat[FMNIST Test Error]{
    \label{fig:sm1_b}
	\includegraphics[width=0.45\textwidth]{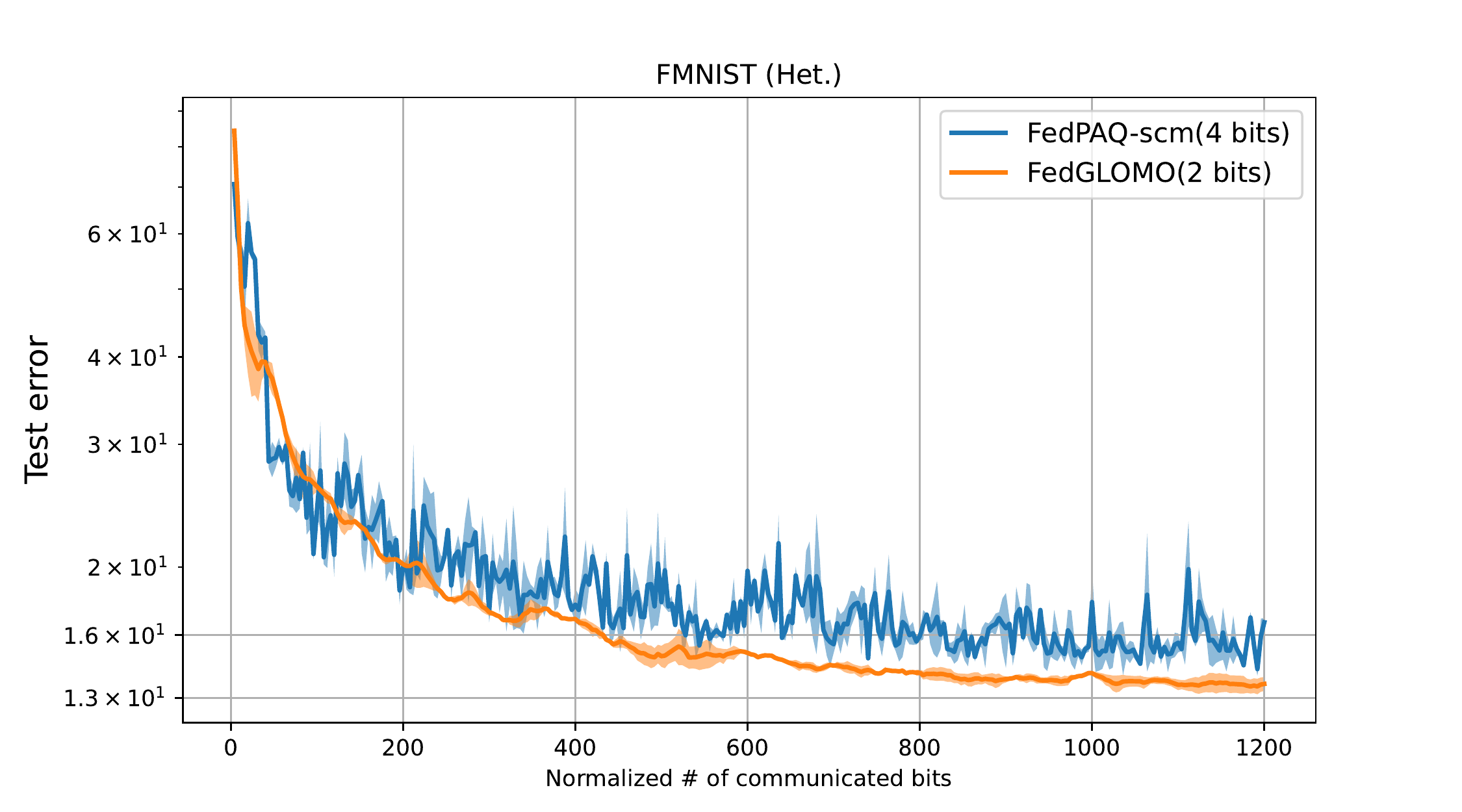}
	} 
\\
\subfloat[CIFAR-10 Train Loss]{
    \label{fig:sm1_c}
	\includegraphics[width=0.45\textwidth]{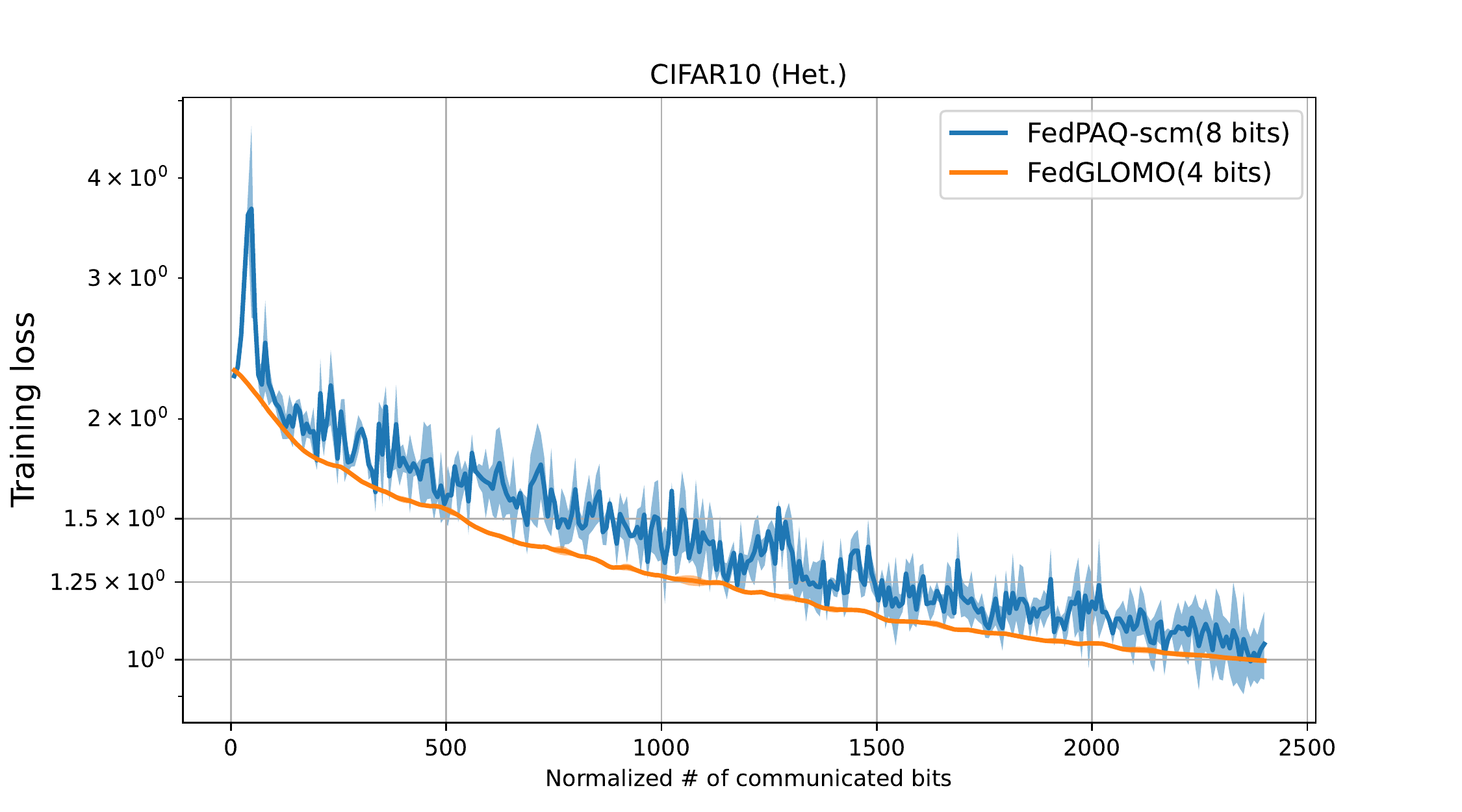}
	} 
\subfloat[CIFAR-10 Test Error]{
    \label{fig:sm1_d}
	\includegraphics[width=0.45\textwidth]{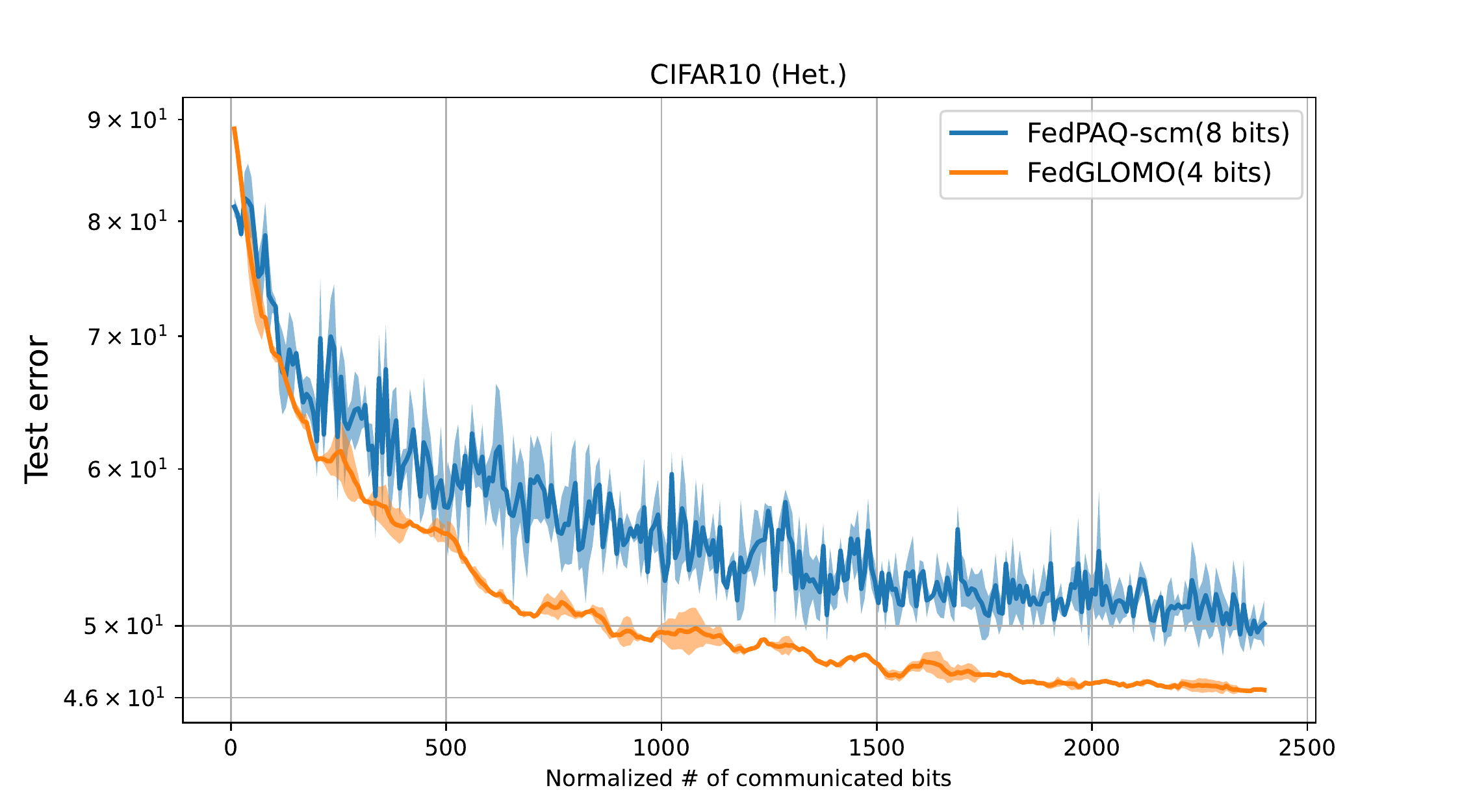}
	} 
\caption{{\textbf{Heterogeneous case:} 4 (resp., 8) bit \texttt{FedPAQ}-scm vs. 2 (resp., 4) bit \texttt{FedGLOMO} on FMNIST (resp., CIFAR-10) at the top (resp., bottom). The x-axis is the total number of communicated bits divided by the dimension $d$ and the global batch-size $r$. \texttt{FedGLOMO} outperforms \texttt{FedPAQ}-scm and has a smoother performance than it due to the application of variance-reducing momentum.}}
\label{fig:sm-1}
\end{figure*}

\begin{figure*}[!htb]
\centering 
\subfloat[FMNIST Train Loss]{
    \label{fig:sm2_a}
	\includegraphics[width=0.45\textwidth]{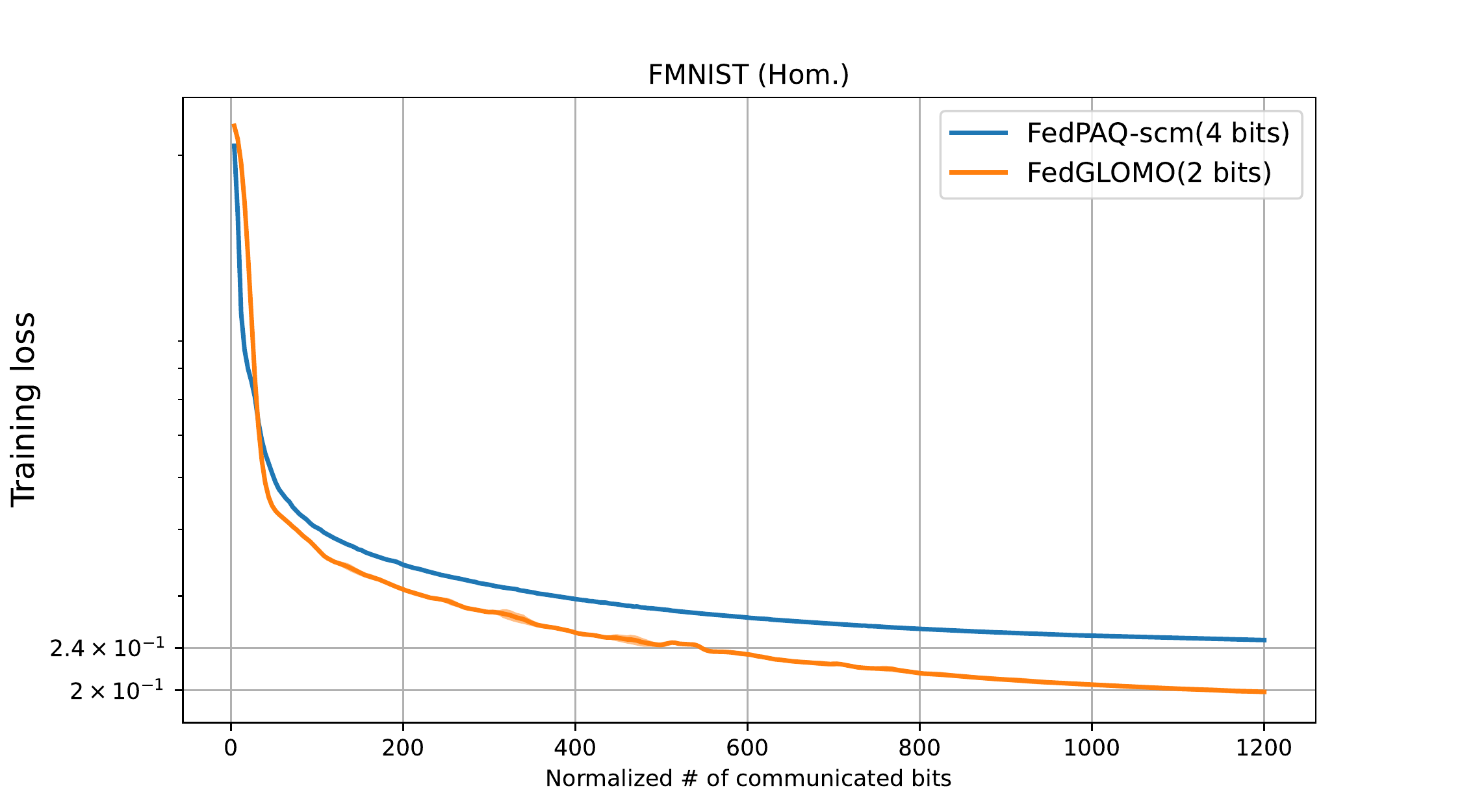}
	} 
\subfloat[FMNIST Test Error]{
    \label{fig:sm2_b}
	\includegraphics[width=0.45\textwidth]{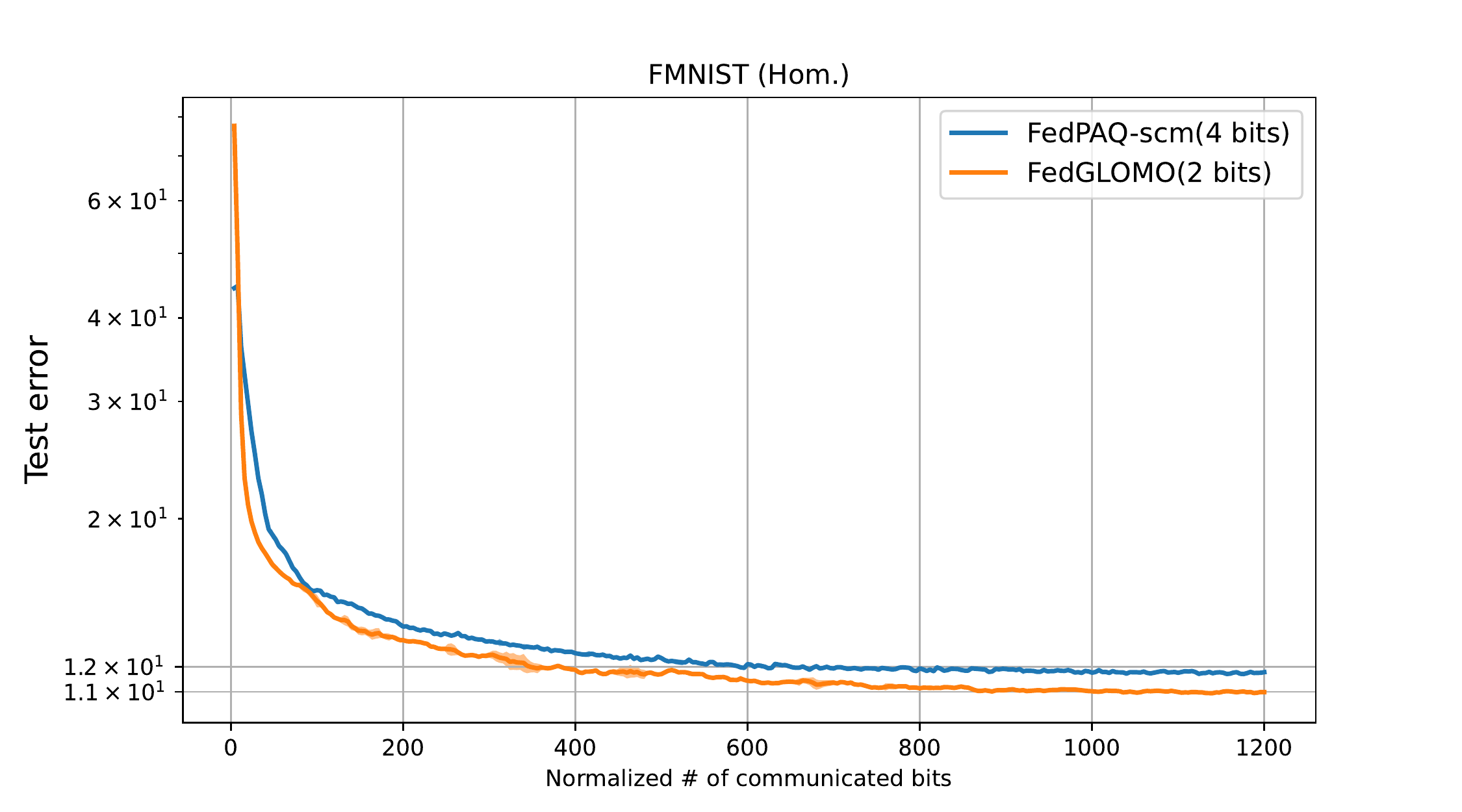}
	} 
\\
\subfloat[CIFAR-10 Train Loss]{
    \label{fig:sm2_c}
	\includegraphics[width=0.45\textwidth]{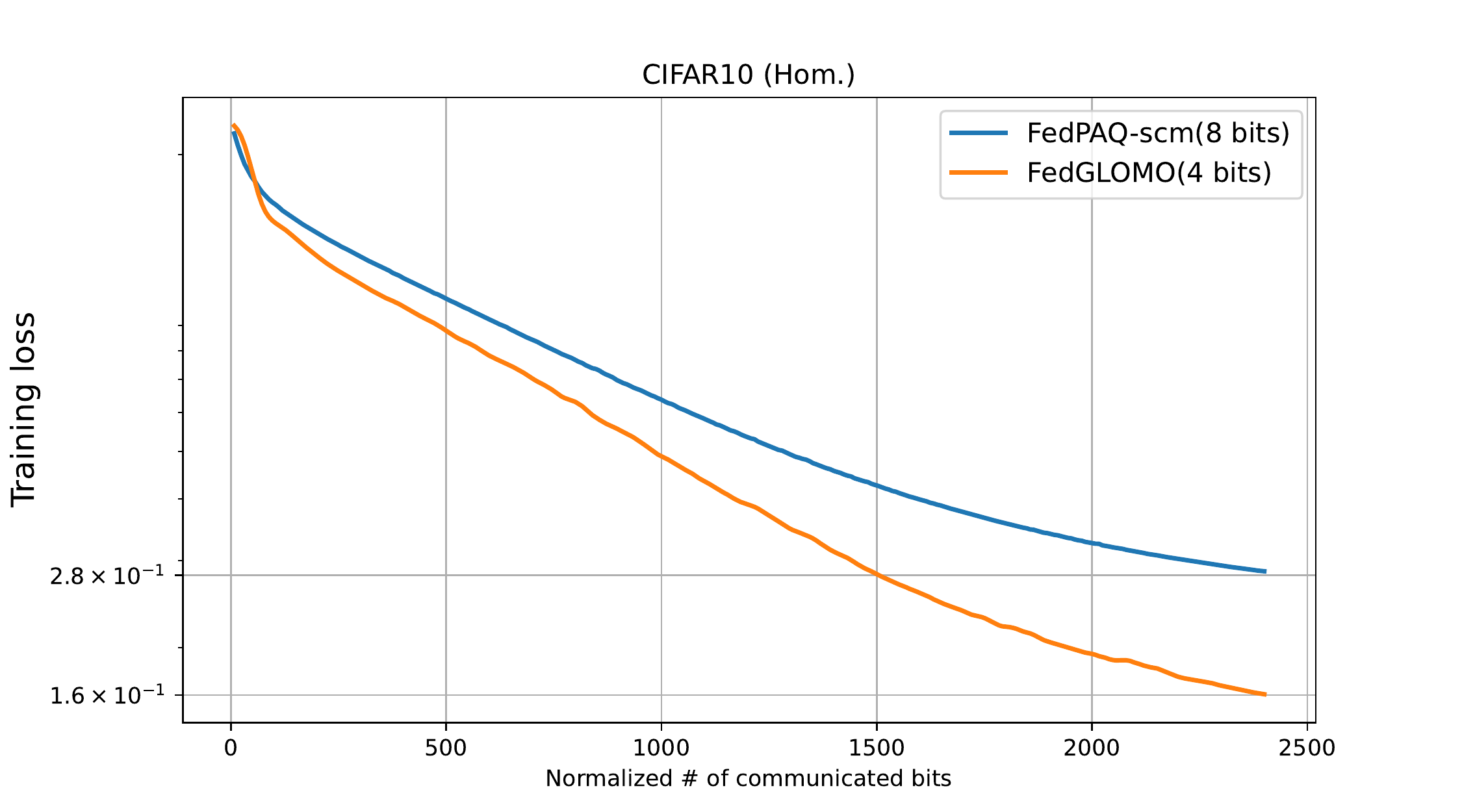}
	} 
\subfloat[CIFAR-10 Test Error]{
    \label{fig:sm2_d}
	\includegraphics[width=0.45\textwidth]{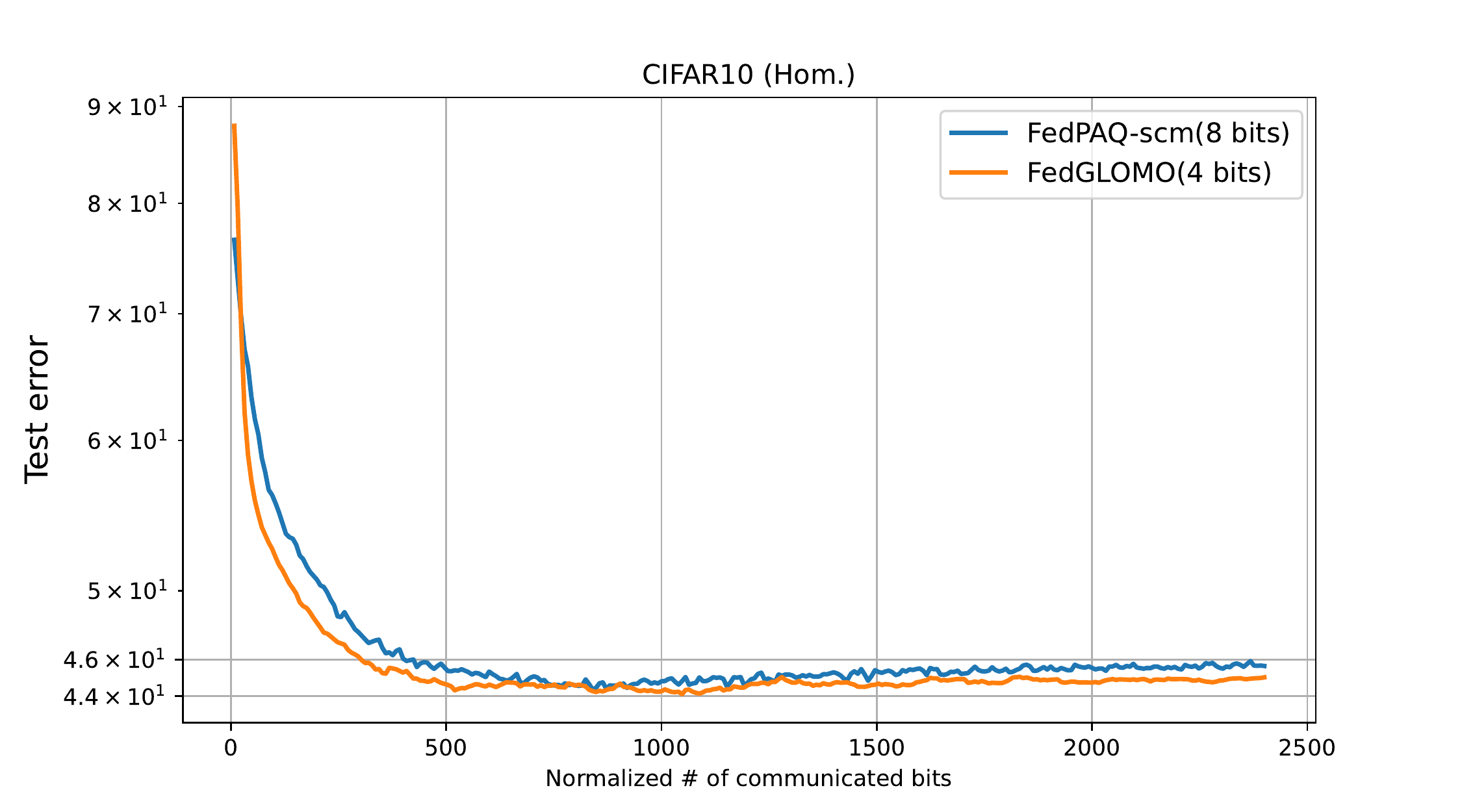}
	} 
\caption{ \textbf{Homogeneous case:} Same setting as \Cref{fig:sm-1} but in the homogeneous case. Once again, \texttt{FedGLOMO} outperforms \texttt{FedPAQ}-scm. Here, \texttt{FedPAQ}-scm has a smoother performance compared to \Cref{fig:sm-1} due to the homogeneous data distribution.}
\label{fig:sm-2}
\end{figure*}

\section{\texttt{FedLOMO}: A Simpler Version of \texttt{FedGLOMO}}
\label{sec:lomo}
Now we consider a simpler version of \texttt{FedGLOMO}, which we call \texttt{FedLOMO}, that applies only local momentum in the client updates and does simple averaging at the server (like \texttt{FedAvg}), i.e., there is no global momentum (and hence the name of this variant does not have a \enquote{\texttt{G}}). \texttt{FedLOMO} is summarized in \Cref{alg:1} and \ref{alg:1-local}. Notice that the momentum application occurs in {line \ref{line:mom}} of \Cref{alg:1-local}. 

As mentioned in the main paper, \texttt{FedLOMO} does not achieve the optimal convergence rate for smooth non-convex functions due to the absence of global momentum; see \Cref{fl-thm3} and the subsequent remarks. 

Just like the results of \texttt{FedGLOMO}, we do not use the BCD assumption (i.e., \cref{eq:bcd}) to derive the results of \texttt{FedLOMO}.

\begin{algorithm}[!htb]
	\caption{\texttt{FedLOMO} - Server Update}
	\label{alg:1}
	\begin{algorithmic}[1]
		\STATE {\bfseries Input:} 
		Initial point $\bm{w}_0$, \# of rounds of communication $K$, period $E$, learning rates  $\{\eta_{k}\}_{k=0}^{K-1}$, per-client batch size $b$, and  global batch size $r$. $Q_D$ is the quantization operator.
		\vspace{0.1 cm}
		\FOR{$k =0,\dots, K-1$}
		\vspace{0.1 cm}
		\STATE Server chooses a set $\mathcal{S}_k$ of $r$ clients uniformly at random without replacement 
		and sends $\bm{w}_k$ to them.
		\vspace{0.1 cm}
		\FOR{client $i \in \mathcal{S}_k$}
		\vspace{0.1 cm}
		\STATE Set $\bm{w}_{k,0}^{(i)} = \bm{w}_k$ and run \Cref{alg:1-local} for client $i$.
		\vspace{0.1 cm}
		\ENDFOR
		\vspace{0.1 cm}
		\STATE Update $\bm{w}_{k+1} = \bm{w}_{k} + \frac{1}{r}\sum_{i \in \mathcal{S}_k}Q_{D}({\bm{w}_{k,E}^{(i)} - \bm{w}_{k}})$.
		\label{line:fedavg}
		\vspace{0.1 cm}
		\ENDFOR
	\end{algorithmic}
\end{algorithm}

\begin{algorithm}[!htb]
	\caption{\texttt{FedLOMO} - {Client Update}}
	\label{alg:1-local}
	\begin{algorithmic}[1]
		\FOR{$\tau = 0,\ldots,E-1$}
		\vspace{0.1 cm}
		\IF{$\tau = 0$}
		\vspace{0.1 cm}
		\STATE $\bm{v}_{k,\tau}^{(i)} = \nabla f_i(\bm{w}_{k,\tau}^{(i)})$. 
		\label{line:full-grad}
		\vspace{0.1 cm}
		\ELSE
		\vspace{0.1 cm}
		\STATE Pick a random batch of $b$ samples in client $i$, say $\mathcal{B}_{k,\tau}^{(i)}$. 
		Compute the stochastic gradients of $f_i$ at $\bm{w}_{k,\tau}^{(i)}$ and $\bm{w}_{k,\tau-1}^{(i)}$ over $\mathcal{B}_{k,\tau}^{(i)}$ viz. $\widetilde{\nabla} f_i(\bm{w}_{k,\tau}^{(i)};\mathcal{B}_{k,\tau}^{(i)})$ and $\widetilde{\nabla} f_i(\bm{w}_{k,\tau-1}^{(i)};\mathcal{B}_{k,\tau}^{(i)})$, respectively.
		\vspace{0.2 cm}
		\STATE \label{line:mom} 
		Update 
		$\bm{v}_{k,\tau}^{(i)} = \widetilde{\nabla} f_i(\bm{w}_{k,\tau}^{(i)};\mathcal{B}_{k,\tau}^{(i)}) + \big(\bm{v}_{k,\tau-1}^{(i)}  - \widetilde{\nabla} f_i(\bm{w}_{k,\tau-1}^{(i)};\mathcal{B}_{k,\tau}^{(i)})\big)$.
		\text{// {\color{blue} \texttt{(Local Momentum)}}} 
		\vspace{0.1 cm}
		\ENDIF
		\vspace{0.1 cm}
		\STATE Update $\bm{w}_{k,\tau+1}^{(i)} = \bm{w}_{k,\tau}^{(i)} - \eta_{k}\bm{v}_{k,\tau}^{(i)}$.
		\vspace{0.1 cm}
		\ENDFOR
		\vspace{0.1 cm}
		\STATE Send $Q_{D}({\bm{w}_{k,E}^{(i)} - \bm{w}_{k}})$ to the server.
		\vspace{0.1 cm}
	\end{algorithmic}
\end{algorithm}
\subsection{Main Result for {\texttt{FedLOMO}}}
\label{sec:result:lomo}
Now, we present the convergence result of \texttt{FedLOMO} for the smooth non-convex case in \Cref{fl-thm3}. Its proof is in \Cref{sec-pf-1}. Here, we assume that \Cref{as-het} holds for \texttt{FedLOMO}; we restate it below.
\begin{assumption}[\textbf{\Cref{as-het} for \texttt{FedLOMO}}]
\label{as-het2}
Suppose all clients participate, i.e. $r=n$, in the $(k+1)^{\text{st}}$ round of \texttt{FedLOMO} (Alg. \ref{alg:1} and \ref{alg:1-local}). 
Let $\bm{w}_{k,\tau}^{(i)}$ be the $i^{\text{th}}$ client's local parameter at the $(\tau+1)^{\text{st}}$ local step of the $(k+1)^{\text{st}}$ round of \texttt{FedLOMO}, for $i \in [n]$. Define $\widetilde{\bm{e}}_{k,\tau}^{(i)} \triangleq \nabla f_i(\bm{w}_{k,\tau}^{(i)}) - \nabla f_i(\overline{\bm{w}}_{k,\tau})$, where $\overline{\bm{w}}_{k,\tau} \triangleq \frac{1}{n}\sum_{i \in [n]} \bm{w}_{k,\tau}^{(i)}$.
Then for some $\alpha \ll n$: 
\[\mathbb{E}\Big[\Big\|\sum_{i \in [n]}\widetilde{\bm{e}}_{k, \tau}^{(i)}\Big\|^2\Big] \leq \alpha \sum_{i \in [n]} \mathbb{E}\Big[\Big\|\widetilde{\bm{e}}_{k,\tau}^{(i)}\Big\|^2\Big], \text{ } \forall \text{ } \tau \in [E].\]
\end{assumption}

\begin{theorem}[\textbf{Smooth non-convex case for \texttt{FedLOMO}}]
\label{fl-thm3}
Suppose Assumptions \ref{as1}, \ref{as-may15}, \ref{as5} and \ref{as-het2} (i.e., \Cref{as-het} for \texttt{FedLOMO} instead of \texttt{FedGLOMO}) hold. Define a distribution $\mathbb{P}$ for $k \in \{0,\ldots,K-1\}$ such that $\mathbb{P}(k) = \frac{(1+\zeta)^{(K-1-k)}}{\sum_{k=0}^{K-1}(1+\zeta)^k}$ where $\zeta$ will be defined later. Sample $k^{*}$ from $\mathbb{P}$. In \texttt{FedLOMO}, set $\eta_{k} = \frac{1}{8 L E \sqrt{B K}}$ where $B = \frac{q}{n} + \frac{4(1+q)(n-r)}{r(n-1)}$. Then for $K > \frac{1}{64 B^3} (\frac{1}{n}(\alpha + \frac{4}{E}))$:
\begin{flalign*}
    \nonumber
    & \mathbb{E}[\|\nabla f(\bm{w}_{k^{*}})\|^2] \leq \frac{64 \sqrt{B} L f(\bm{w}_0)}{K^{1/2}} \text{ with }\zeta := \frac{1}{4K} + \frac{1}{16 (B K)^{1.5}} \Big(\frac{1}{n}\Big(\alpha + \frac{4}{E}\Big)\Big). 
\end{flalign*}
So \texttt{FedLOMO} needs $K = \mathcal{O}(\frac{1}{r \epsilon^2})$ rounds of communication to achieve $\mathbb{E}[\|\nabla f(\bm{w}_{k^{*}})\|^2] \leq \epsilon$, for $\epsilon < \mathcal{O}(\frac{n B^2}{\alpha}) = \mathcal{O}(\frac{n/\alpha}{r^2})$.
\end{theorem}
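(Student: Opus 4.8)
The plan is to follow the standard smooth non-convex descent analysis, adapted to the federated setting with compressed and partial-participation aggregation, and crucially using the non-negativity-plus-smoothness trick (\Cref{lem1-oct20}) in place of the bounded client dissimilarity assumption. First I would start from the $L$-smoothness of $f$ (\Cref{as1}) applied to the server step $\bm{w}_{k+1} - \bm{w}_k = \frac{1}{r}\sum_{i\in\mathcal{S}_k}Q_D(\bm{w}_{k,E}^{(i)}-\bm{w}_k)$, writing
\[ f(\bm{w}_{k+1}) \le f(\bm{w}_k) + \langle \nabla f(\bm{w}_k),\, \bm{w}_{k+1}-\bm{w}_k\rangle + \tfrac{L}{2}\|\bm{w}_{k+1}-\bm{w}_k\|^2. \]
Taking expectation over the quantization (unbiased by \Cref{as5}) and the uniform-without-replacement client sampling makes the conditional mean of the update equal to $-\eta_k\,\frac{1}{n}\sum_{i\in[n]}\sum_{\tau=0}^{E-1}\nabla f_i(\bm{w}_{k,\tau}^{(i)})$, since the local SVRG/SPIDER-style momentum (\Cref{alg:1-local}, with a full gradient at $\tau=0$) is an unbiased estimator of $\nabla f_i(\bm{w}_{k,\tau}^{(i)})$ conditioned on the past. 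The inner-product term then splits into a $-\frac{\eta_k E}{2}\|\nabla f(\bm{w}_k)\|^2$ descent piece plus cross terms measuring how far the local iterates $\bm{w}_{k,\tau}^{(i)}$ have drifted from $\bm{w}_k$.

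Second, I would bound the second-moment term $\mathbb{E}\|\bm{w}_{k+1}-\bm{w}_k\|^2$ by decomposing the aggregate into its mean plus (i) a quantization-noise component, controlled by $q$ via \Cref{as5}, and (ii) a sampling-noise component, controlled by the without-replacement variance identity that produces the factor $\frac{n-r}{r(n-1)}$. Together these yield the constant $B = \frac{q}{n} + \frac{4(1+q)(n-r)}{r(n-1)}$, and each squared local displacement $\|\bm{w}_{k,E}^{(i)}-\bm{w}_k\|^2$ is further expanded as $\eta_k^2$ times a sum of $\|\nabla f_i(\bm{w}_{k,\tau}^{(i)})\|^2$ plus the residual variance of the local estimator, the latter being small precisely because of the variance-reducing local momentum.

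Third, and this is the step I expect to be the main obstacle, I would control the client-drift and heterogeneity terms without invoking BCD. The local gradients $\nabla f_i(\bm{w}_{k,\tau}^{(i)})$ are related to the common point through the quantity $\widetilde{\bm{e}}_{k,\tau}^{(i)}$ of \Cref{as-het2}, and the assumption $\mathbb{E}\|\sum_i \widetilde{\bm{e}}_{k,\tau}^{(i)}\|^2 \le \alpha \sum_i \mathbb{E}\|\widetilde{\bm{e}}_{k,\tau}^{(i)}\|^2$ is what lets me aggregate the per-client drifts with only an $\alpha/n$ (rather than $\mathcal{O}(1)$) penalty. I would recursively bound $\mathbb{E}\|\bm{w}_{k,\tau}^{(i)}-\bm{w}_k\|^2$ over the $E$ local steps, feeding in the smoothness of each $f_i$, and then collapse every surviving $\sum_{i\in[n]}\|\nabla f_i(\bm{w}_k)\|^2$ using $\frac{1}{n}\sum_{i}\|\nabla f_i(\bm{w})\|^2 \le 2L f(\bm{w})$ (\Cref{lem1-oct20}, valid by \Cref{as1} and \Cref{as-may15}). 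This removes the need for BCD, but at the cost of a term proportional to $f(\bm{w}_k)$ on the right-hand side, turning the pure descent inequality into a one-step recursion of the form $\mathbb{E}[f(\bm{w}_{k+1})] \le (1+\zeta)\mathbb{E}[f(\bm{w}_k)] - c\,\eta_k E\,\mathbb{E}[\|\nabla f(\bm{w}_k)\|^2]$, with $\zeta$ collecting the variance ($\frac{1}{4K}$) and drift ($\frac{1}{16(BK)^{1.5}}\frac{1}{n}(\alpha+\frac{4}{E})$) contributions after substituting $\eta_k = \frac{1}{8LE\sqrt{BK}}$.

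Finally, I would resolve the $(1+\zeta)$ feedback by telescoping with geometric weights: dividing the recursion by $(1+\zeta)^{k+1}$ and summing over $k$ telescopes the $f$-terms, and since $f\ge 0$ (\Cref{as-may15}) the left endpoint drops out, leaving $\sum_k (1+\zeta)^{-(k+1)}\mathbb{E}[\|\nabla f(\bm{w}_k)\|^2] \le f(\bm{w}_0)/(c\eta E)$. Normalizing these weights gives exactly the distribution $\mathbb{P}$ in the statement, so sampling $k^{*}\sim\mathbb{P}$ converts the weighted sum into $\mathbb{E}[\|\nabla f(\bm{w}_{k^*})\|^2]$; the hypothesis $K > \frac{1}{64B^3}(\frac{1}{n}(\alpha+\frac{4}{E}))$ keeps $\zeta K = \mathcal{O}(1)$ so that the geometric factor $(1+\zeta)^K$ stays bounded, and plugging in $\eta$ yields the advertised $\frac{64\sqrt{B}Lf(\bm{w}_0)}{\sqrt{K}}$ rate. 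The communication-complexity claim $K = \mathcal{O}(1/(r\epsilon^2))$ then follows from $B = \mathcal{O}(1/r)$, while the restriction $\epsilon < \mathcal{O}(nB^2/\alpha)$ is precisely what ensures the drift contribution to $\zeta$ is dominated by the variance contribution so that the effective rate is governed by $\sqrt{B/K}$.
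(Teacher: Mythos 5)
Your proposal is correct and follows essentially the same route as the paper's proof: smoothness descent on the server step, splitting the expected update via unbiased quantization and without-replacement sampling (yielding the constant $B$), controlling client drift through \Cref{as-het2} and the variance-reduced local updates, replacing BCD with the bound $\frac{1}{n}\sum_i\|\nabla f_i(\bm{w})\|^2 \leq 2Lf(\bm{w})$ from \Cref{lem1-oct20}, and resolving the resulting $(1+\zeta)$-recursion with geometric weights that define the sampling distribution $\mathbb{P}$. Your telescoping after dividing by $(1+\zeta)^{k+1}$ is algebraically identical to the paper's direct unfolding of the recursion, so the two arguments coincide step for step.
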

We make some remarks to discuss implications of this result and establish connections to some claims made in the main paper.
\begin{remark}[\textbf{Worse
iteration complexity than \texttt{FedGLOMO}}]
Since we do not have any constraint on $E$ depending on $\epsilon$, $T = KE$ is also $\mathcal{O}(\frac{1}{r\epsilon^{2}})$ as per the above theorem. So the iteration complexity of \texttt{FedLOMO} is poorer than that of \texttt{FedGLOMO}. However, it is on a par with the results of \cite{haddadpour2020federated, koloskova2020unified, wang2019slowmo, karimireddy2019scaffold}.
\label{r3-1}
\end{remark}

\begin{remark}[\textbf{High variance of simple averaging at the server}]
At a high level, \texttt{FedLOMO} fails to attain the complexity of \texttt{FedGLOMO} because of the high variance of the \texttt{FedAvg}-like plain averaging step at the server. The high variance is itself due to the amplified effect of client-heterogeneity with multiple local updates. Without the application of some \textit{global variance-reduction} technique (like the one in \texttt{FedGLOMO}), the complexity cannot be improved.
More precisely, in \Cref{fl-thm3}, $B$ is a constant that is not $\mathcal{O}(\eta L E)$ in general, due to which \texttt{FedLOMO} does not achieve the improved convergence rate of $\mathcal{O}(K^{-2/3})$ that \texttt{FedGLOMO} attains; see the proof of \Cref{fl-thm3} in \Cref{sec-pf-1} for more details.
However, in the special case of no compression and full-device participation (i.e., $r=n$), $B$ is 0 which allows \texttt{FedLOMO} to also achieve $\mathcal{O}(K^{-2/3})$ convergence by choosing $\eta = \mathcal{O}(\frac{1}{LEK^{1/3}})$.
\label{r3-1-4}
\end{remark}

\section{Detailed Proofs}
\label{sec-res-pf}

\subsection{Detailed  Proof of the Result of \texttt{FedGLOMO}:}
\label{sec-pf-2}
\textbf{Some definitions used in the proofs}: 
\[\bm{\delta}_k^{(i)} \triangleq \mathbb{E}_{\mathcal{B}_{1}^{(i)},\ldots,\mathcal{B}_{E-1}^{(i)}}[\bm{w}_k - \bm{w}_{k,E}^{(i)}] \text{ for any $E-1$ batches $\{\mathcal{B}_{1}^{(i)},\ldots,\mathcal{B}_{E-1}^{(i)}\}$ in client $i$, and } \overline{\bm{\delta}}_k \triangleq \frac{1}{n}\sum_{i \in [n]}\bm{\delta}_k^{(i)}.\]
\[{g}_Q(\bm{w}_k;\mathcal{S}_k) \triangleq \frac{1}{r}\sum_{i \in \mathcal{S}_k}Q_D({\bm{w}_{k} - \bm{w}_{k,E}^{(i)}})\]
\[\Delta{g}_Q(\bm{w}_k,\bm{w}_{k-1};\mathcal{S}_k) \triangleq \frac{1}{r}\sum_{i \in \mathcal{S}_k}Q_D(({\bm{w}_{k} - \bm{w}_{k,E}^{(i)}}) - ({\bm{w}_{k-1} - \widehat{\bm{w}}_{k-1,E}^{(i)}}))\]
\[{g}(\bm{w}_k;\mathcal{S}_k) \triangleq \frac{1}{r}\sum_{i \in \mathcal{S}_k}({\bm{w}_{k} - \bm{w}_{k,E}^{(i)}}) = \mathbb{E}_{Q_D}[{g}_Q(\bm{w}_k;\mathcal{S}_k)]\]
\[\widehat{g}(\bm{w}_{k-1};\mathcal{S}_k) \triangleq \frac{1}{r}\sum_{i \in \mathcal{S}_k}({\bm{w}_{k-1} - \widehat{\bm{w}}_{k-1,E}^{(i)}})\]
\[\overline{\bm{w}}_{k,\tau} \triangleq \frac{1}{n}\sum_{i \in [n]} \bm{w}_{k,\tau}^{(i)} \text{ and } \overline{\bm{v}}_{k,\tau} \triangleq \frac{1}{n}\sum_{i \in [n]} \bm{v}_{k,\tau}^{(i)}\]
\[{\bm{e}}_{k,\tau}^{(i)} \triangleq \bm{v}_{k,\tau}^{(i)} - \nabla f_i(\bm{w}_{k, \tau}^{(i)}) \text{ and } \widetilde{\bm{e}}_{k,\tau}^{(i)} \triangleq \nabla f_i(\bm{w}_{k, \tau}^{(i)}) - \nabla f_i(\overline{\bm{w}}_{k,\tau})\]
\\
\\
\noindent \textbf{Proof of \Cref{nov4-thm1}:}
\begin{proof}
We set $\eta_k = \eta$ and $\beta_k = \beta$ $\forall$ $k \in \{0,\ldots,K-1\}$.
\\
Then using \Cref{nov-1-lem0}, we have that:
\begin{multline}
    \label{eq:nov-4-thm1-1}
    \mathbb{E}[f(\bm{w}_{k'})] \leq 
    f(\bm{w}_{0}) -\frac{\eta E}{4}\sum_{k=0}^{k'-1}\mathbb{E}[\|\nabla f(\bm{w}_{k})\|^2] 
    + \frac{16 \eta^3 L^2 E^2 (\alpha E + 4) }{n^2} \sum_{k=0}^{k'-1}\sum_{i \in [n]}\mathbb{E}[\|\nabla f_i(\bm{w}_k)\|^2]
    \\
    + \frac{5}{4 \eta E \beta }{\mathbb{E}[\|\bm{u}_{0} - \overline{\bm{\delta}}_{0}\|^2]} 
    + 
    160\eta E \beta \Big(\frac{q}{n^2} + \frac{(1+q)}{r(n-1)}\Big(1 - \frac{r}{n}\Big)\Big)
    \sum_{k=0}^{k'-1}\sum_{i \in [n]} \mathbb{E}[\|\nabla f_i(\bm{w}_k)\|^2],
\end{multline}
for $4\eta L E^2 \leq 1$ and $\beta \geq \frac{80 e^2 (1+q) \eta^2 L^2 E^2 (E+1)^2}{(1 - 4\eta L E)}$. 
\\
Suppose we use full batch sizes for the local updates as well as the server update at $k = 0$ (the latter means $r=n$ only for $k=0$). Then, $\bm{u}_{0} = \overline{\bm{\delta}}_{0}$ above. Also, since the $f_i$'s are $L$-smooth, using \Cref{lem1-oct20}, we have that: 
\[\sum_{i \in [n]} \mathbb{E}[\|\nabla f_i(\bm{w}_k)\|^2 \leq \sum_{i \in [n]} 2L(\mathbb{E}[f_i(\bm{w}_k)] - f_i^{*}) \leq 2n L \mathbb{E}[f(\bm{w}_k)] - 2L \sum_{i \in [n]} f_i^{*} \leq 2n L \mathbb{E}[f(\bm{w}_k)].\]
The last step above follows because the $f_i^{*}$'s are non-negative. This trick allows us to circumvent the need for the bounded client dissimilarity assumption.

Using these in (\ref{eq:nov-4-thm1-1}), we get:
\begin{multline}
    \label{eq:may15-1}
    \mathbb{E}[f(\bm{w}_{k'})] \leq 
    f(\bm{w}_{0}) -\frac{\eta E}{4}\sum_{k=0}^{k'-1}\mathbb{E}[\|\nabla f(\bm{w}_{k})\|^2] 
    \\
    + \underbrace{64 \eta L E \Big(\frac{\eta^2 L^2 E (\alpha E + 4)}{n} + 5 \beta \Big(\frac{q}{n} + \frac{(1+q)(n-r)}{r(n-1)}\Big)\Big)}_{=\gamma}\sum_{k=0}^{k'-1} \mathbb{E}[f(\bm{w}_{k})].
\end{multline}
Using (\ref{eq:may15-1}) recursively, we get:
\begin{flalign}
    \sum_{k=0}^{k'-1}\mathbb{E}[f(\bm{w}_{k})] & \leq k' f(\bm{w}_{0}) - \frac{\eta E}{4}\sum_{k=0}^{k'-2} (k'-1-k)\mathbb{E}[\|\nabla f(\bm{w}_{k})\|^2] + \gamma \sum_{k=0}^{k'-2} (k'-1-k) \mathbb{E}[f(\bm{w}_{k})]
    \\
    \label{eq:may15-2}
    & \leq k' f(\bm{w}_{0}) - \frac{\eta E}{4}\sum_{k=0}^{k'-1} \mathbb{E}[\|\nabla f(\bm{w}_{k})\|^2] + \gamma k' \sum_{k=0}^{k'-1} \mathbb{E}[f(\bm{w}_{k})].
\end{flalign}
Let us now ensure that $\gamma k' \leq \frac{1}{2}$ for all $k' \in \{1,\ldots,K\}$, in which case we can simplify (\ref{eq:may15-2}) to:
\begin{equation}
    \label{eq:may15-3}
    \sum_{k=0}^{k'-1}\mathbb{E}[f(\bm{w}_{k})] \leq 2k' f(\bm{w}_{0}) - \frac{\eta E}{2}\sum_{k=0}^{k'-1} \mathbb{E}[\|\nabla f(\bm{w}_{k})\|^2] \leq 2k' f(\bm{w}_{0}).
\end{equation}
Now:
\begin{equation}
    \gamma k' \leq \gamma K = 64 \eta L E \Big(\frac{\eta^2 L^2 E (\alpha E + 4)}{n} + 5 \beta \Big(\frac{q}{n} + \frac{(1+q)(n-r)}{r(n-1)}\Big)\Big)K.
\end{equation}

{Now if we set $8 \eta L E^2 \leq 1$, then it can be verified that $\beta = 160e^2 (1+q) \eta^2 L^2 E^2 (E+1)^2$ is a valid choice. Using this above, we get that:
\begin{equation}
    \gamma k' \leq \gamma K = \underbrace{64 \eta^3 L^3 E^3 K \Big(\frac{1}{n}\Big(\alpha + \frac{4}{E}\Big) + 800 e^2 (1+q) (E+1)^2 \Big(\frac{q}{n} + \frac{(1+q)(n-r)}{r(n-1)}\Big)\Big)}_{\text{(A)}}.
\end{equation}
Setting $\eta = \frac{1}{6 L E K^{1/3} (\frac{1}{n}(\alpha + \frac{4}{E}) + 800 e^2 (1+q) (E+1)^2 (\frac{q}{n} + \frac{(1+q)(n-r)}{r (n-1)}))^{1/3}}$, we have (A) $ < \frac{1}{2}$. But we must also have
\begin{equation}
    \label{eq:may15-4}
    8 \eta L E^2 = \frac{4 E}{3 K^{1/3} \big(\frac{1}{n}\big(\alpha + \frac{4}{E}\big) + 800 e^2 (1+q) (E+1)^2 \big(\frac{q}{n} + \frac{(1+q)(n-r)}{r (n-1)}\big)\big)^{1/3}} \leq 1. 
\end{equation}
This holds for $K^{1/3} (E+1) \geq \frac{1}{1200e^2(1+q) \big(\frac{q}{n} + \frac{(1+q)(n-r)}{r (n-1)}\big)}$.

Further $\beta$ must be smaller than 1, so
\begin{equation}
    \beta = 160 e^2 (1+q) \eta^2 L^2 E^2 (E+1)^2 = \frac{160 e^2 (1+q) (E+1)^2}{36 K^{2/3} \big(\big(\frac{1}{n}\big(\alpha + \frac{4}{E}\big) + 800 e^2 (1+q) (E+1)^2 \big(\frac{q}{n} + \frac{(1+q)(n-r)}{r (n-1)}\big)\big)^{2/3}} < 1.
\end{equation}
This holds for $E+1 \leq \frac{\sqrt{1+q} (n-r)}{3 r(n-1)} K$.

Now using (\ref{eq:may15-3}) in (\ref{eq:may15-1}) with $k' = K$ and our choice of $\beta = 160 e^2 (1+q) \eta^2 L^2 E^2 (E+1)^2$ and $\eta = \frac{1}{6 L E K^{1/3} (\frac{1}{n}(\alpha + \frac{4}{E}) + 800 e^2 (1+q) (E+1)^2 (\frac{q}{n} + \frac{(1+q)(n-r)}{r (n-1)}))^{1/3}}$, we get:
\begin{multline}
    \label{eq:may15-5}
    \mathbb{E}[f(\bm{w}_{K})] \leq 
    f(\bm{w}_{0}) -\frac{\eta E}{4}\sum_{k=0}^{K-1}\mathbb{E}[\|\nabla f(\bm{w}_{k})\|^2] 
    \\
    + {128 \eta^3 L^3 E^3 \Big(\frac{1}{n}\Big(\alpha + \frac{4}{E}\Big) + 800 e^2 (1+q)(E+1)^2 \Big(\frac{q}{n} + \frac{(1+q)(n-r)}{r(n-1)}\Big)\Big)}K f(\bm{w}_0).
\end{multline}
Rearranging the above a bit and using the fact that $f(\bm{w}_K) \geq 0$, we get:
\small
\begin{equation}
    \label{eq:may15-6}
    \frac{1}{K} \sum_{k=0}^{K-1}\mathbb{E}[\|\nabla f(\bm{w}_{k})\|^2] \leq \frac{4 f(\bm{w}_{0})}{\eta E K} + 512 \eta^2 L^3 E^2 \Big(\frac{1}{n}\Big(\alpha + \frac{4}{E}\Big) + 800 e^2 (1+q) (E+1)^2 \Big(\frac{q}{n} + \frac{(1+q)(n-r)}{r(n-1)}\Big)\Big) f(\bm{w}_0).
\end{equation}
\normalsize
Substituting the value of $\eta$ above, we get:
\begin{equation}
    \label{eq:may15-7}
    \frac{1}{K} \sum_{k=0}^{K-1}\mathbb{E}[\|\nabla f(\bm{w}_{k})\|^2] \leq \frac{39 L f(\bm{w}_{0})}{K^{2/3}} \Big({\frac{1}{n}\Big(\alpha + \frac{4}{E}\Big)} + 800 e^2 (1+q) (E+1)^2 \Big(\frac{q}{n} + \frac{(1+q)(n-r)}{r(n-1)}\Big)\Big)^{1/3}.
\end{equation}
This concludes the proof.
}
\end{proof}

\noindent \textbf{Lemmas used in the proof of \Cref{nov4-thm1}:}

\begin{lemma}
\label{nov-1-lem0}
Suppose $4\eta L E^2 \leq 1$ and $\beta \geq \frac{80 e^2 (1+q) \eta^2 L^2 E^2 (E+1)^2}{(1 - 4\eta L E)}$. Then for any $k' \in \{1,\ldots,K\}$, we have:
\begin{multline*}
    \mathbb{E}[f(\bm{w}_{k'})] \leq 
    f(\bm{w}_{0}) -\frac{\eta E}{4}\sum_{k=0}^{k'-1}\mathbb{E}[\|\nabla f(\bm{w}_{k})\|^2] 
    + {\frac{16 \eta^3 L^2 E^2 (\alpha E + 4)}{n^2}} \sum_{k=0}^{k'-1}\sum_{i \in [n]}\mathbb{E}[\|\nabla f_i(\bm{w}_k)\|^2]
    \\
    + \frac{5}{4 \eta E \beta }{\mathbb{E}[\|\bm{u}_{0} - \overline{\bm{\delta}}_{0}\|^2]} 
    + 
    160\eta E \beta \Big(\frac{q}{n^2} + \frac{(1+q)}{r(n-1)}\Big(1 - \frac{r}{n}\Big)\Big)
    \sum_{k=0}^{k'-1}\sum_{i \in [n]} \mathbb{E}[\|\nabla f_i(\bm{w}_k)\|^2].
\end{multline*}
\end{lemma}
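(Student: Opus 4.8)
The plan is to run a STORM-style potential argument adapted to the server-level momentum, decomposing the server update $\bm{u}_k$ around the deterministic target $\overline{\bm{\delta}}_k$. I would begin from the $L$-smoothness descent inequality for $f$ at the server iterate, $f(\bm{w}_{k+1}) \le f(\bm{w}_k) - \langle \nabla f(\bm{w}_k), \bm{u}_k\rangle + \tfrac{L}{2}\|\bm{u}_k\|^2$ (using $\bm{w}_{k+1} = \bm{w}_k - \bm{u}_k$). I would then split the inner product by writing $\bm{u}_k = \eta E \nabla f(\bm{w}_k) + (\overline{\bm{\delta}}_k - \eta E\nabla f(\bm{w}_k)) + (\bm{u}_k - \overline{\bm{\delta}}_k)$, so the leading term yields $-\eta E\|\nabla f(\bm{w}_k)\|^2$ and Young's inequality turns the two remainders into a local-drift term $\|\overline{\bm{\delta}}_k - \eta E\nabla f(\bm{w}_k)\|^2$ and a momentum-error term $\|\bm{u}_k - \overline{\bm{\delta}}_k\|^2$, each scaled by $1/(\eta E)$; the quadratic $\tfrac{L}{2}\|\bm{u}_k\|^2$ is controlled under $4\eta LE^2\le 1$ so it can be reabsorbed into the $-\tfrac{\eta E}{4}$ coefficient.

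For the local-drift term, recall $\overline{\bm{\delta}}_k = \tfrac{\eta}{n}\sum_i\sum_{\tau=0}^{E-1}\mathbb{E}[\nabla f_i(\bm{w}_{k,\tau}^{(i)})]$, hence $\overline{\bm{\delta}}_k - \eta E\nabla f(\bm{w}_k) = \tfrac{\eta}{n}\sum_\tau\sum_i\mathbb{E}[\nabla f_i(\bm{w}_{k,\tau}^{(i)}) - \nabla f_i(\bm{w}_k)]$. I would bound this using smoothness and the local displacement $\|\bm{w}_{k,\tau}^{(i)} - \bm{w}_k\|$ (itself $\mathcal{O}(\eta E)$-bounded over the $E$ SVRG inner steps), and crucially invoke \Cref{as-het} to pass from $\|\sum_i \widetilde{\bm{e}}_{k,\tau}^{(i)}\|^2$ to $\alpha\sum_i\|\widetilde{\bm{e}}_{k,\tau}^{(i)}\|^2$; this is exactly where the factor $(\alpha E + 4)/n^2$ and the $\sum_i\|\nabla f_i(\bm{w}_k)\|^2$ dependence of the second term in the lemma originate.

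The heart of the proof is the momentum-error recursion. Writing $\bm{\epsilon}_k := \bm{u}_k - \overline{\bm{\delta}}_k$ and taking $\mathbb{E}_{Q_D}$ of line~\ref{glob-mom} via \cref{apr20-3}, I obtain $\mathbb{E}_{Q_D}[\bm{\epsilon}_k] = (1-\beta)\bm{\epsilon}_{k-1} + \beta(g(\bm{w}_k;\mathcal{S}_k) - \overline{\bm{\delta}}_k) + (1-\beta)\big((g(\bm{w}_k;\mathcal{S}_k) - \widehat{g}(\bm{w}_{k-1};\mathcal{S}_k)) - (\overline{\bm{\delta}}_k - \overline{\bm{\delta}}_{k-1})\big)$. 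Conditioning on the past, the last two terms are mean-zero over $\mathcal{S}_k$, so their cross term with $\bm{\epsilon}_{k-1}$ vanishes and I get $\mathbb{E}\|\bm{\epsilon}_k\|^2 \le (1-\beta)^2\mathbb{E}\|\bm{\epsilon}_{k-1}\|^2 + 2\beta^2 V_{\mathrm{sub}} + 2(1-\beta)^2 V_{\mathrm{diff}} + V_{\mathrm{quant}}$. Here $V_{\mathrm{sub}}$ is the without-replacement subsampling variance of $g$, giving the finite-population factor $\tfrac{n-r}{r(n-1)}$; $V_{\mathrm{quant}}$ comes from \Cref{as5}, giving the $q$ factors; and $V_{\mathrm{diff}}$ is the STORM difference variance, which I would bound using \Cref{nov-1-lem3} via $\|(\bm{w}_k-\bm{w}_{k,E}^{(i)}) - (\bm{w}_{k-1}-\widehat{\bm{w}}_{k-1,E}^{(i)})\| \le \widehat{L}\|\bm{w}_k - \bm{w}_{k-1}\| = \widehat{L}\|\bm{u}_{k-1}\|$ with $\widehat{L} \propto (E+1)$. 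Since $V_{\mathrm{diff}} \propto \widehat{L}^2\|\bm{u}_{k-1}\|^2$ and $\|\bm{u}_{k-1}\|^2 \le 2\|\bm{\epsilon}_{k-1}\|^2 + 2\|\overline{\bm{\delta}}_{k-1}\|^2$, the $\|\bm{\epsilon}_{k-1}\|^2$ part must be reabsorbed into the $(1-\beta)^2$ coefficient, and this is precisely what forces $\beta \ge \tfrac{80e^2(1+q)\eta^2L^2E^2(E+1)^2}{1-4\eta LE}$: the $(E+1)^2$ is $\widehat{L}^2$, the $e^2$ is the $(1+\eta L)^{2E}\le e^2$ compounding across local steps, and $(1+q)$ tracks quantization amplification.

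Finally I would sum the stabilized recursion: $(1-\beta)^2 \le 1-\beta$ gives $\sum_k \mathbb{E}\|\bm{\epsilon}_k\|^2 \le \tfrac{1}{\beta}\big(\mathbb{E}\|\bm{\epsilon}_0\|^2 + \sum_k(\text{variance}_k)\big)$, substituting the leftover $\|\overline{\bm{\delta}}_{k-1}\|^2$ via $\|\overline{\bm{\delta}}_k\|^2 \lesssim \eta^2E^2\tfrac1n\sum_i\|\nabla f_i(\bm{w}_k)\|^2$. Feeding $\sum_k\mathbb{E}\|\bm{\epsilon}_k\|^2$ back into the summed descent inequality (with its $\tfrac{c}{\eta E}$ coefficient) converts the initial error into the $\tfrac{5}{4\eta E\beta}\mathbb{E}\|\bm{u}_0-\overline{\bm{\delta}}_0\|^2$ term and, because $V_{\mathrm{sub}},V_{\mathrm{quant}}\propto\eta^2E^2$, converts the accumulated variance into $160\eta E\beta\big(\tfrac{q}{n^2} + \tfrac{1+q}{r(n-1)}(1-\tfrac{r}{n})\big)\sum_k\sum_i\|\nabla f_i\|^2$, matching the stated bound. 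I expect the main obstacle to be this momentum recursion: pinning down the constant in the $\beta$ lower bound requires simultaneously combining the $\mathcal{S}_k$-subsampling covariance, the $Q_D$ variance from \Cref{as5}, and the Lipschitz-type bound of \Cref{nov-1-lem3}, all while controlling the inner-loop displacement $\|\bm{w}_{k,\tau}^{(i)}-\bm{w}_k\|$, and then verifying that the $\|\bm{\epsilon}_{k-1}\|^2$ feedback is genuinely swallowed by $(1-\beta)^2$.
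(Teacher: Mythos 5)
Your outline shares the paper's skeleton — smoothness descent at the server, a drift term controlled through \Cref{as-het}, a momentum-error recursion for $\bm{\epsilon}_k = \bm{u}_k - \overline{\bm{\delta}}_k$ with vanishing cross terms, the subsampling/quantization variances, and \Cref{nov-1-lem3} for the STORM difference — and your algebraic identity for $\mathbb{E}_{Q_D}[\bm{\epsilon}_k]$ is exactly right. The gap is in how you close the two feedback loops, and it is fatal to the stated constants. You propose to absorb only the $\|\bm{\epsilon}_{k-1}\|^2$ part of $\|\bm{u}_{k-1}\|^2$ into the $(1-\beta)^2$ contraction and to pay for the leftover $\|\overline{\bm{\delta}}_{k-1}\|^2$ by substituting $\|\overline{\bm{\delta}}_k\|^2 \lesssim \eta^2E^2\tfrac1n\sum_i\|\nabla f_i(\bm{w}_k)\|^2$. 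Trace the constants: that feedback enters the recursion with weight $\widehat{L}^2 = \Theta\big(e^2(1+q)\eta^2L^2E^2(E+1)^2\big)$, the geometric sum divides by $\beta$, and the descent inequality multiplies by $\Theta(1/\eta E)$, leaving a term of order $\tfrac{\widehat{L}^2}{\beta}\cdot\tfrac{\eta E}{n}\sum_k\sum_i\|\nabla f_i(\bm{w}_k)\|^2$. But any admissible $\beta$ is itself $\Theta(\widehat{L}^2)$, so $\widehat{L}^2/\beta$ is a \emph{constant}, and you are left with $\Theta\big(\tfrac{\eta E}{n}\big)\sum_i\|\nabla f_i(\bm{w}_k)\|^2$. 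This is incomparably larger than both terms of the lemma: the first carries the extra suppression $\eta^2L^2E^2$, and the second carries $\beta$ together with factors ($q$, $(n-r)/r(n-1)$) that vanish under full participation without compression. Downstream, after $\sum_i\|\nabla f_i\|^2\le 2nLf(\bm{w}_k)$, your leaked term gives $\gamma=\Theta(\eta LE)$ in the theorem's recursion, which forces $\eta\lesssim 1/(LEK)$ and destroys the $K^{-2/3}$ rate. The same defect appears in your treatment of $\tfrac{L}{2}\|\bm{u}_k\|^2$: bounding its $\|\overline{\bm{\delta}}_k\|^2$ part by gradient norms leaves $\Theta(\tfrac{\eta}{n})\sum_i\|\nabla f_i\|^2$, again unaffordable.

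The missing device is that the paper never converts these movement/drift quantities into gradient norms; it \emph{cancels} them against negative terms that your Young-inequality decomposition throws away. Concretely: (i) writing $\tfrac{L}{2}\|\bm{u}_k\|^2 = \tfrac{1}{8\eta E}\|\bm{u}_k\|^2 - \big(\tfrac{1}{8\eta E}-\tfrac{L}{2}\big)\|\bm{u}_k\|^2$ retains the negative quadratic $-\big(\tfrac{1}{8\eta E}-\tfrac{L}{2}\big)\|\bm{w}_{k+1}-\bm{w}_k\|^2$, and the unrolled recursion's movement feedback $\sum_l\|\bm{w}_l-\bm{w}_{l-1}\|^2$ (kept whole, never split into $\bm{\epsilon}$ and $\overline{\bm{\delta}}$ parts) is cancelled against it — this cancellation, not the contraction of the $\bm{\epsilon}$-recursion, is what produces $\beta \ge \tfrac{80e^2(1+q)\eta^2L^2E^2(E+1)^2}{1-4\eta LE}$ and explains the $(1-4\eta LE)$ denominator; (ii) bounding the inner product via the polarization identity $\langle \bm{a},\bm{b}\rangle = \tfrac12(\|\bm{a}\|^2+\|\bm{b}\|^2-\|\bm{a}-\bm{b}\|^2)$ in \Cref{lem1-may11}, rather than Young, retains $-\tfrac{\eta}{2}(1-\eta^2L^2E^2)\sum_\tau\|\overline{\bm{v}}_{k,\tau}\|^2$, against which the piece $\tfrac{1}{8\eta E}\cdot 2\|\overline{\bm{\delta}}_k\|^2 \le \tfrac{\eta}{4}\sum_\tau\|\overline{\bm{v}}_{k,\tau}\|^2$ is absorbed. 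Only the residuals that are genuinely small — the drift controlled by \Cref{as-het} and the subsampling/quantization variance of \Cref{lem-may11-n1} — are finally converted into $\sum_i\|\nabla f_i(\bm{w}_k)\|^2$ via \Cref{fl-lem-new1}. Without retaining and exploiting these negative terms, your plan cannot reach the bound of \Cref{nov-1-lem0}.
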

\begin{proof}
Per the previous definitions:
\begin{equation}
    \label{eq:nov-1-thm1-1}
    \bm{u}_k = \beta {g}_Q(\bm{w}_k;\mathcal{S}_k) + (1-\beta)\bm{u}_{k-1} + (1-\beta)\Delta{g}_Q(\bm{w}_k,\bm{w}_{k-1};\mathcal{S}_k)
\end{equation}
By $L$-smoothness of $f$, we have for $k \geq 1$:
\begin{flalign}
    \nonumber
    \mathbb{E}[f(\bm{w}_{k+1})] & \leq \mathbb{E}[f(\bm{w}_{k})] + \mathbb{E}[\langle \nabla f(\bm{w}_{k}), \bm{w}_{k+1} - \bm{w}_{k} \rangle] + \frac{L}{2}\mathbb{E}[\|\underbrace{\bm{w}_{k+1} - \bm{w}_{k}}_{=-\bm{u}_k}\|^2]
    \\
    \label{eq:nov-1-thm1-1-0}
    & = \mathbb{E}[f(\bm{w}_{k})] + \underbrace{\mathbb{E}[\langle \nabla f(\bm{w}_{k}), -\bm{u}_{k}\rangle]}_\text{(I*)} 
    + \underbrace{\frac{1}{8\eta E}\mathbb{E}[\|\bm{u}_{k}\|^2]}_\text{(II*)}
    - \Big(\frac{1}{8\eta E} - \frac{L}{2}\Big)\mathbb{E}[\|\bm{w}_{k+1} - \bm{w}_{k}\|^2].
\end{flalign}
Let us analyze (I*) first. 
\begin{flalign}
    \label{eq:nov6-1}
    \mathbb{E}[\langle \nabla f(\bm{w}_{k}), -\bm{u}_{k}\rangle] & = \mathbb{E}[\langle \nabla f(\bm{w}_{k}), -{g}(\bm{w}_k;\mathcal{S}_k) - (1 - \beta)(\bm{u}_{k-1} - \widehat{g}(\bm{w}_{k-1};\mathcal{S}_k)) \rangle]
    \\
    \nonumber
    & = \mathbb{E}[\langle \nabla f(\bm{w}_{k}), -{g}(\bm{w}_k;\mathcal{S}_k)] - (1 - \beta)\mathbb{E}[\langle \nabla f(\bm{w}_{k}), \bm{u}_{k-1} - \widehat{g}(\bm{w}_{k-1};\mathcal{S}_k) \rangle]
    \\
    \nonumber
    & = \underbrace{\mathbb{E}[\langle \nabla f(\bm{w}_{k}), \frac{1}{n}\sum_{i \in [n]}({\bm{w}_{k,E}^{(i)}} - \bm{w}_{k}) \rangle]}_\text{(III*)} + \underbrace{(1-\beta)\mathbb{E}[\langle - \nabla f(\bm{w}_{k}), \bm{u}_{k-1} - \overline{\bm{\delta}}_{k-1}
    \rangle]}_\text{(IV*)}
\end{flalign}
(\ref{eq:nov6-1}) follows by taking expectation with respect to $Q_D$. (III*) is obtained by taking expectation with respect to $\mathcal{S}_k$ above. (IV*) is obtained by taking expectation with respect to $\{\mathcal{B}_{k,1}^{(i)},\ldots,\mathcal{B}_{k,E-1}^{(i)}\}_{i=1}^{n}$ and $\mathcal{S}_k$ above.

From \Cref{lem1-may11}, for $\eta < \frac{1}{L}$ and $E < \frac{1}{4}\text{min}\Big(\frac{1}{\eta L}, \frac{1}{\eta^2 L^2} - \frac{1}{\eta L}\Big)$, we can bound (III*) as:
\small
\begin{multline}
    \label{eq:nov-1-thm1-2}
    \text{(III*)} \leq -\frac{\eta E}{2}\mathbb{E}[\|\nabla f(\bm{w}_{k})\|^2] - \frac{\eta}{2}{\Big(1 - \eta^2 L^2 E^2 \Big)}\sum_{\tau=0}^{E-1}\mathbb{E}[\|\overline{\bm{v}}_{k,\tau}\|^2] 
    + \frac{16 \eta^3 L^2 E^2 (\alpha E + 4)}{n^2}\sum_{i \in [n]}\mathbb{E}[\|\nabla f_i(\bm{w}_k)\|^2].
\end{multline}
\normalsize
Note that for $\eta L \ll 1$ (which is going to be the case eventually), we can combine all the above constraints on $\eta$ and $E$ into $4 \eta L E < 1$.

As for (IV*):
\begin{flalign}
    \text{(IV*)}
    & \leq (1-\beta)\mathbb{E}\big[\|\nabla f(\bm{w}_{k})\| \|\bm{u}_{k-1} - \overline{\bm{\delta}}_{k-1}\|\big]
    \\
    \label{eq:nov-1-thm1-3-i}
    & \leq \frac{(1-\beta)}{2}\Big(\frac{\eta E}{2(1-\beta)}{\mathbb{E}[\|\nabla f(\bm{w}_{k})\|^2]} + \frac{2(1-\beta)\mathbb{E}[\|\bm{u}_{k-1} - \overline{\bm{\delta}}_{k-1}\|^2]}{\eta E}\Big) 
    \\
    \label{eq:nov-1-thm1-3}
    & = \frac{\eta E}{4}\mathbb{E}[\|\nabla f(\bm{w}_{k})\|^2] + \frac{(1-\beta)^2}{\eta E}\mathbb{E}[\|\bm{u}_{k-1} - \overline{\bm{\delta}}_{k-1}\|^2].
\end{flalign}
(\ref{eq:nov-1-thm1-3-i}) above follows by the AM-GM inequality.
\\
Adding (\ref{eq:nov-1-thm1-2}) and (\ref{eq:nov-1-thm1-3}), we get:
\begin{multline}
    \label{eq:nov-1-thm1-4}
    \text{(I*)} \leq -\frac{\eta E}{4}\mathbb{E}[\|\nabla f(\bm{w}_{k})\|^2] -\frac{\eta}{2}(1 - \eta^2  L^2 E^2)\sum_{\tau=0}^{E-1}\mathbb{E}[\|\overline{\bm{v}}_{k,\tau}\|^2] + \frac{(1-\beta)^2}{\eta E}\mathbb{E}[\|\bm{u}_{k-1} - \overline{\bm{\delta}}_{k-1}\|^2]
    \\
    + \frac{16 \eta^3 L^2 E^2 (\alpha E + 4)}{n^2}\sum_{i \in [n]}\mathbb{E}[\|\nabla f_i(\bm{w}_k)\|^2].
\end{multline}
Now, let us analyze (II*). We have:
\begin{flalign}
    \label{eq:nov-1-thm1-4-2}
    \mathbb{E}[\|\bm{u}_k\|^2] \leq 2 \mathbb{E}[\|\overline{\bm{\delta}}_k\|^2] + 2 \mathbb{E}[\|\bm{u}_k - \overline{\bm{\delta}}_k\|^2]
\end{flalign}
Notice that:
\begin{equation}
    \label{eq:nov-1-thm1-4-2-0}
    \overline{\bm{\delta}}_k = \mathbb{E}_{\{\mathcal{B}_{k,1}^{(i)},\ldots,\mathcal{B}_{k,E-1}^{(i)}\}_{i=1}^{n}}\Big[\frac{1}{n}\sum_{i \in [n]}(\bm{w}_k - \bm{w}_{k,E}^{(i)})\Big] = \mathbb{E}_{\{\mathcal{B}_{k,1}^{(i)},\ldots,\mathcal{B}_{k,E-1}^{(i)}\}_{i=1}^{n}}[\sum_{\tau=0}^{E-1} \eta \overline{\bm{v}}_{k,\tau}].
\end{equation}
Thus:
\begin{equation}
    \label{eq:nov-1-thm1-5}
    \mathbb{E}[\|\overline{\bm{\delta}}_k\|^2] \leq  \eta^2 \mathbb{E}_{}\Big[\Big\|\sum_{\tau=0}^{E-1} \overline{\bm{v}}_{k,\tau}\Big\|^2\Big] \leq E \eta^2 \sum_{\tau=0}^{E-1} \mathbb{E}_{}[\| \overline{\bm{v}}_{k,\tau}\|^2].
\end{equation}
The expectation above is with respect to all the randomness in the algorithm so far.
\\
Using (\ref{eq:nov-1-thm1-5}) and the result of \Cref{nov-1-lem2} in (\ref{eq:nov-1-thm1-4-2}) with $2 \eta L E^2 \leq 1$, we have that:
\begin{multline}
    \label{eq:nov-1-thm1-6}
    \mathbb{E}[\|\bm{u}_k\|^2] \leq 2 E \eta^2 \sum_{\tau=0}^{E-1} \mathbb{E}_{}[\| \overline{\bm{v}}_{k,\tau}\|^2] + 2 \Big\{(1-\beta)^2\mathbb{E}[\|\bm{u}_{k-1} - \overline{\bm{\delta}}_{k-1}\|^2] + 2 \beta^2 \mathbb{E}[\|{g}_Q(\bm{w}_k;\mathcal{S}_k) - \overline{\bm{\delta}}_k\|^2] 
    \\
    + 8 e^2 (1+q) (1-\beta)^2 \eta^2 L^2 E^2 (E+1)^2 \mathbb{E}[\|\bm{w}_{k} - \bm{w}_{k-1}\|^2]\Big\}.
\end{multline}
Recalling that (II*) = $\frac{1}{8\eta E} \mathbb{E}[\|\bm{u}_k\|^2]$, we get:
\begin{multline}
    \label{eq:nov-1-thm1-8}
    \text{(II*)} \leq \frac{\eta}{4} \sum_{\tau=0}^{E-1} \mathbb{E}_{}[\| \overline{\bm{v}}_{k,\tau}\|^2] + \frac{1}{4\eta E} \Big\{(1-\beta)^2\mathbb{E}[\|\bm{u}_{k-1} - \overline{\bm{\delta}}_{k-1}\|^2] 
    + 2 \beta^2 \mathbb{E}[\|{g}_Q(\bm{w}_k;\mathcal{S}_k) - \overline{\bm{\delta}}_k\|^2] 
    \\
    + 8 e^2 (1+q) (1-\beta)^2 \eta^2 L^2 E^2 (E+1)^2 \mathbb{E}[\|\bm{w}_{k} - \bm{w}_{k-1}\|^2]\Big\}.
\end{multline}
Adding (\ref{eq:nov-1-thm1-4}) and (\ref{eq:nov-1-thm1-8}):
\begin{multline}
    \label{eq:nov-1-thm1-9}
    \text{(I*)} + \text{(II*)} \leq -\frac{\eta E}{4}\mathbb{E}[\|\nabla f(\bm{w}_{k})\|^2] -\frac{\eta}{2}\underbrace{\Big(1 - \eta^2  L^2 E^2 - \frac{1}{2}\Big)}_\text{$>0$ for $4 \eta L E \leq 1$}\sum_{\tau=0}^{E-1}\mathbb{E}[\|\overline{\bm{v}}_{k,\tau}\|^2] 
    \\
    + \frac{16 \eta^3 L^2 E^2}{n^2}
    (\alpha E + 4)
    \sum_{i \in [n]}\mathbb{E}[\|\nabla f_i(\bm{w}_k)\|^2]
    + \frac{5(1-\beta)^2}{4\eta E}\underbrace{\mathbb{E}[\|\bm{u}_{k-1} - \overline{\bm{\delta}}_{k-1}\|^2]}_\text{from \Cref{nov-1-lem2}}
    \\
    + \frac{\beta^2}{2 \eta E} \mathbb{E}[\|{g}_Q(\bm{w}_k;\mathcal{S}_k) - \overline{\bm{\delta}}_k\|^2] 
    + 2 e^2 (1+q) (1-\beta)^2 \eta L^2 E (E+1)^2 \mathbb{E}[\|\bm{w}_{k} - \bm{w}_{k-1}\|^2].
\end{multline}
Therefore, using 
\Cref{nov-1-lem2} 
recursively, we get:
\begin{multline}
    \label{eq:nov-1-thm1-11}
    \text{(I*)} + \text{(II*)} \leq -\frac{\eta E}{4}\mathbb{E}[\|\nabla f(\bm{w}_{k})\|^2] 
    + \frac{16 \eta^3 L^2 E^2 (\alpha E + 4) }{n^2}\sum_{i \in [n]}\mathbb{E}[\|\nabla f_i(\bm{w}_k)\|^2]
    \\
    + \frac{5(1-\beta)^{2k}}{4\eta E}{\mathbb{E}[\|\bm{u}_{0} - \overline{\bm{\delta}}_{0}\|^2]} 
    + \frac{5\beta^2}{2\eta E}
    \sum_{l=1}^{k}(1-\beta)^{2(k-l)}\underbrace{\mathbb{E}[\|{g}_Q(\bm{w}_l;\mathcal{S}_l) - \overline{\bm{\delta}}_l\|^2]}_\text{(V*)}
    \\
    + 10 e^2 (1+q) \eta L^2 E (E+1)^2 \sum_{l=1}^{k}(1-\beta)^{2(k-l+1)}\mathbb{E}[\|\bm{w}_{l} - \bm{w}_{l-1}\|^2].
\end{multline}
Using \Cref{lem-may11-n1}, we get:
\begin{equation}
    \text{(V*)} \leq 4 \eta^2 E\Big(\frac{q}{n^2} + \frac{(1+q)}{r(n-1)}\Big(1 - \frac{r}{n}\Big)\Big)\sum_{i \in [n]}\sum_{\tau=0}^{E-1}\mathbb{E}[\|\bm{v}_{k,\tau}^{(i)}\|^2].
\end{equation}

Putting this back in (\ref{eq:nov-1-thm1-11}), we get:
\begin{multline}
    \label{eq:nov-1-thm1-12}
    \text{(I*)} + \text{(II*)} \leq -\frac{\eta E}{4}\mathbb{E}[\|\nabla f(\bm{w}_{k})\|^2]
    + \frac{16 \eta^3 L^2 E^2 (\alpha E + 4) }{n^2} \sum_{i \in [n]}\mathbb{E}[\|\nabla f_i(\bm{w}_k)\|^2]
    \\
    + \frac{5(1-\beta)^{2k}}{4\eta E}{\mathbb{E}[\|\bm{u}_{0} - \overline{\bm{\delta}}_{0}\|^2]}
    + 
    10\eta \beta^2 \Big(\frac{q}{n^2} + \frac{(1+q)}{r(n-1)}\Big(1 - \frac{r}{n}\Big)\Big)
    \sum_{l=1}^{k}(1-\beta)^{2(k-l)}\sum_{i \in [n]}\sum_{\tau=0}^{E-1}\mathbb{E}[\|\bm{v}_{l,\tau}^{(i)}\|^2]
    \\
    + 10 e^2 (1+q) \eta L^2 E (E+1)^2 \sum_{l=1}^{k}(1-\beta)^{2(k-l+1)}\mathbb{E}[\|\bm{w}_{l} - \bm{w}_{l-1}\|^2].
\end{multline}
Next, using (\ref{eq:nov-1-thm1-12}) in (\ref{eq:nov-1-thm1-1-0}), we get that:
\begin{multline}
    \label{eq:nov-1-thm1-13}
    \mathbb{E}[f(\bm{w}_{k+1})] \leq 
    \mathbb{E}[f(\bm{w}_{k})] -\frac{\eta E}{4}\mathbb{E}[\|\nabla f(\bm{w}_{k})\|^2] 
    + \frac{16 \eta^3 L^2 E^2 (\alpha E + 4) }{n^2} \sum_{i \in [n]} \mathbb{E}[\|\nabla f_i(\bm{w}_k)\|^2]
    \\
    + \frac{5(1-\beta)^{2k}}{4\eta E}{\mathbb{E}[\|\bm{u}_{0} - \overline{\bm{\delta}}_{0}\|^2]} 
    + 
    10\eta \beta^2 \Big(\frac{q}{n^2} + \frac{(1+q)}{r(n-1)}\Big(1 - \frac{r}{n}\Big)\Big)
    \sum_{l=1}^{k}(1-\beta)^{2(k-l)}\sum_{i \in [n]}\sum_{\tau=0}^{E-1}\mathbb{E}[\|\bm{v}_{l,\tau}^{(i)}\|^2]
    \\
    + 10 e^2 (1+q) \eta L^2 E (E+1)^2 \sum_{l=1}^{k}(1-\beta)^{2(k-l+1)}\mathbb{E}[\|\bm{w}_{l} - \bm{w}_{l-1}\|^2]
    - \Big(\frac{1}{8\eta E} - \frac{L}{2}\Big)\mathbb{E}[\|\bm{w}_{k+1} - \bm{w}_{k}\|^2].
\end{multline}
Summing the above from $k=0$ through $(k'-1)$ for any $k' \in \{1,\ldots,K\}$, we get:
\begin{multline}
    \label{eq:nov-1-thm1-14}
    \mathbb{E}[f(\bm{w}_{k'})] \leq 
    f(\bm{w}_{0}) -\frac{\eta E}{4}\sum_{k=0}^{k'-1}\mathbb{E}[\|\nabla f(\bm{w}_{k})\|^2] 
    + \frac{16 \eta^3 L^2 E^2 (\alpha E + 4) }{n^2} \sum_{k=0}^{k'-1}\sum_{i \in [n]} \mathbb{E}[\|\nabla f_i(\bm{w}_k)\|^2]
    \\
    + \sum_{l=0}^{\infty}\frac{5(1-\beta)^{2l}}{4\eta E}{\mathbb{E}[\|\bm{u}_{0} - \overline{\bm{\delta}}_{0}\|^2]} 
    + 
    10 \eta \beta^2 \Big(\frac{q}{n^2} + \frac{(1+q)}{r(n-1)}\Big(1 - \frac{r}{n}\Big)\Big)
    \sum_{k=0}^{k'-1}\sum_{i \in [n]}\sum_{\tau=0}^{E-1}\mathbb{E}[\|\bm{v}_{k,\tau}^{(i)}\|^2]\sum_{l=0}^{\infty}{(1-\beta)^{2l}}
    \\
    + 10 e^2 (1+q) \eta L^2 E (E+1)^2 (1-\beta)^2
    \sum_{k=1}^{k'-1}\mathbb{E}[\|\bm{w}_{k} - \bm{w}_{k-1}\|^2]
    \sum_{l=0}^{\infty}(1-\beta)^{2l} 
    - \Big(\frac{1}{8\eta E} - \frac{L}{2}\Big)\sum_{k=0}^{k'-1}\mathbb{E}[\|\bm{w}_{k+1} - \bm{w}_{k}\|^2].
\end{multline}
Simplifying the above by noting that $\sum_{l=0}^{\infty}(1-\beta)^{2l} \leq \sum_{l=0}^{\infty}(1-\beta)^{l} = 1/\beta$, we get:
\begin{multline}
    \label{eq:nov-1-thm1-15}
    \mathbb{E}[f(\bm{w}_{k'})] \leq 
    f(\bm{w}_{0}) -\frac{\eta E}{4}\sum_{k=0}^{k'-1}\mathbb{E}[\|\nabla f(\bm{w}_{k})\|^2] 
    + \frac{16 \eta^3 L^2 E^2 (\alpha E + 4) }{n^2} \sum_{k=0}^{k'-1}\sum_{i \in [n]} \mathbb{E}[\|\nabla f_i(\bm{w}_k)\|^2]
    \\
    + \frac{5}{4 \eta E \beta }{\mathbb{E}[\|\bm{u}_{0} - \overline{\bm{\delta}}_{0}\|^2]} 
    + 
    10\eta \beta \Big(\frac{q}{n^2} + \frac{(1+q)}{r(n-1)}\Big(1 - \frac{r}{n}\Big)\Big)
    \sum_{k=0}^{k'-1}\sum_{i \in [n]}\sum_{\tau=0}^{E-1}\mathbb{E}[\|\bm{v}_{k,\tau}^{(i)}\|^2]
    \\
    + \underbrace{\frac{10 e^2 (1+q) \eta L^2 E (E+1)^2}{\beta} 
    \sum_{k=1}^{k'-1}\mathbb{E}[\|\bm{w}_{k} - \bm{w}_{k-1}\|^2] - \frac{(1-4\eta L E)}{8\eta E}\sum_{k=0}^{k'-1}\mathbb{E}[\|\bm{w}_{k+1} - \bm{w}_{k}\|^2]}_\text{(VI*) -- want this to be $\leq$ 0}
\end{multline}
We want (VI*) to be $\leq 0$. For this, we must have:
\begin{equation}
    \label{eq:nov-1-thm1-16}
    \beta \geq \frac{80 e^2 (1+q) \eta^2 L^2 E^2 (E+1)^2}{(1 - 4\eta L E)}.
\end{equation}
Note that the denominator above is positive since we already have a constraint of $4\eta L E \leq 1$. 

With $\beta$ satisfying the above constraint, and using the result of \Cref{fl-lem-new1} for $\sum_{\tau=0}^{E-1} \mathbb{E}[\|\bm{v}_{k,\tau}^{(i)}\|^2]$, we get:
\begin{multline}
    \label{eq:nov-1-thm1-17}
    \mathbb{E}[f(\bm{w}_{k'})] \leq 
    f(\bm{w}_{0}) -\frac{\eta E}{4}\sum_{k=0}^{k'-1}\mathbb{E}[\|\nabla f(\bm{w}_{k})\|^2] 
    + \frac{16 \eta^3 L^2 E^2 (\alpha E + 4) }{n^2} \sum_{k=0}^{k'-1}\sum_{i \in [n]}\mathbb{E}[\|\nabla f_i(\bm{w}_k)\|^2]
    \\
    + \frac{5}{4 \eta E \beta }{\mathbb{E}[\|\bm{u}_{0} - \overline{\bm{\delta}}_{0}\|^2]} 
    + 
    160\eta E \beta \Big(\frac{q}{n^2} + \frac{(1+q)}{r(n-1)}\Big(1 - \frac{r}{n}\Big)\Big)
    \sum_{k=0}^{k'-1}\sum_{i \in [n]} \mathbb{E}[\|\nabla f_i(\bm{w}_k)\|^2].
\end{multline}
Finally, note that we have two constraints namely: 
$4 \eta L E \leq 1$ and $2 \eta L E^2 \leq 1$. We can merge these constraints into $4 \eta L E^2 \leq 1$ for $E \geq 1$ (which is the case).

This gives us the desired result.
\end{proof}

\begin{lemma}
\label{lem1-may11}
For $\eta < \frac{1}{L}$ and $E < \frac{1}{4}\text{min}\Big(\frac{1}{\eta L}, \frac{1}{\eta^2 L^2} - \frac{1}{\eta L}\Big)$, (III*) in the proof of \Cref{nov-1-lem0} can be bounded as:
\begin{multline*}
    \text{(III*) } = \mathbb{E}[\langle \nabla f(\bm{w}_{k}), \frac{1}{n}\sum_{i \in [n]}(\bm{w}_{k,E}^{(i)} - \bm{w}_{k})\rangle] \leq -\frac{\eta E}{2}\mathbb{E}[\|\nabla f(\bm{w}_{k})\|^2] - \frac{\eta}{2}{\Big(1 - \eta^2 L^2 E^2 \Big)}\sum_{\tau=0}^{E-1}\mathbb{E}[\|\overline{\bm{v}}_{k,\tau}\|^2] 
    \\
    + {\frac{16 \eta^3 L^2 E^2 (\alpha E + 4)}{n^2}\sum_{i \in [n]}\|\nabla f_i(\bm{w}_k)\|^2}.
\end{multline*}
\end{lemma}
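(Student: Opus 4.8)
The plan is to convert the inner product (III*) into the stated form by one algebraic identity and then devote all the effort to bounding a client-drift residual. First I would use the local update rule $\bm{w}_{k,\tau+1}^{(i)} = \bm{w}_{k,\tau}^{(i)} - \eta\bm{v}_{k,\tau}^{(i)}$ with $\bm{w}_{k,0}^{(i)} = \bm{w}_k$ to write $\frac{1}{n}\sum_{i}(\bm{w}_{k,E}^{(i)} - \bm{w}_k) = -\eta\sum_{\tau=0}^{E-1}\overline{\bm{v}}_{k,\tau}$, so that $\text{(III*)} = -\eta\sum_{\tau=0}^{E-1}\mathbb{E}[\langle \nabla f(\bm{w}_k), \overline{\bm{v}}_{k,\tau}\rangle]$. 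Applying the polarization identity $-\langle a,b\rangle = \tfrac12\|a-b\|^2 - \tfrac12\|a\|^2 - \tfrac12\|b\|^2$ with $a=\nabla f(\bm{w}_k)$ and $b = \overline{\bm{v}}_{k,\tau}$ produces, term by term, the first two pieces of the claim, $-\tfrac{\eta E}{2}\|\nabla f(\bm{w}_k)\|^2 - \tfrac{\eta}{2}\sum_\tau\mathbb{E}[\|\overline{\bm{v}}_{k,\tau}\|^2]$, plus a residual $\tfrac{\eta}{2}\sum_\tau\mathbb{E}[\|\nabla f(\bm{w}_k) - \overline{\bm{v}}_{k,\tau}\|^2]$. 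Note this identity is purely algebraic, so it needs no unbiasedness of $\overline{\bm{v}}_{k,\tau}$. Everything reduces to showing the residual is at most $\tfrac{\eta^3 L^2 E^2}{2}\sum_\tau\mathbb{E}[\|\overline{\bm{v}}_{k,\tau}\|^2] + \tfrac{16\eta^3 L^2 E^2(\alpha E+4)}{n^2}\sum_i\|\nabla f_i(\bm{w}_k)\|^2$.

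To bound the residual I would split the error through the gradient at the averaged local iterate $\overline{\bm{w}}_{k,\tau} = \bm{w}_k - \eta\sum_{s<\tau}\overline{\bm{v}}_{k,s}$:
\[
\nabla f(\bm{w}_k) - \overline{\bm{v}}_{k,\tau} = \underbrace{\nabla f(\bm{w}_k) - \nabla f(\overline{\bm{w}}_{k,\tau})}_{A_\tau} - \underbrace{\tfrac1n\sum_i\widetilde{\bm{e}}_{k,\tau}^{(i)}}_{B_\tau} - \underbrace{\tfrac1n\sum_i \bm{e}_{k,\tau}^{(i)}}_{C_\tau}.
\]
Term $A_\tau$ is governed purely by the $\overline{\bm{v}}$'s: by $L$-smoothness of $f$ (\Cref{as1}) and $\bm{w}_k - \overline{\bm{w}}_{k,\tau} = \eta\sum_{s<\tau}\overline{\bm{v}}_{k,s}$, Cauchy--Schwarz and summing over $\tau$ give $\sum_\tau\|A_\tau\|^2 \le \tfrac12\eta^2 L^2 E^2\sum_s\|\overline{\bm{v}}_{k,s}\|^2$. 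Term $B_\tau$ is exactly the quantity controlled by the heterogeneity assumption: $\mathbb{E}[\|B_\tau\|^2] = n^{-2}\mathbb{E}[\|\sum_i\widetilde{\bm{e}}_{k,\tau}^{(i)}\|^2] \le \tfrac{\alpha}{n^2}\sum_i\mathbb{E}[\|\widetilde{\bm{e}}_{k,\tau}^{(i)}\|^2]$ by \Cref{as-het}, after which $\|\widetilde{\bm{e}}_{k,\tau}^{(i)}\| \le L\|\bm{w}_{k,\tau}^{(i)} - \overline{\bm{w}}_{k,\tau}\|$ is a consensus error. Term $C_\tau$ is the variance of the SVRG/SPIDER-style local estimator: since $\bm{e}_{k,\tau}^{(i)} = \bm{v}_{k,\tau}^{(i)} - \nabla f_i(\bm{w}_{k,\tau}^{(i)})$ starts at $\bm{0}$ (a full gradient is used at $\tau=0$) and is a martingale whose increments are centered stochastic-gradient differences bounded by $\eta L\|\bm{v}_{k,s}^{(i)}\|$ via per-sample smoothness, independence across clients yields $\mathbb{E}[\|C_\tau\|^2] = n^{-2}\sum_i\mathbb{E}[\|\bm{e}_{k,\tau}^{(i)}\|^2] \le \tfrac{\eta^2 L^2}{n^2}\sum_i\sum_{s<\tau}\mathbb{E}[\|\bm{v}_{k,s}^{(i)}\|^2]$. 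I would then use $\|A_\tau - (B_\tau+C_\tau)\|^2 \le 2\|A_\tau\|^2 + 2\|B_\tau+C_\tau\|^2$, the factor $2$ cancelling the $\tfrac12$ above so that the $\overline{\bm{v}}$-coefficient is exactly $\eta^2 L^2 E^2$.

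The final step, and the one I expect to be the main obstacle, is converting the consensus errors in $B_\tau$ and the individual estimator norms $\sum_i\|\bm{v}_{k,s}^{(i)}\|^2$ in $C_\tau$ into the heterogeneity term $\tfrac1{n^2}\sum_i\|\nabla f_i(\bm{w}_k)\|^2$. This is where the condition $E < \tfrac14\min\!\big(\tfrac{1}{\eta L}, \tfrac{1}{\eta^2 L^2} - \tfrac{1}{\eta L}\big)$ enters: it keeps the local trajectory from blowing up, so an auxiliary bound of the type in \Cref{fl-lem-new1}, namely $\sum_\tau\mathbb{E}[\|\bm{v}_{k,\tau}^{(i)}\|^2] = \mathcal{O}(\|\nabla f_i(\bm{w}_k)\|^2)$ together with a matching consensus-error estimate, holds with the induced geometric series converging. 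Summing $B_\tau$ over $\tau$ contributes the $\alpha E$ factor and $C_\tau$ the additive $4$, giving the $(\alpha E+4)$; tracking the numerical constants through these weighted splits is the delicate bookkeeping that the leading $16$ encodes.
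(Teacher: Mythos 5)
Your proposal is correct and follows essentially the same route as the paper: the polarization identity produces the $-\frac{\eta E}{2}\mathbb{E}[\|\nabla f(\bm{w}_k)\|^2] - \frac{\eta}{2}\sum_\tau \mathbb{E}[\|\overline{\bm{v}}_{k,\tau}\|^2]$ terms plus the residual, and your three-way split into $A_\tau$ (smoothness of $f$), $B_\tau$ (controlled by \Cref{as-het}), and $C_\tau$ (zero-mean by \Cref{fl-lem0}, so cross-client terms vanish) is exactly the paper's decomposition, merely inlined rather than routed through \Cref{fl-lem2}, with the same reliance on \Cref{fl-lem-new1}-type bounds to convert everything into $\sum_i \|\nabla f_i(\bm{w}_k)\|^2$ under the stated restrictions on $\eta$ and $E$.
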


\begin{proof}
$\text{(III*)} = \mathbb{E}[\langle \nabla f(\bm{w}_{k}), \frac{1}{n}\sum_{i \in [n]}(\bm{w}_{k,E}^{(i)} - \bm{w}_{k})\rangle]$. Then:
\small
\begin{flalign}
    \nonumber
    \text{(III*)} & = \mathbb{E}[\langle \nabla f(\bm{w}_{k}), -\frac{1}{{n}}\sum_{i \in [n]} \sum_{\tau=0}^{E-1}\eta \bm{v}_{k,\tau}^{(i)}\rangle]
    \\
    \nonumber
    & = -{\eta}\sum_{\tau=0}^{E-1} \mathbb{E}[\langle \nabla f(\bm{w}_{k}), \underbrace{\frac{1}{n}\sum_{i \in [n]} \bm{v}_{k,\tau}^{(i)}}_{=\overline{\bm{v}}_{k,\tau}}\rangle]
    \\
    \label{eq:may11-1}
    & = \sum_{\tau=0}^{E-1}\Big\{-\frac{\eta}{2}\mathbb{E}[\|\nabla f(\bm{w}_{k})\|^2] -\frac{\eta}{2}\mathbb{E}[\|\overline{\bm{v}}_{k,\tau}\|^2] + \frac{\eta}{2}\mathbb{E}[\|\nabla f(\bm{w}_{k}) - \overline{\bm{v}}_{k,\tau}\|^2]\Big\}
    \\
    \nonumber
    & = \sum_{\tau=0}^{E-1}\Big\{-\frac{\eta}{2}\mathbb{E}[\|\nabla f(\bm{w}_{k})\|^2] -\frac{\eta}{2}\mathbb{E}[\|\overline{\bm{v}}_{k,\tau}\|^2] + \frac{\eta}{2}\mathbb{E}[\|\nabla f(\bm{w}_{k}) - \nabla f(\overline{\bm{w}}_{k,\tau}) + \nabla f(\overline{\bm{w}}_{k,\tau}) - \overline{\bm{v}}_{k,\tau}\|^2]\Big\}
    \\
    \label{eq:may11-2}
    & \leq \sum_{\tau=0}^{E-1}\Big\{-\frac{\eta}{2}\mathbb{E}[\|\nabla f(\bm{w}_{k})\|^2] -\frac{\eta}{2}\mathbb{E}[\|\overline{\bm{v}}_{k,\tau}\|^2] + {\eta}\mathbb{E}[\underbrace{\|\nabla f(\bm{w}_{k}) - \nabla f(\overline{\bm{w}}_{k,\tau})\|}_{\leq L \|\bm{w}_{k} - \overline{\bm{w}}_{k,\tau}\|}]^2 +  \eta \mathbb{E}[\|\nabla f(\overline{\bm{w}}_{k,\tau}) - \overline{\bm{v}}_{k,\tau}\|^2]\Big\}
    \\
    \label{eq:may11-3}
    & \leq \sum_{\tau=0}^{E-1}\Big\{-\frac{\eta}{2}\mathbb{E}[\|\nabla f(\bm{w}_{k})\|^2] -\frac{\eta}{2}\mathbb{E}[\|\overline{\bm{v}}_{k,\tau}\|^2] + {\eta L^2}\mathbb{E}[\|\bm{w}_{k} - \overline{\bm{w}}_{k,\tau}\|]^2 +  \eta \mathbb{E}[\|\nabla f(\overline{\bm{w}}_{k,\tau}) - \overline{\bm{v}}_{k,\tau}\|^2]\Big\}
\end{flalign}
\normalsize
(\ref{eq:may11-1}) above follows by using the fact that for any two vectors $\bm{a}$ and $\bm{b}$, $\langle \bm{a}, \bm{b} \rangle = \frac{1}{2}(\|\bm{a}\|^2 + \|\bm{b}\|^2 - \|\bm{a}-\bm{b}\|^2)$. Also, (\ref{eq:may11-2}) follows from the fact that for any two vectors $\bm{a}$ and $\bm{b}$, $\|\bm{a} + \bm{b}\|^2 \leq 2\|\bm{a}\|^2 + 2\|\bm{b}\|^2$.
\\
Per definitions, observe that:
\begin{equation}
    \label{eq:may11-4}
    \overline{\bm{w}}_{k,\tau+1} = \overline{\bm{w}}_{k,\tau} - \eta \overline{\bm{v}}_{k,\tau}.
\end{equation}
From this, we have that $\bm{w}_{k} - \overline{\bm{w}}_{k,\tau} = \eta \sum_{t=0}^{\tau-1}\overline{\bm{v}}_{k,t}$. Hence, $\|\bm{w}_{k} - \overline{\bm{w}}_{k,\tau}\|^2 = \eta^2 \|\sum_{t=0}^{\tau-1}\overline{\bm{v}}_{k,t}\|^2 \leq \eta^2 \tau \sum_{t=0}^{\tau-1} \|\overline{\bm{v}}_{k,t}\|^2$ -- this follows from the fact that for any $p > 1$ vectors $\{\bm{u}_1,\ldots,\bm{u}_p\}$, $\|\sum_{i=1}^p \bm{u}_i\|^2 \leq p \sum_{i=1}^p \|\bm{u}_i\|^2$. Using all this in (\ref{eq:may11-3}), we get:
\small
\begin{flalign}
    \nonumber
    \text{(III*)} & \leq -\frac{\eta E}{2}\mathbb{E}[\|\nabla f(\bm{w}_{k})\|^2] +  \sum_{\tau=0}^{E-1}\Big\{-\frac{\eta}{2}\mathbb{E}[\|\overline{\bm{v}}_{k,\tau}\|^2] + {\eta^3  L^2}\tau \sum_{t=0}^{\tau-1} \mathbb{E}[\|\overline{\bm{v}}_{k,t}\|^2] +  \eta \mathbb{E}[\|\nabla f(\overline{\bm{w}}_{k,\tau}) - \overline{\bm{v}}_{k,\tau}\|^2]\Big\}
    \\
    \label{eq:may11-5}
    & \leq -\frac{\eta E}{2}\mathbb{E}[\|\nabla f(\bm{w}_{k})\|^2] -\frac{\eta}{2}\sum_{\tau=0}^{E-1}\mathbb{E}[\|\overline{\bm{v}}_{k,\tau}\|^2] + \frac{\eta^3  L^2 E^2}{2}\sum_{\tau=0}^{E-1}\mathbb{E}[\|\overline{\bm{v}}_{k,\tau}\|^2] + \eta \underbrace{\sum_{\tau=0}^{E-1}\mathbb{E}[\|\nabla f(\overline{\bm{w}}_{k,\tau}) - \overline{\bm{v}}_{k,\tau}\|^2]}_\text{from \Cref{fl-lem2}}
\end{flalign}
\normalsize
Using \Cref{fl-lem2} to bound the last term above gives us:
\begin{equation}
    \text{(III*)} \leq -\frac{\eta E}{2}\mathbb{E}[\|\nabla f(\bm{w}_{k})\|^2] - \frac{\eta}{2}{\Big(1 - \eta^2 L^2 E^2 \Big)}\sum_{\tau=0}^{E-1}\mathbb{E}[\|\overline{\bm{v}}_{k,\tau}\|^2] + \frac{16 \eta^3 L^2 E^2 (\alpha E + 4)}{n^2}\sum_{i \in [n]}\|\nabla f_i(\bm{w}_k)\|^2.
\end{equation}
This gives us the desired result.
\end{proof}

\begin{lemma}
\label{fl-lem2}
For $\eta < \frac{1}{L}$ and $E < \frac{1}{4}\text{min}\Big(\frac{1}{\eta L}, \frac{1}{\eta^2 L^2} - \frac{1}{\eta L}\Big)$, we have:
\small
\begin{equation*}
    \sum_{\tau=0}^{E-1} \mathbb{E}[\|\overline{\bm{v}}_{k,\tau} - \nabla f(\overline{\bm{w}}_{k,\tau})\|^2] 
    \leq
    {\frac{16 \eta^2 L^2 E^2 (\alpha E + 4)}{n^2}}\sum_{i \in [n]}\|\nabla f_i(\bm{w}_k)\|^2,
\end{equation*}
\normalsize
where the expectation is with respect to the randomness due to $\{\mathcal{B}_{k,1}^{(i)},\ldots,\mathcal{B}_{k,E-1}^{(i)}\}_{i=1}^{n}$.
\end{lemma}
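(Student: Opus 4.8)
The plan is to split the averaged estimator error into a \emph{stochastic-noise} part and a \emph{client-drift/heterogeneity} part, bound each separately, and reduce both to the local estimator second moments $\mathbb{E}[\|\bm{v}_{k,\tau}^{(i)}\|^2]$, which I would control uniformly by $\|\nabla f_i(\bm{w}_k)\|^2$. Since $\nabla f = \frac1n\sum_i\nabla f_i$, inserting $\pm\nabla f_i(\bm{w}_{k,\tau}^{(i)})$ into each summand gives
\[
\overline{\bm{v}}_{k,\tau} - \nabla f(\overline{\bm{w}}_{k,\tau}) = \frac{1}{n}\sum_{i\in[n]}\big(\bm{e}_{k,\tau}^{(i)} + \widetilde{\bm{e}}_{k,\tau}^{(i)}\big),
\]
so by $\|\bm{a}+\bm{b}\|^2\le 2\|\bm{a}\|^2+2\|\bm{b}\|^2$ it suffices to bound $\frac{2}{n^2}\mathbb{E}[\|\sum_i\bm{e}_{k,\tau}^{(i)}\|^2]$ (the noise part, which will produce the ``$+4$'') and $\frac{2}{n^2}\mathbb{E}[\|\sum_i\widetilde{\bm{e}}_{k,\tau}^{(i)}\|^2]$ (the heterogeneity part, which will produce the ``$\alpha E$''), then sum over $\tau$.

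\textbf{Heterogeneity part.} Here I would apply \Cref{as-het} to replace $\mathbb{E}[\|\sum_i\widetilde{\bm{e}}_{k,\tau}^{(i)}\|^2]$ by $\alpha\sum_i\mathbb{E}[\|\widetilde{\bm{e}}_{k,\tau}^{(i)}\|^2]$, then invoke $L$-smoothness of each $f_i$ to get $\mathbb{E}[\|\widetilde{\bm{e}}_{k,\tau}^{(i)}\|^2]\le L^2\mathbb{E}[\|\bm{w}_{k,\tau}^{(i)}-\overline{\bm{w}}_{k,\tau}\|^2]$. The useful observation is that the mean minimizes the sum of squared deviations, so $\sum_i\|\bm{w}_{k,\tau}^{(i)}-\overline{\bm{w}}_{k,\tau}\|^2\le\sum_i\|\bm{w}_{k,\tau}^{(i)}-\bm{w}_k\|^2$, and since $\bm{w}_{k,\tau}^{(i)}-\bm{w}_k=-\eta\sum_{t=0}^{\tau-1}\bm{v}_{k,t}^{(i)}$, this is at most $\eta^2\tau\sum_{t=0}^{\tau-1}\|\bm{v}_{k,t}^{(i)}\|^2$.

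\textbf{Noise part.} Unrolling the local STORM-with-$\beta=0$ recursion gives $\bm{e}_{k,\tau}^{(i)}=\sum_{t=1}^{\tau}\bm{\zeta}_{k,t}^{(i)}$, where $\bm{\zeta}_{k,t}^{(i)}=[\widetilde{\nabla}f_i(\bm{w}_{k,t}^{(i)};\mathcal{B}_{k,t}^{(i)})-\widetilde{\nabla}f_i(\bm{w}_{k,t-1}^{(i)};\mathcal{B}_{k,t}^{(i)})]-[\nabla f_i(\bm{w}_{k,t}^{(i)})-\nabla f_i(\bm{w}_{k,t-1}^{(i)})]$. Because $\mathcal{B}_{k,t}^{(i)}$ is drawn afresh while $\bm{w}_{k,t}^{(i)},\bm{w}_{k,t-1}^{(i)}$ are measurable with respect to the past, each $\bm{\zeta}_{k,t}^{(i)}$ is conditionally mean-zero; moreover distinct clients use independent data and distinct steps use independent batches, so \emph{all} cross terms (across $t$ and across $i$) vanish and $\mathbb{E}[\|\sum_i\bm{e}_{k,\tau}^{(i)}\|^2]=\sum_{t=1}^{\tau}\sum_i\mathbb{E}[\|\bm{\zeta}_{k,t}^{(i)}\|^2]$. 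Dropping the conditional mean (it only decreases the second moment) and using the per-sample $L$-Lipschitzness of $\widetilde{\nabla}f_i(\cdot;\mathcal{B})$ (a consequence of \Cref{as1}) together with $\bm{w}_{k,t}^{(i)}-\bm{w}_{k,t-1}^{(i)}=-\eta\bm{v}_{k,t-1}^{(i)}$ yields $\mathbb{E}[\|\bm{\zeta}_{k,t}^{(i)}\|^2]\le\eta^2 L^2\mathbb{E}[\|\bm{v}_{k,t-1}^{(i)}\|^2]$.

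\textbf{Uniform control and collection.} Both parts now reduce to $\sum_t\mathbb{E}[\|\bm{v}_{k,t}^{(i)}\|^2]$, so I would establish (either by citing the helper that bounds the local estimator norms, cf.\ \Cref{fl-lem-new1}, or inline) a uniform bound $\mathbb{E}[\|\bm{v}_{k,\tau}^{(i)}\|^2]\le C\|\nabla f_i(\bm{w}_k)\|^2$ for $\tau\le E$ with $C$ an absolute constant. The induction on $\tau$ combines $\|\bm{v}_{k,\tau}^{(i)}\|^2\le 2\|\bm{e}_{k,\tau}^{(i)}\|^2+2\|\nabla f_i(\bm{w}_{k,\tau}^{(i)})\|^2$, the noise-part bound, and $\|\nabla f_i(\bm{w}_{k,\tau}^{(i)})\|^2\le 2\|\nabla f_i(\bm{w}_k)\|^2+2L^2\eta^2\tau\sum_{t<\tau}\|\bm{v}_{k,t}^{(i)}\|^2$; the constraints $\eta<\tfrac1L$ and $E<\tfrac14\min(\tfrac1{\eta L},\tfrac1{\eta^2 L^2}-\tfrac1{\eta L})$ are exactly what make this discrete Gronwall-type recursion close with $C$ an absolute constant rather than one growing in $E$. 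Substituting this bound into the two parts and summing $\tau$ from $0$ to $E-1$ gives a heterogeneity contribution of order $\frac{\alpha\eta^2 L^2 E^3}{n^2}\sum_i\|\nabla f_i(\bm{w}_k)\|^2$ and a noise contribution of order $\frac{\eta^2 L^2 E^2}{n^2}\sum_i\|\nabla f_i(\bm{w}_k)\|^2$, which I would fold together under the stated prefactor $\frac{16\eta^2 L^2 E^2(\alpha E+4)}{n^2}$. The main obstacle is precisely this uniform-control step: the variance-reduction error $\bm{e}^{(i)}$ and the deterministic drift both feed back into $\bm{v}^{(i)}$, so the recursion for $\mathbb{E}[\|\bm{v}_{k,\tau}^{(i)}\|^2]$ is genuinely coupled across $\tau$, and tracking the constants so as to land exactly at $16(\alpha E+4)$ is the tedious bookkeeping.
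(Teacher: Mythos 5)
Your proposal is correct, and its skeleton is the paper's: the same splitting of $\overline{\bm{v}}_{k,\tau}-\nabla f(\overline{\bm{w}}_{k,\tau})$ into $\frac1n\sum_i(\bm{e}_{k,\tau}^{(i)}+\widetilde{\bm{e}}_{k,\tau}^{(i)})$, the same killing of cross terms in the noise part by the martingale/mean-zero property (the paper packages this as \Cref{fl-lem0} together with Lemma 2.1 of \cite{liu2020optimal}), the same use of \Cref{as-het} plus $L$-smoothness on the heterogeneity part, and the same reduction of both contributions to $\sum_{\tau}\mathbb{E}[\|\bm{v}_{k,\tau}^{(i)}\|^2]$; your mean-minimization inequality $\sum_i\|\bm{w}_{k,\tau}^{(i)}-\overline{\bm{w}}_{k,\tau}\|^2\le\sum_i\|\bm{w}_{k,\tau}^{(i)}-\bm{w}_k\|^2$ and the paper's expansion $\sum_i\|\bm{v}_{k,t}^{(i)}-\overline{\bm{v}}_{k,t}\|^2=\sum_i\|\bm{v}_{k,t}^{(i)}\|^2-n\|\overline{\bm{v}}_{k,t}\|^2\le\sum_i\|\bm{v}_{k,t}^{(i)}\|^2$ are the same variance identity. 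The one place you genuinely diverge is the uniform-control step. You close a discrete Gronwall recursion for $\mathbb{E}[\|\bm{v}_{k,\tau}^{(i)}\|^2]$ directly, and this does work: with $\eta L E<\tfrac14$ the growth factor $(1+O(\eta^2L^2E))^{E}\le e^{O((\eta L E)^2)}$ is an absolute constant, so your $C$ does not grow with $E$, and your constants (e.g., the per-step noise bound $\eta^2L^2\mathbb{E}[\|\bm{v}_{k,t-1}^{(i)}\|^2]$, tighter by a factor $2$ than the paper's) land below $16(\alpha E+4)$, which implies the stated bound. The paper instead proves this control through function values: \Cref{fl-lem-new1} rearranges the descent-type Lemmas 2.1--2.2 of \cite{liu2020optimal} to bound $\sum_\tau\{\mathbb{E}[\|\bm{e}_{k,\tau}^{(i)}\|^2]+\mathbb{E}[\|\nabla f_i(\bm{w}_{k,\tau}^{(i)})\|^2]\}$ by $\tfrac{2}{\eta}\big(f_i(\bm{w}_k)-\mathbb{E}[f_i(\bm{w}_{k,E}^{(i)})]\big)$, and then the self-bounding \Cref{fl-lem3} converts the round's function decrease into $4\eta E\|\nabla f_i(\bm{w}_k)\|^2$, yielding the constant $16E$ (with \Cref{fl-lem-new2} as the corresponding bound on the $\bm{e}^{(i)}$'s). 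Your route is more elementary and self-contained, and in fact needs only $\eta L E\lesssim 1$, not the second constraint $E<\tfrac14\big(\tfrac{1}{\eta^2L^2}-\tfrac{1}{\eta L}\big)$, which the paper requires to keep the descent coefficients in \Cref{fl-lem-new1} nonnegative; what the paper's route buys is reuse (\Cref{fl-lem3} and \Cref{fl-lem-new1} are invoked again, e.g., in \Cref{nov-1-lem0} and \Cref{oct-13-lem1}) and the explicit constant $16E$ that the downstream lemmas quote verbatim.
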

\begin{proof}
Let $\overline{\bm{e}}_{k,\tau} = \overline{\bm{v}}_{k,\tau} - \nabla f(\overline{\bm{w}}_{k,\tau})$. Then:
\begin{flalign}
    \nonumber
    \|\overline{\bm{e}}_{k,\tau}\|^2 & =  \|\overline{\bm{v}}_{k,\tau} - \nabla f(\overline{\bm{w}}_{k,\tau})\|^2
    \\
    \nonumber
    & = \Big\|\frac{1}{n}\sum_{i \in [n]}(\bm{v}_{k,\tau}^{(i)} - \nabla f_i(\overline{\bm{w}}_{k,\tau}))\Big\|^2
    \\
    \nonumber
    & = 
    \Big\|\frac{1}{n}\sum_{i \in [n]}(\bm{e}_{k,\tau}^{(i)} + \widetilde{\bm{e}}_{k,\tau}^{(i)})\Big\|^2
    \\
    \label{eq:fl-4}
    & \leq \frac{2}{n^2}\Big\|\sum_{i \in [n]}\bm{e}_{k,\tau}^{(i)}\Big\|^2 + \frac{2}{n^2}\Big\|\sum_{i \in [n]}\widetilde{\bm{e}}_{k,\tau}^{(i)}\Big\|^2
\end{flalign}
So:
\begin{equation}
    \label{eq:fl-5}
    \mathbb{E}[\|\overline{\bm{e}}_{k,\tau}\|^2] \leq
    \frac{2}{n^2}\mathbb{E}\Big[\Big\|\sum_{i \in [n]}\bm{e}_{k,\tau}^{(i)}\Big\|^2\Big] + 
    \frac{2}{n^2}\mathbb{E}\Big[\Big\|\sum_{i \in [n]}\widetilde{\bm{e}}_{k,\tau}^{(i)}\Big\|^2\Big]
\end{equation}
But:
\[\mathbb{E}\Big[\Big\|\sum_{i \in [n]}\bm{e}_{k,\tau}^{(i)}\Big\|^2\Big] = \sum_{i \in [n]}\mathbb{E}\Big[\Big\|\bm{e}_{k,\tau}^{(i)}\Big\|^2\Big] + \sum_{i \ne j:i,j \in [n]}\langle \mathbb{E}[\bm{e}_{k,\tau}^{(i)}], \mathbb{E}[\bm{e}_{k,\tau}^{(j)}] \rangle\]
In the cross-term above, we can take expectations individually as $\{\mathcal{B}_{k,1}^{(i)},\ldots,\mathcal{B}_{k,E-1}^{(i)}\}$ and $\{\mathcal{B}_{k,1}^{(j)},\ldots,\mathcal{B}_{k,E-1}^{(j)}\}$ are independent for $i \ne j$. Next, from \Cref{fl-lem0}, $\mathbb{E}[\bm{e}_{k,\tau}^{(i)}] = \vec{0}$ $\forall$ $i,k,\tau$. Hence:
\[\mathbb{E}\Big[\Big\|\sum_{i \in [n]}\bm{e}_{k,\tau}^{(i)}\Big\|^2\Big] = \sum_{i \in [n]}\mathbb{E}\Big[\Big\|\bm{e}_{k,\tau}^{(i)}\Big\|^2\Big].\]
Using the above result and {\Cref{as-het}}
in (\ref{eq:fl-5}), we get that:
\begin{equation}
    \label{eq:fl-6}
    \mathbb{E}[\|\overline{\bm{e}}_{k,\tau}\|^2] \leq
    \frac{2}{n^2}\sum_{i \in [n]}\mathbb{E}[\|\bm{e}_{k,\tau}^{(i)}\|^2] + 
    \frac{2 \alpha}{n^2}\sum_{i \in [n]}\mathbb{E}[\|\widetilde{\bm{e}}_{k,\tau}^{(i)}\|^2].
\end{equation}
Now:
\begin{flalign}
    \nonumber
    \mathbb{E}\Big[\Big\|\widetilde{\bm{e}}_{k,\tau}^{(i)}\Big\|^2\Big] & = \mathbb{E}[\|\nabla f_i(\bm{w}_{k, \tau}^{(i)}) - \nabla f_i(\overline{\bm{w}}_{k,\tau})\|^2]
    \\
    \nonumber
    & = L^2 \mathbb{E}[\|\bm{w}_{k, \tau}^{(i)} - \overline{\bm{w}}_{k,\tau}\|^2]
    \\
    \nonumber
    & \leq L^2 \mathbb{E}[\|(\bm{w}_{k, 0}^{(i)} - \eta \sum_{t=0}^{\tau-1} \bm{v}_{k,t}^{(i)}) - (\overline{\bm{w}}_{k,0} - \eta \sum_{t=0}^{\tau-1} \overline{\bm{v}}_{k,t})\|^2]
\end{flalign}
But since $\bm{w}_{k,0}^{(i)} = \bm{w}_{k}$ $\forall$ $i$, we have $\overline{\bm{w}}_{k,0} = \bm{w}_{k}$. Hence:
\begin{flalign}
    \nonumber
    \mathbb{E}\Big[\Big\|\widetilde{\bm{e}}_{k,\tau}^{(i)}\Big\|^2\Big] & = \eta^2 L^2 \mathbb{E}[\| \sum_{t=0}^{\tau-1} \overline{\bm{v}}_{k,t} - \sum_{t=0}^{\tau-1} \bm{v}_{k,t}^{(i)}\|^2] 
    \\
    \nonumber
    & \leq \eta^2 L^2 \tau \sum_{t=0}^{\tau-1} \mathbb{E}[\|\overline{\bm{v}}_{k,t} - \bm{v}_{k,t}^{(i)}\|^2]
    \\
    \nonumber
    & = \eta^2 L^2 \tau \sum_{t=0}^{\tau-1} \mathbb{E}[\|\overline{\bm{v}}_{k,t}\|^2 + \|\bm{v}_{k,t}^{(i)}\|^2 - 2\langle \overline{\bm{v}}_{k,t}, \bm{v}_{k,t}^{(i)} \rangle]
\end{flalign}
Substituting the above in (\ref{eq:fl-6}), we get:
\small
\begin{flalign}
    \nonumber
    \mathbb{E}[\|\overline{\bm{e}}_{k,\tau}\|^2] & \leq
    \frac{2}{n^2}\sum_{i \in [n]}\mathbb{E}[\|\bm{e}_{k,\tau}^{(i)}\|^2] + 
    \frac{2 \alpha}{n^2}\sum_{i \in [n]}\eta^2 L^2 \tau \sum_{t=0}^{\tau-1} \mathbb{E}[\|\overline{\bm{v}}_{k,t}\|^2 + \|\bm{v}_{k,t}^{(i)}\|^2 - 2\langle \overline{\bm{v}}_{k,t}, \bm{v}_{k,t}^{(i)} \rangle]
    \\
    \label{eq:fl-7}
    & = \frac{2}{n^2}\sum_{i \in [n]}\mathbb{E}[\|\bm{e}_{k,\tau}^{(i)}\|^2] + \frac{2 \alpha \eta^2 L^2 \tau}{n^2}\sum_{t=0}^{\tau-1}\{n\mathbb{E}[\|\overline{\bm{v}}_{k,t}\|^2] + \sum_{i \in [n]}\mathbb{E}[\|\bm{v}_{k,t}^{(i)}\|^2] - 2 \langle \overline{\bm{v}}_{k,t}, \sum_{i \in [n]} \bm{v}_{k,t}^{(i)} \rangle \}
    \\
    \label{eq:fl-8}
    & = \frac{2}{n^2}\sum_{i \in [n]}\mathbb{E}[\|\bm{e}_{k,\tau}^{(i)}\|^2] + \frac{2 \alpha \eta^2 L^2 \tau}{n^2} \sum_{t=0}^{\tau-1}\sum_{i \in [n]}\Big(\mathbb{E}[\|\bm{v}_{k,t}^{(i)}\|^2] {- \mathbb{E}[\|\overline{\bm{v}}_{k,t}\|^2]}\Big).
    \\
    \label{eq:oct13-lem2-1}
    & \leq \frac{2}{n^2}\sum_{i \in [n]}\mathbb{E}[\|\bm{e}_{k,\tau}^{(i)}\|^2] + \frac{2 \alpha \eta^2 L^2 \tau}{n^2} \sum_{t=0}^{\tau-1}\sum_{i \in [n]}\mathbb{E}[\|\bm{v}_{k,t}^{(i)}\|^2].
\end{flalign}
\normalsize
To get (\ref{eq:fl-8}) from (\ref{eq:fl-7}), we use the fact $\sum_{i \in [n]} \bm{v}_{k,t}^{(i)} = n\overline{\bm{v}}_{k,t}$. Now summing up (\ref{eq:oct13-lem2-1}) from $\tau=0$ through to $E-1$, we get:
\begin{flalign}
    \label{eq:new-ref-2}
    \sum_{\tau=0}^{E-1}\mathbb{E}[\|\overline{\bm{e}}_{k,\tau}\|^2] & \leq \frac{2}{n^2}\sum_{i \in [n]}\underbrace{\sum_{\tau=0}^{E-1}\mathbb{E}[\|\bm{e}_{k,\tau}^{(i)}\|^2]}_\text{from \Cref{fl-lem-new2}} + \frac{2 \alpha \eta^2 L^2 E^2}{2n^2}\sum_{i \in [n]}\underbrace{\sum_{\tau=0}^{E-1}\mathbb{E}[\|\bm{v}_{k,\tau}^{(i)}\|^2]}_\text{from \Cref{fl-lem-new1}}.
\end{flalign}
Now using \Cref{fl-lem-new2} and \Cref{fl-lem-new1} above with $\eta < \frac{1}{L}$ and $E < \frac{1}{4}\text{min}\Big(\frac{1}{\eta L}, \frac{1}{\eta^2 L^2} - \frac{1}{\eta L}\Big)$, we get:
\begin{flalign*}
    \sum_{\tau=0}^{E-1}\mathbb{E}[\|\overline{\bm{e}}_{k,\tau}\|^2] & \leq \frac{2}{n^2}\sum_{i \in [n]} 32 E^2 \eta^2 L^2 \|\nabla f_i(\bm{w}_k)\|^2
    \\
    & + \frac{\alpha \eta^2 L^2 E^2}{n^2}\sum_{i \in [n]}16 E\|\nabla f_i(\bm{w}_{k})\|^2.
\end{flalign*}
This gives us the desired result.
\end{proof}

\begin{lemma}
\label{fl-lem0}
$\mathbb{E}_{\mathcal{B}_{k,1}^{(i)},\ldots,\mathcal{B}_{k,\tau}^{(i)}}[\bm{e}_{k,\tau}^{(i)}] = \vec{0}$ $\forall$ 
$k \in \{0,\ldots,K-1\}, \tau \in \{1,\ldots,E-1\}$.
\end{lemma}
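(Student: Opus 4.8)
The plan is to prove the statement by induction on $\tau$, exploiting the SPIDER/STORM-with-$\beta=0$ structure of the local momentum update in line \ref{l2} of \Cref{alg:2-local} together with the tower property of conditional expectation. Throughout, fix the client $i$ and the round $k$, and let $\mathcal{F}_{\tau}$ denote the $\sigma$-algebra generated by the batches $\mathcal{B}_{k,1}^{(i)},\ldots,\mathcal{B}_{k,\tau}^{(i)}$ used in the first $\tau$ local steps (so $\mathbb{E}[\,\cdot\,]$ in the statement is expectation over all of these). The key structural observation I would use is that, since $\bm{w}_{k,\tau}^{(i)} = \bm{w}_{k,\tau-1}^{(i)} - \eta_k \bm{v}_{k,\tau-1}^{(i)}$ and each $\bm{v}_{k,s}^{(i)}$ is built only from $\mathcal{B}_{k,1}^{(i)},\ldots,\mathcal{B}_{k,s}^{(i)}$, both iterates $\bm{w}_{k,\tau}^{(i)}$ and $\bm{w}_{k,\tau-1}^{(i)}$ are $\mathcal{F}_{\tau-1}$-measurable, while the batch $\mathcal{B}_{k,\tau}^{(i)}$ is fresh randomness independent of $\mathcal{F}_{\tau-1}$.

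For the base case I would take $\tau=0$ (the initialization, just below the claimed range): since $\bm{w}_{k,0}^{(i)}=\bm{w}_k$ is fixed and line \ref{l1} sets $\bm{v}_{k,0}^{(i)}=\nabla f_i(\bm{w}_{k,0}^{(i)})$ exactly (the full gradient), we have $\bm{e}_{k,0}^{(i)}=\vec{0}$ deterministically, hence in expectation. For the inductive step, assume $\mathbb{E}[\bm{e}_{k,\tau-1}^{(i)}]=\vec{0}$. Substituting $\bm{v}_{k,\tau-1}^{(i)}=\bm{e}_{k,\tau-1}^{(i)}+\nabla f_i(\bm{w}_{k,\tau-1}^{(i)})$ into the update rule, I would write
\begin{align*}
\bm{e}_{k,\tau}^{(i)} &= \widetilde{\nabla} f_i(\bm{w}_{k,\tau}^{(i)};\mathcal{B}_{k,\tau}^{(i)}) - \widetilde{\nabla} f_i(\bm{w}_{k,\tau-1}^{(i)};\mathcal{B}_{k,\tau}^{(i)}) + \bm{e}_{k,\tau-1}^{(i)} + \nabla f_i(\bm{w}_{k,\tau-1}^{(i)}) - \nabla f_i(\bm{w}_{k,\tau}^{(i)}).
\end{align*}
Taking $\mathbb{E}[\,\cdot\mid\mathcal{F}_{\tau-1}]$ and using that the two iterates are $\mathcal{F}_{\tau-1}$-measurable while $\mathcal{B}_{k,\tau}^{(i)}$ is independent fresh randomness, unbiasedness of the stochastic gradients gives $\mathbb{E}[\widetilde{\nabla} f_i(\bm{w}_{k,\tau}^{(i)};\mathcal{B}_{k,\tau}^{(i)})\mid\mathcal{F}_{\tau-1}]=\nabla f_i(\bm{w}_{k,\tau}^{(i)})$ and likewise for the step-$(\tau-1)$ iterate. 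The two deterministic gradient terms then cancel against these conditional expectations, leaving $\mathbb{E}[\bm{e}_{k,\tau}^{(i)}\mid\mathcal{F}_{\tau-1}]=\bm{e}_{k,\tau-1}^{(i)}$. Taking full expectations and invoking the inductive hypothesis through the tower property yields $\mathbb{E}[\bm{e}_{k,\tau}^{(i)}]=\mathbb{E}[\bm{e}_{k,\tau-1}^{(i)}]=\vec{0}$, completing the induction.

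The only delicate point — and the step I would be most careful about — is the measurability bookkeeping: one must verify that \emph{both} $\bm{w}_{k,\tau}^{(i)}$ and $\bm{w}_{k,\tau-1}^{(i)}$ are fixed given $\mathcal{F}_{\tau-1}$, so that the conditional expectation of each stochastic gradient is the corresponding true gradient, and that the \emph{shared} batch $\mathcal{B}_{k,\tau}^{(i)}$ nonetheless yields an unbiased estimate of the gradient at the previous iterate. The variance-reduction telescoping succeeds precisely because of this, and notably no appeal to \Cref{as-het} or to any bounded-dissimilarity assumption is needed for this lemma.
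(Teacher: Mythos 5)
Your proof is correct and is essentially the same argument as the paper's: both condition on the first $\tau-1$ batches, use measurability of $\bm{w}_{k,\tau}^{(i)}$ and $\bm{w}_{k,\tau-1}^{(i)}$ together with unbiasedness of the stochastic gradients on the fresh batch $\mathcal{B}_{k,\tau}^{(i)}$ to obtain $\mathbb{E}[\bm{e}_{k,\tau}^{(i)}\mid\mathcal{F}_{\tau-1}] = \bm{e}_{k,\tau-1}^{(i)}$, and then telescope down to $\bm{e}_{k,0}^{(i)} = \vec{0}$. The paper phrases the telescoping as a recursive unfolding rather than a formal induction, but the content is identical.
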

\begin{proof}
Note that: 
\[\bm{e}_{k,0}^{(i)} = \bm{v}_{k,0}^{(i)} - \nabla f_i(\bm{w}_{k,0}^{(i)}) = \vec{0}.\]
For $\tau > 0$:
\small
\begin{flalign*}
    &\mathbb{E}_{\mathcal{B}_{k,1}^{(i)},\ldots,\mathcal{B}_{k,\tau}^{(i)}}[\bm{e}_{k,\tau}^{(i)}]  = \mathbb{E}_{\mathcal{B}_{k,1}^{(i)},\ldots,\mathcal{B}_{k,\tau}^{(i)}}[\bm{v}_{k,\tau}^{(i)} - \nabla f_i(\bm{w}_{k,\tau}^{(i)})]
    \\
    & = \mathbb{E}_{\mathcal{B}_{k,1}^{(i)},\ldots,\mathcal{B}_{k,\tau}^{(i)}}[\widetilde{\nabla} f_i(\bm{w}_{k,\tau}^{(i)};\mathcal{B}_{k,\tau}^{(i)}) + (\bm{v}_{k,\tau-1}^{(i)} - \widetilde{\nabla} f_i(\bm{w}_{k,\tau-1}^{(i)};\mathcal{B}_{k,\tau}^{(i)})) - \nabla f_i(\bm{w}_{k,\tau}^{(i)})]
    \\
    & = \mathbb{E}_{\mathcal{B}_{k,1}^{(i)},\ldots,\mathcal{B}_{k,\tau-1}^{(i)}}[\mathbb{E}_{\mathcal{B}_{k,\tau}^{(i)}}[\widetilde{\nabla} f_i(\bm{w}_{k,\tau}^{(i)};\mathcal{B}_{k,\tau}^{(i)}) + (\bm{v}_{k,\tau-1}^{(i)} - \widetilde{\nabla} f_i(\bm{w}_{k,\tau-1}^{(i)};\mathcal{B}_{k,\tau}^{(i)})) - \nabla f_i(\bm{w}_{k,\tau}^{(i)})|\mathcal{B}_{k,1}^{(i)},\ldots,\mathcal{B}_{k,\tau-1}^{(i)}]]
    \\
    & = \mathbb{E}_{\mathcal{B}_{k,1}^{(i)},\ldots,\mathcal{B}_{k,\tau-1}^{(i)}}[(\bm{v}_{k,\tau-1}^{(i)} - \nabla f_i(\bm{w}_{k,\tau-1}^{(i)}))]
    \\
    & = \mathbb{E}_{\mathcal{B}_{k,1}^{(i)},\ldots,\mathcal{B}_{k,\tau-1}^{(i)}}[\bm{e}_{k,\tau-1}^{(i)}].
\end{flalign*}
\normalsize
Doing this recursively, we get:
\begin{equation}
    \label{eq:fl-01}
    \mathbb{E}_{\mathcal{B}_{k,1}^{(i)},\ldots,\mathcal{B}_{k,\tau}^{(i)}}[\bm{e}_{k,\tau}^{(i)}] = 
    \bm{e}_{k,0}^{(i)} = \vec{0}.
\end{equation}
Note that this result also holds if we use full gradients at $\tau=0$.
\end{proof}

\begin{lemma}
\label{fl-lem-new1}
For $\eta < \frac{1}{L}$ and $E < \frac{1}{4}\text{min}\Big(\frac{1}{\eta L}, \frac{1}{\eta^2 L^2} - \frac{1}{\eta L}\Big)$, we have:
\[\sum_{\tau=0}^{E-1} \mathbb{E}[\|\bm{v}_{k,\tau}^{(i)}\|^2] \leq 16 E\|\nabla f_i(\bm{w}_{k})\|^2.\]
Note that in this lemma, the expectation is with respect to the randomness only due to $\{\mathcal{B}_{k,1}^{(i)},\ldots,\mathcal{B}_{k,E-1}^{(i)}\}_{i=1}^{n}$.
\end{lemma}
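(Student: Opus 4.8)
The plan is to track the SPIDER-style estimation error $\bm{e}_{k,\tau}^{(i)} = \bm{v}_{k,\tau}^{(i)} - \nabla f_i(\bm{w}_{k,\tau}^{(i)})$ and to bound $\mathbb{E}[\|\bm{v}_{k,\tau}^{(i)}\|^2]$ by combining a variance bound on this error with a smoothness bound relating the local gradient back to $\nabla f_i(\bm{w}_k)$. First I would establish a one-step recursion for the error: substituting the local update (line \ref{l2} of \Cref{alg:2-local}) together with $\bm{v}_{k,\tau-1}^{(i)} = \nabla f_i(\bm{w}_{k,\tau-1}^{(i)}) + \bm{e}_{k,\tau-1}^{(i)}$ yields $\bm{e}_{k,\tau}^{(i)} = \bm{e}_{k,\tau-1}^{(i)} + \bm{d}_{k,\tau}^{(i)}$, where $\bm{d}_{k,\tau}^{(i)}$ is the centered increment $[\widetilde{\nabla} f_i(\bm{w}_{k,\tau}^{(i)};\mathcal{B}_{k,\tau}^{(i)}) - \widetilde{\nabla} f_i(\bm{w}_{k,\tau-1}^{(i)};\mathcal{B}_{k,\tau}^{(i)})] - [\nabla f_i(\bm{w}_{k,\tau}^{(i)}) - \nabla f_i(\bm{w}_{k,\tau-1}^{(i)})]$. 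Since $\bm{w}_{k,\tau}^{(i)}$ and $\bm{w}_{k,\tau-1}^{(i)}$ are measurable with respect to the batches up to step $\tau-1$ and the stochastic gradient is unbiased, $\{\bm{d}_{k,\tau}^{(i)}\}$ is a martingale-difference sequence; with $\bm{e}_{k,0}^{(i)} = \vec{0}$ (full gradient at $\tau=0$, consistent with \Cref{fl-lem0}) the cross terms telescope, giving $\mathbb{E}[\|\bm{e}_{k,\tau}^{(i)}\|^2] = \sum_{t=1}^{\tau}\mathbb{E}[\|\bm{d}_{k,t}^{(i)}\|^2]$.

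Next I would bound each increment. Because the variance of a centered quantity is at most its raw second moment, $\mathbb{E}[\|\bm{d}_{k,t}^{(i)}\|^2] \leq \mathbb{E}[\|\widetilde{\nabla} f_i(\bm{w}_{k,t}^{(i)};\mathcal{B}_{k,t}^{(i)}) - \widetilde{\nabla} f_i(\bm{w}_{k,t-1}^{(i)};\mathcal{B}_{k,t}^{(i)})\|^2]$. By \Cref{as1} each per-sample gradient, and hence each batch stochastic gradient, is $L$-Lipschitz in $\bm{w}$, so this is at most $L^2\mathbb{E}[\|\bm{w}_{k,t}^{(i)} - \bm{w}_{k,t-1}^{(i)}\|^2] = \eta^2 L^2 \mathbb{E}[\|\bm{v}_{k,t-1}^{(i)}\|^2]$, whence $\mathbb{E}[\|\bm{e}_{k,\tau}^{(i)}\|^2] \leq \eta^2 L^2 \sum_{t=0}^{\tau-1}\mathbb{E}[\|\bm{v}_{k,t}^{(i)}\|^2]$. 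Separately, using $\|\bm{a}+\bm{b}\|^2 \leq 2\|\bm{a}\|^2 + 2\|\bm{b}\|^2$, $L$-smoothness, and $\bm{w}_{k,\tau}^{(i)} - \bm{w}_k = -\eta\sum_{t=0}^{\tau-1}\bm{v}_{k,t}^{(i)}$, I bound $\mathbb{E}[\|\nabla f_i(\bm{w}_{k,\tau}^{(i)})\|^2] \leq 2\|\nabla f_i(\bm{w}_k)\|^2 + 2\eta^2 L^2 \tau \sum_{t=0}^{\tau-1}\mathbb{E}[\|\bm{v}_{k,t}^{(i)}\|^2]$.

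Combining via $\mathbb{E}[\|\bm{v}_{k,\tau}^{(i)}\|^2] \leq 2\mathbb{E}[\|\nabla f_i(\bm{w}_{k,\tau}^{(i)})\|^2] + 2\mathbb{E}[\|\bm{e}_{k,\tau}^{(i)}\|^2]$ produces a per-step bound of the form $\mathbb{E}[\|\bm{v}_{k,\tau}^{(i)}\|^2] \leq 4\|\nabla f_i(\bm{w}_k)\|^2 + (4\eta^2 L^2 \tau + 2\eta^2 L^2)\sum_{t=0}^{\tau-1}\mathbb{E}[\|\bm{v}_{k,t}^{(i)}\|^2]$. Writing $A \triangleq \sum_{\tau=0}^{E-1}\mathbb{E}[\|\bm{v}_{k,\tau}^{(i)}\|^2]$ and using $\tau \leq E-1 < E$ together with $\sum_{t=0}^{\tau-1}(\cdot) \leq A$, summing over $\tau$ gives the self-referential inequality $A \leq 4E\|\nabla f_i(\bm{w}_k)\|^2 + (4\eta^2 L^2 E^2 + 2\eta^2 L^2 E)\,A$. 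The final step absorbs the $A$-term: the hypotheses $\eta < 1/L$ and $E < \frac14\min(\frac{1}{\eta L}, \frac{1}{\eta^2 L^2} - \frac{1}{\eta L})$ force $\eta L E < 1/4$ and $\eta^2 L^2 E < 1/4$, so $4\eta^2 L^2 E^2 + 2\eta^2 L^2 E \leq 4(1/4)^2 + 2(1/4) = 3/4$; hence $(1 - 3/4)A \leq 4E\|\nabla f_i(\bm{w}_k)\|^2$, i.e. $A \leq 16E\|\nabla f_i(\bm{w}_k)\|^2$.

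The main obstacle I anticipate is the self-referential nature of the estimate: the bound on $\mathbb{E}[\|\bm{v}_{k,\tau}^{(i)}\|^2]$ already involves partial sums of the same quantity, so rather than bounding it directly one must set up the inequality for the aggregate $A$ and close it. This closing step is valid only because the contraction coefficient $4\eta^2 L^2 E^2 + 2\eta^2 L^2 E$ stays strictly below $1$, and the somewhat unusual two-sided constraint on $E$ in the hypothesis is precisely calibrated to keep it at $3/4$, which is exactly what pins down the constant $16$. Care is also required to justify the martingale orthogonality of the increments $\bm{d}_{k,t}^{(i)}$ (so that only their second moments survive in $\mathbb{E}[\|\bm{e}_{k,\tau}^{(i)}\|^2]$), which relies on conditioning on the filtration generated by the earlier batches.
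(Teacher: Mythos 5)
Your proof is correct, and it takes a genuinely different route from the paper's. The paper argues through \emph{function values}: invoking Lemmas 2.1 and 2.2 of \cite{liu2020optimal}, it shows (under the same constraints on $\eta$ and $E$) that $\sum_{\tau=0}^{E-1}\{\mathbb{E}[\|\bm{e}_{k,\tau}^{(i)}\|^2]+\mathbb{E}[\|\nabla f_i(\bm{w}_{k,\tau}^{(i)})\|^2]\} \leq \frac{2}{\eta}\big(f_i(\bm{w}_{k}) - \mathbb{E}[f_i(\bm{w}_{k,E}^{(i)})]\big)$ (\cref{eq:fl-18}), and then needs a separate auxiliary result (\Cref{fl-lem3}, itself bootstrapped from that same intermediate inequality) to convert the per-round decrease of $f_i$ into $4\eta E\|\nabla f_i(\bm{w}_k)\|^2$; combining with $\|\bm{v}\|^2 \leq 2\|\bm{e}\|^2 + 2\|\nabla f_i\|^2$ yields the constant $16$. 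You never touch function values: you close a linear self-bounding inequality in $A=\sum_{\tau=0}^{E-1} \mathbb{E}[\|\bm{v}_{k,\tau}^{(i)}\|^2]$ built from (i) martingale orthogonality of the SPIDER error increments plus per-sample smoothness (your re-derivation of Lemma 2.1 of \cite{liu2020optimal}, in fact with constant $L^2$ in place of their $2L^2$), and (ii) smoothness relating $\nabla f_i(\bm{w}_{k,\tau}^{(i)})$ back to $\nabla f_i(\bm{w}_k)$. Your contraction argument is sound: $E < \frac{1}{4\eta L}$ gives $\eta L E < \tfrac14$, and the second branch of the min gives $\eta^2 L^2 E < \tfrac{1-\eta L}{4} < \tfrac14$, so $4\eta^2L^2E^2 + 2\eta^2L^2E < \tfrac34$ and $(1-\tfrac34)^{-1}\cdot 4E = 16E$ exactly. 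What each approach buys: yours is self-contained and more elementary — it needs neither \Cref{fl-lem3} nor the descent-lemma machinery of \cite{liu2020optimal}, avoids the mutual bootstrapping between \Cref{fl-lem3} and \Cref{fl-lem-new1}, and makes transparent that the two-sided hypothesis on $E$ is precisely what keeps the recursion contractive; the paper's route stays structurally parallel to \cite{liu2020optimal} so it can cite those lemmas rather than reprove them, and it produces the function-decrease bounds as reusable intermediates. Your intermediate bound $\mathbb{E}[\|\bm{e}_{k,\tau}^{(i)}\|^2] \leq \eta^2L^2\sum_{t=0}^{\tau-1}\mathbb{E}[\|\bm{v}_{k,t}^{(i)}\|^2]$ also recovers (indeed slightly improves) \Cref{fl-lem-new2}, so substituting your proof would not break anything downstream.
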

\begin{proof}
First, recall that ${\bm{e}}_{k,\tau}^{(i)} = \bm{v}_{k,\tau}^{(i)} - \nabla f_i(\bm{w}_{k,\tau}^{(i)})$. {Note that $\bm{e}_{k,0}^{(i)} = \vec{0}$, as we are using clients' full gradients at $\tau=0$.}
We have:
\begin{equation}
    \label{may11-2-1}
    \mathbb{E}[\|\bm{v}_{k,\tau}^{(i)}\|^2] \leq 2\mathbb{E}[\|\bm{e}_{k,\tau}^{(i)}\|^2] + 2\mathbb{E}[\|\nabla f_i(\bm{w}_{k,\tau}^{(i)})\|^2].
\end{equation}

Using Lemma 2.1 of \cite{liu2020optimal} with $\beta = 0$, we have:
\begin{flalign}
    \nonumber
    \mathbb{E}[\|{\bm{e}}_{k,\tau}^{(i)}\|^2] & \leq
    \mathbb{E}[\|{\bm{e}}_{k,0}^{(i)}\|^2] + 2L^2\sum_{t=0}^{\tau-1}\mathbb{E}[\|\bm{w}_{k,t+1}^{(i)} - \bm{w}_{k,t}^{(i)}\|^2]
    \\
    \label{eq:fl-13}
    & \leq 2L^2\sum_{t=0}^{\tau-1}\mathbb{E}[\|\bm{w}_{k,t+1}^{(i)} - \bm{w}_{k,t}^{(i)}\|^2].
\end{flalign}
{The last step follows because $\bm{e}_{k,0}^{(i)} = \vec{0}$.}
\\
Summing the above from $\tau = 0$ through to $E-1$, we get:
\small
\begin{flalign}
    \nonumber
    \sum_{\tau=0}^{E-1}\mathbb{E}[\|{\bm{e}}_{k,\tau}^{(i)}\|^2] & \leq 
    2L^2\sum_{\tau=0}^{E-1}\sum_{t=0}^{\tau-1}\mathbb{E}[\|\bm{w}_{k,t+1}^{(i)} - \bm{w}_{k,t}^{(i)}\|^2]
    \\
    \label{eq:fl-14}
    & \leq {2 E L^2}\sum_{\tau=0}^{E-2}\mathbb{E}[\|\bm{w}_{k,\tau+1}^{(i)} - \bm{w}_{k,\tau}^{(i)}\|^2].
\end{flalign}
\normalsize
Next, 
re-arranging equation (11) in Lemma 2.2 of \cite{liu2020optimal} (observe that in our case, $G_{\eta}(.)$ is simply the gradient), we get:
\begin{equation}
    \label{eq:fl-15}
    \mathbb{E}[\|\nabla f_i(\bm{w}_{k,\tau}^{(i)})\|^2] \leq \frac{2}{\eta}\mathbb{E}[f_i(\bm{w}_{k,\tau}^{(i)}) - f_i(\bm{w}_{k,\tau+1}^{(i)})] -\frac{1}{\eta^2}(1 - \eta{L})\mathbb{E}[\|\bm{w}_{k,\tau+1}^{(i)} - \bm{w}_{k,\tau}^{(i)}\|^2] + \mathbb{E}[\|\bm{e}_{k,\tau}^{(i)}\|^2]
\end{equation}
Summing (\ref{eq:fl-15}) from $\tau=0$ to $E-1$ and using (\ref{eq:fl-14}), we get:
\small
\begin{multline}
    \label{eq:fl-16}
    \sum_{\tau=0}^{E-1}\mathbb{E}[\|\nabla f_i(\bm{w}_{k,\tau}^{(i)})\|^2] \leq \frac{2}{\eta}(f_i(\bm{w}_{k}) - \mathbb{E}[f_i(\bm{w}_{k,E}^{(i)})]) 
    -\frac{(1-\eta L)}{\eta^2}\sum_{\tau=0}^{E-1}\mathbb{E}[\|\bm{w}_{k,\tau+1}^{(i)} - \bm{w}_{k,\tau}^{(i)}\|^2] 
    \\
    + {2 E L^2}\sum_{\tau=0}^{E-2}\mathbb{E}[\|\bm{w}_{k,\tau+1}^{(i)} - \bm{w}_{k,\tau}^{(i)}\|^2].
\end{multline}
\normalsize
Next, summing (\ref{eq:fl-14}) and (\ref{eq:fl-16}) gives us:
\begin{multline}
    \label{eq:fl-17}
    \sum_{\tau=0}^{E-1}\{\mathbb{E}[\|{\bm{e}}_{k,\tau}^{(i)}\|^2]+\mathbb{E}[\|\nabla f_i(\bm{w}_{k,\tau}^{(i)})\|^2]\} \leq \frac{2}{\eta}(f_i(\bm{w}_{k}) - \mathbb{E}[f_i(\bm{w}_{k,E}^{(i)})])
    \\
    - \underbrace{\Big(\frac{1-\eta L}{\eta^2}\Big)}_\text{$> 0$ for $\eta < \frac{1}{L}$}\mathbb{E}[\|\bm{w}_{k,E}^{(i)} - \bm{w}_{k,E-1}^{(i)}\|^2] - \underbrace{\Big(\frac{(1-\eta L)}{\eta^2} - {4 E L^2}\Big)}_\text{$> 0$ for $E < \frac{(1-\eta L)}{4\eta^2 L^2}$}\sum_{\tau=0}^{E-2}\mathbb{E}[\|\bm{w}_{k,\tau+1}^{(i)} - \bm{w}_{k,\tau}^{(i)}\|^2].
\end{multline}
So if we have $\eta < \frac{1}{L}$ and 
$E < \frac{1}{4}(\frac{1}{\eta^2 L^2} - \frac{1}{\eta L})$
, we get:
\small
\begin{equation}
    \label{eq:fl-18}
    \sum_{\tau=0}^{E-1}\{\mathbb{E}[\|{\bm{e}}_{k,\tau}^{(i)}\|^2]+\mathbb{E}[\|\nabla f_i(\bm{w}_{k,\tau}^{(i)})\|^2]\} \leq \frac{2}{\eta}(f_i(\bm{w}_{k}) - \mathbb{E}[f_i(\bm{w}_{k,E}^{(i)})]).
\end{equation}
\normalsize
Now from \Cref{fl-lem3}, for $E < \frac{1}{4}\text{min}\Big(\frac{1}{\eta L}, \frac{1}{\eta^2 L^2} - \frac{1}{\eta L}\Big)$, we have that:
\begin{equation}
\label{eq:fl-add-19}
f_i(\bm{w}_{k}) - \mathbb{E}[f_i(\bm{w}_{k,E}^{(i)})] \leq 4 \eta E\|\nabla f_i(\bm{w}_{k})\|^2.
\end{equation}
Putting (\ref{eq:fl-add-19}) in (\ref{eq:fl-18}) and then using it (\ref{may11-2-1}) gives us the desired result.
\end{proof}

\begin{lemma}
\label{fl-lem3}
For $\eta < \frac{1}{L}$ and $E < \frac{1}{4}\text{min}\Big(\frac{1}{\eta L}, \frac{1}{\eta^2 L^2} - \frac{1}{\eta L}\Big)$, we have:
\[f_i(\bm{w}_{k}) - \mathbb{E}[f_i(\bm{w}_{k,E}^{(i)})] \leq 4\eta E\|\nabla f_i(\bm{w}_{k})\|^2.\]
The expectation above is with respect to the randomness only due to $\{\mathcal{B}_{k,1}^{(i)},\ldots,\mathcal{B}_{k,E-1}^{(i)}\}_{i=1}^{n}$.
\end{lemma}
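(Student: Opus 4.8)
The plan is to bound the decrease of $f_i$ over the whole inner loop using $L$-smoothness, and then to control the resulting inner-product and squared-norm terms \emph{entirely in terms of the initial full gradient} $\nabla f_i(\bm{w}_k)$. The crucial subtlety is that \Cref{fl-lem-new1} is proved \emph{using} this lemma (via \eqref{eq:fl-add-19}), so the argument here must be self-contained and must not invoke \Cref{fl-lem-new1}.

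First I would apply the smoothness lower bound of $f_i$ between $\bm{w}_k = \bm{w}_{k,0}^{(i)}$ and $\bm{w}_{k,E}^{(i)}$. Since $\bm{w}_{k,E}^{(i)} - \bm{w}_k = -\eta\sum_{\tau=0}^{E-1}\bm{v}_{k,\tau}^{(i)}$, this gives
\[ f_i(\bm{w}_k) - f_i(\bm{w}_{k,E}^{(i)}) \le \eta\Big\langle \nabla f_i(\bm{w}_k), \sum_{\tau=0}^{E-1}\bm{v}_{k,\tau}^{(i)}\Big\rangle + \frac{L\eta^2}{2}\Big\|\sum_{\tau=0}^{E-1}\bm{v}_{k,\tau}^{(i)}\Big\|^2. \]
Taking expectations and using $\mathbb{E}[\bm{v}_{k,\tau}^{(i)}] = \mathbb{E}[\nabla f_i(\bm{w}_{k,\tau}^{(i)})]$ (from \Cref{fl-lem0}, the error $\bm{e}_{k,\tau}^{(i)}$ is mean-zero), the linear term is bounded after Cauchy--Schwarz and Jensen by $\|\nabla f_i(\bm{w}_k)\|\cdot\sqrt{\mathbb{E}\|\nabla f_i(\bm{w}_{k,\tau}^{(i)})\|^2}$ per summand, while the quadratic term is at most $E\sum_\tau\mathbb{E}\|\bm{v}_{k,\tau}^{(i)}\|^2$.

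The heart of the argument is a self-contained recursive (discrete-Gr\"onwall) bound for $R_\tau := \mathbb{E}\|\bm{v}_{k,\tau}^{(i)}\|^2$. Writing $\bm{v}_{k,\tau}^{(i)} = \nabla f_i(\bm{w}_{k,\tau}^{(i)}) + \bm{e}_{k,\tau}^{(i)}$ and combining (i) smoothness, $\|\nabla f_i(\bm{w}_{k,\tau}^{(i)})\|^2 \le 2\|\nabla f_i(\bm{w}_k)\|^2 + 2L^2\|\bm{w}_{k,\tau}^{(i)} - \bm{w}_k\|^2$, (ii) $\|\bm{w}_{k,\tau}^{(i)} - \bm{w}_k\|^2 \le \eta^2\tau\sum_{t<\tau}\|\bm{v}_{k,t}^{(i)}\|^2$, and (iii) the error recursion $\mathbb{E}\|\bm{e}_{k,\tau}^{(i)}\|^2 \le 2L^2\eta^2\sum_{t<\tau}R_t$ (Lemma 2.1 of \cite{liu2020optimal} with $\beta=0$, which is an \emph{external} result and is valid since $\bm{e}_{k,0}^{(i)}=\vec{0}$), I would obtain $R_\tau \le 4\|\nabla f_i(\bm{w}_k)\|^2 + 4L^2\eta^2 E\sum_{t<\tau}R_t$. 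Unrolling this and using $4(\eta L E)^2 \le \tfrac14$ (implied by the constraint $4\eta L E \le 1$) gives $\sum_{\tau=0}^{E-1}R_\tau \le 4e^{1/4} E\|\nabla f_i(\bm{w}_k)\|^2$ and $\mathbb{E}\|\nabla f_i(\bm{w}_{k,\tau}^{(i)})\|^2 \le 3\|\nabla f_i(\bm{w}_k)\|^2$.

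Finally I would substitute these bounds into the decrease inequality: the linear term contributes $\le \sqrt{3}\,\eta E\|\nabla f_i(\bm{w}_k)\|^2$ and the quadratic term contributes $\le \tfrac{L\eta^2}{2}E\sum_\tau R_\tau = \eta E(\eta L E)\cdot O(1)\|\nabla f_i(\bm{w}_k)\|^2$, which is smaller by the factor $\eta L E < \tfrac14$. Collecting the constants shows the total is at most $4\eta E\|\nabla f_i(\bm{w}_k)\|^2$, as claimed. I expect the main obstacle to be exactly this bootstrap step: closing the Gr\"onwall recursion with constants small enough that the final factor lands at $4$ rather than something larger, and invoking (iii) as an external bound so that no circular dependence on \Cref{fl-lem-new1} creeps in. The second constraint on $E$, namely $E < \tfrac14\big(\tfrac{1}{\eta^2 L^2} - \tfrac{1}{\eta L}\big)$, is precisely what guarantees validity of the per-step quantities inherited from \cite{liu2020optimal}.
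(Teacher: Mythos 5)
Your proof is correct, and it takes a genuinely different route from the paper's. The paper also starts from smoothness, but then handles the inner product via Young's inequality with parameter $2\eta E$, yielding $D \le \eta E\|\nabla f_i(\bm{w}_k)\|^2 + \tfrac{3\eta}{8}\sum_{\tau}\mathbb{E}[\|\bm{v}_{k,\tau}^{(i)}\|^2]$ for the expected decrease $D := f_i(\bm{w}_k) - \mathbb{E}[f_i(\bm{w}_{k,E}^{(i)})]$; it then imports from the (non-circular) first half of the proof of \Cref{fl-lem-new1} the descent-based bound $\sum_{\tau}\mathbb{E}[\|\bm{v}_{k,\tau}^{(i)}\|^2] \le \tfrac{2}{\eta} D$ --- which rests on Lemma 2.2 of \cite{liu2020optimal} and is where the second constraint on $E$ enters --- and closes by solving the self-referential inequality $D \le \eta E\|\nabla f_i(\bm{w}_k)\|^2 + \tfrac{3}{4} D$. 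You avoid this bootstrap entirely: your discrete Gr\"onwall recursion bounds $\sum_\tau \mathbb{E}[\|\bm{v}_{k,\tau}^{(i)}\|^2]$ directly by $4e^{1/4}E\|\nabla f_i(\bm{w}_k)\|^2$, using only the martingale error bound (Lemma 2.1 of \cite{liu2020optimal}, i.e., the paper's (\ref{eq:fl-13})) and never the descent lemma; I have checked the recursion, its unrolling under $4\eta L E \le 1$, and the final accounting ($\sqrt{3} + e^{1/4}/2 \approx 2.4 \le 4$), and all steps go through. What your route buys: a cleaner logical structure (in the paper, \Cref{fl-lem3} cites the proof of \Cref{fl-lem-new1} while \Cref{fl-lem-new1} cites \Cref{fl-lem3}, an interleaving your argument severs), a proof needing only the first constraint on $E$, a sharper constant, and --- as a byproduct --- your Gr\"onwall bound essentially \emph{is} \Cref{fl-lem-new1}, with constant $4e^{1/4} \approx 5.1$ in place of $16$, so both lemmas could be restructured around your single recursion. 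What the paper's bootstrap buys is economy: it reuses the \cite{liu2020optimal} descent machinery it needs anyway and ties all constants to the function decrease. One small inaccuracy in your closing remark: the second constraint $E < \tfrac{1}{4}\big(\tfrac{1}{\eta^2 L^2} - \tfrac{1}{\eta L}\big)$ is not what validates the quantities inherited from \cite{liu2020optimal} in your argument (Lemma 2.1 there requires no such condition); it is needed only on the paper's route, to make the negative coefficients in (\ref{eq:fl-17}) nonpositive so they can be dropped. Your proof genuinely does not use it, which is harmless --- you prove the stated lemma under weaker hypotheses.
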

\begin{proof}
By $L$-smoothness of each $f_i$, we have:
\[f_i(\bm{w}_{k,E}^{(i)}) \geq f_i(\bm{w}_{k}) + \langle \nabla f_i(\bm{w}_{k}), \bm{w}_{k,E}^{(i)} - \bm{w}_{k} \rangle - \frac{L}{2}\|\bm{w}_{k,E}^{(i)} - \bm{w}_{k}\|^2\]
\begin{flalign*}
    \implies f_i(\bm{w}_{k}) - f_i(\bm{w}_{k,E}^{(i)}) & \leq \langle \nabla f_i(\bm{w}_{k}), \bm{w}_{k} - \bm{w}_{k,E}^{(i)} \rangle + \frac{L}{2}\|\bm{w}_{k,E}^{(i)} - \bm{w}_{k}\|^2
    \\
    & \leq \underbrace{\frac{\alpha'}{2}\|\nabla f_i(\bm{w}_{k})\|^2 + \frac{1}{2\alpha'} \|\bm{w}_{k,E}^{(i)} - \bm{w}_{k}\|^2}_\text{follows by Young's inequality} + \frac{L}{2}\|\bm{w}_{k,E}^{(i)} - \bm{w}_{k}\|^2 \text{ for } \alpha' > 0.
\end{flalign*}
Recall that $\bm{w}_{k,E}^{(i)} - \bm{w}_{k} = \eta \sum_{\tau=0}^{E-1}\bm{v}_{k,\tau}^{(i)}$. Hence taking expectation above with $\alpha' = 2\eta E$, we get that:
\begin{flalign}
    f_i(\bm{w}_{k}) - \mathbb{E}[f_i(\bm{w}_{k,E}^{(i)})] & \leq 
    \eta E\|\nabla f_i(\bm{w}_{k})\|^2 +\eta^2 E \Big(\frac{1}{4\eta E} + \frac{L}{2}\Big)\sum_{\tau=0}^{E-1}\mathbb{E}[\|\bm{v}_{k,\tau}^{(i)}\|^2]
    \\
    \label{eq:fl-1-1}
    & \leq \eta E\|\nabla f_i(\bm{w}_{k})\|^2 + \frac{3\eta}{8}\sum_{\tau=0}^{E-1}\mathbb{E}[\|\bm{v}_{k,\tau}^{(i)}\|^2].
\end{flalign}
(\ref{eq:fl-1-1}) follows from the fact that $\eta L E < \frac{1}{4}$. Next, from the proof of \Cref{fl-lem-new1}, for $E < \frac{(1-\eta L)}{4\eta^2 L^2}$: 
\[\sum_{\tau=0}^{E-1}\mathbb{E}[\|\bm{v}_{k,\tau}^{(i)}\|^2] 
\leq \frac{2}{\eta}(f_i(\bm{w}_{k}) - \mathbb{E}[f_i(\bm{w}_{k,E}^{(i)})]).\]
Putting this in (\ref{eq:fl-1-1}), we get:
\[f_i(\bm{w}_{k}) - \mathbb{E}[f_i(\bm{w}_{k,E}^{(i)})] \leq \eta E\|\nabla f_i(\bm{w}_{k})\|^2 +\frac{3}{4}(f_i(\bm{w}_{k}) - \mathbb{E}[f_i(\bm{w}_{k,E}^{(i)})].\]
\begin{equation}
    \label{eq:fl-1-2}
     \implies f_i(\bm{w}_{k}) - \mathbb{E}[f_i(\bm{w}_{k,E}^{(i)})] \leq 4\eta E\|\nabla f_i(\bm{w}_{k})\|^2.
\end{equation}
\end{proof}

\begin{lemma}
\label{fl-lem-new2}
For $\eta < \frac{1}{L}$ 
and $E < \frac{1}{4}\text{min}\Big( \frac{1}{\eta L}, \frac{1}{\eta^2 L^2} - \frac{1}{\eta L}\Big)$, we have:
\[\sum_{\tau=0}^{E-1}\mathbb{E}[\|{\bm{e}}_{k,\tau}^{(i)}\|^2] \leq 32 E^2 \eta^2 L^2 \|\nabla f_i(\bm{w}_k)\|^2.\]
The expectation above is with respect to the randomness only due to $\{\mathcal{B}_{k,1}^{(i)},\ldots,\mathcal{B}_{k,E-1}^{(i)}\}_{i=1}^{n}$.
\end{lemma}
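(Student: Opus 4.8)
The plan is to obtain the claim by chaining together two facts that are already available in the preceding proofs, so the argument is essentially a short composition rather than a fresh derivation. The key observation is that the error term $\bm{e}_{k,\tau}^{(i)} = \bm{v}_{k,\tau}^{(i)} - \nabla f_i(\bm{w}_{k,\tau}^{(i)})$ is an SVRG/SPIDER-style recursive estimator with $\beta = 0$, initialized exactly at the true gradient (since we use full client gradients at $\tau = 0$, giving $\bm{e}_{k,0}^{(i)} = \vec{0}$). This lets me control its accumulated second moment by the accumulated squared step sizes, and then translate those steps back into the single quantity $\|\nabla f_i(\bm{w}_k)\|^2$ via \Cref{fl-lem-new1}.

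Concretely, I would first invoke the variance recursion already derived as~(\ref{eq:fl-14}) in the proof of \Cref{fl-lem-new1}, namely
\[
\sum_{\tau=0}^{E-1}\mathbb{E}[\|{\bm{e}}_{k,\tau}^{(i)}\|^2] \leq 2 E L^2 \sum_{\tau=0}^{E-2}\mathbb{E}[\|\bm{w}_{k,\tau+1}^{(i)} - \bm{w}_{k,\tau}^{(i)}\|^2],
\]
which follows from Lemma~2.1 of \cite{liu2020optimal} with $\beta = 0$ together with $\bm{e}_{k,0}^{(i)} = \vec{0}$. Next, I would substitute the local update rule $\bm{w}_{k,\tau+1}^{(i)} - \bm{w}_{k,\tau}^{(i)} = -\eta \bm{v}_{k,\tau}^{(i)}$ (line~\ref{l3} of \Cref{alg:2-local}) to rewrite each squared increment as $\eta^2 \|\bm{v}_{k,\tau}^{(i)}\|^2$ and extend the summation range from $E-2$ up to $E-1$ (only adding a nonnegative term), giving
\[
\sum_{\tau=0}^{E-1}\mathbb{E}[\|{\bm{e}}_{k,\tau}^{(i)}\|^2] \leq 2 E L^2 \eta^2 \sum_{\tau=0}^{E-1}\mathbb{E}[\|\bm{v}_{k,\tau}^{(i)}\|^2].
\]

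Finally, I would apply \Cref{fl-lem-new1}, valid under the stated ranges $\eta < \tfrac{1}{L}$ and $E < \tfrac14\min\big(\tfrac{1}{\eta L}, \tfrac{1}{\eta^2 L^2} - \tfrac{1}{\eta L}\big)$, which bounds $\sum_{\tau=0}^{E-1}\mathbb{E}[\|\bm{v}_{k,\tau}^{(i)}\|^2] \leq 16 E\|\nabla f_i(\bm{w}_k)\|^2$. Multiplying the constants $2 E L^2 \eta^2$ and $16E$ yields precisely $32 E^2 \eta^2 L^2 \|\nabla f_i(\bm{w}_k)\|^2$, as required. I do not anticipate a genuine obstacle here: since both the recursion~(\ref{eq:fl-14}) and \Cref{fl-lem-new1} are already established, the only thing to verify is that the step-size/period conditions invoked match those assumed in the lemma statement (they do) and that reusing \Cref{fl-lem-new1} introduces no circularity — it does not, because \Cref{fl-lem-new1} is proved independently and \emph{this} lemma is only consumed later, inside the proof of \Cref{fl-lem2} at~(\ref{eq:new-ref-2}).
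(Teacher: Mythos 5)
Your proposal is correct and follows essentially the same route as the paper's own proof: start from the recursion~(\ref{eq:fl-14}), substitute the update rule $\bm{w}_{k,\tau+1}^{(i)} = \bm{w}_{k,\tau}^{(i)} - \eta\bm{v}_{k,\tau}^{(i)}$, extend the sum to $\tau = E-1$, and close with the bound $\sum_{\tau=0}^{E-1}\mathbb{E}[\|\bm{v}_{k,\tau}^{(i)}\|^2] \leq 16E\|\nabla f_i(\bm{w}_k)\|^2$ from \Cref{fl-lem-new1}. Your check that invoking \Cref{fl-lem-new1} introduces no circularity matches the paper's structure as well.
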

\begin{proof}
Note that in \Cref{fl-lem-new1}, we have already bounded $\sum_{\tau=0}^{E-1}\mathbb{E}[\|{\bm{e}}_{k,\tau}^{(i)}\|^2]$ (see (\ref{eq:fl-14})) -- but here we expand it more for use in \Cref{fl-lem2}.
\\
First, from (\ref{eq:fl-14}), we have:
\begin{flalign*}
    \sum_{\tau=0}^{E-1}\mathbb{E}[\|{\bm{e}}_{k,\tau}^{(i)}\|^2] \leq {2 E L^2}\sum_{\tau=0}^{E-2}\mathbb{E}[\|\bm{w}_{k,\tau+1}^{(i)} - \bm{w}_{k,\tau}^{(i)}\|^2].
\end{flalign*}
Next, using the fact that $\bm{w}_{k,\tau+1}^{(i)} = \bm{w}_{k,\tau}^{(i)} - \eta\bm{v}_{k,\tau}^{(i)}$, we get:
\begin{flalign}
    \label{eq:fl-lem-new2-1}
    \nonumber
    \sum_{\tau=0}^{E-1}\mathbb{E}[\|{\bm{e}}_{k,\tau}^{(i)}\|^2] & \leq {2 E \eta^2 L^2}\sum_{\tau=0}^{E-2}\mathbb{E}[\|\bm{v}_{k,\tau}^{(i)}\|^2]
    \leq {2 E \eta^2 L^2}\underbrace{\sum_{\tau=0}^{E-1}\mathbb{E}[\|\bm{v}_{k,\tau}^{(i)}\|^2]}_\text{from \Cref{fl-lem-new1}}
    \leq {2 E \eta^2 L^2}(16 E\|\nabla f_i(\bm{w}_{k})\|^2).
\end{flalign}
This gives us the desired result.
\end{proof}

\begin{lemma}
\label{nov-1-lem2}
Suppose $2 \eta L E^2 \leq 1$. Then:
\begin{multline*}
    \mathbb{E}[\|\bm{u}_k - \overline{\bm{\delta}}_k\|^2] \leq (1-\beta)^2\mathbb{E}[\|\bm{u}_{k-1} - \overline{\bm{\delta}}_{k-1}\|^2] + 2 \beta^2 \mathbb{E}[\|{g}_Q(\bm{w}_k;\mathcal{S}_k) - \overline{\bm{\delta}}_k\|^2] 
    \\
    + 8 e^2 (1+q) (1-\beta)^2 \eta^2 L^2 E^2 (E+1)^2 \mathbb{E}[\|\bm{w}_{k} - \bm{w}_{k-1}\|^2]. 
\end{multline*}
\end{lemma}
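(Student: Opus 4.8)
The plan is to recognize the update for $\bm{u}_k$ (eq.~\eqref{eq:nov-1-thm1-1}) as a \texttt{STORM}-type recursion and split $\bm{u}_k - \overline{\bm{\delta}}_k$ into a contractive piece plus two conditionally zero-mean pieces. Concretely, starting from $\bm{u}_k = \beta {g}_Q(\bm{w}_k;\mathcal{S}_k) + (1-\beta)\bm{u}_{k-1} + (1-\beta)\Delta{g}_Q(\bm{w}_k,\bm{w}_{k-1};\mathcal{S}_k)$ and writing $\overline{\bm{\delta}}_k = \beta\overline{\bm{\delta}}_k + (1-\beta)\overline{\bm{\delta}}_k$, I would add and subtract $(1-\beta)\overline{\bm{\delta}}_{k-1}$ to obtain the decomposition
\begin{align*}
\bm{u}_k - \overline{\bm{\delta}}_k
&= \underbrace{(1-\beta)\big(\bm{u}_{k-1} - \overline{\bm{\delta}}_{k-1}\big)}_{A}
+ \underbrace{\beta\big({g}_Q(\bm{w}_k;\mathcal{S}_k) - \overline{\bm{\delta}}_k\big)}_{B} \\
&\quad + \underbrace{(1-\beta)\big(\Delta{g}_Q(\bm{w}_k,\bm{w}_{k-1};\mathcal{S}_k) - (\overline{\bm{\delta}}_k - \overline{\bm{\delta}}_{k-1})\big)}_{C}.
\end{align*}

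Next I would exploit an orthogonality (martingale) structure. Let $\mathcal{F}_{k-1}$ collect all randomness through round $k-1$, so that $\bm{w}_k,\bm{w}_{k-1}$, and hence $\overline{\bm{\delta}}_k,\overline{\bm{\delta}}_{k-1}$, are $\mathcal{F}_{k-1}$-measurable. Using the unbiasedness of $Q_D$ (\Cref{as5}), the definition of $\bm{\delta}_k^{(i)}$ as the batch-expectation of the local displacement, and the fact that each client appears in $\mathcal{S}_k$ with marginal probability $r/n$ under uniform without-replacement sampling, I would verify $\mathbb{E}[{g}_Q \mid \mathcal{F}_{k-1}] = \overline{\bm{\delta}}_k$ and $\mathbb{E}[\Delta{g}_Q \mid \mathcal{F}_{k-1}] = \overline{\bm{\delta}}_k - \overline{\bm{\delta}}_{k-1}$; the second identity uses that $\widehat{\bm{w}}_{k-1,\cdot}^{(i)}$ reruns the same local procedure from $\bm{w}_{k-1}$ with fresh i.i.d.\ batches, so its expected displacement equals $\bm{\delta}_{k-1}^{(i)}$. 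Thus $B$ and $C$ are conditionally zero-mean while $A$ is $\mathcal{F}_{k-1}$-measurable, so $\mathbb{E}[\langle A, B+C\rangle]=0$ and $\mathbb{E}[\|\bm{u}_k - \overline{\bm{\delta}}_k\|^2] = (1-\beta)^2\mathbb{E}[\|\bm{u}_{k-1} - \overline{\bm{\delta}}_{k-1}\|^2] + \mathbb{E}[\|B+C\|^2]$, which already yields the first term of the claim.

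To handle $\mathbb{E}[\|B+C\|^2]$, I would apply $\|\bm{a}+\bm{b}\|^2 \leq 2\|\bm{a}\|^2 + 2\|\bm{b}\|^2$, producing $2\beta^2\,\mathbb{E}[\|{g}_Q(\bm{w}_k;\mathcal{S}_k) - \overline{\bm{\delta}}_k\|^2]$ (kept verbatim, giving the second term of the claim) plus $2(1-\beta)^2\,\mathbb{E}[\|\Delta{g}_Q - (\overline{\bm{\delta}}_k - \overline{\bm{\delta}}_{k-1})\|^2]$. For the latter I would bound variance by second moment, then use Jensen across the $r$ sampled clients together with the quantization variance bound $\mathbb{E}_{Q_D}[\|Q_D(\bm{z})\|^2]\leq (1+q)\|\bm{z}\|^2$, and finally average over $\mathcal{S}_k$ to obtain $\mathbb{E}[\|\Delta{g}_Q\|^2] \leq \frac{1+q}{n}\sum_{i\in[n]}\mathbb{E}[\|(\bm{w}_k - \bm{w}_{k,E}^{(i)}) - (\bm{w}_{k-1} - \widehat{\bm{w}}_{k-1,E}^{(i)})\|^2]$. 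Invoking the smoothness-analogue of \Cref{nov-1-lem3}, which under $2\eta L E^2 \leq 1$ bounds each per-client displacement gap by $4e^2\eta^2 L^2 E^2(E+1)^2\|\bm{w}_k - \bm{w}_{k-1}\|^2$, and combining the factors ($2$ from Young, $1+q$ from quantization) reproduces the claimed coefficient $8e^2(1+q)(1-\beta)^2\eta^2 L^2 E^2(E+1)^2$.

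The main obstacle is precisely the smoothness-analogue input (\Cref{nov-1-lem3}): controlling how far the two coupled local \texttt{SPIDER}-style trajectories—started from $\bm{w}_k$ and from $\bm{w}_{k-1}$ but driven by the \emph{same} batches—can drift apart over $E$ local steps relative to the initial gap $\|\bm{w}_k - \bm{w}_{k-1}\|$. The $e^2$ and $(E+1)^2$ factors arise from a Gr\"onwall-type geometric accumulation of per-step Lipschitz amplification, and the hypothesis $2\eta L E^2 \leq 1$ is exactly what keeps the resulting $(1+\eta L E)^{E}$-type quantities of order $e$. Everything else in the argument (the algebraic STORM split, the conditional-mean computations, Young's inequality, and the quantization/sampling bookkeeping) is routine once that lemma is available.
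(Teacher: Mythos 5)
Your proposal is correct and follows essentially the same route as the paper's proof: the identical STORM-style decomposition with the contractive term $(1-\beta)(\bm{u}_{k-1}-\overline{\bm{\delta}}_{k-1})$, the vanishing cross-term via the conditional-mean identities $\mathbb{E}[{g}_Q\mid\mathcal{F}_{k-1}]=\overline{\bm{\delta}}_k$ and $\mathbb{E}[\Delta{g}_Q\mid\mathcal{F}_{k-1}]=\overline{\bm{\delta}}_k-\overline{\bm{\delta}}_{k-1}$, Young's inequality, variance-bounded-by-second-moment for $\Delta g_Q$, Jensen plus the quantization bound giving the $(1+q)$ factor, and finally \Cref{nov-1-lem3} for the coupled-trajectory bound $4e^2\eta^2L^2E^2(E+1)^2\|\bm{w}_k-\bm{w}_{k-1}\|^2$. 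You also correctly identify \Cref{nov-1-lem3} as the non-routine ingredient and sketch its Gr\"onwall-type proof accurately, so there is nothing to flag.
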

\begin{proof}
First, note that for each $i \in [n]$, $\mathbb{E}_{\mathcal{B}_{k,1}^{(i)},\ldots,\mathcal{B}_{k,E-1}^{(i)}}[\bm{w}_{k} - \widehat{\bm{w}}_{k,E}^{(i)}] = \bm{\delta}_{k}^{(i)}$. So:
\begin{equation}
    \label{eq:nov-1-lem1-2}
    \mathbb{E}_{\mathcal{S}_k,\{\mathcal{B}_{k,1}^{(i)},\ldots,\mathcal{B}_{k,E-1}^{(i)}\}_{i=1}^{n}}[{g}(\bm{w}_k;\mathcal{S}_k)] = \overline{\bm{\delta}}_k.
\end{equation}
Similarly, for each $i \in [n]$, $\mathbb{E}_{\mathcal{B}_{k,1}^{(i)},\ldots,\mathcal{B}_{k,E-1}^{(i)}}[\bm{w}_{k-1} - \widehat{\bm{w}}_{k-1,E}^{(i)}] = 
\bm{\delta}_{k-1}^{(i)}$. Hence:
\begin{equation}
    \label{eq:nov-1-lem1-3}
    \mathbb{E}_{\mathcal{S}_k,\{\mathcal{B}_{k,1}^{(i)},\ldots,\mathcal{B}_{k,E-1}^{(i)}\}_{i=1}^{n}
    }[\widehat{g}(\bm{w}_{k-1};\mathcal{S}_k)] = \overline{\bm{\delta}}_{k-1}.
\end{equation}
We have:
\small
\begin{flalign}
    \nonumber
    \mathbb{E}&[\|\bm{u}_k - \overline{\bm{\delta}}_k\|^2]  = 
    \mathbb{E}[\|\beta {g}_Q(\bm{w}_k;\mathcal{S}_k) + (1-\beta)\bm{u}_{k-1} + (1-\beta)\Delta{g}_Q(\bm{w}_k,\bm{w}_{k-1};\mathcal{S}_k) - \overline{\bm{\delta}}_k\|^2]
    \\
    \nonumber
    & = 
    \mathbb{E}[\|(1 - \beta)(\bm{u}_{k-1} - \overline{\bm{\delta}}_{k-1}) + \beta{g}_Q(\bm{w}_k;\mathcal{S}_k) - \overline{\bm{\delta}}_k + (1 - \beta)(\overline{\bm{\delta}}_{k-1} + \Delta{g}_Q(\bm{w}_k,\bm{w}_{k-1};\mathcal{S}_k))\|^2]
    \\
    \label{eq:nov-1-lem2-1}
    & = (1-\beta)^2\mathbb{E}[\|\bm{u}_{k-1} - \overline{\bm{\delta}}_{k-1}\|^2] + 
    \mathbb{E}[\|\beta{g}_Q(\bm{w}_k;\mathcal{S}_k) - \overline{\bm{\delta}}_k + (1 - \beta)(\overline{\bm{\delta}}_{k-1} + \Delta{g}_Q(\bm{w}_k,\bm{w}_{k-1};\mathcal{S}_k))\|^2]
\end{flalign}
\normalsize
The cross-term in (\ref{eq:nov-1-lem2-1}) vanishes by taking expectation with respect to $Q_D$ and $\mathcal{S}_k$. Next:
\small
\begin{flalign}
    \nonumber
    & 
    \mathbb{E}[\|\beta{g}_Q(\bm{w}_k;\mathcal{S}_k) - \overline{\bm{\delta}}_k + (1 - \beta)(\overline{\bm{\delta}}_{k-1} + \Delta{g}_Q(\bm{w}_k,\bm{w}_{k-1};\mathcal{S}_k))\|^2]
    \\
    \nonumber
    & = 
    \mathbb{E}[\|\beta({g}_Q(\bm{w}_k;\mathcal{S}_k) - \overline{\bm{\delta}}_k) + (1 - \beta)(\overline{\bm{\delta}}_{k-1} + \Delta{g}_Q(\bm{w}_k,\bm{w}_{k-1};\mathcal{S}_k)) - \overline{\bm{\delta}}_k)\|^2]
    \\
    \label{eq:nov-1-lem2-2}
    & \leq 
    2 \beta^2 \mathbb{E}[\|{g}_Q(\bm{w}_k;\mathcal{S}_k) - \overline{\bm{\delta}}_k\|^2] + 2 (1-\beta)^2\mathbb{E}[\|\overline{\bm{\delta}}_{k-1} + \Delta{g}_Q(\bm{w}_k,\bm{w}_{k-1};\mathcal{S}_k) - \overline{\bm{\delta}}_k\|^2]
\end{flalign}
\normalsize
Next, note that:
\small
\begin{flalign}
    \nonumber
    & \mathbb{E}[\|\overline{\bm{\delta}}_{k-1} + \Delta{g}_Q(\bm{w}_k,\bm{w}_{k-1};\mathcal{S}_k) - \overline{\bm{\delta}}_k\|^2]
    \\
    \nonumber
    & = \mathbb{E}[\|\Delta{g}_Q(\bm{w}_k,\bm{w}_{k-1};\mathcal{S}_k)\|^2] + \mathbb{E}[\|\overline{\bm{\delta}}_k - \overline{\bm{\delta}}_{k-1}\|^2] - 2 \mathbb{E}[\langle \Delta{g}_Q(\bm{w}_k,\bm{w}_{k-1};\mathcal{S}_k), \overline{\bm{\delta}}_k - \overline{\bm{\delta}}_{k-1} \rangle]
    \\
    \label{eq:nov-1-lem2-3}
    & = 
    \mathbb{E}[\|\Delta{g}_Q(\bm{w}_k,\bm{w}_{k-1};\mathcal{S}_k)\|^2] + \mathbb{E}[\|\overline{\bm{\delta}}_k - \overline{\bm{\delta}}_{k-1}\|^2] - 2\mathbb{E}[\|\overline{\bm{\delta}}_k - \overline{\bm{\delta}}_{k-1}\|^2]
    \\
    \label{eq:nov-1-lem2-4}
    & \leq \mathbb{E}[\|\Delta{g}_Q(\bm{w}_k,\bm{w}_{k-1};\mathcal{S}_k)\|^2].
\end{flalign}
\normalsize
(\ref{eq:nov-1-lem2-3}) follows by first taking expectation with respect to $Q_D$ and then using (\ref{eq:nov-1-lem1-2}) and (\ref{eq:nov-1-lem1-3}). 
\\
Further:
\begin{flalign}
    \nonumber
    \mathbb{E}[\|\Delta{g}_Q(\bm{w}_k,\bm{w}_{k-1};\mathcal{S}_k)\|^2] & = \mathbb{E}\Big[\Big\|\frac{1}{r}\sum_{i \in \mathcal{S}_k}Q_D(({\bm{w}_{k} - \bm{w}_{k,E}^{(i)}}) - ({\bm{w}_{k-1} - \widehat{\bm{w}}_{k-1,E}^{(i)}}))\Big\|^2\Big]
    \\
    \nonumber
    & \leq \mathbb{E}_{\mathcal{S}_k}\Big[\frac{r}{r^2} \sum_{i \in \mathcal{S}_k} \mathbb{E}\Big[\| Q_D(({\bm{w}_{k} - \bm{w}_{k,E}^{(i)}}) - ({\bm{w}_{k-1} - \widehat{\bm{w}}_{k-1,E}^{(i)}}))\|^2\Big]\Big]
    \\
    \label{nov6-2}
    & \leq \mathbb{E}_{\mathcal{S}_k}\Big[\frac{1}{r} \sum_{i \in \mathcal{S}_k} (1+q) \mathbb{E}\Big[\|({\bm{w}_{k} - \bm{w}_{k,E}^{(i)}}) - ({\bm{w}_{k-1} - \widehat{\bm{w}}_{k-1,E}^{(i)}})\|^2\Big]\Big]
    \\
    \label{eq:may10-3}
    & = \frac{1}{n} \sum_{i \in [n]} (1+q) \mathbb{E}\Big[\|({\bm{w}_{k} - \bm{w}_{k,E}^{(i)}}) - ({\bm{w}_{k-1} - \widehat{\bm{w}}_{k-1,E}^{(i)}})\|^2\Big].
\end{flalign}
(\ref{nov6-2}) follows from \Cref{as5} on the variance of $Q_D$. Further, using \Cref{nov-1-lem3}, we get
\begin{equation}
    \label{nov6-3}
    \mathbb{E}[\|({\bm{w}_{k} - \bm{w}_{k,E}^{(i)}}) - ({\bm{w}_{k-1} - \widehat{\bm{w}}_{k-1,E}^{(i)}})\|^2] \leq
    4 e^2 \eta^2 L^2 E^2 (E+1)^2 \mathbb{E}[\|\bm{w}_{k} - \bm{w}_{k-1}\|^2],
\end{equation}
for $2 \eta L E^2 \leq 1$. 

Using this in (\ref{eq:may10-3}):
\begin{flalign}
    \label{nov6-4}
    \mathbb{E}[\|\Delta{g}_Q(\bm{w}_k,\bm{w}_{k-1};\mathcal{S}_k)\|^2] & \leq 
    4 e^2 (1+q) \eta^2 L^2 E^2 (E+1)^2 \mathbb{E}[\|\bm{w}_{k} - \bm{w}_{k-1}\|^2].
\end{flalign}
Now using (\ref{nov6-4}) in (\ref{eq:nov-1-lem2-4}) and then using it in (\ref{eq:nov-1-lem2-2}), we get:
\begin{multline}
    \label{eq:nov-1-lem2-5}
    \mathbb{E}[\|\beta{g}_Q(\bm{w}_k;\mathcal{S}_k) - \overline{\bm{\delta}}_k + (1 - \beta)(\overline{\bm{\delta}}_{k-1} + \Delta{g}_Q(\bm{w}_k,\bm{w}_{k-1};\mathcal{S}_k))\|^2]
    \\
    \leq 
    2 \beta^2 \mathbb{E}[\|{g}_Q(\bm{w}_k;\mathcal{S}_k) - \overline{\bm{\delta}}_k\|^2] +
    8 e^2 (1+q) (1-\beta)^2 \eta^2 L^2 E^2 (E+1)^2 \mathbb{E}[\|\bm{w}_{k} - \bm{w}_{k-1}\|^2].
\end{multline}
Finally, putting (\ref{eq:nov-1-lem2-5}) back in (\ref{eq:nov-1-lem2-1}) gives us the desired result. 
\end{proof}

\begin{lemma}
\label{nov-1-lem3}
Suppose $2 \eta L E^2 \leq 1$. Then $\forall$ $k \geq 0$ and $i \in [n]$, we have:
\begin{equation*}
    \mathbb{E}[\|({\bm{w}_{k} - \bm{w}_{k,E}^{(i)}}) - ({\bm{w}_{k-1} - \widehat{\bm{w}}_{k-1,E}^{(i)}})\|] \leq
    2 e (\eta L E (E+1)) \|\bm{w}_{k} - \bm{w}_{k-1}\|.
\end{equation*}
\end{lemma}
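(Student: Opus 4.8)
The plan is to reduce the statement to a \emph{pathwise} bound (valid for every realization of the batches, with $\bm{w}_k,\bm{w}_{k-1}$ fixed) and then control two coupled scalar recursions. Introduce the shorthand $\bm{p}_\tau \triangleq \bm{w}_{k,\tau}^{(i)} - \widehat{\bm{w}}_{k-1,\tau}^{(i)}$ and $\bm{d}_\tau \triangleq \bm{v}_{k,\tau}^{(i)} - \widehat{\bm{v}}_{k-1,\tau}^{(i)}$, with $a_\tau \triangleq \|\bm{p}_\tau\|$ and $b_\tau \triangleq \|\bm{d}_\tau\|$. Since both local trajectories in \Cref{alg:2-local} take steps of the \emph{same} size $\eta$ (line~\ref{l3}), we have $\bm{w}_k - \bm{w}_{k,E}^{(i)} = \eta\sum_{\tau=0}^{E-1}\bm{v}_{k,\tau}^{(i)}$ and $\bm{w}_{k-1} - \widehat{\bm{w}}_{k-1,E}^{(i)} = \eta\sum_{\tau=0}^{E-1}\widehat{\bm{v}}_{k-1,\tau}^{(i)}$, so the quantity to bound is exactly $\eta\sum_{\tau=0}^{E-1}\bm{d}_\tau$; by the triangle inequality it therefore suffices to show $\sum_{\tau=0}^{E-1}b_\tau \le 2eLE(E+1)\,a_0$, where $a_0 = \|\bm{w}_k-\bm{w}_{k-1}\|$.

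Next I would derive the two recursions. The parameter update $\bm{p}_{\tau+1} = \bm{p}_\tau - \eta\bm{d}_\tau$ gives $a_{\tau+1}\le a_\tau+\eta b_\tau$ and hence $a_{\tau+1}\le a_0+\eta\sum_{s=0}^\tau b_s$, with $a_0=\|\bm{w}_k-\bm{w}_{k-1}\|$ since $\bm{w}_{k,0}^{(i)}=\bm{w}_k$ and $\widehat{\bm{w}}_{k-1,0}^{(i)}=\bm{w}_{k-1}$. For the directions, the crucial point—the \enquote{analogue of smoothness} highlighted before the proof sketch—is that both trajectories use the \emph{same} batch $\mathcal{B}_{k,\tau}^{(i)}$ at step $\tau$. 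Subtracting the two local-momentum updates in line~\ref{l2} and using that, for a fixed batch, $\widetilde{\nabla}f_i(\cdot;\mathcal{B})$ is $L$-Lipschitz (an average of $L$-smooth per-sample losses under \Cref{as1}), I obtain for $\tau\ge1$ the recursion $b_\tau \le b_{\tau-1} + L(a_\tau+a_{\tau-1})$, while $b_0\le L a_0$ follows from smoothness because full gradients are used at $\tau=0$ (line~\ref{l1}). Telescoping then gives $b_\tau \le L a_0 + 2L\sum_{s=0}^{\tau}a_s$.

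The main work—and the main obstacle—is the uniform parameter bound $a_\tau \le e\,a_0$ for all $\tau\le E-1$, which I would establish by induction. Assuming $a_s\le e\,a_0$ for $s\le\tau$ yields $b_s \le L a_0 + 2Le(s+1)a_0$; substituting into $a_{\tau+1}\le a_0+\eta\sum_{s=0}^\tau b_s$ and summing produces $a_{\tau+1}\le a_0\big(1+\eta L(\tau+1)+e\,\eta L(\tau+1)(\tau+2)\big)$. Here the stepsize constraint $2\eta LE^2\le1$ is exactly what is needed: it forces $\eta L(\tau+1)\le\eta LE\le\tfrac{1}{2E}$ and $\eta L(\tau+1)(\tau+2)\le\eta LE^2\le\tfrac12$, so the bracket is at most $1+\tfrac{1}{2E}+\tfrac{e}{2}\le e$ for $E\ge2$ (the case $E=1$ is immediate, as only $\tau=0$ survives), closing the induction. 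I expect the delicate part to be steering the constants so that the accumulated growth lands exactly at $e$: the induction is self-referential—the bound on $b_\tau$ needs the bound on $a_\tau$ and vice versa—and it is precisely $2\eta LE^2\le1$ that prevents geometric blow-up and caps the growth at the factor $e$.

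Finally, with $a_\tau\le e\,a_0$ in hand I would sum the bound on $b_\tau$:
\[
\sum_{\tau=0}^{E-1}b_\tau \;\le\; LEa_0 + 2Le\,a_0\sum_{\tau=0}^{E-1}(\tau+1) \;=\; LEa_0 + LeE(E+1)a_0 \;\le\; 2LeE(E+1)\,a_0,
\]
using $LEa_0\le LeE(E+1)a_0$. Multiplying by $\eta$ and invoking the triangle inequality gives the pathwise estimate $\|(\bm{w}_k-\bm{w}_{k,E}^{(i)})-(\bm{w}_{k-1}-\widehat{\bm{w}}_{k-1,E}^{(i)})\|\le 2e\,\eta LE(E+1)\|\bm{w}_k-\bm{w}_{k-1}\|$. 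Since this holds for every realization of the batches with $\bm{w}_k,\bm{w}_{k-1}$ fixed, taking expectations yields the stated claim, and squaring it pathwise before taking expectations delivers the squared variant invoked in the proof of \Cref{nov-1-lem2}.
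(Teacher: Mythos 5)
Your proposal is correct and takes essentially the same route as the paper's own proof: the identical triangle-inequality reduction to $\eta\sum_{\tau=0}^{E-1}\|\bm{v}_{k,\tau}^{(i)}-\widehat{\bm{v}}_{k-1,\tau}^{(i)}\|$, the same coupled recursions $b_\tau \le b_{\tau-1}+L(a_\tau+a_{\tau-1})$ and $a_{\tau+1}\le a_\tau+\eta b_\tau$ obtained from the shared batch $\mathcal{B}_{k,\tau}^{(i)}$ and smoothness, and an induction that caps the parameter deviation. The only (minor) difference is bookkeeping: you prove the uniform bound $a_\tau\le e\,a_0$ directly inside the induction using $2\eta L E^2\le 1$, whereas the paper proves the geometric bound $a_\tau\le(1+2\eta L E)^{\tau}a_0$ and only converts it to the factor $e$ at the end via $1+z\le e^{z}$ and the same stepsize constraint.
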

\begin{proof}
We have for any $i \in [n]$:
\begin{flalign}
    \nonumber
    \|({\bm{w}_{k} - \bm{w}_{k,E}^{(i)}}) - ({\bm{w}_{k-1} - \widehat{\bm{w}}_{k-1,E}^{(i)}})\| & = \Big\|\sum_{\tau=0}^{E-1}\eta \bm{v}_{k,\tau}^{(i)} - \sum_{\tau=0}^{E-1}\eta \widehat{\bm{v}}_{k-1,\tau}^{(i)}\Big\|
    \\
    \label{eq:nov-1-lem3-1}
    & \leq 
    \sum_{\tau=0}^{E-1} \eta \|\bm{v}_{k,\tau}^{(i)} - \widehat{\bm{v}}_{k-1,\tau}^{(i)}\|.
\end{flalign}
The last step follows by the triangle inequality.
\\
Next, we have:
\begin{multline*}
    \|\bm{v}_{k,\tau}^{(i)} - \widehat{\bm{v}}_{k-1,\tau}^{(i)}\| = \|\{\widetilde{\nabla} f_i(\bm{w}_{k,\tau}^{(i)};\mathcal{B}_{k,\tau}^{(i)}) + (\bm{v}_{k,\tau-1}^{(i)} - \widetilde{\nabla} f_i(\bm{w}_{k,\tau-1}^{(i)};\mathcal{B}_{k,\tau}^{(i)}))\} 
    \\
    - \{\widetilde{\nabla} f_i(\widehat{\bm{w}}_{k-1,\tau}^{(i)};\mathcal{B}_{k,\tau}^{(i)}) 
    + (\widehat{\bm{v}}_{k-1,\tau-1}^{(i)}
    - \widetilde{\nabla} f_i(\widehat{\bm{w}}_{k-1,\tau-1}^{(i)};\mathcal{B}_{k,\tau}^{(i)}))\}\|
\end{multline*}
Note that $\mathcal{B}_{k,\tau}^{(i)}$ can be the full batch too. 

Re-arranging the above, using the triangle inequality and the smoothness of the stochastic gradients, we get:
\begin{equation}
    \label{eq:nov-1-lem3-2}
    \|\bm{v}_{k,\tau}^{(i)} - \widehat{\bm{v}}_{k-1,\tau}^{(i)}\| \leq L\|\bm{w}_{k,\tau}^{(i)} - \widehat{\bm{w}}_{k-1,\tau}^{(i)}\| +
    \|\bm{v}_{k,\tau-1}^{(i)} - \widehat{\bm{v}}_{k-1,\tau-1}^{(i)}\| +
    L\|\bm{w}_{k,\tau-1}^{(i)} - \widehat{\bm{w}}_{k-1,\tau-1}^{(i)}\|.
\end{equation}
Unfolding the above recursion, we get:
\begin{equation}
    \label{eq:nov-1-lem3-3}
    \|\bm{v}_{k,\tau}^{(i)} - \widehat{\bm{v}}_{k-1,\tau}^{(i)}\| \leq 2L\sum_{t=0}^{\tau} \|\bm{w}_{k,t}^{(i)} - \widehat{\bm{w}}_{k-1,t}^{(i)}\|.
\end{equation}
Just as a sanity check for (\ref{eq:nov-1-lem3-3}), observe that $\|\bm{v}_{k,0}^{(i)} - \widehat{\bm{v}}_{k-1,0}^{(i)}\| = \|\nabla f_i(\bm{w}_k) - \nabla f_i(\bm{w}_{k-1})\| \leq L \|\bm{w}_k - \bm{w}_{k-1}\|$. Next:
\begin{flalign}
    \nonumber
    \|\bm{w}_{k,\tau+1}^{(i)} - \widehat{\bm{w}}_{k-1,\tau+1}^{(i)}\|
    & = \|\bm{w}_{k,\tau}^{(i)} - \widehat{\bm{w}}_{k-1,\tau}^{(i)} - \eta (\bm{v}_{k,\tau}^{(i)} - \widehat{\bm{v}}_{k-1,\tau}^{(i)})\|
    \\
    \nonumber
    & \leq \|\bm{w}_{k,\tau}^{(i)} - \widehat{\bm{w}}_{k-1,\tau}^{(i)}\| + \eta \|\bm{v}_{k,\tau}^{(i)} - \widehat{\bm{v}}_{k-1,\tau}^{(i)}\|
    \\
    \nonumber
    & \leq \|\bm{w}_{k,\tau}^{(i)} - \widehat{\bm{w}}_{k-1,\tau}^{(i)}\| + 2 \eta L \sum_{t=0}^{\tau} \|\bm{w}_{k,t}^{(i)} - \widehat{\bm{w}}_{k-1,t}^{(i)}\|.
\end{flalign}
The last step follows by using (\ref{eq:nov-1-lem3-3}). Thus:
\begin{equation}
    \label{eq:nov-1-lem3-4}
    \|\bm{w}_{k,\tau}^{(i)} - \widehat{\bm{w}}_{k-1,\tau}^{(i)}\| \leq \|\bm{w}_{k,\tau-1}^{(i)} - \widehat{\bm{w}}_{k-1,\tau-1}^{(i)}\| + 2 \eta L \sum_{t=0}^{\tau-1} \|\bm{w}_{k,t}^{(i)} - \widehat{\bm{w}}_{k-1,t}^{(i)}\|.
\end{equation}
Based on (\ref{eq:nov-1-lem3-4}), we claim that:
\begin{equation}
    \label{eq:nov-1-lem3-5}
    \|\bm{w}_{k,\tau}^{(i)} - \widehat{\bm{w}}_{k-1,\tau}^{(i)}\| \leq (1 + 2 \eta L E)^{\tau}\|\bm{w}_k - \bm{w}_{k-1}\|.
\end{equation}

We prove this by induction. Let us first examine the base case of $\tau = 1$. We have:
\begin{flalign*}
    \|\bm{w}_{k,1}^{(i)} - \widehat{\bm{w}}_{k-1,1}^{(i)}\| & = \|\bm{w}_{k} - {\bm{w}}_{k-1} - \eta (\bm{v}_{k,0}^{(i)} - \widehat{\bm{v}}_{k-1,0}^{(i)})\|
    \\
    & = \|\bm{w}_{k} - {\bm{w}}_{k-1} - \eta(\nabla f_i(\bm{w}_{k}) - \nabla f_i({\bm{w}}_{k-1}))\|
    \\
    & \leq \|\bm{w}_{k} - {\bm{w}}_{k-1}\| + \eta L \|\bm{w}_{k} - {\bm{w}}_{k-1}\|
    \\
    & \leq (1 + 2 \eta L E)^{1}\|\bm{w}_{k} - {\bm{w}}_{k-1}\|.
\end{flalign*}
For ease of notation, let us define ${d}_{k} \triangleq \|\bm{w}_{k} - \bm{w}_{k-1}\|$. Now suppose the claim is true for $\tau \leq t$. Then using (\ref{eq:nov-1-lem3-4}), we have for $\tau= t+1$:
\begin{flalign}
    \nonumber
    \|\bm{w}_{k,t+1}^{(i)} - \widehat{\bm{w}}_{k-1,t+1}^{(i)}\|
    & \leq  \Big\{(1 + 2 \eta L E)^{t} + 2\eta L \sum_{t_2=0}^{t}(1 + 2 \eta L E)^{t_2}\Big\}{d}_{k}
    \\
    \nonumber
    & \leq \Big\{(1 + 2 \eta L E)^{t} + 2 \eta L (t+1) (1 + 2 \eta L E)^{t} \Big\}{d}_{k}
    \\
    \label{eq:nov-1-lem3-6}
    & \leq (1 + 2 \eta L E)^{t} (1 + 2 \eta L (t+1)) {d}_{k} \leq (1 + 2 \eta L E)^{t+1} {d}_{k}.
\end{flalign}
This proves our claim.
\\
Now, using our claim, i.e., (\ref{eq:nov-1-lem3-5}) in (\ref{eq:nov-1-lem3-3}), we get:
\begin{flalign}
    \label{eq:nov-1-lem3-8}
    \|\bm{v}_{k,\tau}^{(i)} - \widehat{\bm{v}}_{k-1,\tau}^{(i)}\| & \leq 2L\sum_{t=0}^{\tau}(1 + 2 \eta L E)^{t}\|\bm{w}_{k} - \bm{w}_{k-1}\| \leq 2 L (\tau+1) (1 + 2 \eta L E)^{\tau}\|\bm{w}_{k} - \bm{w}_{k-1}\|.
\end{flalign}
Note that this bound is independent of $i$.
\\
Finally, using (\ref{eq:nov-1-lem3-8}) in (\ref{eq:nov-1-lem3-1}), we get:
\small
\begin{flalign}
    \nonumber
    \|({\bm{w}_{k} - \bm{w}_{k,E}^{(i)}}) - ({\bm{w}_{k-1} - \widehat{\bm{w}}_{k-1,E}^{(i)}})\|
    & \leq \sum_{\tau=0}^{E-1} \eta \|\bm{v}_{k,\tau}^{(i)} - \widehat{\bm{v}}_{k-1,\tau}^{(i)}\|
    \\
    \nonumber
    & \leq \sum_{\tau=0}^{E-1} 2 \eta L (\tau + 1) (1 + 2 \eta L E)^{\tau}\|\bm{w}_{k} - \bm{w}_{k-1}\|
    \\
    \nonumber
    & \leq 2 \eta L E (E+1) (1 + 2 \eta L E)^{E}\|\bm{w}_{k} - \bm{w}_{k-1}\|
    \\
    \label{eq:nov-1-lem3-9}
    & \leq 2 \eta L E (E+1) e^{2 \eta L E^2} \|\bm{w}_{k} - \bm{w}_{k-1}\|.
\end{flalign}
\normalsize
The last step follows from the fact that $1+z \leq e^z$ $\forall$ $z$.

Finally, setting $2 \eta L E^2 \leq 1$ gives us the desired result.
\end{proof}

\begin{lemma}
\label{lem-may11-n1}
(V*) in the proof of \Cref{nov-1-lem0} can be bounded as:
\begin{equation*}
    \text{(V*)} \leq 4 \eta^2 E\Big(\frac{q}{n^2} + \frac{(1+q)}{r(n-1)}\Big(1 - \frac{r}{n}\Big)\Big)\sum_{i \in [n]}\sum_{\tau=0}^{E-1}\mathbb{E}[\|\bm{v}_{k,\tau}^{(i)}\|^2].
\end{equation*}
\end{lemma}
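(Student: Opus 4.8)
The plan is to read $\text{(V*)}=\mathbb{E}[\|{g}_Q(\bm{w}_l;\mathcal{S}_l)-\overline{\bm{\delta}}_l\|^2]$ as a total-variance computation over the three independent sources of randomness in the server aggregate: the per-client quantization $Q_D$, the uniform-without-replacement draw of the participating set $\mathcal{S}_l$, and the local minibatches $\{\mathcal{B}_{l,\tau}^{(i)}\}$. Write $\bm{g}^{(i)}\triangleq\bm{w}_l-\bm{w}_{l,E}^{(i)}$ and introduce a \emph{virtual} quantized message $\bm{Z}_i\triangleq Q_D(\bm{g}^{(i)})$ for \emph{every} client $i\in[n]$ (only those in $\mathcal{S}_l$ are actually transmitted, but defining all of them is harmless for a second-moment bound, and the quantizations are independent across $i$). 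First freezing the minibatches, I would split
\begin{multline*}
{g}_Q(\bm{w}_l;\mathcal{S}_l)-\overline{\bm{\delta}}_l
= \Big(\tfrac{1}{r}\sum_{i\in\mathcal{S}_l}\bm{Z}_i-\tfrac{1}{n}\sum_{i\in[n]}\bm{Z}_i\Big)
\\
+\Big(\tfrac{1}{n}\sum_{i\in[n]}\bm{Z}_i-\tfrac{1}{n}\sum_{i\in[n]}\bm{g}^{(i)}\Big)
+\Big(\tfrac{1}{n}\sum_{i\in[n]}\bm{g}^{(i)}-\overline{\bm{\delta}}_l\Big).
\end{multline*}
Since $\mathbb{E}_{\mathcal{S}_l}[\tfrac{1}{r}\sum_{i\in\mathcal{S}_l}\bm{Z}_i\mid\{\bm{Z}_i\}]=\tfrac{1}{n}\sum_i\bm{Z}_i$, $\mathbb{E}_{Q_D}[\bm{Z}_i]=\bm{g}^{(i)}$, and $\overline{\bm{\delta}}_l=\tfrac{1}{n}\sum_i\mathbb{E}_{\mathcal{B}}[\bm{g}^{(i)}]$, the three pieces are mutually orthogonal in expectation (the cross terms vanish upon taking $\mathbb{E}_{\mathcal{S}_l}$, then $\mathbb{E}_{Q_D}$, then $\mathbb{E}_{\mathcal{B}}$, in that nesting order), so their squared norms add.

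I would then bound each piece. The first is a sampling-without-replacement variance, controlled by the finite-population identity $\mathbb{E}_{\mathcal{S}_l}\|\tfrac{1}{r}\sum_{i\in\mathcal{S}_l}\bm{a}_i-\tfrac{1}{n}\sum_i\bm{a}_i\|^2=\tfrac{n-r}{rn(n-1)}\sum_i\|\bm{a}_i-\bar{\bm{a}}\|^2\le\tfrac{n-r}{rn(n-1)}\sum_i\|\bm{a}_i\|^2$ applied to $\bm{a}_i=\bm{Z}_i$, combined with $\mathbb{E}_{Q_D}\|\bm{Z}_i\|^2\le(1+q)\|\bm{g}^{(i)}\|^2$ from \Cref{as5}; this yields the $\tfrac{(1+q)(n-r)}{rn(n-1)}$ coefficient. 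The second piece is a full-device quantization residual of independent, $Q_D$-mean-zero terms, so its squared norm equals $\tfrac{1}{n^2}\sum_i\mathbb{E}_{Q_D}\|\bm{Z}_i-\bm{g}^{(i)}\|^2\le\tfrac{q}{n^2}\sum_i\|\bm{g}^{(i)}\|^2$, producing the $\tfrac{q}{n^2}$ coefficient. Finally I would substitute $\bm{g}^{(i)}=\eta\sum_{\tau=0}^{E-1}\bm{v}_{l,\tau}^{(i)}$ and apply $\|\sum_\tau\bm{v}_{l,\tau}^{(i)}\|^2\le E\sum_\tau\|\bm{v}_{l,\tau}^{(i)}\|^2$ to convert $\sum_i\|\bm{g}^{(i)}\|^2$ into $\eta^2E\sum_i\sum_\tau\|\bm{v}_{l,\tau}^{(i)}\|^2$, recovering the stated form after $\mathbb{E}_{\mathcal{B}}$.

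The third piece is the only loose end: it is the pure minibatch variance $\tfrac{1}{n^2}\sum_i\mathrm{Var}_{\mathcal{B}}(\bm{g}^{(i)})$, which is not literally one of the two displayed coefficients but is dominated by $\tfrac{1}{n^2}\sum_i\mathbb{E}_{\mathcal{B}}\|\bm{g}^{(i)}\|^2$ — the same $\eta^2E\sum_i\sum_\tau\|\bm{v}_{l,\tau}^{(i)}\|^2$ quantity carrying an $\mathcal{O}(1/n^2)$ coefficient — and is therefore absorbed by the generous leading constant $4$ in the statement (and is anyway lower order, affecting only whether the final rate carries $q/n$ or $(1+q)/n$). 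I expect the genuine difficulty to be the bookkeeping around sampling without replacement: getting the finite-population correction $\tfrac{n-r}{n-1}$ right and confirming that the cross terms truly vanish under the correct conditioning order (sampling innermost, then quantization, then batches). A careless ordering would spuriously couple the quantization noise to the client sampling, replacing the clean $\tfrac{q}{n^2}$ by the much weaker $\tfrac{q}{rn}$ and destroying the separation between the $\tfrac{q}{n^2}$ and $\tfrac{(1+q)(n-r)}{rn(n-1)}$ terms.
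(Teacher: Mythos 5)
Your argument is, in its two main pieces, exactly the paper's: the paper splits ${g}_Q(\bm{w}_l;\mathcal{S}_l)-\overline{\bm{\delta}}_l$ into the same client-sampling deviation $\frac{1}{r}\sum_{i\in\mathcal{S}_l}Q_D(\bm{g}^{(i)})-\frac{1}{n}\sum_{i\in[n]}Q_D(\bm{g}^{(i)})$ and the same all-client quantization residual $\frac{1}{n}\sum_{i\in[n]}\big(Q_D(\bm{g}^{(i)})-\bm{g}^{(i)}\big)$ (with your notation $\bm{g}^{(i)}=\bm{w}_l-\bm{w}_{l,E}^{(i)}$), bounds the first by a without-replacement variance bound and the second by $\frac{q}{n^2}\sum_i\mathbb{E}\|\bm{g}^{(i)}\|^2$ using \Cref{as5} and independence of the quantizations across clients, and finishes with $\|\bm{g}^{(i)}\|^2\le\eta^2E\sum_{\tau}\|\bm{v}^{(i)}_{l,\tau}\|^2$. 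The only substantive difference in these two pieces is that the paper obtains the sampling term by citing Lemma 4 of \cite{reisizadeh2020fedpaq}, which carries the constant $4(1+q)$, whereas your exact finite-population identity gives the tighter $(1+q)$; that citation is where the factor $4$ in the lemma's statement originates.

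The real issue is your third piece, and here you have been \emph{more} careful than the paper: the paper's opening display asserts $\mathbb{E}\|{g}_Q(\bm{w}_l;\mathcal{S}_l)-\overline{\bm{\delta}}_l\|^2\le\eta^2(\mathrm{A})+\eta^2(\mathrm{B})$ and never mentions the minibatch-variance component $\mathbb{E}\|\frac{1}{n}\sum_{i}(\bm{g}^{(i)}-\bm{\delta}^{(i)}_l)\|^2$; since the three components are mutually orthogonal in expectation (as you verify), that inequality, read literally, runs in the wrong direction --- a non-negative term is silently discarded. However, your proposed repair does not work: absorbing $\frac{1}{n^2}\sum_i\mathbb{E}\|\bm{g}^{(i)}\|^2$ into the slack of the constant $4$ requires $\frac{1-3q}{n^2}\le\frac{3(1+q)(n-r)}{rn(n-1)}$, which fails whenever $q<1/3$ and $r$ is close enough to $n$ (e.g., $q=0$ with $r=n$, or $q=0$ with $r=n-1$ and $n\ge 5$). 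In the extreme case $q=0$, $r=n$ the right-hand side of \Cref{lem-may11-n1} is identically zero while $(\mathrm{V}^*)$ equals the generically positive minibatch variance, so the stated inequality is simply false there and no choice of constant can absorb the term; the piece is therefore not ``lower order'' in the sense needed. What your argument genuinely proves is the lemma with $\frac{q}{n^2}$ replaced by $\frac{1+q}{n^2}$ --- precisely the $q/n\mapsto(1+q)/n$ change you mention in passing --- and that is the correct fix: it propagates through \Cref{nov-1-lem0} and \Cref{nov4-thm1} with only constant-level changes, since the final bound already carries a $\frac{1}{n}(\alpha+\frac{4}{E})$ term of the same order. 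So: right decomposition, tighter constants than the paper, but the ``loose end'' you flagged is a genuine gap --- one shared with, and inherited from, the paper's own proof --- and it must be closed by restating the lemma, not by appealing to the factor $4$.
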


\begin{proof}
We have (V*) = $\mathbb{E}[\|{g}_Q(\bm{w}_l;\mathcal{S}_l) - \overline{\bm{\delta}}_l\|^2]$. Note that:
\begin{multline}
    \label{eq:may11-n10}
    \mathbb{E}[\|{g}_Q(\bm{w}_l;\mathcal{S}_l) - \overline{\bm{\delta}}_l\|^2] \leq \eta^2 \underbrace{\mathbb{E}\Big[\Big\|\frac{1}{r}\sum_{i \in \mathcal{S}_l}\frac{Q_D({\bm{w}_{l} - \bm{w}_{l,E}^{(i)}})}{\eta} - \frac{1}{n}\sum_{i \in [n]}\frac{Q_D({\bm{w}_{l} - \bm{w}_{l,E}^{(i)}})}{\eta}\Big\|^2\Big]}_\text{(A)} \\
    +
    \eta^2\underbrace{\mathbb{E}\Big[\Big\|\frac{1}{n}\sum_{i \in [n]}\Big\{\frac{Q_D({\bm{w}_{l} - \bm{w}_{l,E}^{(i)}})}{\eta} - \frac{({\bm{w}_{l} - \bm{w}_{l,E}^{(i)}})}{\eta}\Big\}\Big\|^2\Big]}_\text{(B)}
\end{multline}

In (A), we take expectation with respect to $\mathcal{S}_k$ and $Q_D(.)$ -- for that, we use Lemma 4 of \cite{reisizadeh2020fedpaq}. Note that $\x_{k,\tau}^{(i)} - \x_k$ in their lemma corresponds to $({\bm{w}_{k,E}^{(i)} - \bm{w}_{k}})$ in our case. Specifically, using eqn. (59) and (60) in \cite{reisizadeh2020fedpaq} (they also have \Cref{as5}), we get:
\begin{flalign}
    \nonumber
    \text{(A)} & \leq \frac{1}{r(n-1)}\Big(1 - \frac{r}{n}\Big)4(1+q)\sum_{i \in [n]}\mathbb{E}[\|{\bm{w}_{k,E}^{(i)} - \bm{w}_{k}}\|^2] 
    \\
    \nonumber
    & = \frac{1}{r(n-1)}\Big(1 - \frac{r}{n}\Big) 4(1+q)\sum_{i \in [n]}\mathbb{E}[\|\sum_{\tau=0}^{E-1}\eta \bm{v}_{k,\tau}^{(i)}\|^2] 
    \\
    \label{eq:oct13-7}
    & \leq \frac{\eta^2}{r(n-1)}\Big(1 - \frac{r}{n}\Big) 4(1+q)E\sum_{i \in [n]}\sum_{\tau=0}^{E-1}\mathbb{E}[\|\bm{v}_{k,\tau}^{(i)}\|^2]
\end{flalign}
Next, we deal with (B). We have:
\begin{flalign}
    \nonumber
    \text{(B)} & = \mathbb{E}\Big[\mathbb{E}_{Q_D}\Big[\Big\|\frac{1}{n}\sum_{i \in [n]}\Big\{Q_{D}\Big({\bm{w}_{k,E}^{(i)} - \bm{w}_{k}}\Big) - \Big({\bm{w}_{k,E}^{(i)} - \bm{w}_{k}}\Big)\Big\}\Big\|^2\Big]\Big]
    \\
    \nonumber
    & \leq \frac{q}{n^2}\sum_{i \in [n]}\mathbb{E}\Big[\Big\|{\bm{w}_{k,E}^{(i)} - \bm{w}_{k}}\Big\|^2\Big]
    \\
    \label{eq:oct13-8}
    & \leq \frac{q E \eta^2 }{n^2}\sum_{i \in [n]}\sum_{\tau=0}^{E-1}\mathbb{E}[\|\bm{v}_{k,\tau}^{(i)}\|^2].
\end{flalign}
Now using (\ref{eq:oct13-7}) and (\ref{eq:oct13-8}) in (\ref{eq:may11-n10}), we get:
\begin{flalign}
    \nonumber
    \text{(V*)} & \leq \frac{\eta^2}{r(n-1)}\Big(1 - \frac{r}{n}\Big) 4 (1+q) E\sum_{i \in [n]}\sum_{\tau=0}^{E-1}\mathbb{E}[\|\bm{v}_{k,\tau}^{(i)}\|^2] + \frac{\eta^2 q E}{n^2}\sum_{i \in [n]}\sum_{\tau=0}^{E-1}\mathbb{E}[\|\bm{v}_{k,\tau}^{(i)}\|^2]
    \\
    \label{eq:nov-1-thm1-10}
    & \leq
    4 \eta^2 E\Big(\frac{q}{n^2} + \frac{(1+q)}{r(n-1)}\Big(1 - \frac{r}{n}\Big)\Big)\sum_{i \in [n]}\sum_{\tau=0}^{E-1}\mathbb{E}[\|\bm{v}_{k,\tau}^{(i)}\|^2].
\end{flalign}
This gives us the desired result.
\end{proof}

\begin{lemma}
\label{lem1-oct20}
For any $L$-smooth function $h(\bm{x})$, we have $\forall$ $\bm{x}$:
\[\|\nabla h(\bm{x})\|^2 \leq 2L(h(\bm{x}) - h^{*}) \text{ where } h^{*} = \min_{\bm{x}}h(\bm{x}).\]

\end{lemma}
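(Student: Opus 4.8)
The plan is to invoke the standard consequence of $L$-smoothness known as the descent lemma. First I would recall that $L$-smoothness of $h$ (i.e. $\nabla h$ being $L$-Lipschitz, as in \Cref{as1}) implies the quadratic upper bound
\[
h(\bm{y}) \leq h(\bm{x}) + \langle \nabla h(\bm{x}), \bm{y} - \bm{x}\rangle + \frac{L}{2}\|\bm{y} - \bm{x}\|^2 \quad \forall \, \bm{x},\bm{y}.
\]
This inequality is itself a one-line integration of $\nabla h$ along the segment from $\bm{x}$ to $\bm{y}$ together with the Lipschitz bound, so I would either state it as a well-known fact or derive it in a sentence.

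The key idea is then to treat the right-hand side as a function of $\bm{y}$ and minimize it. The minimizer is $\bm{y}^{\star} = \bm{x} - \frac{1}{L}\nabla h(\bm{x})$; substituting this particular choice collapses the two gradient-dependent terms into a single negative term, giving
\[
h(\bm{y}^{\star}) \leq h(\bm{x}) - \frac{1}{L}\|\nabla h(\bm{x})\|^2 + \frac{1}{2L}\|\nabla h(\bm{x})\|^2 = h(\bm{x}) - \frac{1}{2L}\|\nabla h(\bm{x})\|^2.
\]
Since $h^{\star} = \min_{\bm{x}} h(\bm{x}) \leq h(\bm{y}^{\star})$ by definition of the global minimum, chaining the two inequalities yields $h^{\star} \leq h(\bm{x}) - \frac{1}{2L}\|\nabla h(\bm{x})\|^2$, and rearranging gives exactly $\|\nabla h(\bm{x})\|^2 \leq 2L(h(\bm{x}) - h^{\star})$.

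There is essentially no hard step here: the only mild subtlety is justifying that $h^{\star}$ is attained (or at least finite) so that $h^{\star} \leq h(\bm{y}^{\star})$ is legitimate; this is fine under \Cref{as-may15} since each $f_i$ is non-negative and hence bounded below, so $h^{\star}$ exists as an infimum and the inequality $h^{\star} \leq h(\bm{y}^{\star})$ holds regardless. The ``obstacle,'' such as it is, amounts only to choosing the right point $\bm{y}^{\star}$ at which to evaluate the descent lemma; everything else is a direct algebraic rearrangement.
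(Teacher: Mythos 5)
Your proposal is correct and follows essentially the same route as the paper's proof: both apply the quadratic upper bound implied by $L$-smoothness, minimize it in $\bm{y}$ at $\widehat{\bm{y}} = \bm{x} - \frac{1}{L}\nabla h(\bm{x})$, and chain with $h^{*} \leq h(\widehat{\bm{y}})$ to obtain $\|\nabla h(\bm{x})\|^2 \leq 2L(h(\bm{x}) - h^{*})$. Your extra remark on the finiteness of $h^{*}$ is a harmless refinement that the paper leaves implicit.
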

\begin{proof}
For any $y$, we have that:
\begin{equation}
    \label{eq1-lem1-oct20}
    h^{*} \leq h(\bm{y}) \leq \underbrace{h(\bm{x}) + \langle \nabla h(\bm{x}), \bm{y} - \bm{x} \rangle + \frac{L}{2}\|\bm{y} - \bm{x}\|^2}_\text{$:= h_2(\bm{y})$}
\end{equation}
Setting $\nabla h_2(\bm{y}) = \vec{0}$
, we get that $\widehat{\bm{y}} = \bm{x} - \frac{1}{L} \nabla h(\bm{x})$ is the minimizer of $h_2(\bm{y})$ (which is a quadratic with respect to $\bm{y}$). Plugging this back in (\ref{eq1-lem1-oct20}) gives us:
\begin{equation}
    \label{eq2-lem1-oct20}
    h^{*} \leq {h(\bm{x}) + \Big \langle \nabla h(\bm{x}), -\frac{1}{L} \nabla h(\bm{x}) \Big  \rangle  + \frac{L}{2}\Big \|-\frac{1}{L} \nabla h(\bm{x})\Big \|^2} = h(\bm{x}) - \frac{1}{2L}\|\nabla h(\bm{x})\|^2.
\end{equation}
This gives us the desired result.
\end{proof}

\subsection{Detailed Proof of the Result of \texttt{FedLOMO}}
\label{sec-pf-1}
Let us redefine the quantities needed to prove the results of \texttt{FedLOMO}.
\[\overline{\bm{w}}_{k,\tau} \triangleq \frac{1}{n}\sum_{i \in [n]} \bm{w}_{k,\tau}^{(i)} \text{ and } \overline{\bm{v}}_{k,\tau} \triangleq \frac{1}{n}\sum_{i \in [n]} \bm{v}_{k,\tau}^{(i)}\]
\[{\bm{e}}_{k,\tau}^{(i)} \triangleq \bm{v}_{k,\tau}^{(i)} - \nabla f_i(\bm{w}_{k, \tau}^{(i)}) \text{ and } \widetilde{\bm{e}}_{k,\tau}^{(i)} \triangleq \nabla f_i(\bm{w}_{k, \tau}^{(i)}) - \nabla f_i(\overline{\bm{w}}_{k,\tau})\]

\noindent \textbf{Proof of \Cref{fl-thm3}}:
\begin{proof}
Let us set $\eta_k = \eta$.
\\
Using \Cref{oct-13-lem1}, with $\eta < \frac{1}{L}$ and $E < \frac{1}{4}\text{min}\Big(\frac{1}{\eta L}, \frac{1}{\eta^2 L^2} - \frac{1}{\eta L}\Big)$:
\begin{multline}
    \label{sept25-eq1}
    \mathbb{E}[f(\bm{w}_{k+1})] \leq \mathbb{E}[f(\bm{w}_{k})] -\frac{\eta E}{2}\mathbb{E}[\|\nabla f(\bm{w}_{k})\|^2] - \frac{\eta}{2}(1 - \eta^2  L^2 E^2 - \eta L E )\sum_{\tau=0}^{E-1}\mathbb{E}[\|\overline{\bm{v}}_{k,\tau}\|^2]
    \\
    + 16\eta L E^2\Big\{\frac{\eta^2 L (\alpha E + 4)}{n^2} + \frac{\eta}{2}\Big(\frac{q}{n^2} + \frac{4(1+q)}{r(n-1)}\Big(1 - \frac{r}{n}\Big) \Big)\Big\}\sum_{i \in [n]}{\mathbb{E}[\|\nabla f_i(\bm{w}_{k})\|^2]}.
\end{multline}
Note here that for $\eta < \frac{1}{2L}$, $\frac{1}{\eta L} < \frac{1}{\eta^2 L^2} - \frac{1}{\eta L}$ and so $E < \frac{1}{4\eta L}$ or $\eta L E < \frac{1}{4}$. Since $E>1$, we are just left with $\eta L E < \frac{1}{4}$. 

Next, we circumvent the need for 
the bounded client dissimilarity assumption by using the fact that each $f_i$ is $L$-smooth and so $\|\nabla f_i(\bm{w}_k)\|^2 \leq 2L (f_i(\bm{w}_k) - f_i^{*})$ using \Cref{lem1-oct20}. Hence:
\begin{equation}
    \label{sept25-eq2}
    \sum_{i \in [n]}{\mathbb{E}[\|\nabla f_i(\bm{w}_{k})\|^2]} \leq 2L\sum_{i \in [n]}\mathbb{E}[(f_i(\bm{w}_k) - f_i^{*})] = 2nL \mathbb{E}[(f(\bm{w}_k) - f^{*} + \Delta^{*})],
\end{equation}
where $\Delta^{*} := f^{*} - \frac{1}{n}\sum_{i=1}^n f_i^{*}$. 
Using all this in (\ref{sept25-eq1}), we get:
\begin{multline}
    \label{sept25-eq3}
    \mathbb{E}[f(\bm{w}_{k+1})] \leq \mathbb{E}[f(\bm{w}_{k})] -\frac{\eta E}{2}\mathbb{E}[\|\nabla f(\bm{w}_{k})\|^2] - \frac{\eta}{2}\underbrace{(1 - \eta^2  L^2 E^2 - \eta L E )}_\text{$> 0$ for $\eta L E < \frac{1}{4}$}\sum_{\tau=0}^{E-1}\mathbb{E}[\|\overline{\bm{v}}_{k,\tau}\|^2]
    \\
    + 32 \eta L^2 E^2\Big\{{\eta^2 L}{\Big( \frac{\alpha E +4}{n}\Big)}
    + \frac{\eta}{2}\underbrace{\Big(\frac{q}{n} + \frac{4(1+q)(n-r)}{r(n-1)} \Big)}_{:=B}\Big\}\mathbb{E}[(f(\bm{w}_k) - f^{*} + \Delta^{*})].
\end{multline}
Note that $(1 - \eta^2  L^2 E^2 - \eta L E ) > \frac{11}{16}$ for $\eta L E < \frac{1}{4}$.
Further, $-f^{*} + \Delta^{*} = -f^{*} + f^{*} - \frac{1}{n}\sum_{i=1}^n f_i^{*} = - \frac{1}{n}\sum_{i=1}^n f_i^{*}$; hence, we can ignore the corresponding term when the $f_i^{*}$'s are non-negative. Re-writing the above equation, we get:
\begin{flalign}
    \nonumber
    \mathbb{E}[f(\bm{w}_{k+1})] & \leq \mathbb{E}[f(\bm{w}_{k})] -\frac{\eta E}{2}\mathbb{E}[\|\nabla f(\bm{w}_{k})\|^2] 
    + 32\eta L^2 E^2\Big\{{\eta^2 L}\Big(\frac{\alpha E +4}{n}\Big) + \frac{\eta B}{2}\Big\}\mathbb{E}[f(\bm{w}_k)]
    \\
    & \leq \mathbb{E}[f(\bm{w}_{k})]\Big\{1+\underbrace{\Big(\frac{32 \eta^3 L^3 E^3}{n}\Big(\alpha + \frac{4}{E}\Big) + 16 B\eta^2 L^2 E^2\Big)}_{=\zeta}\Big\} - \frac{\eta E}{2}\mathbb{E}[\|\nabla f(\bm{w}_{k})\|^2]. 
\end{flalign}
Let us denote $\frac{32 \eta^3 L^3 E^3}{n}\Big(\alpha + \frac{4}{E}\Big) + 16 B\eta^2 L^2 E^2$ as $\zeta$ for brevity. 
\\
Unfolding the above recursion from $k=0$ through $K-1$, we get:
\begin{flalign}
    \label{thm3-eq2}
    \mathbb{E}[f(\bm{w}_{K})] \leq f(\bm{w}_{0})(1+\zeta)^K - \frac{\eta E}{2}\sum_{k=0}^{K-1}(1+\zeta)^{(K-1-k)}\mathbb{E}[\|\nabla f(\bm{w}_{k})\|^2]. 
\end{flalign}
Re-arranging the above, we get:
\begin{flalign}
    \label{thm3-eq3}
    \sum_{k=0}^{K-1}p_k \mathbb{E}[\|\nabla f(\bm{w}_{k})\|^2] \leq \frac{2}{\eta E} \frac{f(\bm{w}_0)(1+\zeta)^K}{\sum_{k=0}^{K-1}(1+\zeta)^k}, \text{ where } p_k = \frac{(1+\zeta)^{(K-1-k)}}{\sum_{k=0}^{K-1}(1+\zeta)^k}.
\end{flalign}
Notice that $p_k$ defines a distribution over $k$. Hence, the LHS is $\mathbb{E}_{k \sim \mathbb{P}(k)}[\mathbb{E}[\|\nabla f(\bm{w}_{k})\|^2]]$ with $\mathbb{P}(k) = p_k$. Incorporating this and simplifying further, we get:
\begin{flalign}
    \label{thm3-eq4}
    \mathbb{E}_{k \sim \mathbb{P}(k)}[\mathbb{E}[\|\nabla f(\bm{w}_{k})\|^2]] \leq \frac{2}{\eta E} \Big\{\frac{f(\bm{w}_0)\zeta}{1 - (1+\zeta)^{-K}}\Big\}, \text{ where } \mathbb{P}(k) = \frac{(1+\zeta)^{(K-1-k)}}{\sum_{k=0}^{K-1}(1+\zeta)^k}.
\end{flalign}
Also note that: $(1+\zeta)^{-K} < 1 - \zeta K + {\zeta^2}\frac{K(K+1)}{2} < 1 - \zeta K + {\zeta^2}K^2$. Hence, $1 - (1+\zeta)^{-K} > \zeta K (1 - \zeta K)$. Using this in (\ref{thm3-eq4}), we have for $\zeta K < 1$:
\begin{flalign}
    \label{thm3-eq5}
    \mathbb{E}_{k \sim \mathbb{P}(k)}[\mathbb{E}[\|\nabla f(\bm{w}_{k})\|^2]] \leq \frac{2 f(\bm{w}_0)}{\underbrace{\eta E K (1 - \zeta K)}_{=d(\eta)}}, \text{ where } \mathbb{P}(k) = \frac{(1+\zeta)^{(K-1-k)}}{\sum_{k=0}^{K-1}(1+\zeta)^k}.
\end{flalign}
Plugging in the value of $\zeta$ in (\ref{thm3-eq5}), the denominator, $d(\eta) = \eta E K \Big(1 - 16 \eta^2 L^2 E^2 \Big(\frac{2\eta L E}{n}\Big(\alpha + \frac{4}{E}\Big) + B\Big)K\Big)$. 

Before going ahead, we would like to highlight that the reason \texttt{FedLOMO} does not achieve the optimal rate is because $B$ is a constant that is not $\mathcal{O}(\eta L E)$ in general; if we were to consider the special case of no compression and full-device participation (i.e., $r=n$), then $B$ would be 0 which would allow \texttt{FedLOMO} to achieve the optimal rate.

Let us choose $\eta = \frac{1}{8 L E \sqrt{B K}}$. Note that:
\begin{equation}
    \label{may14-1}
    \eta L E \leq \frac{1}{4} \text{ for } K \geq \frac{1}{4B}.
\end{equation}
Thus, for sufficiently large $K$, this choice of $\eta$ is valid. Also:
\begin{equation}
    \label{may14-2}
    \zeta K = \frac{1}{4} + \frac{1}{16 B^{1.5}\sqrt{K}}\Big(\frac{1}{n}\Big(\alpha + \frac{4}{E}\Big)\Big) < \frac{3}{4} \text{ for } K > \frac{1}{64 B^3} \Big(\frac{1}{n}\Big(\alpha + \frac{4}{E}\Big)\Big).
\end{equation}
So for $K \geq \frac{1}{64 B^3} (\frac{1}{n}(\alpha + \frac{4}{E}))$,
\begin{equation}
    \label{may14-3}
    d(\eta) = \eta E K (1 - \zeta K) \geq \frac{\sqrt{K}}{8L\sqrt{B}}(1 - \frac{3}{4}) = \frac{\sqrt{K}}{32 L\sqrt{B}}.
\end{equation}
Plugging this in (\ref{thm3-eq5}), we get:
\small
\begin{flalign}
    \nonumber
    \mathbb{E}_{k \sim \mathbb{P}(k)}[\mathbb{E}[\|\nabla f(\bm{w}_{k})\|^2]] \leq & \frac{64 \sqrt{B} L f(\bm{w}_0)}{K^{1/2}}, \text{ where } \mathbb{P}(k) = \frac{(1+\zeta)^{(K-1-k)}}{\sum_{k=0}^{K-1}(1+\zeta)^k} \text{ for } k \in \{0,\ldots,K-1\},
    \\
    \label{thm3-eq12}
    & \zeta = \frac{1}{4K} + \frac{1}{16 B^{1.5}K^{1.5}} \Big(\frac{1}{n}\Big(\alpha + \frac{4}{E}\Big)\Big) \text{ and } B = \frac{q}{n} + \frac{4(1+q)(n-r)}{r(n-1)}.
\end{flalign}
\normalsize
This concludes the proof. 
\end{proof}

\noindent \textbf{Key lemma used in the proof of \Cref{fl-thm3}}:
\begin{lemma}
\label{oct-13-lem1}
For $\eta_k = \eta$ where $\eta < \frac{1}{L}$ and $E < \frac{1}{4}\text{min}\Big(\frac{1}{\eta L}, \frac{1}{\eta^2 L^2} - \frac{1}{\eta L}\Big)$ in \texttt{FedLOMO}, we have:
\begin{multline*}
    \mathbb{E}[f(\bm{w}_{k+1})] \leq \mathbb{E}[f(\bm{w}_{k})] -\frac{\eta E}{2}\mathbb{E}[\|\nabla f(\bm{w}_{k})\|^2] - \frac{\eta}{2}(1 - \eta^2  L^2 E^2 - \eta L E )\sum_{\tau=0}^{E-1}\mathbb{E}[\|\overline{\bm{v}}_{k,\tau}\|^2]
    \\
    + 16\eta L E^2\Big\{\frac{\eta^2 L (\alpha E + 4)}{n^2} + \frac{\eta}{2}\Big(\frac{q}{n^2} + \frac{4(1+q)}{r(n-1)}\Big(1 - \frac{r}{n}\Big) \Big)\Big\}\sum_{i \in [n]}{\mathbb{E}[\|\nabla f_i(\bm{w}_{k})\|^2]}.
\end{multline*}
\end{lemma}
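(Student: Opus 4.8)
The plan is to derive this one-step descent bound from the $L$-smoothness of $f$ applied to the FedLOMO server step $\bm{w}_{k+1} = \bm{w}_k + \bm{g}_Q$, where $\bm{g}_Q \triangleq \frac{1}{r}\sum_{i\in\mathcal{S}_k}Q_D(\bm{w}_{k,E}^{(i)} - \bm{w}_k)$ is the quantized, subsampled aggregate of line \ref{line:fedavg} of \Cref{alg:1}. Smoothness gives
$$\mathbb{E}[f(\bm{w}_{k+1})] \leq \mathbb{E}[f(\bm{w}_k)] + \underbrace{\mathbb{E}[\langle \nabla f(\bm{w}_k), \bm{g}_Q\rangle]}_{\text{(I)}} + \frac{L}{2}\underbrace{\mathbb{E}[\|\bm{g}_Q\|^2]}_{\text{(II)}},$$
and the whole argument reduces to bounding (I) and (II) separately and matching coefficients. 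A useful preliminary remark is that the FedLOMO client subroutine (\Cref{alg:1-local}) produces exactly the same local iterates $\bm{v}_{k,\tau}^{(i)}, \overline{\bm{v}}_{k,\tau}, \overline{\bm{w}}_{k,\tau}$ as \texttt{FedGLOMO} — the two algorithms differ only in the server step, and the ``hat'' variables used for global momentum are simply absent here — so every lemma about the local dynamics carries over unchanged.

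For term (I), I would first take expectation over the unbiased quantizer $Q_D$ (\Cref{as5}) and then over the uniformly sampled set $\mathcal{S}_k$, which collapses the conditional mean of $\bm{g}_Q$ to the full average $\frac{1}{n}\sum_{i\in[n]}(\bm{w}_{k,E}^{(i)} - \bm{w}_k)$. This is precisely the quantity (III*) bounded in \Cref{lem1-may11} (with \Cref{as-het2} feeding the factor $\alpha$ into the intermediate \Cref{fl-lem2}), so under the stated step-size and period constraints (I) is at most $-\frac{\eta E}{2}\mathbb{E}[\|\nabla f(\bm{w}_k)\|^2] - \frac{\eta}{2}(1-\eta^2 L^2 E^2)\sum_{\tau=0}^{E-1}\mathbb{E}[\|\overline{\bm{v}}_{k,\tau}\|^2] + \frac{16\eta^3 L^2 E^2(\alpha E + 4)}{n^2}\sum_{i\in[n]}\mathbb{E}[\|\nabla f_i(\bm{w}_k)\|^2]$.

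The crucial observation for term (II) is that, unlike the momentum-carrying $\bm{u}_k$ of \texttt{FedGLOMO}, the memoryless aggregate $\bm{g}_Q$ has conditional mean exactly $\bar{\bm{g}} \triangleq \mathbb{E}[\bm{g}_Q \mid \bm{w}_k] = -\overline{\bm{\delta}}_k$. Because $\bar{\bm{g}}$ is $\bm{w}_k$-measurable, the bias--variance split is an exact Pythagorean identity with no factor of two: $\mathbb{E}[\|\bm{g}_Q\|^2] = \mathbb{E}[\|\bar{\bm{g}}\|^2] + \mathbb{E}[\|\bm{g}_Q - \bar{\bm{g}}\|^2]$. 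The first piece I bound by $\mathbb{E}[\|\overline{\bm{\delta}}_k\|^2] \leq E\eta^2\sum_{\tau}\mathbb{E}[\|\overline{\bm{v}}_{k,\tau}\|^2]$, the estimate behind \eqref{eq:nov-1-thm1-5}, so that $\tfrac{L}{2}$ times it contributes exactly the extra $-\frac{\eta}{2}\,\eta L E\sum_\tau\mathbb{E}[\|\overline{\bm{v}}_{k,\tau}\|^2]$ needed to turn $(1-\eta^2L^2E^2)$ into $(1-\eta^2L^2E^2-\eta L E)$. The second piece is the sampling-plus-quantization variance of the aggregate, which is identical to the object (V*) of \Cref{lem-may11-n1}; its internal estimates \eqref{eq:oct13-7}--\eqref{eq:oct13-8} give $\mathbb{E}[\|\bm{g}_Q-\bar{\bm{g}}\|^2] \leq \eta^2 E\big(\frac{q}{n^2} + \frac{4(1+q)}{r(n-1)}(1-\frac{r}{n})\big)\sum_{i}\sum_\tau\mathbb{E}[\|\bm{v}_{k,\tau}^{(i)}\|^2]$, after which \Cref{fl-lem-new1} replaces $\sum_\tau\mathbb{E}[\|\bm{v}_{k,\tau}^{(i)}\|^2]$ by $16E\|\nabla f_i(\bm{w}_k)\|^2$.

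Adding (I) to the two pieces of $\tfrac{L}{2}$(II) and collecting terms recovers the claim: the $\|\overline{\bm{v}}\|$ coefficients combine as above, while the two $\sum_i\mathbb{E}[\|\nabla f_i\|^2]$ contributions, $\frac{16\eta^3 L^2 E^2(\alpha E+4)}{n^2}$ and $8\eta^2 L E^2\big(\frac{q}{n^2}+\frac{4(1+q)}{r(n-1)}(1-\frac{r}{n})\big)$, factor precisely into the bracketed coefficient $16\eta L E^2\{\cdots\}$ of the statement. The only genuinely delicate step is the tight variance bound on $\mathbb{E}[\|\bm{g}_Q-\bar{\bm{g}}\|^2]$: one must keep the sampling-without-replacement constant $\frac{1}{r(n-1)}(1-\frac{r}{n})$ (Lemma~4 of \cite{reisizadeh2020fedpaq}) and the quantization constant $q/n^2$ separate rather than crudely overbounding both, since this is what yields the sharp $B = \frac{q}{n} + \frac{4(1+q)(n-r)}{r(n-1)}$ used downstream in \Cref{fl-thm3}. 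Fortunately this computation is already packaged inside \Cref{lem-may11-n1}, so here it only needs to be invoked, and the remaining work is purely a reassembly of the existing local-dynamics lemmas together with the exact conditional-mean split that the absence of global momentum makes available.
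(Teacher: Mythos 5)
Your overall architecture is the paper's: the same smoothness expansion into an inner-product term (I) and a quadratic term (II), the identical bound on (I) via \Cref{lem1-may11} under \Cref{as-het2}, \Cref{fl-lem-new1} to convert $\sum_{\tau}\mathbb{E}[\|\bm{v}_{k,\tau}^{(i)}\|^2]$ into $16E\|\nabla f_i(\bm{w}_k)\|^2$, Lemma 4 of \cite{reisizadeh2020fedpaq} for the subsampling variance, and the same coefficient bookkeeping (which you carry out correctly). The one place you deviate is the decomposition of (II), and that is where there is a genuine gap. You center the bias--variance split at the \emph{deterministic} conditional mean $\bar{\bm{g}} = \mathbb{E}[\bm{g}_Q\mid\bm{w}_k] = -\overline{\bm{\delta}}_k$ and then assert that the deviation $\bm{g}_Q-\bar{\bm{g}}$ is ``the sampling-plus-quantization variance of the aggregate,'' bounded by \eqref{eq:oct13-7}--\eqref{eq:oct13-8}. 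It is not only that: because $\overline{\bm{\delta}}_k$ averages over the local minibatches as well, $\bm{g}_Q-\bar{\bm{g}}$ also carries the local stochastic-gradient noise. Concretely, with $\bm{Y} \triangleq \frac{1}{n}\sum_{i\in[n]}Q_D(\bm{w}_{k,E}^{(i)}-\bm{w}_k)$ and $\bm{Z} \triangleq \overline{\bm{w}}_{k,E}-\bm{w}_k$, the exact (cross-terms-vanish) identity is $\mathbb{E}[\|\bm{g}_Q-\bar{\bm{g}}\|^2] = \mathbb{E}[\|\bm{g}_Q-\bm{Y}\|^2]+\mathbb{E}[\|\bm{Y}-\bm{Z}\|^2]+\mathbb{E}[\|\bm{Z}+\overline{\bm{\delta}}_k\|^2]$, and \eqref{eq:oct13-7}--\eqref{eq:oct13-8} bound only the first two terms. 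The third term -- the minibatch variance of the averaged drift -- is unaccounted for, and it is nonzero in general whenever $E>1$ (the local steps $\tau\geq 1$ use stochastic gradients). Citing \Cref{lem-may11-n1} does not close this hole: the same term is silently dropped in \eqref{eq:may11-n10}, which treats $\overline{\bm{\delta}}_l$ as if it coincided with $\frac{1}{n}\sum_{i\in[n]}(\bm{w}_l-\bm{w}_{l,E}^{(i)})$, so your proof would inherit that flaw rather than repair it.

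The fix is cheap and collapses your argument onto the paper's own proof, with unchanged constants. Do not split off the bias at all: since $\mathbb{E}[\bm{Z}\mid\bm{w}_k]=-\overline{\bm{\delta}}_k$, the bias and the missing minibatch-variance term recombine exactly, $\mathbb{E}[\|\overline{\bm{\delta}}_k\|^2]+\mathbb{E}[\|\bm{Z}+\overline{\bm{\delta}}_k\|^2]=\mathbb{E}[\|\bm{Z}\|^2]\leq \eta^2E\sum_{\tau=0}^{E-1}\mathbb{E}[\|\overline{\bm{v}}_{k,\tau}\|^2]$, which is the very bound you were already using for the bias alone. Operationally this means peeling the randomness off in the other order, as the paper does: first $\mathbb{E}[\|\bm{g}_Q\|^2]=\mathbb{E}[\|\bm{Y}\|^2]+\mathbb{E}[\|\bm{g}_Q-\bm{Y}\|^2]$ (Pythagoras in $\mathcal{S}_k$), then $\mathbb{E}[\|\bm{Y}\|^2]=\mathbb{E}[\|\bm{Z}\|^2]+\mathbb{E}[\|\bm{Y}-\bm{Z}\|^2]$ (Pythagoras in $Q_D$), keeping the full second moment $\mathbb{E}[\|\bm{Z}\|^2]$ rather than $\mathbb{E}[\|\overline{\bm{\delta}}_k\|^2]$. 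Everything else in your write-up then goes through verbatim, including your correct (and necessary) insistence on keeping the quantization constant $q/n^2$ and the subsampling constant $\frac{4(1+q)}{r(n-1)}\big(1-\frac{r}{n}\big)$ separate, rather than invoking the packaged bound of \Cref{lem-may11-n1}, which would inflate the $q/n^2$ term by a factor of $4$ and miss the lemma's stated coefficient.
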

\begin{proof}
By the $L$-smoothness of $f$, we have:
\small
\begin{equation}
    \label{eq:oct13-1}
    \mathbb{E}[f(\bm{w}_{k+1})] \leq \mathbb{E}[f(\bm{w}_{k})] + \underbrace{\mathbb{E}\Big[\Big\langle \nabla f(\bm{w}_{k}), \frac{1}{r}\sum_{i \in \mathcal{S}_k} Q_{D}({\bm{w}_{k,E}^{(i)} - \bm{w}_{k}})\Big\rangle \Big]}_\text{(I)} + \underbrace{\frac{L}{2}\mathbb{E}\Big[\Big\|\frac{1}{r}\sum_{i \in \mathcal{S}_k} Q_{D}({\bm{w}_{k,E}^{(i)} - \bm{w}_{k}})\Big\|^2\Big]}_\text{(II)}
\end{equation}
\normalsize
Let us analyze (I) first -- taking expectation with respect to $\mathcal{S}_k$ and $Q_D(.)$ (recall that $Q_D(.)$ is unbiased from \Cref{as5}), we get:
\begin{flalign}
    \nonumber
    \text{(I)} & = \mathbb{E}[\langle \nabla f(\bm{w}_{k}), \frac{1}{n}\sum_{i \in [n]}(\bm{w}_{k,E}^{(i)} - \bm{w}_{k})\rangle]
\end{flalign}
But this is the same as (III*) in the proof of \Cref{nov-1-lem0}; using \Cref{lem1-may11} and \Cref{as-het2} instead of \Cref{as-het}, we get:
\small
\begin{flalign}
    \label{eq:oct13-5}
    \text{(I)} \leq -\frac{\eta E}{2}\mathbb{E}[\|\nabla f(\bm{w}_{k})\|^2] -\frac{\eta}{2}(1 - \eta^2  L^2 E^2)\sum_{\tau=0}^{E-1}\mathbb{E}[\|\overline{\bm{v}}_{k,\tau}\|^2] + \frac{16 \eta^3 L^2 E^2 (\alpha E + 4)}{n^2}\sum_{i \in [n]}\mathbb{E}[\|\nabla f_i(\bm{w}_k)\|^2],
\end{flalign}
\normalsize
when $\eta < \frac{1}{L}$ and $E < \frac{1}{4}\text{min}\Big(\frac{1}{\eta L}, \frac{1}{\eta^2 L^2} - \frac{1}{\eta L}\Big)$.
\\
Let us now analyze (II). Recall that:
\begin{flalign}
    \nonumber
    \text{(II)} & = \frac{L}{2}\mathbb{E}\Big[\Big\|\frac{1}{r}\sum_{i \in \mathcal{S}_k} Q_{D}({\bm{w}_{k,E}^{(i)} - \bm{w}_{k}})\Big\|^2\Big].
\end{flalign}
Observe that:
\[\mathbb{E}_{\mathcal{S}_k}\Big[\frac{1}{r}\sum_{i \in \mathcal{S}_k} Q_{D}({\bm{w}_{k,E}^{(i)} - \bm{w}_{k}})\Big] = \frac{1}{n}\sum_{i \in [n]} Q_{D}({\bm{w}_{k,E}^{(i)} - \bm{w}_{k}}).\]
Hence:
\begin{multline}
    \label{eq:oct13-6}
    \text{(II)} = \frac{L}{2}\Big\{\underbrace{\mathbb{E}\Big[\Big\|\frac{1}{n}\sum_{i \in [n]} Q_{D}({\bm{w}_{k,E}^{(i)} - \bm{w}_{k}})\Big\|^2\Big]}_\text{(III)} 
    \\
    +
    \underbrace{\mathbb{E}\Big[\Big\|\frac{1}{r}\sum_{i \in \mathcal{S}_k} Q_{D}({\bm{w}_{k,E}^{(i)} - \bm{w}_{k}}) - \frac{1}{n}\sum_{i \in [n]} Q_{D}({\bm{w}_{k,E}^{(i)} - \bm{w}_{k}})\Big\|^2\Big]}_\text{(IV)}
    \Big\}.
\end{multline}
Note that in (III), the expectation is without $\mathcal{S}_k$. In (IV), we take expectation with respect to $\mathcal{S}_k$ and $Q_D(.)$ -- for that, we use Lemma 4 of \cite{reisizadeh2020fedpaq}. Note that $\x_{k,\tau}^{(i)} - \x_k$ in their lemma corresponds to $({\bm{w}_{k,E}^{(i)} - \bm{w}_{k}})$ in our case. Specifically, using eqn. (59) and (60) in \cite{reisizadeh2020fedpaq} (they also have \Cref{as5}), we get:
\begin{flalign}
    \nonumber
    \text{(IV)} & \leq \frac{1}{r(n-1)}\Big(1 - \frac{r}{n}\Big)4(1+q)\sum_{i \in [n]}\mathbb{E}[\|{\bm{w}_{k,E}^{(i)} - \bm{w}_{k}}\|^2] 
    \\
    \nonumber
    & = \frac{1}{r(n-1)}\Big(1 - \frac{r}{n}\Big) 4(1+q)\sum_{i \in [n]}\mathbb{E}[\|\sum_{\tau=0}^{E-1}\eta \bm{v}_{k,\tau}^{(i)}\|^2] 
    \\
    \label{eq:oct13-7-1}
    & \leq \frac{\eta^2}{r(n-1)}\Big(1 - \frac{r}{n}\Big) 4(1+q)E\sum_{i \in [n]}\sum_{\tau=0}^{E-1}\mathbb{E}[\|\bm{v}_{k,\tau}^{(i)}\|^2]
\end{flalign}
Next, we deal with (III). Noting that $\mathbb{E}_{Q_D}\Big[\frac{1}{n}\sum_{i \in [n]} Q_{D}({\bm{w}_{k,E}^{(i)} - \bm{w}_{k}})\Big] = ({\overline{\bm{w}}_{k,E} - \bm{w}_{k}})$, we get:
\begin{flalign}
    \nonumber
    \text{(III)} & = \mathbb{E}[\|{\overline{\bm{w}}_{k,E} - \bm{w}_{k}}\|^2] + \mathbb{E}\Big[\mathbb{E}_{Q_D}\Big[\Big\|\frac{1}{n}\sum_{i \in [n]}\Big\{Q_{D}\Big({\bm{w}_{k,E}^{(i)} - \bm{w}_{k}}\Big) - \Big({\bm{w}_{k,E}^{(i)} - \bm{w}_{k}}\Big)\Big\}\Big\|^2\Big]\Big]
    \\
    \nonumber
    & \leq \mathbb{E}\Big[\Big\|\sum_{\tau=0}^{E-1}\eta \overline{\bm{v}}_{k,\tau}\Big\|^2\Big] + \frac{q}{n^2}\sum_{i \in [n]}\mathbb{E}\Big[\Big\|{\bm{w}_{k,E}^{(i)} - \bm{w}_{k}}\Big\|^2\Big]
    \\
    \label{eq:oct13-8-1}
    & \leq \eta^2 E\sum_{\tau=0}^{E-1}\mathbb{E}[\|\overline{\bm{v}}_{k,\tau}\|^2] + \frac{q E \eta^2 }{n^2}\sum_{i \in [n]}\sum_{\tau=0}^{E-1}\mathbb{E}[\|\bm{v}_{k,\tau}^{(i)}\|^2]
\end{flalign}
Now, using (\ref{eq:oct13-7-1}) and (\ref{eq:oct13-8-1}) in (\ref{eq:oct13-6}) gives us:
\begin{flalign}
    \nonumber
    \text{(II)} & \leq \frac{L E \eta^2}{2}\Big\{\sum_{\tau=0}^{E-1}\mathbb{E}[\|\overline{\bm{v}}_{k,\tau}\|^2] + \Big(\frac{q}{n^2} + \frac{4(1+q)}{r(n-1)}\Big(1 - \frac{r}{n}\Big) \Big)\sum_{i \in [n]}\underbrace{\sum_{\tau=0}^{E-1}\mathbb{E}[\|\bm{v}_{k,\tau}^{(i)}\|^2}_\text{from \Cref{fl-lem-new1}} \Big\}
    \\
    \label{eq:oct13-9}
    & \leq \frac{L E \eta^2}{2}\Big\{\sum_{\tau=0}^{E-1}\mathbb{E}[\|\overline{\bm{v}}_{k,\tau}\|^2] + \Big(\frac{q}{n^2} + \frac{4(1+q)}{r(n-1)}\Big(1 - \frac{r}{n}\Big) \Big)16 E \sum_{i \in [n]}{\mathbb{E}[\|\nabla f_i(\bm{w}_{k})\|^2]}\Big\}.
\end{flalign}
Therefore, using (\ref{eq:oct13-5}) and (\ref{eq:oct13-9}) in (\ref{eq:oct13-1}), we get:
\begin{multline*}
    \mathbb{E}[f(\bm{w}_{k+1})] \leq \mathbb{E}[f(\bm{w}_{k})] 
    \\
    -\frac{\eta E}{2}\mathbb{E}[\|\nabla f(\bm{w}_{k})\|^2] -\frac{\eta}{2}(1 - \eta^2  L^2 E^2)\sum_{\tau=0}^{E-1}\mathbb{E}[\|\overline{\bm{v}}_{k,\tau}\|^2] + \frac{16 \eta^3 L^2 E^2 (\alpha E + 4)}{n^2}\sum_{i \in [n]}\mathbb{E}[\|\nabla f_i(\bm{w}_k)\|^2]
    \\
    + \frac{L E \eta^2}{2}\Big\{\sum_{\tau=0}^{E-1}\mathbb{E}[\|\overline{\bm{v}}_{k,\tau}\|^2] + \Big(\frac{q}{n^2} + \frac{4(1+q)}{r(n-1)}\Big(1 - \frac{r}{n}\Big) \Big)16 E \sum_{i \in [n]}{\mathbb{E}[\|\nabla f_i(\bm{w}_{k})\|^2]}\Big\}
\end{multline*}
\begin{multline}
    \implies \mathbb{E}[f(\bm{w}_{k+1})] \leq \mathbb{E}[f(\bm{w}_{k})] -\frac{\eta E}{2}\mathbb{E}[\|\nabla f(\bm{w}_{k})\|^2] - \frac{\eta}{2}(1 - \eta^2  L^2 E^2 - \eta L E )\sum_{\tau=0}^{E-1}\mathbb{E}[\|\overline{\bm{v}}_{k,\tau}\|^2]
    \\
    + 16\eta L E^2\Big\{\frac{\eta^2 L (\alpha E + 4)}{n^2} + \frac{\eta}{2}\Big(\frac{q}{n^2} + \frac{4(1+q)}{r(n-1)}\Big(1 - \frac{r}{n}\Big) \Big)\Big\}\sum_{i \in [n]}{\mathbb{E}[\|\nabla f_i(\bm{w}_{k})\|^2]}
\end{multline}
This completes the proof.
\end{proof}

\section{Convergence of \texttt{FedAvg} under \texorpdfstring{\Cref{as-het}}{Lg}}
\label{sec:fed_avg_conv}
Here, we provide a convergence result for \texttt{FedAvg} (\Cref{alg:fed-avg}) in the absence of the bounded client dissimilarity assumption (i.e. \cref{eq:bcd}) and instead assuming that \Cref{as-het} holds for \texttt{FedAvg}; we restate it below for \texttt{FedAvg}.

\begin{assumption}[\textbf{\Cref{as-het} for \texttt{FedAvg}}]
\label{as-het3}
Suppose all clients participate, i.e. $r=n$, in the $(k+1)^{\text{st}}$ round of \texttt{FedAvg} (\Cref{alg:fed-avg}). 
Let $\bm{w}_{k,\tau}^{(i)}$ be the $i^{\text{th}}$ client's local parameter at the $(\tau+1)^{\text{st}}$ local step of the $(k+1)^{\text{st}}$ round of \texttt{FedAvg}, for $i \in [n]$. Define $\widetilde{\bm{e}}_{k,\tau}^{(i)} \triangleq \nabla f_i(\bm{w}_{k,\tau}^{(i)}) - \nabla f_i(\overline{\bm{w}}_{k,\tau})$, where $\overline{\bm{w}}_{k,\tau} \triangleq \frac{1}{n}\sum_{i \in [n]} \bm{w}_{k,\tau}^{(i)}$.
Then for some $\alpha \ll n$: 
\[\mathbb{E}\Big[\Big\|\sum_{i \in [n]}\widetilde{\bm{e}}_{k, \tau}^{(i)}\Big\|^2\Big] \leq \alpha \sum_{i \in [n]} \mathbb{E}\Big[\Big\|\widetilde{\bm{e}}_{k,\tau}^{(i)}\Big\|^2\Big], \text{ } \forall \text{ } \tau \in [E].\]
\end{assumption}
Again, in the worst case, this assumption will always hold with $\alpha = n$. Also, as discussed after \Cref{as-het}, we expect $\alpha$ to increase as the degree of client heterogeneity increases. 

Before presenting the convergence result, we show empirical proof that \Cref{as-het3} holds. For this, we compute and plot $\alpha$ (as we did in \Cref{sec:het-asm-expt}) for 8 and 4 bit \texttt{FedAvg} on CIFAR-10 and FMNIST, respectively; the results are in \Cref{fig:het1}.

\begin{figure}[!htb]
\centering 
\subfloat[CIFAR-10]{
    \label{fig:het_c}
	\includegraphics[width=0.45\textwidth]{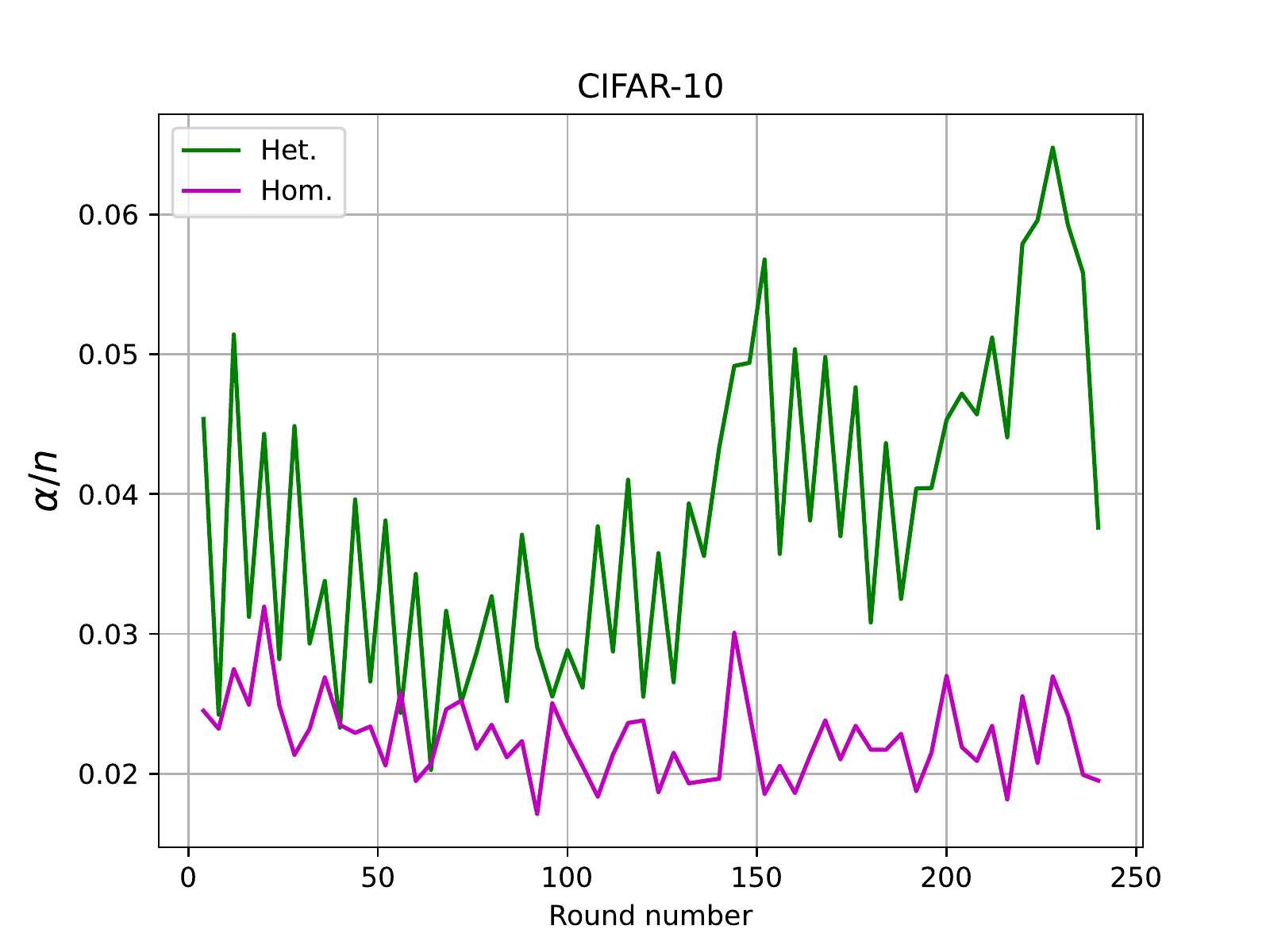}
	} 
\subfloat[FMNIST]{
    \label{fig:het_d}
	\includegraphics[width=0.45\textwidth]{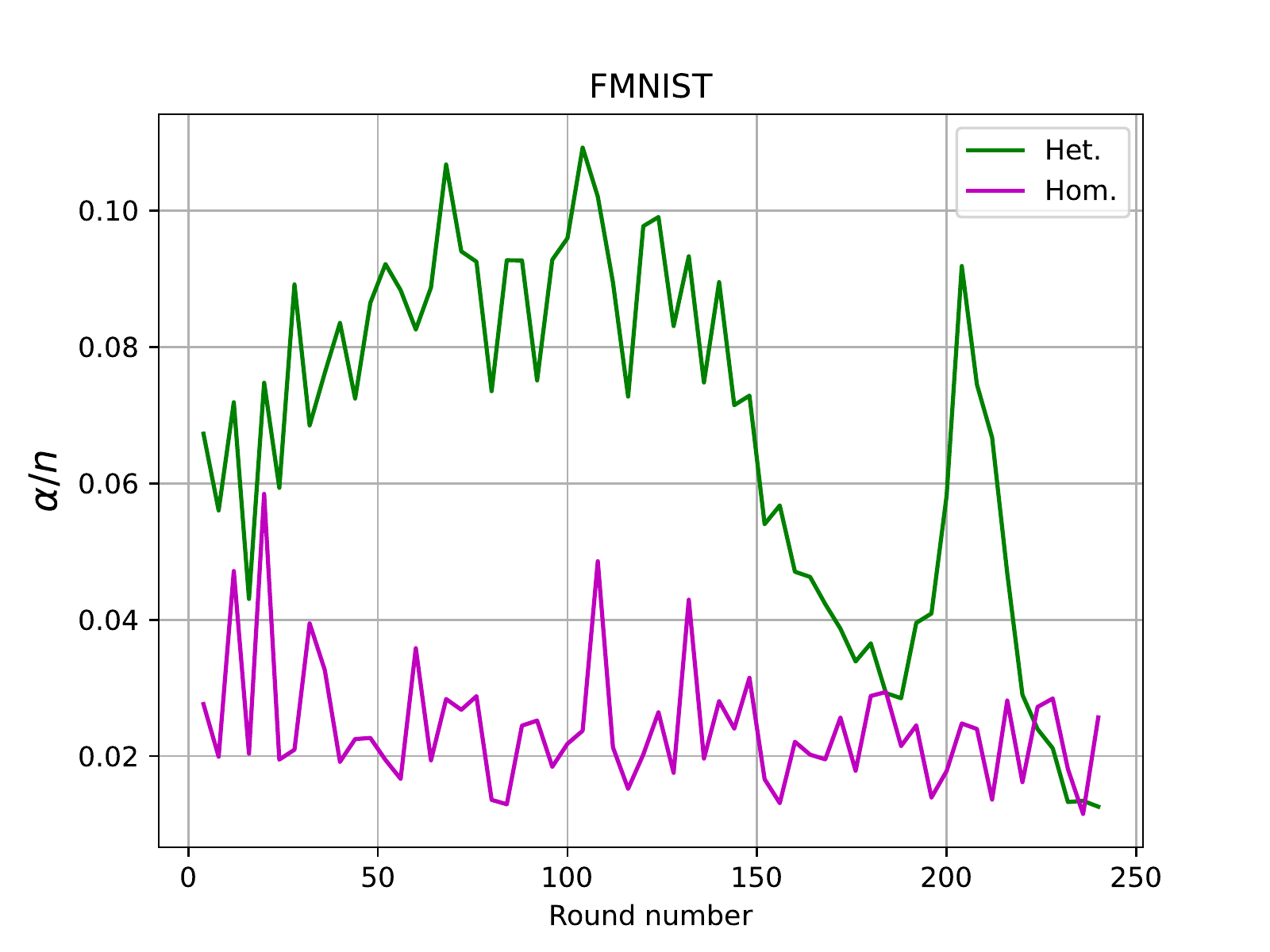}
	} 
\caption{Variation of $(\frac{\alpha}{n})$ over different rounds of $8$ and $4$ bit \texttt{FedAvg} for CIFAR-10 (Fig. \ref{fig:het_a}) and FMNIST (Fig. \ref{fig:het_b}) in the heterogeneous and homogeneous cases. In both cases, notice that $\alpha \ll n$ throughout training. {Also, as expected, observe that $(\frac{\alpha}{n})$ is higher for the heterogeneous case (except towards the end of training for FMNIST).}
}
\label{fig:het1}
\end{figure}

\begin{theorem}[\textbf{Smooth non-convex case for \texttt{FedAvg}}]
\label{thm-fedavg}
Suppose Assumptions \ref{as1}, \ref{as-may15} and \ref{as-het3} hold. Let $\sigma^2$ be the maximum variance of the local (client-level) stochastic gradients. In \texttt{FedAvg} (\Cref{alg:fed-avg}), set $\eta_{k} = \frac{1}{L E \sqrt{3 (\frac{(n-r)}{6r(n-1)} + \frac{4 \alpha}{9 n}) K }}$. 
Define a distribution $\mathbb{P}$ for $k \in \{0,\ldots,K-1\}$ such that $\mathbb{P}(k) = \frac{(1+\zeta)^{(K-1-k)}}{\sum_{k=0}^{K-1}(1+\zeta)^k}$ where $\zeta := \eta^2 L^2 E^2 \Big(\frac{(n-r)}{6r(n-1)} + \frac{4 \alpha}{9 n} \Big)$. Sample $k^{*}$ from $\mathbb{P}$. Then for $K \geq \frac{4}{3 (\frac{(n-r)}{6r(n-1)} + \frac{4 \alpha}{9 n})}$:
\begin{multline*}
    \mathbb{E}[\|\nabla f(\bm{w}_{k^{*}})\|^2] \leq \frac{3 L f(\bm{w}_0)}{\sqrt{K}} \sqrt{3\Big(\frac{(n-r)}{6r(n-1)} + \frac{4 \alpha}{9 n}\Big)} + \frac{1}{\sqrt{3 (\frac{(n-r)}{6r(n-1)} + \frac{4 \alpha}{9 n}) K }} \Big(\frac{1}{r E} + \frac{(n-r)}{3r(n-1)}\Big)\sigma^2
    \\
    + \frac{1}{{3 (\frac{n(n-r)}{6r(n-1)} + \frac{4 \alpha}{9}) K}}\Big(\frac{1}{E} + \frac{8\alpha}{9}\Big)\sigma^2.
\end{multline*}
So \texttt{FedAvg} needs $K = \mathcal{O}(\frac{1}{r \epsilon^2})$ rounds of communication to achieve $\mathbb{E}[\|\nabla f(\bm{w}_{k^{*}})\|^2] \leq \epsilon$ where $\epsilon < \mathcal{O}(\frac{1}{r})$.
\end{theorem}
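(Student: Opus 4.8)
The plan is to follow the same skeleton as the proof of \Cref{fl-thm3} for \texttt{FedLOMO}, the two structural differences being that \texttt{FedAvg} uses \emph{stochastic} local gradients with no variance reduction (and no full gradient at $\tau=0$), so the per-step noise $\sigma^2$ will persist, and that $q=0$ since there is no quantization. First I would establish a one-round descent inequality analogous to \Cref{oct-13-lem1}. Starting from $L$-smoothness of $f$ applied to the server update $\bm{w}_{k+1} = \bm{w}_k - \frac{1}{r}\sum_{i \in \mathcal{S}_k}(\bm{w}_k - \bm{w}_{k,E}^{(i)})$, I split the increment into an inner-product term $(\mathrm{I})$ and a squared-norm term $(\mathrm{II})$.

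For $(\mathrm{I})$, taking expectation over the uniform sampling $\mathcal{S}_k$ replaces the partial average by the full average $\frac{1}{n}\sum_{i\in[n]}(\bm{w}_{k,E}^{(i)} - \bm{w}_k) = -\eta\sum_{\tau=0}^{E-1}\overline{\bm{v}}_{k,\tau}$, after which the polarization identity $\langle a,b\rangle = \frac12(\|a\|^2+\|b\|^2-\|a-b\|^2)$ produces the dominant $-\frac{\eta E}{2}\mathbb{E}[\|\nabla f(\bm{w}_k)\|^2]$ term together with a drift term $\frac{\eta}{2}\sum_\tau \mathbb{E}[\|\nabla f(\bm{w}_k) - \overline{\bm{v}}_{k,\tau}\|^2]$. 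Bounding the drift is the piece that differs from \Cref{lem1-may11}/\Cref{fl-lem2}: splitting $\nabla f(\bm{w}_k) - \overline{\bm{v}}_{k,\tau}$ through $\nabla f(\overline{\bm{w}}_{k,\tau})$ and using $L$-smoothness handles the $\|\bm{w}_k - \overline{\bm{w}}_{k,\tau}\|^2$ part, while the remainder $\mathbb{E}[\|\nabla f(\overline{\bm{w}}_{k,\tau}) - \overline{\bm{v}}_{k,\tau}\|^2]$ must be decomposed, exactly as in \Cref{fl-lem2}, into a zero-mean noise term $\frac{1}{n^2}\sum_i\mathbb{E}[\|\bm{e}_{k,\tau}^{(i)}\|^2]$ (whose cross terms vanish by independence of client batches, as in \Cref{fl-lem0}) and a heterogeneity term $\frac{\alpha}{n^2}\sum_i\mathbb{E}[\|\widetilde{\bm{e}}_{k,\tau}^{(i)}\|^2]$ controlled by \Cref{as-het3}. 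Here $\mathbb{E}[\|\bm{e}_{k,\tau}^{(i)}\|^2]\le \sigma^2$ directly (no variance reduction), which is the source of the $\sigma^2$ terms, and $\mathbb{E}[\|\widetilde{\bm{e}}_{k,\tau}^{(i)}\|^2]\le L^2\mathbb{E}[\|\bm{w}_{k,\tau}^{(i)}-\overline{\bm{w}}_{k,\tau}\|^2]$ unrolls into accumulated local noise ($\sigma^2$) plus $\sum_i\|\nabla f_i(\bm{w}_k)\|^2$ contributions. For $(\mathrm{II})$ I would reuse the (III)/(IV) decomposition of \Cref{oct-13-lem1} verbatim with $q=0$, invoking Lemma 4 of \cite{reisizadeh2020fedpaq} for the client-sampling variance term, which yields the $\frac{(n-r)}{r(n-1)}$ factors.

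With the per-round inequality in hand, I would apply the smoothness-plus-non-negativity bound $\sum_{i\in[n]}\|\nabla f_i(\bm{w}_k)\|^2\le 2nLf(\bm{w}_k)$ from \Cref{lem1-oct20} to eliminate the $\sum_i\|\nabla f_i\|^2$ terms \emph{without} invoking \cref{eq:bcd}, collapsing them into a multiplicative factor $\zeta$ on $\mathbb{E}[f(\bm{w}_k)]$ and leaving an additive $\sigma^2$ contribution, giving
\[\mathbb{E}[f(\bm{w}_{k+1})] \le (1+\zeta)\mathbb{E}[f(\bm{w}_k)] - \frac{\eta E}{2}\mathbb{E}[\|\nabla f(\bm{w}_k)\|^2] + C\sigma^2\]
for the stated $\zeta = \eta^2 L^2 E^2(\frac{(n-r)}{6r(n-1)} + \frac{4\alpha}{9n})$. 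Unfolding this recursion from $0$ to $K-1$ and dividing by $\sum_k(1+\zeta)^k$ introduces the geometric weighting $\mathbb{P}(k)\propto(1+\zeta)^{K-1-k}$, and rearranging isolates $\mathbb{E}_{k\sim\mathbb{P}}[\mathbb{E}[\|\nabla f(\bm{w}_k)\|^2]]$ with denominator $d(\eta)=\eta E K(1-\zeta K)$, exactly as in (\ref{thm3-eq5}). Substituting $\eta = \frac{1}{LE\sqrt{3(\frac{(n-r)}{6r(n-1)}+\frac{4\alpha}{9n})K}}$ and checking the constraint $\zeta K\le \frac14$ (which needs $K\ge\frac{4}{3(\ldots)}$) bounds $d(\eta)$ from below, after which collecting the $f(\bm{w}_0)$ term and the two groups of $\sigma^2$ terms ($\mathcal{O}(1/\sqrt K)$ and $\mathcal{O}(1/K)$) gives the claimed bound, whence $K=\mathcal{O}(1/(r\epsilon^2))$ once $\epsilon<\mathcal{O}(1/r)$ forces the $\sqrt K$ term to dominate.

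The main obstacle will be the careful separation of noise sources in the drift analysis of $(\mathrm{I})$. In the variance-reduced algorithms the error terms $\bm{e}_{k,\tau}^{(i)}$ telescope and are ultimately controlled by $\|\nabla f_i(\bm{w}_k)\|^2$ alone (\Cref{fl-lem-new1,fl-lem-new2}), so no $\sigma^2$ survives; for vanilla \texttt{FedAvg} the per-step noise neither vanishes nor telescopes, and one must track which parts of the accumulated local noise scale as $\sigma^2/n$ (averaging of independent noise), as $\sigma^2$ (per-client accumulation over $E$ steps), and as $\alpha\sigma^2$ (heterogeneity under \Cref{as-het3}), while folding the gradient-dependent remainder into the $\zeta f(\bm{w}_k)$ recursion. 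Matching the exact constants $\frac{(n-r)}{6r(n-1)}$, $\frac{4\alpha}{9n}$, $\frac{1}{rE}$, and $\frac{8\alpha}{9}$ is delicate bookkeeping, but conceptually it is this source-by-source accounting of the stochastic gradient variance that is the crux of the argument.
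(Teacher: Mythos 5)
Your overall skeleton matches the paper's proof exactly: a per-round descent lemma from smoothness plus the polarization identity (\Cref{sep26-lem3}), the drift controlled via \Cref{as-het3} and an unrolling lemma for local iterates (\Cref{sept26-lem2}, \Cref{sept26-lem1}), the $\sum_i\|\nabla f_i\|^2$ terms absorbed through \Cref{lem1-oct20}, and the same $(1+\zeta)$ recursion, geometric sampling distribution, and learning-rate choice. However, there is a genuine gap in the step you yourself identify as the crux: the drift analysis. You apply the polarization identity to the \emph{stochastic} average $\overline{\bm{v}}_{k,\tau}$ and then, following \Cref{fl-lem2}, split $\mathbb{E}[\|\nabla f(\overline{\bm{w}}_{k,\tau})-\overline{\bm{v}}_{k,\tau}\|^2]$ into a per-step noise piece $\frac{1}{n^2}\sum_i\mathbb{E}[\|\bm{e}_{k,\tau}^{(i)}\|^2]\le \sigma^2/n$ plus a heterogeneity piece, keeping the noise piece as a surviving additive term (\enquote{the source of the $\sigma^2$ terms}). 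That term is multiplied by $\frac{\eta}{2}$ and summed over $\tau$, so it is \emph{first order} in $\eta$: on the order of $\eta E \sigma^2/n$ per round. After unrolling the recursion, a per-round additive term $X$ contributes $\frac{2X}{\eta E(1-\zeta K)}$ to the final bound, so this piece yields a $\Theta(\sigma^2/n)$ floor that is \emph{independent of} $K$. Every term in the theorem's bound decays in $K$; a $K$-independent floor is strictly weaker and breaks the claim that $K=\mathcal{O}(1/(r\epsilon^2))$ suffices for arbitrary $\epsilon$ (it fails once $\epsilon \lesssim \sigma^2/n$). This lossy split is harmless in \Cref{fl-lem2} only because variance reduction makes $\mathbb{E}[\|\bm{e}_{k,\tau}^{(i)}\|^2]$ itself of order $\eta^2 L^2\|\nabla f_i(\bm{w}_k)\|^2$ there; for vanilla \texttt{FedAvg} it is $\sigma^2$ and cannot simply be carried along.

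The paper avoids this by invoking unbiasedness \emph{before} the polarization identity: in \Cref{sep26-lem3}, $\mathbb{E}[\langle\nabla f(\bm{w}_k),\widehat{\bm{u}}_{k,\tau}\rangle]=\mathbb{E}[\langle\nabla f(\bm{w}_k),\bm{u}_{k,\tau}\rangle]$ with $\bm{u}_{k,\tau}=\frac{1}{n}\sum_{i}\nabla f_i(\bm{w}_{k,\tau}^{(i)})$ the average of \emph{full} gradients at the local iterates, so the drift $\|\nabla f(\bm{w}_k)-\bm{u}_{k,\tau}\|^2$ carries no first-order noise; $\sigma^2$ then enters only at order $\eta^2$ or higher (through $\|\bm{w}_k-\overline{\bm{w}}_{k,\tau}\|^2$, the unrolled client deviations, and the client-sampling variance), which is exactly what produces the $\mathcal{O}(1/\sqrt{K})$ and $\mathcal{O}(1/K)$ noise terms in the statement. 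Your route can be repaired without changing its shape by using the martingale orthogonality $\mathbb{E}[\|\nabla f(\bm{w}_k)-\overline{\bm{v}}_{k,\tau}\|^2]=\mathbb{E}[\|\nabla f(\bm{w}_k)-\bm{u}_{k,\tau}\|^2]+\frac{1}{n^2}\sum_i\mathbb{E}[\|\bm{e}_{k,\tau}^{(i)}\|^2]$ and noting that the identical noise appears inside $-\frac{\eta}{2}\sum_\tau\mathbb{E}[\|\overline{\bm{v}}_{k,\tau}\|^2]$ with opposite sign, so it cancels exactly; but the factor-of-two triangle-inequality split of \Cref{fl-lem2} that you propose to reuse \enquote{exactly} forecloses this cancellation. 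Two smaller points: the paper computes the client-sampling second moment exactly (\cref{eq:feb28-4-0}) rather than via Lemma 4 of \cite{reisizadeh2020fedpaq}, which with $q=0$ would inflate the $\frac{(n-r)}{6r(n-1)}$-type constants by a factor of $4$; and with the prescribed $\eta$ one has $\zeta K=\frac{1}{3}$ identically, the condition $K\ge\frac{4}{3(\frac{(n-r)}{6r(n-1)}+\frac{4\alpha}{9n})}$ coming from $\eta L E\le\frac{1}{2}$, not from $\zeta K\le\frac{1}{4}$.
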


Thus, we recover the same complexity for \texttt{FedAvg}/Local SGD (which is basically \texttt{FedAvg} with full-device participation) as \cite{karimireddy2019scaffold,koloskova2020unified,wang2019slowmo} -- but without the bounded client dissimilarity assumption.

\begin{proof}
Using \Cref{sep26-lem3}, for $\eta_k L E \leq \frac{1}{2}$, we can bound the per-round progress as:
\begin{multline}
    \label{eq:sept26-18}
    \mathbb{E}[f(\bm{w}_{k+1})] 
    \leq \mathbb{E}[f(\bm{w}_k)]
    - \frac{\eta_k E}{2} \mathbb{E}[\|\nabla f(\bm{w}_k)\|^2] + \eta_k^2 L E^2 \Big(\frac{(n-r)}{6r(n-1)} + \underbrace{\frac{8 \alpha \eta_k L E}{9 n}}_{\leq \frac{4 \alpha}{9 n}}\Big)\Big(\frac{1}{n}\sum_{i \in [n]} \mathbb{E}[\|\nabla {f}_i(\bm{w}_{k})\|^2]\Big) 
    \\
    + \frac{\eta_k^2 L E}{2} \Big(\frac{\eta_k L E}{n}\Big(1 + \frac{8\alpha E}{9}\Big) + \frac{1}{r} + \frac{(n-r)E}{3r(n-1)}\Big)\sigma^2.
\end{multline}
Now applying our earlier trick of using the $L$-smoothness and non-negativity of the $f_i$'s, we get:
\[\sum_{i \in [n]} \mathbb{E}[\|\nabla f_i(\bm{w}_k)\|^2 \leq \sum_{i \in [n]} 2L(\mathbb{E}[f_i(\bm{w}_k)] - f_i^{*}) \leq 2n L \mathbb{E}[f(\bm{w}_k)] - 2L \sum_{i \in [n]} f_i^{*} \leq 2n L \mathbb{E}[f(\bm{w}_k)].\]
Putting this in \cref{eq:sept26-18}, we get for a constant learning rate of $\eta_k = \eta$:
\begin{multline}
    \label{eq:sept26-18-1}
    \mathbb{E}[f(\bm{w}_{k+1})] 
    \leq 
    \Big(1 + \eta^2 L^2 E^2 \Big(\frac{(n-r)}{6r(n-1)} + \frac{4 \alpha}{9 n} \Big)\Big) \mathbb{E}[f(\bm{w}_k)]
    - \frac{\eta E}{2} \mathbb{E}[\|\nabla f(\bm{w}_k)\|^2]
    \\
    + \frac{\eta^2 L E}{2} \Big(\frac{\eta L E}{n}\Big(1 + \frac{8\alpha E}{9}\Big) + \frac{1}{r} + \frac{(n-r)E}{3r(n-1)}\Big)\sigma^2.
\end{multline}
For ease of notation, define $\zeta := \eta^2 L^2 E^2 \Big(\frac{(n-r)}{6r(n-1)} + \frac{4 \alpha}{9 n} \Big)$ and $\zeta_2 := \Big(\frac{\eta L E}{n}\Big(1 + \frac{8\alpha E}{9}\Big) + \frac{1}{r} + \frac{(n-r)E}{3r(n-1)}\Big)$. Then, unfolding the recursion of \cref{eq:sept26-18-1} from  $k=0$ through to $k=K-1$, we get:
\begin{multline}
    \label{eq:sept26-19}
    \mathbb{E}[f(\bm{w}_{K})] 
    \leq 
    (1 + \zeta)^K f(\bm{w}_0)
    - \frac{\eta E}{2} \sum_{k=0}^{K-1}(1+\zeta)^{(K-1-k)} \mathbb{E}[\|\nabla f(\bm{w}_k)\|^2]
    + \frac{\eta^2 L E}{2} \zeta_2 \sigma^2 \sum_{k=0}^{K-1}(1+\zeta)^{(K-1-k)}.
\end{multline}
Let us define $p_k := \frac{(1+\zeta)^{(K-1-k)}}{\sum_{k'=0}^{K-1}(1+\zeta)^{(K-1-k')}}$. Then, re-arranging \cref{eq:sept26-19} and using the fact that $\mathbb{E}[f(\bm{w}_{K})] \geq 0$, we get:
\begin{flalign}
    \sum_{k=0}^{K-1}p_k \mathbb{E}[\|\nabla f(\bm{w}_k)\|^2] & \leq \frac{2 (1 + \zeta)^K f(\bm{w}_0)}{\eta E \sum_{k'=0}^{K-1}(1+\zeta)^{k'}} + {\eta L}\zeta_2 \sigma^2 
    \\
    \label{eq:sep27-1}
    & = \frac{2 \zeta f(\bm{w}_0)}{\eta E
    (1 - (1+\zeta)^{-K})} + \eta L E \Big(\frac{\eta L}{n}\Big(1 + \frac{8\alpha E}{9}\Big) + \frac{1}{r E} + \frac{(n-r)}{3r(n-1)}\Big)\sigma^2,
\end{flalign}
where the last step follows by using the fact that $\sum_{k'=0}^{K-1}(1+\zeta)^{k'} = \frac{(1+\zeta)^{K} - 1}{\zeta}$ and plugging in the value of $\zeta_2$. Now as we did in the proof of \Cref{fl-thm3}:
\begin{equation*}
    (1+\zeta)^{-K} < 1 - \zeta K + {\zeta^2}\frac{K(K+1)}{2} < 1 - \zeta K + {\zeta^2}K^2 \implies 1 - (1+\zeta)^{-K} > \zeta K (1 - \zeta K).
\end{equation*}
Plugging this in \cref{eq:sep27-1}, we have for $\zeta K < 1$:
\begin{flalign}
    \label{eq:sep27-2}
    \sum_{k=0}^{K-1}p_k \mathbb{E}[\|\nabla f(\bm{w}_k)\|^2] & \leq \frac{2 f(\bm{w}_0)}{\eta E K (1 - \zeta K)} + \eta L E \Big(\frac{\eta L}{n}\Big(1 + \frac{8\alpha E}{9}\Big) + \frac{1}{r E} + \frac{(n-r)}{3r(n-1)}\Big)\sigma^2.
\end{flalign}
Let us pick $\eta = \frac{1}{L E \sqrt{3 (\frac{(n-r)}{6r(n-1)} + \frac{4 \alpha}{9 n}) K }}$. With this choice, we have $\zeta K = \frac{1}{3} < 1$.
Note that we also need to have $\eta L E \leq \frac{1}{2}$; this happens for $K \geq \frac{4}{3 (\frac{(n-r)}{6r(n-1)} + \frac{4 \alpha}{9 n})}$. Putting $\eta = \frac{1}{L E \sqrt{3 (\frac{(n-r)}{6r(n-1)} + \frac{4 \alpha}{9 n}) K }}$ in \cref{eq:sep27-2}, we get:
\begin{multline}
    \sum_{k=0}^{K-1}p_k \mathbb{E}[\|\nabla f(\bm{w}_k)\|^2] \leq \frac{3 L f(\bm{w}_0)}{\sqrt{K}} \sqrt{3\Big(\frac{(n-r)}{6r(n-1)} + \frac{4 \alpha}{9 n}\Big)} + \frac{1}{\sqrt{3 (\frac{(n-r)}{6r(n-1)} + \frac{4 \alpha}{9 n}) K }} \Big(\frac{1}{r E} + \frac{(n-r)}{3r(n-1)}\Big)\sigma^2
    \\
    + \frac{1}{{3 (\frac{n(n-r)}{6r(n-1)} + \frac{4 \alpha}{9}) K}}\Big(\frac{1}{E} + \frac{8\alpha}{9}\Big)\sigma^2.
\end{multline}
This finishes the proof.
\end{proof}

\begin{lemma}
\label{sep26-lem3}
For $\eta_k L E \leq \frac{1}{2}$, we have:
\begin{multline*}
    \mathbb{E}[f(\bm{w}_{k+1})] 
    \leq \mathbb{E}[f(\bm{w}_k)]
    - \frac{\eta_k E}{2} \mathbb{E}[\|\nabla f(\bm{w}_k)\|^2] + \eta_k^2 L E^2 \Big(\frac{(n-r)}{6r(n-1)} + \frac{8 \alpha \eta_k L E}{9 n}\Big)\Big(\frac{1}{n}\sum_{i \in [n]} \mathbb{E}[\|\nabla {f}_i(\bm{w}_{k})\|^2]\Big) 
    \\
    + \frac{\eta_k^2 L E}{2} \Big(\frac{\eta_k L E}{n}\Big(1 + \frac{8\alpha E}{9}\Big) + \frac{1}{r} + \frac{(n-r)E}{3r(n-1)}\Big)\sigma^2.
\end{multline*}
\end{lemma}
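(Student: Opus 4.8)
The plan is to follow the same descent-lemma skeleton used for \texttt{FedLOMO} in \Cref{oct-13-lem1}, modifying every place where that proof exploited variance reduction so as to correctly account for the plain stochastic-gradient noise $\sigma^2$ present in \texttt{FedAvg} (here $\bm{v}_{k,\tau}^{(i)} = \widetilde{\nabla} f_i(\bm{w}_{k,\tau}^{(i)};\mathcal{B}_{k,\tau}^{(i)})$ and there is no quantization, i.e. $q=0$). First I would invoke $L$-smoothness of $f$ on the server update $\bm{w}_{k+1} - \bm{w}_k = -\frac{1}{r}\sum_{i\in\mathcal{S}_k}(\bm{w}_k - \bm{w}_{k,E}^{(i)})$ to write $\mathbb{E}[f(\bm{w}_{k+1})] \leq \mathbb{E}[f(\bm{w}_k)] + \text{(I)} + \text{(II)}$, where $\text{(I)}$ is the inner-product term and $\text{(II)} = \frac{L}{2}\mathbb{E}[\|\bm{w}_{k+1}-\bm{w}_k\|^2]$. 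Since $\mathcal{S}_k$ is a uniformly random size-$r$ subset, taking expectation over it turns $\text{(I)}$ into $\mathbb{E}[\langle \nabla f(\bm{w}_k), \frac{1}{n}\sum_{i\in[n]}(\bm{w}_{k,E}^{(i)}-\bm{w}_k)\rangle]$, which is precisely the quantity $\text{(III*)}$ handled in \Cref{lem1-may11}.

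The term (I) I would process exactly as in \Cref{lem1-may11}, producing $-\frac{\eta_k E}{2}\mathbb{E}[\|\nabla f(\bm{w}_k)\|^2]$, a negative multiple of $\sum_\tau \mathbb{E}[\|\overline{\bm{v}}_{k,\tau}\|^2]$, and a residual $\eta_k\sum_\tau\mathbb{E}[\|\overline{\bm{v}}_{k,\tau} - \nabla f(\overline{\bm{w}}_{k,\tau})\|^2]$. Here lies the one genuine departure from the \texttt{FedLOMO} argument: in the \texttt{FedAvg} analog of \Cref{fl-lem2}, the splitting $\overline{\bm{v}}_{k,\tau} - \nabla f(\overline{\bm{w}}_{k,\tau}) = \frac{1}{n}\sum_i(\bm{e}_{k,\tau}^{(i)} + \widetilde{\bm{e}}_{k,\tau}^{(i)})$ now has a stochastic part $\frac{1}{n}\sum_i \bm{e}_{k,\tau}^{(i)}$ whose cross terms vanish by mean-zeroness (\Cref{fl-lem0}) and independence of $\mathcal{B}_{k,\tau}^{(i)}$ across clients, so that $\frac{2}{n^2}\sum_i\mathbb{E}[\|\bm{e}_{k,\tau}^{(i)}\|^2] \leq \frac{2\sigma^2}{n}$ instead of being controlled by iterate movements. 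The heterogeneity part is bounded as before via \Cref{as-het3}, replacing $\|\sum_i\widetilde{\bm{e}}_{k,\tau}^{(i)}\|^2$ by $\alpha\sum_i\|\widetilde{\bm{e}}_{k,\tau}^{(i)}\|^2 \leq \alpha L^2\sum_i\mathbb{E}[\|\bm{w}_{k,\tau}^{(i)}-\overline{\bm{w}}_{k,\tau}\|^2]$; because the local trajectories are now noisy, bounding these movements yields both a $\|\nabla f_i(\bm{w}_k)\|^2$ contribution (the $\alpha$-piece of term three) and an $\alpha\sigma^2$ contribution (the $\frac{8\alpha E}{9}$ pieces of terms three and four).

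The squared term (II) I would decompose, as in \Cref{oct-13-lem1}, into the full-average mean part $\text{(III)} = \mathbb{E}[\|\frac{1}{n}\sum_{i\in[n]}(\bm{w}_{k,E}^{(i)}-\bm{w}_k)\|^2]$ and the without-replacement sampling variance $\text{(IV)}$. Setting $q=0$, I would bound $\text{(IV)}$ by the sampling-variance estimate of Lemma~4 of \cite{reisizadeh2020fedpaq}, which supplies the $\frac{(n-r)}{r(n-1)}$ prefactors, while $\text{(III)}$ expands into a $\sum_\tau \mathbb{E}[\|\overline{\bm{v}}_{k,\tau}\|^2]$ contribution (later cancelled against the negative term from (I)) plus the per-step averaged noise $\sigma^2/n$. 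To close all of these onto $\frac{1}{n}\sum_i\|\nabla f_i(\bm{w}_k)\|^2$ and $\sigma^2$, I would establish \texttt{FedAvg} analogs of the drift bounds \Cref{fl-lem-new1,fl-lem-new2,fl-lem3}: a bounded-client-drift recursion showing $\sum_\tau\mathbb{E}[\|\bm{w}_{k,\tau}^{(i)}-\bm{w}_k\|^2]$ and $\sum_\tau\mathbb{E}[\|\bm{v}_{k,\tau}^{(i)}\|^2]$ are controlled by $E$ times $\|\nabla f_i(\bm{w}_k)\|^2 + \sigma^2$. Finally I would collect the three variance sources — partial participation, heterogeneity, and stochastic noise — and use $\eta_k L E \leq \frac{1}{2}$ to discard the negative $\sum_\tau\mathbb{E}[\|\overline{\bm{v}}_{k,\tau}\|^2]$ terms and control the higher-order $\eta_k L E$ factors, arriving at the stated constants.

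The main obstacle will be this last closing step. Unlike in the variance-reduced \texttt{FedLOMO}/\texttt{FedGLOMO} proofs, where $\mathbb{E}[\bm{e}_{k,\tau}^{(i)}]=0$ together with the descent estimate of \Cref{fl-lem-new1} forces the error norms to collapse cleanly into $\|\nabla f_i(\bm{w}_k)\|^2$, here the per-step noise $\sigma^2$ does not telescope away but accumulates across the $E$ inner steps and feeds back into the drift recursion. Preventing this accumulation from compounding — and extracting exactly the coefficients $\frac{1}{6}$, $\frac{1}{3}$, $\frac{8}{9}$ appearing in the statement — requires carefully solving the coupled recursion for $\mathbb{E}[\|\bm{w}_{k,\tau}^{(i)}-\bm{w}_k\|^2]$ under the restriction $\eta_k L E \leq \frac{1}{2}$, which is the delicate bookkeeping at the heart of the argument.
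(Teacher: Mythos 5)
Your plan is essentially the paper's own proof. The paper's argument for \Cref{sep26-lem3} is precisely the noise-aware adaptation of the \texttt{FedLOMO} descent lemma that you describe: $L$-smoothness of $f$ applied to the server update, expectation over $\mathcal{S}_k$, the polarization identity $\langle \bm{a},\bm{b}\rangle = \frac{1}{2}(\|\bm{a}\|^2+\|\bm{b}\|^2-\|\bm{a}-\bm{b}\|^2)$ on the inner-product term, a split of the residual at $\nabla f(\overline{\bm{w}}_{k,\tau})$ handled by smoothness plus \Cref{as-het3} (this is \Cref{sept26-lem2}), and a noisy drift recursion for $\sum_{t}\mathbb{E}[\|\nabla f_i(\bm{w}^{(i)}_{k,t})\|^2]$ solved under $\eta_k L E \leq \frac{1}{2}$ (this is \Cref{sept26-lem1}); the $\sigma^2$-accumulation you flag is tracked exactly as you predict, via bounds of the form $\mathbb{E}\|\sum_{t}\widehat{\bm{u}}_{k,t}\|^2 \leq \tau\sum_{t}\mathbb{E}\|\bm{u}_{k,t}\|^2 + \tau\sigma^2/n$ and their per-client analogues. (Whether you polarize against the stochastic averages $\overline{\bm{v}}_{k,\tau}$ or, as the paper does, first take expectation over the batches and polarize against the true-gradient averages $\bm{u}_{k,\tau}$ is immaterial: the extra noise appearing in your residual cancels exactly against the extra noise in your negative quadratic term, since the noise is conditionally mean-zero.)

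The one place you deviate -- and it matters for the stated constants -- is the partial-participation term. You propose reusing Lemma~4 of \cite{reisizadeh2020fedpaq} with $q=0$, as in \Cref{oct-13-lem1}; that bound carries a factor of $4$, and following it through yields a lemma of identical structure but with $\frac{4(n-r)}{r(n-1)}$ wherever the statement has $\frac{(n-r)}{r(n-1)}$, and it also fails to produce the clean $\frac{1}{r}\sigma^2$ term. The paper instead uses the exact without-replacement second-moment identity
\begin{equation*}
\mathbb{E}_{\mathcal{S}_k}\Big[\Big\|\frac{1}{r}\sum_{i\in\mathcal{S}_k}X_i\Big\|^2\Big] \;=\; \frac{n(r-1)}{r(n-1)}\Big\|\frac{1}{n}\sum_{i\in[n]}X_i\Big\|^2 \;+\; \frac{(n-r)}{r(n-1)}\cdot\frac{1}{n}\sum_{i\in[n]}\|X_i\|^2,
\end{equation*}
applied to $X_i = \sum_{\tau=0}^{E-1}\widehat{\bm{u}}^{(i)}_{k,\tau}$; this is what makes the two noise contributions recombine as $\frac{(r-1)}{r(n-1)}\sigma^2 + \frac{(n-r)}{r(n-1)}\sigma^2 = \frac{\sigma^2}{r}$ and avoids the spurious factor of $4$ in the $\frac{(n-r)}{r(n-1)}$ terms. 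So your route proves a correct bound of the same form with somewhat worse constants; replace the lossy sampling-variance step with the exact identity and you recover the paper's argument.
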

\begin{proof}
Define
\[
\widehat{\bm{u}}_{k,\tau}^{(i)} := \nabla \widetilde{f}_i(\bm{w}^{(i)}_{k, \tau}; \mathcal{B}^{(i)}_{k, \tau}) 
\text{, }
\widehat{\bm{u}}_{k,\tau} := \frac{1}{n}\sum_{i \in [n]} \widehat{\bm{u}}_{k,\tau}^{(i)}
\text{, }
\bm{u}_{k,\tau} := \frac{1}{n}\sum_{i \in [n]}\nabla f_i(\bm{w}^{(i)}_{k, \tau}) \text{, } \]
\[\overline{\bm{w}}_{k,\tau} := \frac{1}{n}\sum_{i \in [n]}\bm{w}^{(i)}_{k, \tau} \text{ and } \widetilde{\bm{e}}_{k,\tau}^{(i)} = \nabla f_i(\bm{w}^{(i)}_{k, \tau}) - \nabla f_i(\overline{\bm{w}}_{k,\tau}).\]
Then:
\begin{equation}
    \label{eq:feb28-1}
    \bm{w}_{k+1} = \bm{w}_k - \eta_k \sum_{\tau=0}^{E-1}\Big(\frac{1}{r}\sum_{i \in \mathcal{S}_k}\widehat{\bm{u}}_{k,\tau}^{(i)}\Big).
\end{equation}
\begin{equation}
    \label{eq:feb28-1-0}
    \overline{\bm{w}}_{k,\tau} = \bm{w}_k - \eta_k \sum_{t=0}^{\tau-1}\widehat{\bm{u}}_{k,t}.
\end{equation}
\begin{equation}
    \label{eq:feb28-2}
    \mathbb{E}_{\{\mathcal{B}^{(i)}_{k, \tau}\}_{i=1}^n}[\widehat{\bm{u}}_{k,\tau}] = \bm{u}_{k,\tau}.
\end{equation}
\begin{equation}
    \label{eq:feb28-3}
    \mathbb{E}\Big[\Big\|\sum_{t=0}^{\tau-1}\widehat{\bm{u}}_{k,t}\Big\|^2\Big] \leq \tau \sum_{t=0}^{\tau-1}\mathbb{E}[\|\bm{u}_{k,t}\|^2] + \frac{\tau \sigma^2}{n}.
\end{equation}
\begin{equation}
    \label{eq:feb28-3-1}
    \mathbb{E}\Big[\Big\|\sum_{t=0}^{\tau-1}\widehat{\bm{u}}_{k,t}^{(i)}\Big\|^2\Big] \leq \tau \sum_{t=0}^{\tau-1}\mathbb{E}[\|\nabla f_i(\bm{w}^{(i)}_{k, t})\|^2] + {\tau \sigma^2}.
\end{equation}
Recall that $\sigma^2$ is the maximum variance of the local (client-level) stochastic gradients.
In \cref{eq:feb28-3}, the expectation is w.r.t. $\{\mathcal{B}^{(i)}_{k, t}\}_{i=1, t=0}^{n, \tau-1}$ and it follows due to the independence of the noise in each local update of each client. 
Similarly, \cref{eq:feb28-3-1}, the expectation is w.r.t. $\{\mathcal{B}^{(i)}_{k, t}\}_{t=0}^{\tau-1}$ and it follows due to the independence of the noise in each local update.
\\
\\
Next, using the $L$-smoothness of $f$ and \cref{eq:feb28-1}, we get
\small
\begin{flalign}
    \label{eq:feb28-4-0-1}
    \mathbb{E}[f(\bm{w}_{k+1})] & \leq 
    \mathbb{E}[f(\bm{w}_k)] - \mathbb{E}\Big[ \Big\langle \nabla f(\bm{w}_k), \eta_k \sum_{\tau=0}^{E-1} \Big(\frac{1}{r}\sum_{i \in \mathcal{S}_k}\widehat{\bm{u}}_{k,\tau}^{(i)}\Big) \Big\rangle\Big] + \frac{L}{2}\mathbb{E}\Big[\Big\|\eta_k \sum_{\tau=0}^{E-1} \Big(\frac{1}{r}\sum_{i \in \mathcal{S}_k}\widehat{\bm{u}}_{k,\tau}^{(i)}\Big)\Big\|^2\Big]
    \\
    \label{eq:feb28-4-0}
    & = \mathbb{E}[f(\bm{w}_k)] - \mathbb{E}[ \langle \nabla f(\bm{w}_k), \sum_{\tau=0}^{E-1} \eta_k \widehat{\bm{u}}_{k,\tau} \rangle] + \frac{\eta_k^2 L}{2}\Big\{\frac{n(r-1)}{r(n-1)}\mathbb{E}[\|\sum_{\tau=0}^{E-1} \widehat{\bm{u}}_{k,\tau}\|^2] 
    \\
    \nonumber
    & + \frac{(n-r)}{r(n-1)}\Big(\frac{1}{n}\sum_{i \in [n]} \mathbb{E}[\|\sum_{\tau=0}^{E-1} \widehat{\bm{u}}_{k,\tau}^{(i)}\|^2]\Big)\Big\}
    \\
    \label{eq:feb28-4}
    & \leq \mathbb{E}[f(\bm{w}_k)] - \eta_k \mathbb{E}[\langle \nabla f(\bm{w}_k), 
    \sum_{\tau=0}^{E-1}
    {\bm{u}}_{k,\tau}\rangle] +
    \frac{\eta_k^2 L E}{2}\Big\{
    \frac{n(r-1)}{r(n-1)} \Big(\sum_{\tau=0}^{E-1}\mathbb{E}[\|{\bm{u}}_{k,\tau}\|^2] + \frac{\sigma^2}{n} \Big) 
    \\
    \nonumber
    & + \frac{(n-r)}{r(n-1)}
    \Big(\frac{1}{n}\sum_{i \in [n]} \sum_{\tau=0}^{E-1}\mathbb{E}[\|\nabla f_i(\bm{w}^{(i)}_{k, \tau})\|^2] + {\sigma^2}\Big)\Big\}
\end{flalign}
\normalsize
Note that \cref{eq:feb28-4-0} follows by taking expectation w.r.t. $\mathcal{S}_k$ in \cref{eq:feb28-4-0-1}, while \cref{eq:feb28-4} follows from \cref{eq:feb28-2}, \cref{eq:feb28-3} and \cref{eq:feb28-3-1}.
\\
\\
For any 2 vectors $\bm{a}$ and $\bm{b}$, we have that $\langle \bm{a}, \bm{b} \rangle = \frac{1}{2}(\|\bm{a}\|^2 + \|\bm{b}\|^2 - \|\bm{a} - \bm{b}\|^2)$. Using this:
\begin{flalign}
\label{eq:feb28-5}
\langle \nabla f(\bm{w}_k), \sum_{\tau=0}^{E-1} \bm{u}_{k,\tau} \rangle & = \sum_{\tau=0}^{E-1} \langle \nabla f(\bm{w}_k), \bm{u}_{k,\tau} \rangle = \frac{1}{2}\sum_{\tau=0}^{E-1}(\|\nabla f(\bm{w}_k)\|^2 + \|\bm{u}_{k,\tau} \|^2 - \|\nabla f(\bm{w}_k) - \bm{u}_{k,\tau}\|^2).
\end{flalign}
Putting this in \cref{eq:feb28-4}, we get:
\begin{multline}
    \label{eq:feb28-6}
    \mathbb{E}[f(\bm{w}_{k+1})] 
    \leq \mathbb{E}[f(\bm{w}_k)] - \frac{\eta_k E}{2} \mathbb{E}[\|\nabla f(\bm{w}_k)\|^2] - \frac{\eta_k}{2}
    \Big(1 - \eta_k L E \frac{n(r-1)}{r(n-1)}\Big)
    \sum_{\tau=0}^{E-1} \mathbb{E}[\|{\bm{u}}_{k,\tau}\|^2]
    \\
    + \frac{\eta_k}{2} \underbrace{\sum_{\tau=0}^{E-1} \mathbb{E}[\|\nabla f(\bm{w}_k) - \bm{u}_{k,\tau}\|^2]}_\text{(A)}
    + \frac{\eta_k^2 L E}{2 r} \sigma^2 + \frac{(n-r)}{r(n-1)} \frac{\eta_k^2 L E}{2}
    \underbrace{\Big(\frac{1}{n}\sum_{i \in [n]} \sum_{\tau=0}^{E-1}\mathbb{E}[\|\nabla f_i(\bm{w}^{(i)}_{k, \tau})\|^2]\Big)}_\text{(B)}.
\end{multline}
We upper bound (A) and (B) using \Cref{sept26-lem2} and \Cref{sept26-lem1}, respectively. Plugging in these bounds, we get:
\begin{multline}
    \label{eq:sep26-15}
    \mathbb{E}[f(\bm{w}_{k+1})] 
    \leq \mathbb{E}[f(\bm{w}_k)] - \frac{\eta_k E}{2} \mathbb{E}[\|\nabla f(\bm{w}_k)\|^2] - \frac{\eta_k}{2}
    \underbrace{\Big(1 - \eta_k L E \frac{n(r-1)}{r(n-1)} - \eta_k^2 L^2 E^2 \Big)}_\text{(C)}
    \sum_{\tau=0}^{E-1} \mathbb{E}[\|{\bm{u}}_{k,\tau}\|^2]
    \\
    + \eta_k^2 L E^2 \Big(\frac{(n-r)}{6r(n-1)} + \frac{8 \alpha \eta_k L E}{9 n}\Big)\Big(\frac{1}{n}\sum_{i \in [n]} \mathbb{E}[\|\nabla {f}_i(\bm{w}_{k})\|^2]\Big) + \frac{\eta_k^2 L E}{2} \Big(\frac{\eta_k L E}{n}\Big(1 + \frac{8\alpha E}{9}\Big) + \frac{1}{r} + \frac{(n-r)E}{3r(n-1)}\Big)\sigma^2,
\end{multline}
for $\eta_k L E \leq \frac{1}{2}$. Note that $\text{(C)} \geq 0$ for $\eta_k L E \leq \frac{1}{2}$. Thus, for $\eta_k L E \leq \frac{1}{2}$, we have:
\begin{multline}
    \label{eq:sep26-16}
    \mathbb{E}[f(\bm{w}_{k+1})] 
    \leq \mathbb{E}[f(\bm{w}_k)]
    - \frac{\eta_k E}{2} \mathbb{E}[\|\nabla f(\bm{w}_k)\|^2] + \eta_k^2 L E^2 \Big(\frac{(n-r)}{6r(n-1)} + \frac{8 \alpha \eta_k L E}{9 n}\Big)\Big(\frac{1}{n}\sum_{i \in [n]} \mathbb{E}[\|\nabla {f}_i(\bm{w}_{k})\|^2]\Big) 
    \\
    + \frac{\eta_k^2 L E}{2} \Big(\frac{\eta_k L E}{n}\Big(1 + \frac{8\alpha E}{9}\Big) + \frac{1}{r} + \frac{(n-r)E}{3r(n-1)}\Big)\sigma^2.
\end{multline}
\end{proof}

\begin{lemma}
\label{sept26-lem2}
For $\eta_k L E \leq \frac{1}{2}$:
\begin{equation*}
    \sum_{\tau=0}^{E-1} \mathbb{E}[\|\nabla f(\bm{w}_k) - \bm{u}_{k,\tau}\|^2] \leq \eta_k^2 L^2 E^2 \sum_{\tau=0}^{E-1}\mathbb{E}[\|\bm{u}_{k,\tau}\|^2] + \frac{16 \alpha \eta_k^2 L^2 E^3}{9 n^2} \sum_{i \in [n]} \mathbb{E}[\|\nabla {f}_i(\bm{w}_{k})\|^2] + \frac{\eta_k^2 L^2 E^2}{n}\Big(1 + \frac{8\alpha E}{9}\Big) \sigma^2.
\end{equation*}
\end{lemma}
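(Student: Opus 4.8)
The plan is to control $\nabla f(\bm{w}_k)-\bm{u}_{k,\tau}$ by splitting it into a \emph{global} drift (how far the averaged iterate $\overline{\bm{w}}_{k,\tau}$ has moved from the server point $\bm{w}_k$) and a \emph{local} dispersion (how much the individual clients disagree), the latter being exactly where \Cref{as-het3} enters. Since $\nabla f=\tfrac1n\sum_i\nabla f_i$ and $\bm{u}_{k,\tau}=\tfrac1n\sum_i\nabla f_i(\bm{w}^{(i)}_{k,\tau})$, adding and subtracting $\tfrac1n\sum_i\nabla f_i(\overline{\bm{w}}_{k,\tau})$ and recalling $\widetilde{\bm{e}}^{(i)}_{k,\tau}=\nabla f_i(\bm{w}^{(i)}_{k,\tau})-\nabla f_i(\overline{\bm{w}}_{k,\tau})$ yields
\[
\nabla f(\bm{w}_k)-\bm{u}_{k,\tau}
=\big(\nabla f(\bm{w}_k)-\nabla f(\overline{\bm{w}}_{k,\tau})\big)-\frac1n\sum_{i\in[n]}\widetilde{\bm{e}}^{(i)}_{k,\tau},
\]
so that $\|\nabla f(\bm{w}_k)-\bm{u}_{k,\tau}\|^2\le 2\|\nabla f(\bm{w}_k)-\nabla f(\overline{\bm{w}}_{k,\tau})\|^2+\tfrac2{n^2}\|\sum_i\widetilde{\bm{e}}^{(i)}_{k,\tau}\|^2$. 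I would bound the two pieces separately and then sum over $\tau$.

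For the global-drift piece, $L$-smoothness (\Cref{as1}) gives $\|\nabla f(\bm{w}_k)-\nabla f(\overline{\bm{w}}_{k,\tau})\|^2\le L^2\|\bm{w}_k-\overline{\bm{w}}_{k,\tau}\|^2$, and by \cref{eq:feb28-1-0} we have $\bm{w}_k-\overline{\bm{w}}_{k,\tau}=\eta_k\sum_{t=0}^{\tau-1}\widehat{\bm{u}}_{k,t}$; inserting \cref{eq:feb28-3} converts the squared norm into $\eta_k^2\big(\tau\sum_{t<\tau}\mathbb{E}\|\bm{u}_{k,t}\|^2+\tau\sigma^2/n\big)$. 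Summing over $\tau$ and using $\sum_{\tau}\tau\sum_{t<\tau}a_t\le\tfrac{E^2}{2}\sum_t a_t$ together with $\sum_\tau\tau\le\tfrac{E^2}{2}$ produces precisely the $\eta_k^2L^2E^2\sum_\tau\mathbb{E}\|\bm{u}_{k,\tau}\|^2$ term and the ``$1$'' part of $\tfrac{\eta_k^2L^2E^2}{n}(1+\tfrac{8\alpha E}{9})\sigma^2$. This step is routine bookkeeping.

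The local-dispersion piece is the crux. \Cref{as-het3} turns $\tfrac2{n^2}\mathbb{E}\|\sum_i\widetilde{\bm{e}}^{(i)}_{k,\tau}\|^2$ into $\tfrac{2\alpha}{n^2}\sum_i\mathbb{E}\|\widetilde{\bm{e}}^{(i)}_{k,\tau}\|^2$, and smoothness gives $\|\widetilde{\bm{e}}^{(i)}_{k,\tau}\|^2\le L^2\|\bm{w}^{(i)}_{k,\tau}-\overline{\bm{w}}_{k,\tau}\|^2\le L^2\|\bm{w}^{(i)}_{k,\tau}-\bm{w}_k\|^2$, the last step because $\overline{\bm{w}}_{k,\tau}$ is the mean of the $\bm{w}^{(i)}_{k,\tau}$. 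It then remains to bound the per-client drift $D^{(i)}_\tau:=\mathbb{E}\|\bm{w}^{(i)}_{k,\tau}-\bm{w}_k\|^2$ in terms of $\|\nabla f_i(\bm{w}_k)\|^2$ and $\sigma^2$, and this is where the real work lies. From $\bm{w}^{(i)}_{k,\tau}-\bm{w}_k=-\eta_k\sum_{t<\tau}\widehat{\bm{u}}^{(i)}_{k,t}$ and \cref{eq:feb28-3-1} one gets $D^{(i)}_\tau\le\eta_k^2\tau\sum_{t<\tau}\mathbb{E}\|\nabla f_i(\bm{w}^{(i)}_{k,t})\|^2+\eta_k^2\tau\sigma^2$, and the bound $\mathbb{E}\|\nabla f_i(\bm{w}^{(i)}_{k,t})\|^2\le 2\|\nabla f_i(\bm{w}_k)\|^2+2L^2D^{(i)}_t$ feeds the drift back into itself. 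The main obstacle is closing this recursion: after summing over $\tau$ the self-coupling coefficient is $2\eta_k^2L^2\cdot\tfrac{E^2}{2}=(\eta_k LE)^2\le\tfrac14$ exactly under the hypothesis $\eta_k LE\le\tfrac12$, so the $D$-term can be absorbed with a $\tfrac34$ contraction factor; carefully tracking $\sum_\tau\tau^2\le\tfrac{E^3}{3}$ and this factor is what produces the $\tfrac{8}{9}E^3$ and hence, after feeding back through \Cref{as-het3}, the $\tfrac{16\alpha}{9}$ coefficient (one power of $E$ from the dispersion sum and two from the drift recursion give the $E^3$ scaling). Adding the closed local-dispersion bound to the global-drift contribution then yields the claimed inequality, with the stated $\tfrac{8\alpha E}{9}$ $\sigma^2$-coefficient comfortably dominating the slightly sharper $\sigma^2$ estimate the recursion actually gives.
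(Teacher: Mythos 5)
Your proposal is correct and follows essentially the same route as the paper's proof: the same drift-plus-dispersion split of $\nabla f(\bm{w}_k)-\bm{u}_{k,\tau}$, the same use of smoothness, \Cref{as-het3}, and \cref{eq:feb28-1-0}, \cref{eq:feb28-3}, \cref{eq:feb28-3-1}, and the same closing of the self-coupled drift recursion by absorbing a coefficient $(\eta_k L E)^2\le\tfrac14$ — which the paper packages as the separate \Cref{sept26-lem1} and applies per $\tau$, whereas you sum over $\tau$ first and absorb once; the two orderings are interchangeable.

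One statement does need repair as written: the pointwise chain $\|\widetilde{\bm{e}}^{(i)}_{k,\tau}\|^2\le L^2\|\bm{w}^{(i)}_{k,\tau}-\overline{\bm{w}}_{k,\tau}\|^2\le L^2\|\bm{w}^{(i)}_{k,\tau}-\bm{w}_k\|^2$ is false for an individual $i$ — the mean does not minimize each squared deviation separately (take two points and $c$ equal to one of them). What is true, and is all you need since the quantity only ever appears under $\sum_{i\in[n]}$, is the summed form $\sum_{i\in[n]}\|\bm{w}^{(i)}_{k,\tau}-\overline{\bm{w}}_{k,\tau}\|^2\le\sum_{i\in[n]}\|\bm{w}^{(i)}_{k,\tau}-\bm{w}_k\|^2$, i.e.\ the bias--variance identity; the paper applies the very same identity, only to the accumulated updates $\widehat{\bm{u}}^{(i)}_{k,t}$ rather than to the iterates (see \cref{eq:sept26-5}). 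Finally, your bookkeeping (keeping the martingale form $\tau\sigma^2$ of \cref{eq:feb28-3-1}) yields a dispersion noise term $\tfrac{4\alpha}{3}\cdot\tfrac{\eta_k^2L^2E^2}{n}\sigma^2$ in place of the stated $\tfrac{8\alpha E}{9}\cdot\tfrac{\eta_k^2L^2E^2}{n}\sigma^2$; this is indeed dominated by the stated coefficient for $E\ge 2$, and for $E=1$ the lemma is trivial anyway since $\bm{u}_{k,0}=\nabla f(\bm{w}_k)$ makes the left-hand side vanish, so nothing is lost.
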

\begin{proof}
We have:
\begin{flalign}
    \mathbb{E}[\|\nabla f(\bm{w}_k) - \bm{u}_{k,\tau}\|^2] & = \mathbb{E}[\|\nabla f(\bm{w}_k) - \nabla f(\overline{\bm{w}}_{k,\tau}) + \nabla f(\overline{\bm{w}}_{k,\tau}) - \bm{u}_{k,\tau}\|^2]
    \\
    & \leq 2\mathbb{E}[\|\nabla f(\bm{w}_k) - \nabla f(\overline{\bm{w}}_{k,\tau})\|^2] + 2\mathbb{E}[\|\nabla f(\overline{\bm{w}}_{k,\tau}) - \bm{u}_{k,\tau}\|^2]
    \\
    \label{eq:sep26-1}
    & \leq 2L^2\mathbb{E}[\|\bm{w}_k - \overline{\bm{w}}_{k,\tau}\|^2] + 2\mathbb{E}\Big[\Big\|\frac{1}{n}\sum_{i \in [n]} \underbrace{(\nabla f_i(\overline{\bm{w}}_{k,\tau})- \nabla f_i(\bm{w}^{(i)}_{k, \tau}))}_{=-\widetilde{\bm{e}}_{k,\tau}^{(i)}}\Big\|^2\Big]
    \\
    \label{eq:sep26-2}
    & \leq 2 \eta_k^2 L^2 \mathbb{E}\Big[\Big\|\sum_{t=0}^{\tau-1}\widehat{\bm{u}}_{k,t}\Big\|^2\Big] + \frac{2 \alpha}{n^2} \sum_{i \in [n]} \mathbb{E}[\|\widetilde{\bm{e}}_{k,\tau}^{(i)}\|^2]
    \\
    \label{eq:sep26-3}
    & \leq 2 \eta_k^2 L^2 \Big(\tau \sum_{t=0}^{\tau-1}\mathbb{E}[\|\bm{u}_{k,t}\|^2] + \frac{\tau \sigma^2}{n}\Big) + \frac{2 \alpha L^2}{n^2} \sum_{i \in [n]} \mathbb{E}[\|\bm{w}^{(i)}_{k, \tau} - \overline{\bm{w}}_{k,\tau}\|^2].
\end{flalign}
\Cref{eq:sep26-1} follows from the $L$-smoothness of $f$ and the definition of $\bm{u}_{k,\tau}$. 
\Cref{eq:sep26-2} follows from \cref{eq:feb28-1-0} and \Cref{as-het3}. \Cref{eq:sep26-3} follows from \cref{eq:feb28-3} and the $L$-smoothness of $f_i$.

But:
\begin{flalign}
    \sum_{i \in [n]} \mathbb{E}[\|\bm{w}_{k, \tau}^{(i)} - \overline{\bm{w}}_{k,\tau}\|^2]
    & = \sum_{i \in [n]} \mathbb{E}[\|(\bm{w}_{k, 0}^{(i)} - \eta_k \sum_{t=0}^{\tau-1} \widehat{\bm{u}}_{k,t}^{(i)}) - (\overline{\bm{w}}_{k,0} - \eta_k \sum_{t=0}^{\tau-1} \widehat{\bm{u}}_{k,t})\|^2]
    \\
    \label{eq:sept26-4}
    & = \eta_k^2 \sum_{i \in [n]} \mathbb{E}[\| \sum_{t=0}^{\tau-1} \widehat{\bm{u}}_{k,t} - \sum_{t=0}^{\tau-1} \widehat{\bm{u}}_{k,t}^{(i)}\|^2] 
    \\
    & \leq \eta_k^2 \tau \sum_{i \in [n]} \sum_{t=0}^{\tau-1} \mathbb{E}[\|\widehat{\bm{u}}_{k,t} - \widehat{\bm{u}}_{k,t}^{(i)}\|^2]
    \\
    \label{eq:sept26-5}
    & = \eta_k^2 \tau \sum_{t=0}^{\tau-1} \sum_{i \in [n]} \mathbb{E}[\|\widehat{\bm{u}}_{k,t}\|^2 + \|\widehat{\bm{u}}_{k,t}^{(i)}\|^2 - 2\langle \widehat{\bm{u}}_{k,t}, \widehat{\bm{u}}_{k,t}^{(i)} \rangle]
\end{flalign}
\Cref{eq:sept26-4} follows because  $\bm{w}_{k,0}^{(i)} = \bm{w}_{k}$ $\forall$ $i \in [n]$, due to which  $\overline{\bm{w}}_{k,0} = \bm{w}_{k}$. Next, using the fact that $\widehat{\bm{u}}_{k,\tau} = \frac{1}{n}\sum_{i \in [n]} \widehat{\bm{u}}_{k,\tau}^{(i)}$, we can simplify \cref{eq:sept26-5} to:
\begin{flalign}
    \sum_{i \in [n]} \mathbb{E}[\|\bm{w}_{k, \tau}^{(i)} - \overline{\bm{w}}_{k,\tau}\|^2]
    & \leq \eta_k^2 \tau \sum_{t=0}^{\tau-1} \sum_{i \in [n]} (\mathbb{E}[\|\widehat{\bm{u}}_{k,\tau}^{(i)}\|^2] - \mathbb{E}[\|\widehat{\bm{u}}_{k,t}\|^2])
    \\
    & \leq \eta_k^2 \tau \sum_{t=0}^{\tau-1} \sum_{i \in [n]} \mathbb{E}[\|{\widehat{\bm{u}}_{k,\tau}^{(i)}}\|^2]
    \\
    \label{eq:sept26-10}
    & \leq \eta_k^2 \tau \sum_{t=0}^{\tau-1} \sum_{i \in [n]} (\mathbb{E}[\|\nabla {f}_i(\bm{w}^{(i)}_{k, t})\|^2] + \sigma^2).
\end{flalign}
Next, using \Cref{sept26-lem1} for $\eta_k L E \leq \frac{1}{2}$ in \cref{eq:sept26-10}, we get:
\begin{flalign}
    \label{eq:sept26-11}
    \sum_{i \in [n]} \mathbb{E}[\|\bm{w}_{k, \tau}^{(i)} - \overline{\bm{w}}_{k,\tau}\|^2]
    \leq \frac{4 \eta_k^2 \tau^2}{3} \sum_{i \in [n]} (2\mathbb{E}[\|\nabla {f}_i(\bm{w}_{k})\|^2] + \sigma^2).
\end{flalign}
Plugging \cref{eq:sept26-11} back in \cref{eq:sep26-3}, we get:
\begin{flalign}
    \mathbb{E}[\|\nabla f(\bm{w}_k) - \bm{u}_{k,\tau}\|^2] & \leq 2 \eta_k^2 L^2 \Big(\tau \sum_{t=0}^{\tau-1}\mathbb{E}[\|\bm{u}_{k,t}\|^2] + \frac{\tau \sigma^2}{n}\Big) + \frac{8 \alpha \eta_k^2 L^2 \tau^2}{3 n^2} \sum_{i \in [n]} (2\mathbb{E}[\|\nabla {f}_i(\bm{w}_{k})\|^2] + \sigma^2)
    \\
    \label{eq:sept26-12}
    & = 2 \eta_k^2 L^2 \tau \sum_{t=0}^{\tau-1}\mathbb{E}[\|\bm{u}_{k,t}\|^2] + \frac{16 \alpha \eta_k^2 L^2 \tau^2}{3 n^2} \sum_{i \in [n]} \mathbb{E}[\|\nabla {f}_i(\bm{w}_{k})\|^2] + \frac{\eta_k^2 L^2 \tau \sigma^2}{n}\Big(2 + \frac{8\alpha}{3} \tau\Big).
\end{flalign}
Summing up \cref{eq:sept26-12} for $\tau \in \{0, \ldots, E-1\}$, we get:
\begin{equation}
    \sum_{\tau=0}^{E-1} \mathbb{E}[\|\nabla f(\bm{w}_k) - \bm{u}_{k,\tau}\|^2] \leq \eta_k^2 L^2 E^2 \sum_{\tau=0}^{E-1}\mathbb{E}[\|\bm{u}_{k,\tau}\|^2] + \frac{16 \alpha \eta_k^2 L^2 E^3}{9 n^2} \sum_{i \in [n]} \mathbb{E}[\|\nabla {f}_i(\bm{w}_{k})\|^2] + \frac{\eta_k^2 L^2 E^2}{n}\Big(1 + \frac{8\alpha E}{9}\Big) \sigma^2.
\end{equation}

\end{proof}

\begin{lemma}
\label{sept26-lem1}
For $\eta_k L E \leq \frac{1}{2}$, we have:
\begin{equation*}
    \sum_{t=0}^{\tau-1} \mathbb{E}[\|\nabla {f}_i(\bm{w}^{(i)}_{k, t})\|^2] \leq \frac{\tau}{3}(8 \mathbb{E}[\|\nabla {f}_i(\bm{w}_k)\|^2] + \sigma^2).
\end{equation*}
\end{lemma}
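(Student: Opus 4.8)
The plan is to unroll the local \texttt{FedAvg} updates of client $i$ within round $k$ and set up a self-bounding recursion for $S_\tau := \sum_{t=0}^{\tau-1}\mathbb{E}[\|\nabla f_i(\bm{w}^{(i)}_{k,t})\|^2]$. Recall from \Cref{alg:fed-avg} that $\bm{w}^{(i)}_{k,t+1} = \bm{w}^{(i)}_{k,t} - \eta_k \widetilde{\nabla} f_i(\bm{w}^{(i)}_{k,t};\mathcal{B}^{(i)}_{k,t})$ with $\bm{w}^{(i)}_{k,0} = \bm{w}_k$, so $\bm{w}^{(i)}_{k,t} - \bm{w}_k = -\eta_k\sum_{s=0}^{t-1}\widetilde{\nabla}f_i(\bm{w}^{(i)}_{k,s};\mathcal{B}^{(i)}_{k,s})$. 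First I would control the local drift: applying the pointwise inequality $\|\sum_{s=0}^{t-1} \bm{a}_s\|^2 \le t\sum_{s=0}^{t-1}\|\bm{a}_s\|^2$, taking expectation, and using unbiasedness and bounded variance of the stochastic gradients (so that $\mathbb{E}[\|\widetilde{\nabla} f_i(\bm{w}^{(i)}_{k,s};\mathcal{B}^{(i)}_{k,s})\|^2] \le \mathbb{E}[\|\nabla f_i(\bm{w}^{(i)}_{k,s})\|^2] + \sigma^2$), which yields $\mathbb{E}[\|\bm{w}^{(i)}_{k,t}-\bm{w}_k\|^2] \le \eta_k^2 t \sum_{s=0}^{t-1}(\mathbb{E}[\|\nabla f_i(\bm{w}^{(i)}_{k,s})\|^2] + \sigma^2)$.

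Next, I would pass from drift back to gradients via $L$-smoothness (\Cref{as1}): adding and subtracting $\nabla f_i(\bm{w}_k)$ and using $\|\bm{a}+\bm{b}\|^2 \le 2\|\bm{a}\|^2 + 2\|\bm{b}\|^2$ gives $\mathbb{E}[\|\nabla f_i(\bm{w}^{(i)}_{k,t})\|^2] \le 2L^2\mathbb{E}[\|\bm{w}^{(i)}_{k,t}-\bm{w}_k\|^2] + 2\mathbb{E}[\|\nabla f_i(\bm{w}_k)\|^2]$. Summing over $t = 0,\ldots,\tau-1$ and inserting the drift bound produces $S_\tau \le 2L^2\eta_k^2 \sum_{t=0}^{\tau-1} t\sum_{s=0}^{t-1}(\mathbb{E}[\|\nabla f_i(\bm{w}^{(i)}_{k,s})\|^2] + \sigma^2) + 2\tau\,\mathbb{E}[\|\nabla f_i(\bm{w}_k)\|^2]$.

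The crucial step is to bound the double sum sharply. Writing $\sum_{t=0}^{\tau-1} t\sum_{s=0}^{t-1}a_s = \sum_{s=0}^{\tau-2} a_s \sum_{t=s+1}^{\tau-1} t \le \tfrac{\tau^2}{2}\sum_{s=0}^{\tau-1}a_s$ with $a_s = \mathbb{E}[\|\nabla f_i(\bm{w}^{(i)}_{k,s})\|^2] + \sigma^2$ gives the self-bounding inequality $S_\tau \le L^2\eta_k^2\tau^2(S_\tau + \tau\sigma^2) + 2\tau\,\mathbb{E}[\|\nabla f_i(\bm{w}_k)\|^2]$. Finally I would use $\tau \le E$ together with the hypothesis $\eta_k L E \le \tfrac12$, so that $L^2\eta_k^2\tau^2 \le \tfrac14$, absorb the $\tfrac14 S_\tau$ term into the left-hand side (leaving coefficient $\tfrac34$), and solve, which produces exactly $S_\tau \le \tfrac{\tau}{3}(8\,\mathbb{E}[\|\nabla f_i(\bm{w}_k)\|^2] + \sigma^2)$.

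The main obstacle, and the place where one must be careful, is matching the stated constants $8/3$ and $1/3$: this hinges entirely on using the sharp factor $\tau^2/2$ rather than the crude $\tau^2$ in the double-sum bound, so that the smoothness factor of $2$ combines with $\eta_k^2 L^2 E^2 \le \tfrac14$ to give coefficient $\tfrac14$ on $S_\tau$ and hence the multiplier $\tfrac{1}{1-1/4} = \tfrac43$ after rearranging. One should also verify the legitimacy of applying the pointwise power-mean inequality under the expectation, and confirm that in \texttt{FedAvg} all $E$ local steps (including $\tau=0$) use stochastic gradients, so the $\sigma^2$ term enters uniformly at every step.
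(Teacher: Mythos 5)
Your proposal is correct and follows essentially the same route as the paper's proof: $L$-smoothness to compare $\nabla f_i(\bm{w}^{(i)}_{k,t})$ with $\nabla f_i(\bm{w}_k)$, the local-drift bound $\mathbb{E}[\|\bm{w}^{(i)}_{k,t}-\bm{w}_k\|^2]\le \eta_k^2 t\sum_{s<t}(\mathbb{E}[\|\nabla f_i(\bm{w}^{(i)}_{k,s})\|^2]+\sigma^2)$, the sharp $\tau^2/2$ bound on the double sum, and absorbing the resulting $\tfrac14 S_\tau$ term via $\eta_k L E\le \tfrac12$. The constants you obtain match the paper's \cref{eq:feb28-7}--\cref{eq:feb28-11} exactly, so there is nothing to fix.
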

\begin{proof}
\begin{flalign}
    \nonumber
    \mathbb{E}[\|\nabla {f}_i(\bm{w}^{(i)}_{k, t})\|^2] & = \mathbb{E}[\|\nabla {f}_i(\bm{w}^{(i)}_{k, t}) - \nabla {f}_i(\bm{w}_k) + \nabla {f}_i(\bm{w}_k)\|^2]
    \\
    \nonumber
    & \leq 2 \mathbb{E}[\|\nabla {f}_i(\bm{w}_k)\|^2] + 2 \mathbb{E}[\|\nabla {f}_i(\bm{w}^{(i)}_{k, t}) - \nabla {f}_i(\bm{w}_k)\|^2]
    \\
    \label{eq:feb28-7}
    & \leq 2 \mathbb{E}[\|\nabla {f}_i(\bm{w}_k)\|^2] + 2L^2 \mathbb{E}[\|\bm{w}^{(i)}_{k, t} - \bm{w}_k\|^2].
\end{flalign}
But:   
\begin{flalign}
    \label{eq:feb28-8}
    \mathbb{E}[\|\bm{w}_k - \bm{w}^{(i)}_{k, t}\|^2] = \mathbb{E}\Big[\Big\|\eta_k \sum_{t'=0}^{t-1} \nabla \widetilde{f}_i(\bm{w}^{(i)}_{k, t'}; \mathcal{B}^{(i)}_{k, t'})\Big\|^2\Big]
    & \leq \eta_k^2 t  \sum_{t'=0}^{t-1} \mathbb{E}\Big[\|\nabla \widetilde{f}_i(\bm{w}^{(i)}_{k, t'}; \mathcal{B}^{(i)}_{k, t'})\|^2\Big]
    \leq \eta_k^2 t \sum_{t'=0}^{t-1} (\mathbb{E}[\|\nabla {f}_i(\bm{w}^{(i)}_{k, t'})\|^2] + \sigma^2).
\end{flalign}    
Putting this back in \cref{eq:feb28-7}, we get:
\begin{flalign}
    \label{eq:feb28-9}
    \mathbb{E}[\|\nabla {f}_i(\bm{w}^{(i)}_{k, t})\|^2] \leq 
    2\mathbb{E}[\|\nabla {f}_i(\bm{w}_k)\|^2] + 2 \eta_k^2 L^2 t \sum_{t'=0}^{t-1} (\mathbb{E}[\|\nabla {f}_i(\bm{w}^{(i)}_{k, t'})\|^2] + \sigma^2).
\end{flalign}
Now summing up \cref{eq:feb28-9} for all $t \in \{0,\ldots,\tau-1\}$, we get:
\begin{flalign}
    \nonumber
    \sum_{t=0}^{\tau-1} \mathbb{E}[\|\nabla {f}_i(\bm{w}^{(i)}_{k, t})\|^2] & \leq 2\tau (\mathbb{E}[\|\nabla {f}_i(\bm{w}_k)\|^2]) + 2 \eta_k^2 L^2 \sum_{t=0}^{\tau-1} \tau \sum_{t'=0}^{t-1} (\mathbb{E}[\|\nabla {f}_i(\bm{w}^{(i)}_{k, t'})\|^2] + \sigma^2)
    \\
    \label{eq:feb28-10}
    & \leq 2\tau(\mathbb{E}[\|\nabla {f}_i(\bm{w}_k)\|^2) + \eta_k^2 L^2 \tau^2 \sum_{t=0}^{\tau-1} (\mathbb{E}[\|\nabla {f}_i(\bm{w}^{(i)}_{k, t})\|^2] + \sigma^2).
\end{flalign}
Let us set $\eta_k L E \leq 1/2$. Then:
\begin{equation*}
    \sum_{t=0}^{\tau-1} \mathbb{E}[\|\nabla {f}_i(\bm{w}^{(i)}_{k, t})\|^2] \leq 2 \tau (\mathbb{E}[\|\nabla {f}_i(\bm{w}_k)\|^2) + \frac{1}{4} \sum_{t=0}^{\tau-1} \mathbb{E}[\|\nabla {f}_i(\bm{w}^{(i)}_{k, t})\|^2] + \frac{\sigma^2 \tau}{4}.
\end{equation*}
Simplifying, we get:
\begin{equation}
    \label{eq:feb28-11}
    \sum_{t=0}^{\tau-1} \mathbb{E}[\|\nabla {f}_i(\bm{w}^{(i)}_{k, t})\|^2] \leq \frac{\tau}{3}(8 \mathbb{E}[\|\nabla {f}_i(\bm{w}_k)\|^2] + \sigma^2).
\end{equation}
\end{proof}

\end{document}